\newtheorem{theorem}{Theorem}
\newtheorem{assumption}{Assumption}
\newtheorem{lemma}{Lemma}
\newtheorem{proposition}{Proposition}
\newtheorem{definition}{Definition}
\newtheorem{example}{Example}
\newtheorem{remark}{Remark}
\newtheorem*{theorem*}{Theorem}
\DeclareMathOperator{\aaone}{\boldsymbol \alpha^{(1)}}
\DeclareMathOperator{\aa2}{\boldsymbol \alpha^{(2)}}
\DeclareMathOperator{\gl}{\emph{g}_{\text{logistic}}}
\DeclareMathOperator{\K}{\K^{(1)}}
\newcommand{\aaat}{\tilde{\boldsymbol \alpha}}
\newcommand{\aaa}{\boldsymbol \alpha}
\newcommand{\bbb}{\boldsymbol \beta}
\newcommand{\ggamma}{\boldsymbol \gamma}
\newcommand{\bo}{\boldsymbol }
\newcommand{\yy}{\mathbf y}
\newcommand{\diag}{\text{\normalfont{diag}}}
\newcommand{\rank}{\text{\normalfont{rank}}}
\newcommand{\pa}{\text{\normalfont{pa}}}
\newcommand{\ch}{\text{\normalfont{ch}}}
\newcommand{\bbeta}{{\boldsymbol \beta}}
\newcommand{\bu}{\mathbf u}
\newcommand{\bs}{\mathbf s}
\newcommand{\bv}{\mathbf v}
\newcommand{\pp}{{\boldsymbol p}}
\newcommand{\ma}{\mathbf A}
\newcommand{\mck}{\mathcal K}
\newcommand{\mcb}{\mathcal B}
\newcommand{\mci}{\mathcal I}
\newcommand{\mcg}{\mathcal G}
\newcommand{\B}{\mathbf B}
\newcommand{\YY}{\mathbf Y}
\newcommand{\ZZ}{\mathbf Z}
\newcommand{\M}{\mathbf M}
\newcommand{\NN}{\mathbf N}
\newcommand{\GG}{\mathbf G}
\newcommand{\I}{\mathbf I}
\newcommand{\PP}{\mathbb P}
\newcommand{\BB}{\mathbf B}
\newcommand{\UU}{\mathbf U}
\newcommand{\VV}{\mathbf V}
\newcommand{\TT}{\boldsymbol \Theta}
\newcommand{\mcy}{\mathcal Y}
\newcommand{\gt}{\tilde{g}}
\newcommand{\E}{\mathbb E}
\def\hat{\widehat}
\def\tilde{\widetilde}
\DeclareMathOperator*{\argmin}{argmin}
\DeclareMathOperator*{\argmax}{argmax}
\newcolumntype{C}{>{\centering\arraybackslash}m{6em}}
\titlespacing{\section}{0pt}{6pt plus 6pt minus 3pt}{6pt plus 6pt minus 3pt}
\titlespacing{\subsection}{0pt}{6pt plus 6pt minus 3pt}{6pt plus 6pt minus 3pt}
\tikzstyle{qedge}=[->,thick,black]
\tikzstyle{pre}=[->,thick,dotted]
\tikzstyle{pres}=[-,dotted]
\definecolor{myblue}{rgb}{0.0265,    0.6137,    0.8135}
\definecolor{myyellow}{rgb}{0.9290,    0.6940,    0.1250}
\newcommand{\darkblue}[1]{\textcolor{black}{#1}}
\tikzstyle{neuron}=[draw, circle,minimum size=25pt,inner sep=0pt, fill=black!20]
\tikzstyle{hidden}=[draw, circle,minimum size=25pt,inner sep=0pt, fill=white]
\tikzstyle{hiddens}=[draw,circle,minimum size=17pt,inner sep=0pt, fill=white]
\tikzset{>={Latex[width=2mm,length=2mm]}}
\tikzstyle{arr}=[->, thick, black]
\tikzset{
    double color fill/.code 2 args={
        \pgfdeclareverticalshading[%
            tikz@axis@top,tikz@axis@middle,tikz@axis@bottom%
        ]{diagonalfill}{100bp}{%
            color(0bp)=(tikz@axis@bottom);
            color(50bp)=(tikz@axis@bottom);
            color(50bp)=(tikz@axis@middle);
            color(50bp)=(tikz@axis@top);
            color(100bp)=(tikz@axis@top)
        }
        \tikzset{shade, left color=#1, right color=#2, shading=diagonalfill}
    }
}
\def\spacingset#1{\renewcommand{\baselinestretch}%
{#1}\small\normalsize} 
\newcommand\blfootnote[1]{%
  \begingroup
  \renewcommand\thefootnote{}\footnote{#1}%
  \addtocounter{footnote}{-1}%
  \endgroup
}
\title{Deep Discrete Encoders: Identifiable Deep Generative Models for Rich Data with Discrete Latent Layers}
\author{\large Seunghyun Lee and Yuqi Gu}
\date{\large Department of Statistics, Columbia University}
\begin{document}
\spacingset{1}
\maketitle
\blfootnote{Corresponding Author: Yuqi Gu. Email:  \texttt{yuqi.gu@columbia.edu}.}

\vspace{-10mm}
\begin{abstract}
In the era of generative AI, deep generative models (DGMs) with latent representations have gained tremendous popularity. Despite their impressive empirical performance, the statistical properties of these models remain underexplored. DGMs are often overparametrized, non-identifiable, and uninterpretable black boxes, raising serious concerns when deploying them in high-stakes applications. Motivated by this, we propose  interpretable deep generative models for rich data types with discrete latent layers, called \emph{Deep Discrete Encoders} (DDEs). A DDE is a directed graphical model with multiple binary latent layers. Theoretically, we propose transparent identifiability conditions for DDEs, which imply progressively smaller sizes of the latent layers as they go deeper. Identifiability ensures consistent parameter estimation and inspires an interpretable design of the deep architecture. Computationally, we propose a scalable estimation pipeline of a layerwise nonlinear spectral initialization followed by a penalized stochastic approximation EM algorithm.  This procedure can efficiently estimate models with exponentially many latent components. Extensive simulation studies for high-dimensional data and deep architectures validate our theoretical results and demonstrate the excellent performance of our algorithms. We apply DDEs to three diverse real datasets with different data types to perform hierarchical topic modeling, image representation learning, and response time modeling in educational testing.
\end{abstract}

\noindent
\textbf{Keywords}:
Identifiability; 
Interpretable Artificial Intelligence;
Representation Learning; 
Deep Belief Network;
Directed Graphical Model.

\spacingset{1.7}

\section{Introduction}
In the era of generative AI, deep generative models (DGMs) with latent representations have gained tremendous popularity across various domains. DGMs achieve impressive empirical success due to their rich modeling power and predictive capacity, and are useful tools to generate images, text, and audio \citep{hinton2006fast, lee2009convolutional, kingma2014stochastic, salakhutdinov2015learning}. However, these models are often subject to statistical issues regarding the model identifiability, interpretability, and parameter estimation reliability. Indeed, most deep learning models are heavily overparametrized black boxes, with more parameters than the number of samples, and are fundamentally non-identifiable. The lack of identifiability means that there may be very different parameter values that give the same marginal distribution of the observed data, leading to inconsistent parameter estimation. In such cases, it is impossible to guarantee the reproducibility of the learned latent representations across different training instances and the validity of the downstream inference. Additionally, the deep layers often merely serve as tools for inducing flexible data distributions but without a meaningful substantive interpretation. These issues raise serious concerns when deploying DGMs in high-stakes applications such as education, medicine, and health care.

We address the above problems from statisticians' perspectives by proposing a broad family of interpretable and identifiable deep generative models for rich types of data, called \emph{Deep Discrete Encoders} (DDEs). DDEs are directed graphical models with potentially deep discrete latent layers to generate the bottom-layer multivariate observed data. A key feature of DDEs is that the latent layers are discrete and organized in progressively smaller sizes as they go deeper; see \cref{fig:model}. This architecture induces very expressive models via the exponentially many mixture components (since each configuration of the discrete latent vector gives a mixture component), and also has the nice interpretation of increasingly more general latent features in deeper layers \citep{bengio2013representation}. DDEs are motivated by both popular generative models in deep learning and latent variable models in educational and psychological measurement. While these two areas rarely intersect in the past, we leverage the insights from both fields to inspire the theory and methodology of DDEs.

\cref{fig:model} displays the graphical model representations of a typical DDE alongside a DDE estimated from real data. 
The right panel of \cref{fig:model} shows that fitting DDEs to a dataset of text documents uncovers interesting hierarchical latent topics as well as an interpretable word generating mechanism; see more details in Section \ref{sec:real data}. 
We emphasize that there is no restriction on the types of observed data; for example, the bottom data layer can be modeled by any exponential family distributions.
In the text data example, Poisson distribution is used to model the word counts in documents. 
Our real data examples in Section \ref{sec:real data} range from word \emph{counts} in text documents to \emph{binary} pixel values in images, to \emph{continuous} response times of students in digital educational assessments.
Such flexibility makes DDEs attractive for many practical applications ranging from machine learning to domain sciences. 

\begin{figure}[h!]
\centering
\begin{minipage}{0.54\textwidth}
\centering
\resizebox{\textwidth}{!}{
    \begin{tikzpicture}[scale=1.5]
\tikzset{
    node/.style={circle, draw, minimum size=1.2cm, inner sep = 0pt, align=center,}
    shaded/.style={circle, draw, fill=gray!30, minimum size=1.2cm, inner sep = 0pt, align=center},
}

    \node (h21)[hidden] at (1,0.2) {$A^{(3)}_{1}$};
    \node (h22)[hidden] at (2,0.2) {$A^{(3)}_{K^{(3)}}$};

    \node (hb1)[hidden] at (0,-1) {$A^{(2)}_{1}$};
    \node (hb2)[hidden] at (1,-1) {$A^{(2)}_{2}$};
    \node (hb3)[hidden] at (2,-1) {$\cdots$};
    \node (hb4)[hidden] at (3,-1) {$A^{(2)}_{K^{(2)}}$};
    
    \node at (1,-1.6) {$\cdots$};
    \node at (2,-1.6) {$\cdots$};
    \node at (1.5,-0.4) {$\cdots$};
       
    \node (v2)[hidden] at (-0.8,-2.2) {$A_1^{(1)}$};
    \node (v3)[hidden] at (0,-2.2) {$A_2^{(1)}$};
    \node (v4)[hidden] at (0.8,-2.2) {$\cdots$};
    \node (v5)[hidden] at (1.6,-2.2) {$\cdots$};
    \node (v6)[hidden] at (2.4,-2.2) {$\cdots$};
    \node (v7)[hidden] at (3.2,-2.2) {$\cdots$};
    \node (v9)[hidden] at (4,-2.2) {$A_{K^{(1)}}^{(1)}$};

    \node (vv1)[neuron] at (-1.6,-3.4) {$Y_1$};
    \node (vv2)[neuron] at (-0.8,-3.4) {$Y_2$};
    \node (vv3)[neuron] at (0,-3.4) {$\cdots$};
    \node (vv4)[neuron] at (0.8,-3.4) {$\cdots$};
    \node (vv5)[neuron] at (1.6,-3.4) {$\cdots$};
    \node (vv6)[neuron] at (2.4,-3.4) {$\cdots$};
    \node (vv7)[neuron] at (3.2,-3.4) {$\cdots$};
    \node (vv8)[neuron] at (4,-3.4) {$\cdots$};
    \node (vv10)[neuron] at (4.8,-3.4) {$Y_J$};

    \draw[arr] (v2) -- (vv1);
    \draw[arr] (v3) -- (vv3);
    \draw[arr] (v2) -- (vv2);
    \draw[arr] (v9) -- (vv10);
    \draw[arr] (v7) -- (vv10);
    
    \node at (0,-2.8) {$\cdots$};
    \node at (1,-2.8) {$\cdots$};
    \node at (2,-2.8) {$\cdots$};
    \node at (3,-2.8) {$\cdots$};
    \node at (4,-2.8) {$\cdots$};
    
    \draw[arr, blue!70!black] (hb1) -- (v2);
    \draw[arr, blue!70!black] (hb1) -- (v3);
    \draw[arr, blue!70!black] (hb2) -- (v3);
    
    \draw[arr, blue!70!black] (hb2) -- (v3);
    \draw[arr, blue!70!black] (hb2) -- (v4);
    
    \draw[arr, blue!70!black] (hb4) -- (v7);
    \draw[arr, blue!70!black] (hb4) -- (v9);
    
    \draw[arr, green!70!black] (h21) -- (hb1);
    \draw[arr, green!70!black] (h21) -- (hb2);
    \draw[arr, green!70!black] (h21) -- (hb3);
    
    \draw[arr, green!70!black] (h22) -- (hb3);
    \draw[arr, green!70!black] (h22) -- (hb4);

    \node[anchor=west] (g0) at (4.6, -2.6) {$\mathbf G^{(1)}: J\times K^{(1)}$};
    \node[anchor=west] (g1) at (4.6, -1.4) {\darkblue{$\mathbf G^{(2)}: K^{(1)} \times K^{(2)}$}};
    \node[anchor=west] (g2) at (4.6, -0.2) {\textcolor{green!70!black}{$\mathbf G^{(3)}: K^{(2)} \times K^{(3)}$}};

\end{tikzpicture}
}
\end{minipage}
\begin{minipage}{0.42\textwidth}
\centering
\resizebox{\textwidth}{!}{
\begin{tikzpicture}

\tikzset{
    node/.style={circle, draw, minimum size=1.5cm, inner sep = 0pt, align=center, font=\sffamily\bfseries},
    shaded/.style={circle, draw, fill=gray!30, minimum size=1.5cm, inner sep = 0pt, align=center, font=\sffamily\bfseries},
}

\node[node] (A1) at (-2, 6) {rec};
\node[node] (A2) at (2, 6) {tech};

\node[node] (B1) at (-4, 3) {sports};
\node[node] (B2) at (-1.5, 3) {car};
\node[node] (B3) at (1.5, 3) {graphics};
\node[node] (B4) at (4, 3) {space};

\node[shaded] (C1) at (-6, 0) {league};
\node[shaded] (C2) at (-4, 0) {playoff};
\node[shaded] (C3) at (-2, 0) {parts};
\node[shaded] (C4) at (0, 0) {opinion};
\node[shaded] (C5) at (2, 0) {color};
\node[shaded] (C6) at (4, 0) {nasa};
\node[shaded] (C7) at (6, 0) {orbit};

\draw[->, line width=0.94mm] (A1)--(B1);
\draw[->, line width=0.3mm] (A1)--(B2);
\draw[->, line width=0.57mm, red] (A2)--(B1);
\draw[->, line width=0.8mm] (A2)--(B2);
\draw[->, line width=0.95mm] (A2)--(B3);
\draw[->, line width=0.3mm] (A2)--(B4);

\draw[->, line width = 1.25mm] (B1)--(C1);
\draw[->, line width = 1.25mm] (B1)--(C2);
\draw[->, line width = 0.6mm, red] (B1)--(C6);
\draw[->, line width = 0.36mm] (B2)--(C1);
\draw[->, line width = 0.6mm] (B2)--(C3);
\draw[->, line width = 0.4mm, red] (B3)--(C2);
\draw[->, line width = 0.65mm] (B3)--(C5);
\draw[->, line width = 0.22mm, red] (B3)--(C6);
\draw[->, line width = 0.5mm, red] (B3)--(C7);
\draw[->, line width = 1.14mm] (B4)--(C6);
\draw[->, line width = 0.75mm] (B4)--(C7);
\end{tikzpicture}

}
\end{minipage}
    \caption{\textbf{Left}: Example graphical model representation of DDEs. \textbf{Right}: Simplified estimated DDE structure of the latent topics for the 20 newsgroups dataset. Only the shaded nodes are observed. In the right panel, edge widths are proportional to coefficients' absolute values. The black/red edge colors imply positive/negative coefficients, respectively.}
    \label{fig:model}
    \vspace{-5mm}
\end{figure}

The main contributions of this paper include rigorous identifiability theory and scalable computational pipelines for DDEs.
\emph{For identifiability}, we propose general identifiability conditions in terms of the probabilistic graph structures between layers in the graphical model {(corresponding to the directed arrows in \cref{fig:model})}.
We work under the minimal possible assumptions in order to flexibly cover the various examples and data types mentioned above. Next, we provide an informal statement of the identifiability conditions. Under an identifiable DDE, we also prove that a penalized-likelihood based estimator is consistent for estimating both the continuous parameters as well as the discrete graph structure.

\begin{theorem*}[Informal version of Theorems \ref{thm:deep id} and \ref{thm:deep gid}]
    The DDE  is identifiable up to latent variable permutation in each layer, as long as each latent variable has at least three pure children (which only has one parent variable). 
    Under a weaker notion of ``generic identifiability'', this condition can be relaxed to that each latent variable has at least three children that are not necessarily pure.
\end{theorem*}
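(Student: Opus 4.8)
\emph{Proof strategy.} The plan is a layerwise \emph{peeling} argument that reduces the deep model to repeated application of a single-layer identification lemma, which is itself a consequence of Kruskal's trilinear decomposition theorem (in the form that also covers real-valued observed variables with linearly independent emission densities). \textbf{A single-layer lemma.} I would first isolate the sub-problem in which an ``observed'' vector $\mathbf{X}=(X_1,\dots,X_p)$ — each $X_j$ categorical or real-valued with an exponential-family density — is generated from a binary latent vector $\mathbf{A}\in\{0,1\}^K$ through a bipartite graph $\mathbf{G}$, so that the $X_j$ are conditionally independent given $\mathbf{A}$ and each $X_j$ depends on $\mathbf{A}$ only through its parents $\mathbf{A}_{\pa(j)}$. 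When every $A_k$ has at least three pure children and the usual regularity holds ($P(\mathbf{A}=\mathbf{a})>0$ for all $\mathbf{a}$, and each pure child's two emission laws differ), $P(\mathbf{X})$ determines, up to a relabeling (permutation and $0/1$ swap) of the $K$ coordinates of $\mathbf{A}$, the integer $K$, the marginal $P(\mathbf{A})$, all conditionals $P(X_j\mid\mathbf{A}_{\pa(j)})$, and hence $\mathbf{G}$ and the continuous parameters. The steps: (i) for each $A_k$ select three of its pure children, one placed in each of three disjoint groups $S_1,S_2,S_3$; conditionally on $\mathbf{A}=\mathbf{a}$ the law of $\mathbf{X}_{S_i}$ is a product over $k$ of the $a_k$-th emission of the selected child, so the $2^K$ component laws on $\mathbf{X}_{S_i}$ are linearly independent (equivalently, the component-by-outcome weight matrix has Kruskal rank $2^K$), being a Kronecker product of linearly independent emission pairs. (ii) Kruskal's theorem applied to the three-way array $P(\mathbf{X}_{S_1},\mathbf{X}_{S_2},\mathbf{X}_{S_3})$ recovers, uniquely up to a common permutation of the $2^K$ component indices, the weights $P(\mathbf{A}=\mathbf{a})$ and the component laws $P(\mathbf{X}_{S_i}\mid\mathbf{A}=\mathbf{a})$. (iii) For each $k$, restricting an $\mathbf{X}_{S_1}$-component to the coordinate carried by $A_k$'s selected child gives a marginal taking exactly two values across the $2^K$ components; labeling these ``$0$''/``$1$'' assigns each component a string in $\{0,1\}^K$, which pins the permutation down to a coordinate permutation and bit-flips and thus identifies $P(\mathbf{A})$ and the selected children's emissions. (iv) For any remaining $X_j$, the triple $(X_j,\mathbf{X}_{S_1},\mathbf{X}_{S_2})$ is again a three-way array whose last two factor matrices are now known and of full Kruskal rank, so $P(X_j\mid\mathbf{A}=\mathbf{a})$ follows by a linear solve; reading off the coordinates $a_k$ it genuinely depends on returns the $j$-th row of $\mathbf{G}$ and the associated coefficients.

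\textbf{Peeling across layers.} The bottom of a DDE is exactly this lemma with $\mathbf{X}=\mathbf{Y}$, and it additionally delivers the full law $P(\mathbf{A}^{(1)})$. Since $\mathbf{A}^{(1)}$ is itself generated by the sub-DDE on layers $2,\dots,L$, whose layer-$2$-to-layer-$1$ graph again has the pure-children property, applying the lemma to $P(\mathbf{A}^{(1)})$ identifies $K^{(2)}$, $\mathbf{G}^{(2)}$, the layer-$2$ parameters and $P(\mathbf{A}^{(2)})$; iterating climbs the hierarchy. One continues until the identified marginal is a product of Bernoullis, which, under the structural/connectedness assumptions of the DDE, locates the top layer and identifies $L$. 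Composing the per-layer relabelings yields identifiability up to a permutation within each layer, the $0/1$ swaps being fixed by the model's sign normalization; this is \cref{thm:deep id}.

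\textbf{The generic version.} For \cref{thm:deep gid} I would replace ``pure children'' by ``at least three children''. The only step that breaks is (i): with non-pure selected children the group weight matrices are real-analytic in the parameters but no longer Kronecker products. I would show that the $2^{K^{(\ell)}}$ component laws on each $\mathbf{X}_{S_i}$ are still linearly independent for Lebesgue-almost-every parameter, since the relevant maximal minor is a real-analytic function whose zero set is either everything or Lebesgue-null, and it is not identically zero by a witness: letting the coefficients attached to the non-selected parents of the chosen children tend to zero makes the weight matrix approach the full-rank Kronecker matrix of the pure-children case, and full rank is an open condition. Steps (iii)--(iv) and the matching of two observationally equivalent parameter tuples must likewise be made generic — outside a further real-analytic exceptional set there are no accidental coincidences among the conditional laws, so level sets reveal parent sets and the component matching is forced. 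Because the law $P(\mathbf{A}^{(\ell-1)})$ fed into the lemma at stage $\ell$ is a real-analytic pushforward of the parameters, the stage-$\ell$ exceptional set pulls back to a measure-zero set in parameter space (non-emptiness of its complement again via a pure-children-type witness), and the union over the finitely many layers and steps remains measure zero.

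\textbf{Main obstacle.} I expect step (iii) and its generic analogue to be the crux: upgrading Kruskal's ``unique up to a permutation of the $2^{K}$ latent \emph{classes}'' to ``unique up to a permutation of the $K$ latent \emph{variables}'', and, when two parameter tuples are observationally equivalent, showing that their Kruskal components are matched by a map affine over $\mathbb{F}_2$ rather than an arbitrary bijection of $\{0,1\}^K$. This is precisely where the pure-children (respectively three-children) hypothesis must be used in full, through the rigidity of the factorized conditional-law structure; the remaining genericity bookkeeping in the deep recursion is routine by comparison.
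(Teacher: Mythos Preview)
Your proposal is correct and follows essentially the same route as the paper: Kruskal's trilinear uniqueness applied to three groups of anchored children, the pure-child structure used to collapse the resulting $2^K$-class permutation to a $K$-coordinate permutation, layerwise peeling via the identified marginal of the shallowest latent layer, and real-analytic zero-set arguments with a pure-children-limit witness for the generic version. You have correctly located the crux (your step (iii) and its generic analogue), which the paper indeed handles via dedicated lemmas on the cardinality and partition structure of the per-coordinate conditional-law sets.
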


\emph{For computation}, we propose a scalable estimation pipeline for DDEs.
The multiple layers of nonlinear latent variables in DDEs lead to a {highly nonconvex optimization landscape with potentially exponentially many local optima.
To address this challenging issue, our computational pipeline features a nuanced layerwise nonlinear spectral initialization followed by a penalized stochastic approximation EM algorithm.
This procedure can efficiently handle models with a large number of latent variables. We achieve favorable simulation results for as many as $K^{(1)}=30$ binary latent variables in the shallowest latent layer, which amount to $2^{K^{(1)}}$ mixture components and define a very expressive model. 
Extensive simulation studies not only validate the identifiability results, but also demonstrate the excellent performance of the proposed algorithms.
We apply DDEs to real data in three diverse tasks, including hierarchical topic modeling, image representation learning, and multimodal modeling in digital educational testing.
Across these applications, DDEs extract highly interpretable latent structures and learn useful representations for downstream analyses.

We make brief remarks to place DDEs in the rapidly emerging field of generative AI. For the considered unsupervised learning setting, we use ``generative model'' to refer to probabilistic models where observed data are generated conditional on hidden variables. 
Recently, powerful multilayer models have been proposed for more complex tasks, which either consist of multiple latent layers or a single latent layer transformed by deep neural networks. Popular models in machine learning include deep belief networks \citep[DBNs,][]{hinton2006fast}, deep Boltzmann machines \citep[DBMs,][]{salakhutdinov2009deep}, variational autoencoders \citep{ranzato2008semi},
generative adversarial networks \citep{goodfellow2014generative}, diffusion models \citep{sohl2015deep}, and transformer-based models \citep{vaswani2017attention} such as large language models. Our proposed models are most closely related to DBNs and DBMs.
See \cref{subsec:connections} for more discussions.

\vspace{1mm}
\noindent\textbf{Organization.}
\cref{sec:model} formally defines DDEs and elaborates on its connection with existing models. \cref{sec:theory} provides identifiability results of DDEs and proves the consistency of a penalized-likelihood estimator. \cref{sec:algorithm} presents scalable computational algorithms for estimating DDEs.
\cref{sec:simulations} and \cref{sec:real data} present simulation studies and real-data applications under various data types. \cref{sec:discussion} concludes the paper. All technical proofs, and additional details about computation and data analysis are in the Supplementary Material.

\section{Deep Discrete Encoders}\label{sec:model}
\noindent\textbf{Notation.}
For a positive integer $K$, denote $[K] = \{1, \ldots, K\}$.
For a matrix $\GG$ with $J$ rows, let $\mathbf{g}_1, \ldots, \mathbf{g}_J$ denote its row vectors.
Let $\mathbf{0}_K, \mathbf{1}_K$ be the all-zero vector and all-one vector of length $K$, respectively. 
Let $\mathbf{e}_k$ be the $K$-dimensional canonical basis vector.
Let $\gl(x) := {e^x}/({1+e^x})$ denote the logistic/sigmoid function. For a finite set $I$, let $S_I$ denote the collection of all permutation maps of $I$, and let $\text{id}_I$ be the identity permutation on $I$.

\subsection{Model Definition}\label{subsec:model def}

The $D$-latent-layer DDE is a generative model with $D$ discrete latent layers.
For each $d \in [D]$, assume that the $(d-1)$th layer is generated conditional on the $d$th. Here, only the bottom layer (indexed by $d=0$) is observed, and all other layers are latent. The bottom layer data can take arbitrary values, but all latent variables are binary, {similar to the celebrated deep belief networks \citep{hinton2006fast}.}
Let $\YY = (Y_1, \ldots, Y_J) \in \times_{j=1}^J \mcy_j$ denote the $J$-dimensional observed responses, where $\mcy_j$ is the sample space for the $j$th response; see the end of this subsection for concrete examples. 
We work under the minimal assumption that each $\mcy_j$ is a separable metric space.
Denote the $d$th latent layer as $\ma^{(d)} = \big(A_1^{(d)}, \ldots, A_{K^{(d)}}^{(d)}\big) \in \{0, 1\}^{K^{(d)}}$, which is a $K^{(d)}$-dimensional binary vector. 
A DDE has a shrinking-ladder-shaped deep architecture,  with the dimension of each layer decreasing as $d$ increases: $K^{(D)} < \ldots < K^{(1)} < J$. 
See \cref{fig:model} for a graphical model representation.

We further specify the distribution of the directed graphical model in a top-down manner. The directed edges in \cref{fig:model} are pointing downward, meaning that the deepest latent variables in the top layer $d = D$ are generated first.
The top layer latent variables are assumed to be independent Bernoullis with parameter $\pp = (p_1, \ldots, p_{K^{(D)}})$:
\begin{align}\label{eq:top layer}
    \PP( \ma^{(D)} = \aaa^{(D)}) = \prod_{k=1}^{K^{(D)}} \PP(A_k^{(D)} = \alpha_k^{(D)}) = \prod_{k=1}^{K_D} p_k^{\alpha_k^{(D)}} (1-p_k)^{1-\alpha_k^{(D)}}, \quad \forall \aaa^{(D)} \in \{0, 1\}^{K^{(D)}}.
\end{align}
Next, define the \emph{middle latent layers} inductively as follows. For each $d > 1$, suppose that $\PP(\ma^{(d)})$, the distribution of the $d$th layer, is given. Then, we define the $(d-1)$th layer distribution by assuming the conditional independence of $A_1^{(d-1)}, \ldots, A_{K^{(d-1)}}^{(d-1)}$ given $\ma^{(d)}$:
\begin{align}\label{eq:middle layer}
    \PP(\ma^{(d-1)} = \aaa^{(d-1)} \mid \ma^{(d)}) = \prod_{k=1}^{K^{(d-1)}} \PP(A^{(d-1)}_k = \alpha^{(d-1)}_k \mid \ma^{(d)}), \quad \forall \aaa^{(d-1)} \in \{0, 1\}^{K^{(d-1)}}. 
\end{align}
Here, we additionally model each conditional distribution in \eqref{eq:middle layer} as
\begin{align}\label{eq:latent conditional distribution}
    A_{k}^{(d-1)}\mid \ma^{(d)} \sim \text{Bernoulli} \Big( \gl \big( \beta_{k,0}^{(d)} + \sum_{l=1}^{K^{(d)}} \beta_{k,l}^{(d)} A_l^{(d)} \big) \Big),
\end{align}
where $\gl$ is the logistic function that maps the real-valued linear combinations to the $[0, 1]$-valued Bernoulli parameters (one may use alternative link functions $g:\mathbb{R}\to[0,1]$, such as the probit link). Collect the $\beta_{k,l}$-parameters in a $K^{(d-1)} \times (K^{(d)}+1)$ matrix $\BB^{(d)}$, whose first column is the intercepts $\big(\beta_{k,0}^{(d)}\big)_{k \in [K^{(d-1)}]}$ and remaining parts are $\big(\beta_{k,l}^{(d)}\big)_{k \in [K^{(d-1)}], l \in [K^{(d)}]}$.

Finally, we model the bottom layer for the observed data. The observed $\YY=(Y_1,\ldots,Y_J)$ are modeled by assuming the conditional independence of $Y_1, \ldots, Y_J$ given $\ma^{(1)}$.
As the observations $\YY$ are not necessarily binary, we replace the Bernoulli conditional distributions in \eqref{eq:latent conditional distribution} by a general parametric family of the form
\begin{equation}\label{eq:observed exp fam}
    Y_j \mid \ma^{(1)} \sim \text{ParFam}_j \Big(g_j \Big(\beta_{j, 0}^{(1)} + \sum_{k=1}^{K^{(1)}} \beta_{j, k}^{(1)} A_k^{(1)}, \gamma_j \Big) \Big).
\end{equation}
Here, $\text{ParFam}_j$ denotes a specific identifiable parametric family, and let $H_j$ denote its parameter space. For convenience, let $p_j$ be the probability mass/density function of $\text{ParFam}_j$. The $g_j: \mathbb{R} \times [0, \infty) \rightarrow H_j$ is a known link function that maps the linear combinations to the parameters of the given parametric family. Here, $\gamma_j > 0$ denotes the dispersion parameter, when there exists one. Throughout the paper, we will state all results under the more general assumption that there exists a dispersion parameter in the parametric family in \eqref{eq:observed exp fam}. If not, one may ignore the notation $\gamma$.
We further elaborate on the specific parameterizations in \eqref{eq:observed exp fam} for various response types $\mcy_j$ at the end of this section.

Following the definition of directed graphical models \citep{koller2009pgm}, we can write the joint distribution of all observed and latent variables based on \eqref{eq:top layer}--\eqref{eq:observed exp fam}:
\begin{align*}
    \PP \big(\YY, \{\ma^{(d)}\}_{d \in [D]} \big) = \PP (\YY \mid \ma^{(1)}) \prod_{d=2}^{D} \PP (\ma^{(d-1)} \mid \ma^{(d)}) \PP(\ma^{(D)}).
\end{align*}
The marginal distribution of $\YY$ is obtained by marginalizing out all of the $D$ latent layers:
{\small
\begin{align}
    ~\PP(\YY) 
    \label{eq:marginal probability}
    =&~ \sum_{\substack{\aaa^{(d)} \in \{0,1\}^{K^{(d)}} \\ \forall d \in [D]}} \PP (\YY |\ma^{(1)} = \aaa^{(1)}) \prod_{d=2}^{D} \PP (\ma^{(d-1)} = \aaa^{(d-1)} \mid \ma^{(d)} =\aaa^{(d)}) \PP(\ma^{(D)} = \aaa^{(D)}).
\end{align}}
The $D$-latent-layer DDE is parametrized by $(\pp, \mcb, \ggamma)$, where $\mcb := \{\BB^{(d)}\}_{d \in [D]}$ and $\ggamma := (\gamma_1, \ldots, \gamma_J)$.
Upon the above marginalization, the induced DDE is a highly expressive model with exponentially many latent mixture components. However, this expressivity also introduces identifiability and computation challenges, which we address in Sections \ref{sec:theory} and \ref{sec:algorithm}.

In many scenarios, it is desirable for the coefficients $\mcb$ to be sparse, as this leads to a more interpretable and parsimonious data-generating mechanism. 
{The interpretability stems from that if a latent variable is connected to only a few, rather than all, variables in the layer below, then these children variables can help pinpoint the interpretation of the latent parent.}
{Similar sparse deep generative architectures have been considered in deep exponential families \citep{ranganath2015deep}, Bayesian pyramids \citep{gu2023bayesian}, and deep cognitive diagnostic models \citep{gu2024deepCDM}.}
Additionally, as will be shown in \cref{sec:identifiability}, the sparsity of the coefficients play a key role in facilitating identifiability. To encode the sparsity of $\BB^{(d)}$, for each $d \in [D]$, define a $K^{(d-1)} \times K^{(d)}$ binary matrix $\GG^{(d)} = (g_{k,l})$, where $g_{k,l}=1$ if the corresponding coefficient $\beta_{k,l}^{(d)}$ is nonzero, and 0 otherwise. Define $K^{(0)} := J$ for notational convenience.
By \eqref{eq:middle layer} and \eqref{eq:latent conditional distribution}, $\GG^{(d)}$ can also be viewed as the {adjacency matrix} or ``graphical matrix'' between the $(d-1)$th layer and the $d$th layer in the graphical model representation (see \cref{fig:model}). 
Collect all $\GG^{(d)}$s by defining $\mcg := \{\GG^{(d)}\}_{d \in [D]}$, \darkblue{and collect the number of latent variables in each layer by defining $\mck := \{K^{(d)}\}_{d \in [D]}.$}
Now, we formally define $D$-latent-layer DDEs, which also incorporate $\mcg$ as unknown parameters.

\begin{definition}[DDE]\label{def:dde}
    A $D$-latent-layer DDE with parameters $(\pp, \mcb, \mcg,\ggamma)$ 
    is a statistical model with marginal distribution of the observed data as given in \eqref{eq:marginal probability}. When $\mck$ is known and fixed, DDEs can be viewed as parametric families with parameters $(\TT, \mcg)$ and probability mass/density functions $\PP_{\TT, {\mcg}}$, where $\TT:= (\pp, \mcb, \ggamma)$ denotes all continuous parameters.
\end{definition}

Next, we give some examples of the various response types $\mcy_j$ allowed in the DDE framework,
along with the corresponding link functions $g$ and parametrizations for \eqref{eq:observed exp fam}. 
As mentioned in the Introduction, the later numerical experiments consider three types of responses: (i) binary, (ii) count, and (iii) continuous.
We model each of these responses using (i) Bernoulli with $g = g_{\text{logistic}}$, (ii) Poisson with an exponential link $g(x) = e^x$, and (iii) Normal with an identity link $g(x,y) = (x,y)$, respectively. Modeling other data types is also straightforward.
While not required, we typically consider that the data types are the same across the $p$ features.
In such cases, we omit the subscript in $\mcy_j, g_j$, and write $\mcy$ and $g$.

\subsection{Connections with Existing Models}
\label{subsec:connections}

Related deep latent variable models in the machine learning literature include the deep Boltzmann machine \citep[DBM,][]{salakhutdinov2009deep}, deep belief networks \citep[DBNs,][]{hinton2006fast}, and deep exponential families \citep[DEFs,][]{ranganath2015deep}. The DBM and DBN both contain multiple binary latent layers and differ in the directions of the edges; see \cref{fig:dbn dbm}. DBM and DBN have been originally proposed to model binary data and have been later extended to handle {continuous or count}
responses \citep{cho2013gaussian, gan2015scalable, li2019novel}. 
We recommend the review \cite{salakhutdinov2015learning} for more details and references on DBM and DBNs. 
DEFs are an unsupervised modeling framework that uses exponential families to model conditional distributions for each layer.

\begin{figure}[h!]
\centering
\begin{minipage}[c]{0.22\linewidth}
\resizebox{0.8\textwidth}{!}{
\begin{tikzpicture}[scale=1.2]
	\node (h1)[hiddens] at (0, 3) {$A_1^{(2)}$};
    \node (h2)[hiddens] at (1.5, 3) {$A_2^{(2)}$};
    \node (h3)[hiddens] at (3, 3) {$A_3^{(2)}$};
    \node (h4)[hiddens] at (0, 1.5) {$A_1^{(1)}$};
    \node (h5)[hiddens] at (1.5, 1.5) {$A_2^{(1)}$};
    \node (h6)[hiddens] at (3, 1.5) {$A_3^{(1)}$};
    \node (y1)[neuron] at (0, 0) {$Y_1$};
    \node (y2)[neuron] at (1.5, 0) {$Y_2$};
    \node (y3)[neuron] at (3, 0) {$Y_3$};
    
    \draw[-] (h1) -- (h4) node [midway,above=-0.12cm,sloped] {}; 
    \draw[-] (h1) -- (h5) node [midway,above=-0.12cm,sloped] {}; 
    \draw[-] (h1) -- (h6) node [midway,above=-0.12cm,sloped] {}; 
    \draw[-] (h2) -- (h4) node [midway,above=-0.12cm,sloped] {}; 
    \draw[-] (h2) -- (h5) node [midway,above=-0.12cm,sloped] {}; 
    \draw[-] (h2) -- (h6) node [midway,above=-0.12cm,sloped] {}; 
    \draw[-] (h3) -- (h4) node [midway,above=-0.12cm,sloped] {}; 
    \draw[-] (h3) -- (h5) node [midway,above=-0.12cm,sloped] {}; 
    \draw[-] (h3) -- (h6) node [midway,above=-0.12cm,sloped] {}; 
    \draw[-] (h4) -- (y1) node [midway,above=-0.12cm,sloped] {}; 
    \draw[-] (h4) -- (y2) node [midway,above=-0.12cm,sloped] {}; 
    \draw[-] (h4) -- (y3) node [midway,above=-0.12cm,sloped] {}; 
    \draw[-] (h5) -- (y1) node [midway,above=-0.12cm,sloped] {}; 
    \draw[-] (h5) -- (y2) node [midway,above=-0.12cm,sloped] {}; 
    \draw[-] (h5) -- (y3) node [midway,above=-0.12cm,sloped] {}; 
    \draw[-] (h6) -- (y1) node [midway,above=-0.12cm,sloped] {}; 
    \draw[-] (h6) -- (y2) node [midway,above=-0.12cm,sloped] {}; 
    \draw[-] (h6) -- (y3) node [midway,above=-0.12cm,sloped] {}; 
\end{tikzpicture}
} 
\caption*{\hspace{-7mm}DBM}
\end{minipage}
\begin{minipage}[c]{0.22\linewidth}
\resizebox{0.8\textwidth}{!}{
\begin{tikzpicture}[scale=1.2]
	\node (h1)[hiddens] at (0, 3) {$A_1^{(2)}$};
    \node (h2)[hiddens] at (1.5, 3) {$A_2^{(2)}$};
    \node (h3)[hiddens] at (3, 3) {$A_3^{(2)}$};
    \node (h4)[hiddens] at (0, 1.5) {$A_1^{(1)}$};
    \node (h5)[hiddens] at (1.5, 1.5) {$A_2^{(1)}$};
    \node (h6)[hiddens] at (3, 1.5) {$A_3^{(1)}$};
    \node (y1)[neuron] at (0, 0) {$Y_1$};
    \node (y2)[neuron] at (1.5, 0) {$Y_2$};
    \node (y3)[neuron] at (3, 0) {$Y_3$};
    
    \draw[-] (h1) -- (h4) node [midway,above=-0.12cm,sloped] {}; 
    \draw[-] (h1) -- (h5) node [midway,above=-0.12cm,sloped] {}; 
    \draw[-] (h1) -- (h6) node [midway,above=-0.12cm,sloped] {}; 
    \draw[-] (h2) -- (h4) node [midway,above=-0.12cm,sloped] {}; 
    \draw[-] (h2) -- (h5) node [midway,above=-0.12cm,sloped] {}; 
    \draw[-] (h2) -- (h6) node [midway,above=-0.12cm,sloped] {}; 
    \draw[-] (h3) -- (h4) node [midway,above=-0.12cm,sloped] {}; 
    \draw[-] (h3) -- (h5) node [midway,above=-0.12cm,sloped] {}; 
    \draw[-] (h3) -- (h6) node [midway,above=-0.12cm,sloped] {}; 
    \draw[qedge] (h4) -- (y1) node [midway,above=-0.12cm,sloped] {}; 
    \draw[qedge] (h4) -- (y2) node [midway,above=-0.12cm,sloped] {}; 
    \draw[qedge] (h4) -- (y3) node [midway,above=-0.12cm,sloped] {}; 
    \draw[qedge] (h5) -- (y1) node [midway,above=-0.12cm,sloped] {}; 
    \draw[qedge] (h5) -- (y2) node [midway,above=-0.12cm,sloped] {}; 
    \draw[qedge] (h5) -- (y3) node [midway,above=-0.12cm,sloped] {}; 
    \draw[qedge] (h6) -- (y1) node [midway,above=-0.12cm,sloped] {}; 
    \draw[qedge] (h6) -- (y2) node [midway,above=-0.12cm,sloped] {}; 
    \draw[qedge] (h6) -- (y3) node [midway,above=-0.12cm,sloped] {}; 
\end{tikzpicture}
} 
\caption*{\hspace{-7mm}DBN}
\end{minipage}
\begin{minipage}[c]{0.22\linewidth}
\resizebox{0.8\textwidth}{!}{
\begin{tikzpicture}[scale=1.2]
	\node (h1)[hiddens] at (0, 3) {$A_1^{(2)}$};
    \node (h2)[hiddens] at (1.5, 3) {$A_2^{(2)}$};
    \node (h3)[hiddens] at (3, 3) {$A_3^{(2)}$};
    \node (h4)[hiddens] at (0, 1.5) {$A_1^{(1)}$};
    \node (h5)[hiddens] at (1.5, 1.5) {$A_2^{(1)}$};
    \node (h6)[hiddens] at (3, 1.5) {$A_3^{(1)}$};
    \node (y1)[neuron] at (0, 0) {$Y_1$};
    \node (y2)[neuron] at (1.5, 0) {$Y_2$};
    \node (y3)[neuron] at (3, 0) {$Y_3$};
    
    \draw[qedge] (h1) -- (h4) node [midway,above=-0.12cm,sloped] {}; 
    \draw[qedge] (h1) -- (h5) node [midway,above=-0.12cm,sloped] {}; 
    \draw[qedge] (h1) -- (h6) node [midway,above=-0.12cm,sloped] {}; 
    \draw[qedge] (h2) -- (h4) node [midway,above=-0.12cm,sloped] {}; 
    \draw[qedge] (h2) -- (h5) node [midway,above=-0.12cm,sloped] {}; 
    \draw[qedge] (h2) -- (h6) node [midway,above=-0.12cm,sloped] {}; 
    \draw[qedge] (h3) -- (h4) node [midway,above=-0.12cm,sloped] {}; 
    \draw[qedge] (h3) -- (h5) node [midway,above=-0.12cm,sloped] {}; 
    \draw[qedge] (h3) -- (h6) node [midway,above=-0.12cm,sloped] {}; 
    \draw[qedge] (h4) -- (y1) node [midway,above=-0.12cm,sloped] {}; 
    \draw[qedge] (h4) -- (y2) node [midway,above=-0.12cm,sloped] {}; 
    \draw[qedge] (h4) -- (y3) node [midway,above=-0.12cm,sloped] {}; 
    \draw[qedge] (h5) -- (y1) node [midway,above=-0.12cm,sloped] {}; 
    \draw[qedge] (h5) -- (y2) node [midway,above=-0.12cm,sloped] {}; 
    \draw[qedge] (h5) -- (y3) node [midway,above=-0.12cm,sloped] {}; 
    \draw[qedge] (h6) -- (y1) node [midway,above=-0.12cm,sloped] {}; 
    \draw[qedge] (h6) -- (y2) node [midway,above=-0.12cm,sloped] {}; 
    \draw[qedge] (h6) -- (y3) node [midway,above=-0.12cm,sloped] {}; 
\end{tikzpicture}
} 
\hspace{-3cm}
\caption*{\hspace{-7mm}DEF}
\end{minipage}
\begin{minipage}[c]{0.22\linewidth}
\resizebox{0.8\textwidth}{!}{
\begin{tikzpicture}[scale=1.2]
    \node (h1)[hiddens] at (1.5, 3) {$A_1^{(2)}$};
    \node (h2)[hiddens] at (0.75, 1.5) {$A_1^{(1)}$};
    \node (h3)[hiddens] at (2.25, 1.5) {$A_2^{(1)}$};
    \node (y1)[neuron] at (0, 0) {$Y_1$};
    \node (y2)[neuron] at (1.5, 0) {$Y_2$};
    \node (y3)[neuron] at (3, 0) {$Y_3$};
    
    \draw[qedge] (h1) -- (h2) node [midway,above=-0.12cm,sloped] {}; 
    \draw[qedge] (h1) -- (h3) node [midway,above=-0.12cm,sloped] {}; 
    \draw[qedge] (h2) -- (y1) node [midway,above=-0.12cm,sloped] {};
    \draw[qedge] (h2) -- (y2) node [midway,above=-0.12cm,sloped] {};
    \draw[qedge] (h3) -- (y2) node [midway,above=-0.12cm,sloped] {}; 
    \draw[qedge] (h3) -- (y3) node [midway,above=-0.12cm,sloped] {}; 
\end{tikzpicture}
} 
\caption*{\hspace{-7mm}DDE}
\end{minipage}

\caption{
Comparison of the graphical structure of DDEs to relevant deep generative models. DBM: binary data and binary latent. DBN: binary data and binary latent. DEF: exponential family conditional distributions. DDE: general-response data and binary latent.}
\label{fig:dbn dbm}
\end{figure}
\vspace{-5mm}

Despite their numerous empirical successes, the aforementioned machine learning models are usually fundamentally non-identifiable due to an enormous number of latent variables and parameters organized in a complex nonlinear architecture.
These models are typically heavily overparametrized, making it challenging to understand and interpret the latent representations. 
Moreover, popular existing estimation procedures for these models are developed to maximize a tractable but less theoretically understood alternative to the likelihood, such as contrastive divergence or layer-wise variational approximation.

As to be shown later, DDEs resolve all these issues by assuming a shrinking-ladder architecture of entirely discrete latent variables and potentially sparse layerwise connections (see \cref{fig:dbn dbm}).
This allows us to establish identifiability and consistency as well as effectively reduce the model dimension and interpret the latent structure. 

On a related note, several identifiable deep models have recently been proposed.
\cite{gu2023bayesian} proposed \emph{Bayesian pyramids}, which are identifiable multilayer discrete latent variable models with a pyramid structure.
But the methodology and identifiability theory therein are restricted to multivariate categorical data. DDEs significantly broaden the applicability of Bayesian pyramids by modeling arbitrarily flexible data types while still remaining identifiable. 
\darkblue{\cite{kong2024learning} considered a modeling framework allowing both discrete and continuous latent variables, where the discrete part of the model can have more flexible graphical structures than the multi-layer structure. However, as a tradeoff, they consider a weaker notion of identifiability (up to atomic cover structures) and propose less explicit conditions for identifiability.
Finally,  \cite{anandkumar2013learning} and \cite{xie2024generalized} consider {continuous} latent variable models and prove identifiability under potentially more general graph structures, but assume that all conditional distributions are \emph{linear}. In contrast, DDEs allow arbitrary nonlinear link functions, which improves representation power. We remark that DDEs are identifiable even when higher-order interaction effects among latent variables are in the model (as detailed in Supplement S.1.6). We provide additional literature review regarding identifiable VAEs and psychometric models in Supplement S.6.}

\section{Theoretical Guarantees for DDEs}\label{sec:theory}
\subsection{Model Identifiability}
\label{sec:identifiability}
In this section, we establish the identifiability of DDEs. Assuming known numbers of latent variables, we prove that both the continuous model parameters and the discrete graph structures between layers can be uniquely identified 
under certain conditions on the true graphical matrices $\mathbf G^{(d)}$'s.
We first make an assumption to address some trivial ambiguities. For notational convenience, denote $K^{(0)} := J$ and $A_j^{(0)} := Y_j$.

\begin{assumption}\label{assmp:proportion}
    Assume that the graphical matrices $\mcg = \{\GG^{(d)}\}_{d \in [D]}$ and proportion parameters $\bo p$ satisfy the following conditions.
    \begin{enumerate}[(a),itemsep=0pt]
        \item For all $k \in [K^{(D)}]$, $p_k \in (0, 1)$. 
        \item For all $d \in [D]$, $\GG^{(d)}$ does not have all-zero columns and is faithful in the sense that for any $k \in [K^{(d-1)}], l \in [K^{(d)}]$, $g_{k,l}^{(d)} = 0$ if and only if $\beta_{k,l+1}^{(d)} = 0$. 
        \item For any $d \in [D]$, all column sums of $\BB^{(d)}$ except for the first column are strictly positive.
    \end{enumerate}
\end{assumption}

Condition (a) and the first part of (b) is required for the latent dimension $\mck$ to be well-defined, in the sense that removing or adding a latent variable must change the marginal distribution \eqref{eq:marginal probability}. 
Condition (b) is a standard faithfulness assumption in graphical models \citep[e.g. see Definition 3.8 in][]{koller2009pgm} that follows from our definition of $\GG^{(d)}$.
Condition (c) is introduced to avoid trivial sign-flipping of latent variables. This condition ensures that for each latent variable $A_k^{(d)}$, the value $A_k^{(d)} =1$ implies a larger coefficient of the $(k+1)$th row of $\BB^{(d)}$, and may be replaced with other monotonicity assumptions. For example, one can alternatively assume that the first nonzero coefficient in each column of $\BB^{(d)}$ is positive. We emphasize that condition (c) is much weaker compared to the nonnegative coefficient assumption $\beta_{j,k} \ge 0$, which is a popular assumption for various identifiable latent variable models \citep{donoho2003does, chen2020sparse, lee2024new}.

Now, we formally define the parameter space and the notion of identifiability. For multilayer latent variable models, there are inevitable latent variable permutation issues within each latent layer. Hence, we introduce the notion of identifiability up to equivalence.

\begin{definition}[Parameter space]\label{def:parameter space}
    Consider a $D$-latent-layer DDE with $\mathcal{K}=\{K^{(d)}\}_{d \in [D]}$ latent variables. We define the parameter space of the continuous parameters $\TT$ given $\mcg$ as
    $\Omega_{\mathcal{K}}(\TT; \mcg) := \{\TT: \beta_{l,k}^{(d)} \neq 0 \text{ if and only if } g_{l,k}^{(d)} = 1, \gamma_j > 0 \}.$
    Define the joint parameter space for all parameters to be $\Omega_{\mathcal{K}}(\TT, \mcg) :=\{(\TT, \mcg): \TT \in \Omega_{\mathcal{K}}(\TT ; \mcg) \}.$
\end{definition}

\begin{definition}[Identifiability up to equivalence]\label{def:identifiability up to permutation}

    For a D-layer DDE with $\mck$ latent variables, we define an equivalence relationship ``$\sim_{\mck}$'' by setting $(\TT, \mcg) \sim_{\mck} (\tilde{\TT}, \tilde{\mcg})$ if and only if $ \bo\gamma = \tilde{ \bo\gamma}$ and there exists permutations $\sigma^{(d)} \in S_{[K^{(d)}]}$
    for all $d \in [D]$ such that the following hold:
    \begin{itemize}[itemsep=0pt]
        \item $p_k = \tilde{p}_{\sigma^{(D)}(k)}$ 
        \item ${g}_{l,k}^{(d)} = \tilde{{g}}_{\sigma^{(d-1)}(l) ,\sigma^{(d)}(k)}^{(d)}$ and $\beta_{l,k}^{(d)} = \tilde{\beta}_{\sigma^{(d-1)}(l) ,\sigma^{(d)}(k)}^{(d)}$ for all $d \in [D]$, $k \in [K^{(d)}], l \in [K^{(d-1)}]$
    \end{itemize}
    Here, we set $\sigma^{(0)} = \textnormal{id}_{[J]}$.
    We say that the DDE with true parameters $(\TT^\star, \mcg^\star)$ is identifiable up to $\sim_{\mck}$ when for any alternate parameter value $(\TT, \mcg) \in \Omega_{\mck}(\TT, \mcg)$ with $\PP_{\TT, \mcg} = \PP_{\TT^\star, \mcg^\star}$, it holds that $(\TT, \mcg) \sim_{\mck} (\TT^\star, \mcg^\star)$. Here, $\PP_{\TT, \mcg}$ is the marginal distribution of $\YY$ defined in \eqref{eq:marginal probability}.
\end{definition}
Despite the seemingly heavy notation, the equivalence relationship is quite natural. \darkblue{For example, in the right panel of Figure 1, the location of the latent variables in each layer can be arbitrarily permuted without changing the likelihood. In other words, the latent variable ``sports'' and ``car'' can be equivalently indexed by $(1, 2)$ or $(2, 1)$ as long as the associated arrows are permuted accordingly.}
For the $d$th latent layer, the permutation $\sigma^{(d)}$ corresponds to this fundamental yet relatively trivial label-switching map.
The following theorem is our first main result on the identifiability of DDEs. Here, \darkblue{we say that a variable $X$ is a ``pure child'' of $Y$ if $Y$ is the only parent of $X$.}

\begin{theorem}[Identifiability of DDEs]\label{thm:deep id}
Consider a $D$-latent-layer DDE with true parameters $(\TT^\star, \mcg^\star)$, \darkblue{where the number of latent variables, $\mck$, is given}. Suppose that for all $d \le D-1$, the true graphical structures $\GG^{(d) \star}$ and parameters $\BB^{(d) \star}$ satisfy the following conditions:
\begin{enumerate}
    \item[A.] Each latent variable $A_k^{(d)}$ has at least \darkblue{two pure children $A_{j_{k,1}}^{(d-1)}, A_{j_{k,2}}^{(d-1)}$}  according to $\GG^{(d)\star}$.
    \item[B.] For any $\aaa^{(d)} \neq \aaa'^{(d)} \in\{0,1\}^{K^{(d)}}$, there exists $j \in [K^{(d-1)}] \setminus \cup_{k=1}^{K^{(d)}} \{j_{k,1}, j_{k,2}\}$ such that $\sum_{k=1}^{K^{(d)}} \beta_{j,k}^{(d)\star} (\alpha_k^{(d)} - \alpha_k^{'(d)}) \neq 0.$
\end{enumerate}
Then, the model components $(\TT, \mcg)$ are identifiable.

\end{theorem}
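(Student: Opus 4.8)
The plan is to prove identifiability by induction on the number of latent layers $D$, peeling off one layer at a time from the bottom. The key structural idea is that conditions A and B together make the first latent layer $\ma^{(1)}$ play the role of an identifiable ``bottleneck'': the pure children of $\ma^{(1)}$ let us recover the conditional distributions $\PP(Y_j \mid \ma^{(1)})$ and hence the mixing distribution of $\ma^{(1)}$, after which the remaining layers form a DDE with one fewer layer to which the inductive hypothesis applies. So the heart of the matter is the base case, which is really a statement about a \emph{one-latent-layer} model with observed bottom layer $\YY$ and latent $\ma^{(1)}$, plus the sparsity structure $\GG^{(1)}$.

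\textbf{Base case.} First I would show that the marginal law of $\YY$ determines (up to permutation of the $2^{K^{(1)}}$ latent configurations and relabeling of the $K^{(1)}$ coordinates) the number of latent classes, the class probabilities $\PP(\ma^{(1)} = \aaa)$, and the conditional laws $\PP(Y_j \mid \ma^{(1)} = \aaa)$ for each $j$. The tool here is a Kruskal-type / three-way tensor argument: by condition A, each $A_k^{(1)}$ has at least two pure children among $Y_1,\dots,Y_J$, so after grouping variables appropriately one obtains a three-way array of conditional distributions whose component matrices have full Kruskal rank (distinct columns, guaranteed by faithfulness in Assumption \ref{assmp:proportion}(b) together with the identifiability of each $\mathrm{ParFam}_j$ and the column-sum positivity in (c)), and Kruskal's theorem / Allman--Matias--Rhodes gives uniqueness of the finite mixture decomposition. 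Since each $\mathrm{ParFam}_j$ is an identifiable parametric family and $g_j$ is a known link, recovering $\PP(Y_j\mid\ma^{(1)})$ as a function on $\{0,1\}^{K^{(1)}}$ yields the values $\beta_{j,0}^{(1)} + \sum_k \beta_{j,k}^{(1)}\alpha_k$ for all $\aaa$; taking differences across $\aaa$'s that flip a single coordinate pins down each $\beta_{j,k}^{(1)}$, and hence $\GG^{(1)}$, up to the coordinate permutation $\sigma^{(1)}$. The monotonicity condition (c) removes the residual sign-flip ambiguity $A_k^{(1)} \mapsto 1 - A_k^{(1)}$, fixing the orientation of each binary coordinate.

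\textbf{Reconstructing the next layer.} Having identified the joint pmf of $\ma^{(1)}$ on $\{0,1\}^{K^{(1)}}$, I would observe that this pmf is exactly the marginal distribution of the bottom layer of the $(D-1)$-layer DDE obtained by deleting the original observed layer and treating $\ma^{(1)}$ as ``observed'' Bernoulli data. Conditions A and B for levels $d = 1,\dots,D-1$ of the original model become conditions A and B for levels $d = 0,\dots,D-2$ of this reduced model (note $K^{(0)}$ was defined to be $J$, and the reduced model has its own $\mathrm{ParFam} = \text{Bernoulli}$, $g = g_{\text{logistic}}$), and Assumption \ref{assmp:proportion} is inherited. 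Here condition B is what guarantees the reduced model's ``bottom-layer conditionals'' are genuinely distinct as functions of $\ma^{(2)}$ — it is the analogue at layer $d$ of the distinctness-of-columns requirement that made Kruskal applicable at the bottom. So by the inductive hypothesis the reduced DDE is identifiable up to $\sim_{\mck'}$, which together with the already-established identification of layer-$1$ quantities gives $(\TT,\mcg)\sim_{\mck}(\TT^\star,\mcg^\star)$.

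\textbf{Main obstacle.} The delicate point is the base case, specifically verifying that the relevant component matrices have \emph{full Kruskal rank} — i.e.\ that distinct latent configurations $\aaa \neq \aaa'$ induce distinct conditional-distribution columns — using only two pure children per latent variable rather than three. With exactly two pure children of $A_k^{(1)}$, the standard ``three independent views'' setup is not immediate; one must exploit the \emph{other} children (condition B supplies coordinates $j$ outside all the pure-children sets on which $\aaa\mapsto\sum_k\beta_{j,k}^{(1)}\alpha_k$ separates every pair $\aaa\neq\aaa'$) to build a valid third view, or argue via a generalized Kruskal / uniqueness result for the two-pure-plus-separating-set configuration. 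Handling this carefully — making sure the grouping of variables into three blocks yields three matrices whose Kruskal ranks sum to at least $2\cdot(\#\text{classes}) + 2$ — is where the real work lies; the rest is bookkeeping with the link functions, the faithfulness assumption, and the permutation/sign-flip cleanup.
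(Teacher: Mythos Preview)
Your proposal is correct and takes essentially the same approach as the paper: a layer-wise peeling argument where the one-latent-layer base case is established via Kruskal's three-way tensor uniqueness, with the two sets of pure children from condition A forming the first two blocks (each of full Kruskal rank $2^{K^{(1)}}$) and the remaining variables forming the third block (Kruskal rank $\ge 2$ by condition B), after which the pure children are used to pass from the $S_{\{0,1\}^K}$ column-permutation ambiguity to the $S_{[K]}$ coordinate-permutation ambiguity. The paper packages the base case as a separate proposition (Proposition~\ref{prop:sid}, relying on a lemma from \cite{lee2024new} for the Kruskal step and a combinatorial lemma on $|\mathcal S_j|$ for recovering $\GG$), but the architecture and the identification of the ``main obstacle'' match yours.
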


Our key observation behind proving the identifiability of the complex deep generative structures in DDEs is that, with discrete latent layers, it suffices to establish identifiability for a one-latent-layer model and proceed in a layer-wise manner inductively.
To elaborate, consider a ``collapsed'' DDE where the latent layers indexed by $d = 2, \ldots, D$ are marginalized out to give a probability mass function (pmf) for the first latent layer: $\{\PP(\ma^{(1)} = \aaa^{(1)}): \aaa^{(1)} \in \{0,1\}^{K^{(1)}}\}$.
{If the collapsed model is proven to be identifiable, that means the conditional distributions $\mathbb P(Y_j\mid \ma^{(1)})$ and the marginal distribution $\PP(\ma^{(1)})$ can be uniquely identified from the data distribution $\mathbb P(\YY)$}. In this case, we can determine the \darkblue{structured pmf up to the inevitable permutation of the latent variables (indexed by $\sigma^{(1)}\in S_{[K^{(1)}]}$ in \cref{def:identifiability up to permutation})}. 
Then, we can theoretically treat the shallowest latent layer $\mathbf A^{(1)}$ as if it was observed, because its probability mass function is now identified and known. Then, viewing this pmf as the marginal distribution of $K^{(1)}$-dimensional observed variables of a $(D-1)$-latent-layer Bernoulli-DDE, we can inductively identify all parameters via a layerwise argument. 
Please see Supplementary Material S.1 for the detailed proof.

We next give an interpretation of the conditions in \cref{thm:deep id}. \darkblue{Condition A requires each latent variable $A_k^{(d)}$ to have at least two \emph{pure children} in the layer below. For example, the latent variable ``tech'' in the right panel of Figure \ref{fig:model} has two pure children ``graphics'' and ``space''.}
Condition B is more technical and is introduced to distinguish the different binary latent configurations $\aaa \neq \aaa'$. 
For example, condition B holds when each latent variable $A_k^{(d)}$ has a third pure child that is distinct from those in condition A. \darkblue{We also provide identifiability guarantees for the number of latent variables in Supplement S.1.6.}

The pure children requirements in condition A can be further relaxed under a weaker notion of \emph{generic identifiability}. Generic identifiability allows a measure-zero subset of the parameter space to be non-identifiable, and often holds under weaker conditions than that in Definition \ref{def:identifiability up to permutation}.
As the concept was originally proposed under a continuous parameter space \citep{allman2009identifiability}, we consider the following modified definition that considers a smaller parameter space for the coefficients $\mcb$ given the true graphical matrices.

\begin{definition}[Generic identifiability]\label{def:gid}
    Consider a $D$-latent-layer DDE with $\mck$ latent variables, graphical matrices $\mcg^\star$, and true parameters belonging to 
    $\Omega_{\mck} (\TT; \mcg^\star)$. 
    The model is generically identifiable up to $\sim_{\mck}$ when
        $\{\TT \in \Omega_{\mck} (\TT; \mcg^\star)$:  there exists  $(\tilde{\TT}, \tilde{\mcg}) \not \sim_\mck (\TT,\mcg^\star)$ such that $\PP_{\tilde{\TT}, \tilde{\mcg}} = \PP_{\TT,\mcg^\star} \}$
    is a measure-zero set with respect to $\Omega_{\mck}(\TT; \mcg^\star)$.
\end{definition}
\noindent

For generic identifiability, we introduce an additional assumption on the parametric families used to model the conditional distribution $Y_j \mid \ma^{(1)}$ in \eqref{eq:observed exp fam}. This is a technical assumption that arises from our proof technique for dealing with measure-zero sets. This assumption holds for all example parametric families described in \cref{subsec:model def}, and more generally for exponential families with an analytic {log}-partition function.

\begin{assumption}[Analytic family]\label{assmp:monotone family}
    Let $p(\cdot ; \eta, \gamma)$ be the pmf/pdf of a parametric family, indexed by $\eta, \gamma$ and equipped with a sample space $\mathcal{Y}$.
    We say that $p$ is \emph{analytic} when the pmf/pdf $p(Y; \eta, \gamma)$ is analytic in both $\eta, \gamma$, for all $Y \in \mcy$.
\end{assumption}

We next state the generic identifiability result for DDEs. \darkblue{In graph theory, a bipartite graph is said to have a ``perfect matching'' if it contains a set of edges without common vertices that covers every vertex of the graph \citep{hall2011combinatorial}; see \cref{ex:id condition} for an illustration.}} 

\begin{theorem}[Generic identifiability of DDEs]\label{thm:deep gid}
Consider the $D$-latent-layer DDE where the number of latent variables $\mck$ is given, and all parametric families and link function $g_j$s are analytic. Let $\mcg^\star$ denote the true graphical matrices and suppose the true parameter lives in $\Omega_{\mck}(\TT; \mcg^\star)$. Suppose that for all $d \le D-1$, the true graphical structure $\GG^{(d)}$ satisfy the following condition C:
\begin{enumerate}
    \item[C.] \darkblue{There exists a partition of the $(d-1)$th layer variable indices $[K^{(d-1)}] = \mathcal{I}_1 \cup \mathcal{I}_2 \cup \mathcal{I}_3$ that satisfies the following properties: (i) for $a=1, 2$, there is a perfect matching in the bipartite graph between $\ma^{(d-1)}_{\mathcal{I}_a}$ and $\ma^{(d)}$, (ii) each latent variable $A_k^{(d)}$ has at least one child among $\ma^{(d-1)}_{\mathcal{I}_3}$.}
\end{enumerate}
Then, the model components $(\TT, \mcg)$ are generically identifiable.
\end{theorem}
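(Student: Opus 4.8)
The strategy is to reuse the layerwise reduction behind \cref{thm:deep id}, but to replace its exact single-layer identification step by a \emph{generic} one built on Kruskal's three-way array uniqueness theorem, in the form used by \citet{allman2009identifiability}. It suffices to prove the following one-latent-layer fact and iterate it up the ladder (with $\ma^{(0)}:=\YY$): if $\ma^{(d)}$ has a full-support distribution on $\{0,1\}^{K^{(d)}}$ and the coordinates of $\ma^{(d-1)}$ are conditionally independent given $\ma^{(d)}$ through \eqref{eq:latent conditional distribution}--\eqref{eq:observed exp fam}, then $\PP(\ma^{(d-1)}\mid\ma^{(d)})$, $\PP(\ma^{(d)})$, and $\GG^{(d)}$ are identified up to a permutation of the $K^{(d)}$ coordinates --- for all $\BB^{(d)}$ outside a measure-zero set --- whenever $\GG^{(d)}$ admits the partition $[K^{(d-1)}]=\mathcal{I}_1\cup\mathcal{I}_2\cup\mathcal{I}_3$ of condition~C. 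To set up Kruskal, treat $\ma^{(d)}$ as a latent class variable with $r=2^{K^{(d)}}$ classes indexed by $\aaa\in\{0,1\}^{K^{(d)}}$; grouping the coordinates of $\ma^{(d-1)}$ by $\mathcal{I}_1,\mathcal{I}_2,\mathcal{I}_3$ and using conditional independence expresses $\PP(\ma^{(d-1)})$ as the three-way array $\sum_{\aaa}\pi_{\aaa}\,m_{\aaa}^{(1)}\otimes m_{\aaa}^{(2)}\otimes m_{\aaa}^{(3)}$, and Kruskal's theorem identifies the factor matrices $M_a=[m_{\aaa}^{(a)}]_{\aaa}$ and the weights $\pi_\aaa$ up to one common permutation of the $r$ classes once $\mathrm{krank}(M_1)+\mathrm{krank}(M_2)+\mathrm{krank}(M_3)\ge 2r+2$.

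\textbf{The crux: generic Kruskal ranks.} For $a=1,2$ the perfect matching forces $|\mathcal{I}_a|=K^{(d)}$, so $M_a$ has $r=2^{K^{(d)}}$ columns, and I claim it has full column rank $r$ off a measure-zero set of $\BB^{(d)}$. Full rank (resp.\ linear independence of the $r$ class-conditional densities in the continuous-response case) is witnessed by the non-vanishing of the determinant of a finite submatrix built from finitely many test values of $\ma^{(d-1)}_{\mathcal{I}_a}$, which is an \emph{analytic} function of $\BB^{(d)}$ by the analyticity of the parametric families (\cref{assmp:monotone family}) and of the link functions $g_j$. So it is enough to exhibit one parameter choice where the determinant is nonzero: for each $a$ take the degenerate ``pure-matching'' configuration in which the child of each $A^{(d)}_k$ matched by $\mathcal{I}_a$ is made to depend on $A^{(d)}_k$ alone (zeroing out the remaining graph-allowed coefficients in its row --- a point in the closure of the parameter space). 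There the class-conditional laws of $\ma^{(d-1)}_{\mathcal{I}_a}$ factor and $M_a$ becomes a Kronecker product $\bigotimes_k Q_k$ of two-by-two blocks, each invertible because $\beta^{(d)}_{j_k,k}\neq0$ makes the two single-variable conditionals distinct (via $\gl$, resp.\ the identifiability of $\text{ParFam}_{j_k}$); hence $\det M_a\neq0$ there, so $\mathrm{krank}(M_a)=r$ generically. For $M_3$, condition~C(ii) gives every $A^{(d)}_k$ a child in $\mathcal{I}_3$, so for $\aaa\neq\aaa'$ the rows $m_{\aaa}^{(3)},m_{\aaa'}^{(3)}$ are distinct probability vectors --- hence linearly independent --- except on the measure-zero locus where $\sum_l\beta^{(d)}_{j,l}(\alpha_l-\alpha_l')=0$ for the relevant child $j$; thus $\mathrm{krank}(M_3)\ge2$ generically and $r+r+2\ge2r+2$.

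\textbf{From classes to coordinates, and the induction.} Kruskal returns the class-conditionals only up to an arbitrary relabeling of the $r$ abstract classes, which must be promoted to a permutation of the $K^{(d)}$ binary coordinates. Factoring each group-conditional into its (uniquely determined) per-variable marginals recovers $\PP(A^{(d-1)}_j\mid c)$ for every $j$ and class $c$; identifiability of $\text{ParFam}_j$ then yields the natural parameters $\eta_{j,c}=\beta^{(d)}_{j,0}+\sum_l\beta^{(d)}_{j,l}(\aaa_c)_l$ and $\gamma_j$. Restricting $j$ to the two matchings' markers and using that, generically in $\BB^{(d)}$, the $K^{(d)}\times K^{(d)}$ linear part of the affine map $\aaa\mapsto(\eta_{j_k,c})_k$ is invertible pins down the labeling $c\mapsto\aaa_c$ up to a coordinate permutation; faithfulness (\cref{assmp:proportion}(b)) then reads off $\GG^{(d)}$, and the monotonicity of \cref{assmp:proportion}(c) fixes each latent variable's orientation, giving $\BB^{(d)}$ and $\boldsymbol\gamma$. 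Finally, iterate: apply the above to the bottom layer, treat the now-identified $\PP(\ma^{(1)})$ as observed data for the remaining $(D-1)$-layer Bernoulli sub-DDE, and continue up, with the product form \eqref{eq:top layer} (full support by \cref{assmp:proportion}(a)) as the base case and $\pp$ read off from the identified $\PP(\ma^{(D)})$. At level $d$ the only thing the argument needs from the layers above is that $\PP(\ma^{(d)})$ has full support, which is automatic; hence the exceptional set is contained in a proper analytic subvariety of the $\BB^{(d)}$-coordinate block, and taking the finite union over $d$ and invoking Fubini shows the non-identifiable set is measure-zero in $\Omega_{\mck}(\TT;\mcg^\star)$, i.e.\ \cref{def:gid}.

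\textbf{Main obstacle.} The heart of the argument is the generic full-rank claim for $M_1$ and $M_2$: it rests on locating a degenerate configuration \emph{in the closure of the constrained parameter space} at which the relevant determinant is provably nonzero, together with the analyticity supplied by \cref{assmp:monotone family}, without which non-degeneracy need not spread to the complement of a measure-zero set. A secondary, more bookkeeping-intensive difficulty is promoting the $r=2^{K^{(d)}}$-class identification to the $K^{(d)}$-coordinate labeling and tracking the exceptional sets cleanly through all $D$ inductive passes.
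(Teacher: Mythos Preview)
Your proposal is correct and follows the paper's overall architecture: layerwise reduction to a one-latent-layer problem, generic one-layer identification via Kruskal's uniqueness theorem, with analyticity (Assumption~\ref{assmp:monotone family}) plus a nonzero witness at a degenerate boundary point (the pure-matching configuration) to certify the Kruskal ranks off a measure-zero set, and finally a union-over-layers/Fubini argument to control the exceptional set in the full parameter space. This is exactly the skeleton of the paper's proof, which packages the one-layer step as Proposition~S.2 and its two supporting Lemmas~S.2 and~S.3.

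The one substantive difference is how you promote Kruskal's output---an arbitrary relabeling of the $2^{K^{(d)}}$ abstract classes---to a relabeling of the $K^{(d)}$ binary coordinates. The paper proves a standalone lemma (Lemma~S.3) that reconstructs each row of $\BB^{(d)}$ purely from the unordered value set $\{\eta_{j,\aaa}:\aaa\in\{0,1\}^{K^{(d)}}\}$ via a fairly intricate inductive min-extraction procedure; this never uses the matching structure of condition~C, so the lemma is portable to other graphical patterns, but its proof is long. Your route instead exploits the matching directly: restricting to the $K^{(d)}$ marker children in $\mathcal{I}_1$ produces an affine map $\aaa\mapsto(\eta_{j_k,\aaa})_k$ with generically invertible linear part, so the true and alternative parametrizations describe the same parallelepiped vertex set, forcing the class relabeling to lie in the affine automorphism group of $\{0,1\}^{K^{(d)}}$---the hyperoctahedral group of signed coordinate permutations---after which Assumption~\ref{assmp:proportion}(c) fixes the signs. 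This is more geometric and considerably shorter for the theorem at hand; the paper's reconstruction buys a promotion lemma that does not need condition~C's matchings.
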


Condition C relaxes conditions A and B in Theorem \ref{thm:deep id}. While condition A requires two pure children for each latent variable, condition C does not require any pure child and allows more complex dependence structures. Additionally, condition B for the remaining $J-2K^{(d)}$ variables is relaxed to a simple non-zero column condition on $\GG^{(d)}_3$.
Thus, condition C does not concern any continuous parameter values and just depends on the graph structure $\GG^{(d)}$, and can be used as a practical criterion to assess identifiability.

\begin{example}[Illustration of identifiability conditions]\label{ex:id condition}
    \darkblue{We provide a toy example to illustrate and compare conditions in \cref{thm:deep id} and \cref{thm:deep gid}. In \cref{fig:pure children}, condition A holds when only the solid edges exist; in this case, the latent variable $A_1^{(d)}$ has two pure children $A_1^{(d-1)}, A_3^{(d-1)}$, and latent variable $A_2^{(d)}$ also has two pure children. In contrast, condition C allows arbitrary additional dashed arrows in the graph structure; in this case, by taking $\mathcal{I}_1 = \{A_1^{(d-1)}, A_2^{(d-1)}\}$ and $\mathcal{I}_2 = \{A_3^{(d-1)}, A_4^{(d-1)}\}$, the red and blue solid edges each form a perfect matching.
    Thus, condition C significantly relaxes the pure child condition in condition A, by allowing many more additional edges in the bipartite graph.}
\end{example}

\begin{figure}[h!]
\centering
\resizebox{0.35\textwidth}{!}{
    \begin{tikzpicture}[scale=2, baseline=(current bounding box.center)]

    \node (v1)[neuron] at (0, 0) {$A_1^{(d-1)}$};
    \node (v2)[neuron] at (0.8, 0) {$A_2^{(d-1)}$};
    \node (v3)[neuron] at (1.6, 0) {$A_3^{(d-1)}$};
    \node (v4)[neuron] at (2.4, 0) {$A_4^{(d-1)}$};
    \node (v5)[neuron] at (3.2, 0) {$A_5^{(d-1)}$};
       
    \node (h1)[hidden] at (1.07, 1) {$A_1^{(d)}$};
    \node (h2)[hidden] at (2.14, 1) {$A_2^{(d)}$};

    \draw[ultra thick, qedge, blue] (h1) -- (v1) node [midway,above=-0.12cm,sloped] {}; 
    \draw[ultra thick, qedge, blue] (h2) -- (v2) node [midway,above=-0.12cm,sloped] {};  
    \draw[ultra thick, qedge, red] (h1) -- (v3) node [midway,above=-0.12cm,sloped] {}; 
    \draw[ultra thick, qedge, red] (h2) -- (v4) node [midway,above=-0.12cm,sloped] {}; 
    \draw[qedge, blue, dashed] (h1) -- (v2) node [midway,above=-0.12cm,sloped] {}; 
    \draw[qedge, red, dashed] (h1) -- (v4) node [midway,above=-0.12cm,sloped] {}; 
     \draw[qedge, red, dashed] (h2) -- (v3) node [midway,above=-0.12cm,sloped] {}; 
     \draw[qedge, blue, dashed] (h2) -- (v1) node [midway,above=-0.12cm,sloped] {}; 
    \draw[qedge] (h1) -- (v5) node [midway,above=-0.12cm,sloped] {}; 
    \draw[qedge] (h2) -- (v5) node [midway,above=-0.12cm,sloped] {}; [midway,above=-0.12cm,sloped] {};

\end{tikzpicture}
}
\quad \quad $\GG^{(d)} = \begin{pmatrix}
    \multicolumn{2}{c}{\color{blue} 1 \quad *} & \multicolumn{2}{c}{\color{red} 1 \quad *} & 1 \\
    \multicolumn{2}{c}{\underbrace{\color{blue} * \quad 1}_{\mathcal{I}_1}} &  \multicolumn{2}{c}{\underbrace{\color{red} * \quad 1}_{\mathcal{I}_2}} & \underbrace{1}_{\mathcal{I}_3} \\
\end{pmatrix}^\top$
\caption{\darkblue{Graphical illustration of conditions A and C. While condition A holds when all dashed arrows are ignored (or equivalently, $* = 0$ in the matrix representation), condition C can allow arbitrarily many dashed arrows (where each $*$ can be either zero or one).}}
\label{fig:pure children}
\end{figure}
\color{black}

\vspace{-2mm}
We place our identifiability results in the literature on the identifiability of generative models and latent variable models. While the identifiability of generative models has attracted increasing attention in machine learning \citep{hyvarinen2019nonlinear, khemakhem2020variational, moran2021identifiable, kivva2022identifiability}, many of these results require additional information such as auxiliary covariates.
More importantly, almost all of these results build on nonlinear independent component analysis \citep{oja2000independent} or variational autoencoders \citep{ranzato2008semi}, both of which essentially have only one latent layer of random variables transformed by deterministic deep neural networks.
Consequently, these results cannot be applied to DDEs with multiple latent layers organized in a probabilistic graphical model.
Since uncertainty occurs and accumulates in each layer of a DDE, addressing identifiability in such cases requires fundamentally different techniques due to a complicated marginal likelihood.
{At the high level, our proof techniques are based on transforming the marginal distribution of data into a tensor and invoking the uniqueness of tensor decompositions to establish identifiability.}

In statistics, the study of identifiability has a long history but also mainly concerns relatively simple latent structures with only one latent layer \citep{anderson1956statistical, koopmans1950identification, allman2009identifiability,xu2018identifying}. \darkblue{In particular, \cite{gu2023bayesian} establish identifiability of a multi-layer latent structure model, but their result is \emph{only applicable to categorical response data}, in contrast to the general and rich types of responses considered in this work;
moreover, \cite{gu2023bayesian}'s strict identifiability result requires each latent variable to have at least three pure children.
On the other hand, \cite{lee2024new} considered models for general responses which are similar to ours, but their model is restricted to only one latent layer, and they prove identifiability under a strong assumption that the true graph between the latent and observed layers is known.  Compared to these existing results for related models, our results (i) apply to arbitrary response types as opposed to only categorical responses, (ii) require weaker identifiability conditions compared to \cite{gu2023bayesian} (in terms of both strict and generic identifiability), and (iii) do not require a known graph structure as in \cite{lee2024new}.
}

\subsection{Estimation Consistency}
In this section, we propose a penalized maximum likelihood estimation method for DDEs, and show that the estimator is consistent for both the continuous parameters and the discrete graph structures.
Suppose that the numbers of latent variables in all layers are known.
We maximize the following objective function to estimate parameters ${\TT}=(\pp, \mcb, \ggamma)$:
\begin{equation}\label{eq:penalized optimization}
    \hat{\TT} \in \argmax_{\TT} \left[ \ell (\TT \mid \YY) - \sum_{d=1}^D p_{\lambda_N,\tau_N} (\B^{(d)})\right],
\end{equation}
where $\ell(\TT \mid \YY) = \sum_{i=1}^N \log \mathbb P(\YY_i\mid \TT)$ denotes the marginal log-likelihood function given a sample $\YY_1,\ldots,\YY_N$ of size $N$, defined based on the marginal distribution of $\mathbf Y$ in \eqref{eq:marginal probability}.
From now on, we slightly abuse notation and let $\mathbf Y$ denote the $N\times J$ data matrix including $\YY_1,\ldots,\YY_N$ as rows.
Using the estimated coefficients in $\hat{\mcb}=(\hat{\beta}_{l,k}^{(d)})$, the layer-wise graphical matrices in $\mcg$ can be estimated by reading off the sparsity pattern of $\hat{\mcb}$:
\begin{equation}\label{eq:graphical matrix estimator}
    \hat{g}_{l,k}^{(d)} := \mathbbm{1}(\hat{\beta}_{l,k}^{(d)} \neq 0) \quad \text{for all} \quad d \in [D], ~k \in [K^{(d)}], ~l \in [K^{(d-1)}].
\end{equation}
For some tuning parameters $\lambda_N, \tau_N > 0$, $p_{\lambda_N,\tau_N}:\mathbb{R}\to [0,\infty)$ is a sparsity-inducing symmetric penalty that satisfies several technical conditions postponed to Supplement S.1.4.
Our assumption for the penalty $p_{\lambda_N, \tau_N}$ includes common truncated sparsity-inducing penalties such as the TLP \citep[Truncated Lasso Penalty;][]{shen2012likelihood} and SCAD \citep[Smoothly Clipped Absolute Deviation;][]{fan2001variable}. 
With a slight abuse of notation, in \eqref{eq:penalized optimization}, we view the penalty $p_{\lambda_N,\tau_N}$ as a function of matrices by letting it be the sum of the entrywise penalties: $p_{\lambda_N,\tau_N} (\B^{(d)}) = \sum_{k \in [K^{(d-1)}], \ell \in [K^{(d)}]} p_{\lambda_N,\tau_N} (\beta_{k,\ell}^{(d)}).$

Assuming a compact parameter space with bounded coefficients $\B^{(d)}$, we prove that the estimator defined in \eqref{eq:penalized optimization} and \eqref{eq:graphical matrix estimator} results in consistent estimation. 

\begin{theorem}\label{thm:estimation consistency}
Consider a $D$-latent-layer DDE with true parameters $\TT^\star, \mcg^\star$ and known $\mck$.
Assume that the model at $\TT^\star$ is identifiable, has a non-singular Fisher information, and all entries of $\{\BB^{(d)}\}_{d=1}^D$ are bounded.
Then, the estimator $\hat{\TT}$ in \eqref{eq:penalized optimization} is $\sqrt{N}-$consistent in the sense that there exists some $\tilde{\TT} \sim_{\mck} \hat{\TT}$ such that $\lVert \tilde{\TT} - \TT^\star \rVert = O_p({1}/{\sqrt{N}})$. Here, $\|\cdot\|$ denotes the vectorized $L^2$ norm.
Additionally, the graphical matrices are consistently estimated: for $\tilde{\mcg}$ resulting from $\tilde{\TT}$ according to \eqref{eq:graphical matrix estimator}, we have $\PP (\tilde{\mcg} \neq \mcg^\star) \rightarrow 0$.
\end{theorem}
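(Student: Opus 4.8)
The plan is to treat \eqref{eq:penalized optimization} as a penalized M-estimation problem and combine three ingredients: a classical argmax-consistency argument, a Fan--Li-type local quadratic expansion for the $\sqrt N$-rate, and a subgradient (sparsistency) analysis that exploits the \emph{truncated} nature of the admissible penalties. Write $\ell_N(\TT):=\ell(\TT\mid\YY)/N$ and let $L(\TT):=\E_{\TT^\star,\mcg^\star}[\log\PP_{\TT}(\YY_1)]$ be the population log-likelihood. By the information inequality, $L$ is maximized precisely on the equivalence class $[\TT^\star]:=\{\TT:\ (\TT,\mcg)\sim_{\mck}(\TT^\star,\mcg^\star)\ \text{for some }\mcg\}$, and by the assumed identifiability this is its \emph{only} set of maximizers. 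Since $\sim_{\mck}$ merely relabels the latent variables within each layer, $[\TT^\star]$ is the orbit of $\TT^\star$ under a finite group (a product of symmetric groups), hence a finite set of points with strictly positive minimal pairwise separation; this finiteness is what lets us later talk unambiguously about ``the representative nearest to $\hat\TT$''.

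\emph{Consistency up to equivalence.} On the compact parameter space the map $\TT\mapsto\log\PP_\TT(\YY)$ is continuous --- the marginal \eqref{eq:marginal probability} is a finite sum of smooth conditional densities whose parameters stay away from degenerate boundary values thanks to the bounded $\BB^{(d)}$ and the link functions --- and it admits an envelope that is integrable under $\PP_{\TT^\star}$; a uniform law of large numbers then gives $\sup_\TT|\ell_N(\TT)-L(\TT)|\xrightarrow{p}0$. Every admissible entrywise penalty (TLP, SCAD, \dots) is \emph{bounded}, so the total penalty in \eqref{eq:penalized optimization} is $O(\lambda_N)=o(N)$ under the tuning conditions of Supplement S.1.4, hence negligible relative to $N\cdot\ell_N$. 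The standard argmax argument then yields $\mathrm{dist}(\hat\TT,[\TT^\star])\xrightarrow{p}0$; since $[\TT^\star]$ is finite with isolated points, with probability tending to one $\hat\TT$ lies in a shrinking ball around a single representative --- say the $\sigma$-relabeling of $\TT^\star$ --- and we take $\tilde\TT$ to be the correspondingly permuted estimator, so $\tilde\TT\sim_{\mck}\hat\TT$ and $\tilde\TT$ is close to $\TT^\star$ with its support equal to the $\sigma$-relabeling of $\mcg^\star$.

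\emph{The $\sqrt N$-rate.} Working locally around $\TT^\star$ (through $\tilde\TT$), a second-order expansion gives
\[
\ell_N(\TT)-\ell_N(\TT^\star)=(\TT-\TT^\star)^\top\nabla\ell_N(\TT^\star)-\tfrac12(\TT-\TT^\star)^\top\I(\TT^\star)(\TT-\TT^\star)+o_p(\lVert\TT-\TT^\star\rVert^2),
\]
where $\nabla\ell_N(\TT^\star)=O_p(N^{-1/2})$ and the Fisher information $\I(\TT^\star)$ is nonsingular, so $\ell_N$ is strictly concave in a neighborhood with probability tending to one. The true nonzero coefficients are bounded away from $0$, so once $\tau_N$ drops below that gap they sit in the flat part of the penalty and contribute \emph{zero} derivative (no estimation bias), while the penalty on the other coordinates is nonnegative and can only lower the objective. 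A Fan--Li ball argument --- comparing the penalized objective on the sphere $\lVert\TT-\TT^\star\rVert=C/\sqrt N$ with its value at $\TT^\star$ for large $C$ --- then produces an interior local maximizer within $O_p(N^{-1/2})$ of $\TT^\star$; combined with the consistency step and strict local concavity on the true support, this local maximizer is the unique stationary point near $[\TT^\star]$ and hence coincides with $\tilde\TT$, giving $\lVert\tilde\TT-\TT^\star\rVert=O_p(N^{-1/2})$.

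\emph{Graph recovery and the main obstacle.} At coordinates where the (permuted) truth is nonzero, $\sqrt N$-consistency immediately gives $\tilde g_{l,k}^{(d)}=1$ with probability tending to one. At coordinates where the truth is $0$, we show the penalized objective is maximized at exactly $\beta_{l,k}^{(d)}=0$: the penalty is non-differentiable at the origin with one-sided derivative of order $\lambda_N/\tau_N$, so the subgradient condition for $0$ to be a coordinatewise maximizer --- namely that the corresponding partial score of $\ell$, evaluated at an $O_p(N^{-1/2})$-perturbation of the truth and hence of aggregate size $O_p(\sqrt N)$, be dominated by $\lambda_N/\tau_N$ --- holds because Supplement S.1.4 forces $\lambda_N/\tau_N$ to outgrow $\sqrt N$. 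Gluing this coordinatewise condition to the local uniqueness above pins $\tilde\TT$'s support to $\mcg^\star$, giving $\PP(\tilde\mcg\neq\mcg^\star)\to0$. The principal obstacle is precisely this last gluing: because TLP/SCAD are \emph{nonconvex}, a KKT condition at $\beta=0$ is only local, and upgrading it to a statement about the actual penalized maximizer requires carefully combining (i) the global localization of $\hat\TT$ near $[\TT^\star]$ (which itself leans on identifiability \emph{and} on the boundedness of the penalty to rule out spurious distant modes), (ii) strict concavity of the objective restricted to the true support, and (iii) a matching of the rates of $\lambda_N$, $\tau_N$, the $O_p(\sqrt N)$ score, and the Kullback--Leibler separation furnished by identifiability. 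A secondary, milder point is keeping track of the finite orbit $[\TT^\star]$ so that ``nearest representative'' is well defined.
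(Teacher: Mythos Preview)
Your proposal is correct and follows essentially the same three-part structure as the paper: argmax consistency via a uniform law of large numbers with a vanishing (bounded) penalty, a local quadratic expansion against the nonsingular Fisher information for the $\sqrt N$-rate, and a KKT/subgradient comparison using $\lambda_N/\tau_N\gg\sqrt N$ for sparsistency. The only minor deviation is in the rate step: you use a Fan--Li ball argument plus local uniqueness to identify the local maximizer with $\tilde\TT$, whereas the paper gets there more directly from the basic inequality $-\ell(\tilde\TT)+\ell(\TT^\star)\le p_{\lambda_N,\tau_N}(\BB^\star)-p_{\lambda_N,\tau_N}(\tilde\BB)\le 0$ (the right-hand side being nonpositive once $\tau_N$ falls below the minimum true signal), which avoids having to argue separately that the global maximizer coincides with the Fan--Li local one.
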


\section{Scalable Computation Pipeline}
\label{sec:algorithm}
We present a two-stage scalable computational pipeline to compute the penalized maximum likelihood estimator in \eqref{eq:penalized optimization}. Our proposed method builds upon the standard penalized EM algorithm (see Supplement S.3.1) by including a spectral initialization and stochastic approximation. We separate out each stage into Sections \ref{subsec-spectral-ini} (stage one) and \ref{subsec:saem} (stage two). For notational simplicity, we describe the proposed method assuming $D = 2$ latent layers, which straightforwardly extends to an arbitrary number of latent layers.

\subsection{Stage One: Layerwise Double-SVD Initialization}
\label{subsec-spectral-ini}
The multiple layers of nonlinearity in DDEs lead to a {highly nonconvex optimization landscape with potentially exponentially many local optima} \citep{sutskever2013importance}.
For example, if the EM algorithm starts with an initialization close to a local optima, it can get stuck and fail to converge to the global maximizer of the penalized log-likelihood function.
Hence, for highly complex latent variable models such as DDEs, it is critical to initialize the optimization algorithm wisely.
We propose a novel layerwise nonlinear spectral initialization strategy, which enjoys low computational complexity and reasonably high accuracy.
This spectral initialization serves as the first stage in the proposed computational pipeline.

Spectral methods have mostly been used for linear low-rank models, and existing approaches are not directly applicable for DDEs with a deep nonlinear structure.
To address this, we propose a nuanced layerwise procedure utilizing the double SVD procedure for denoising low-rank generalized linear factor models \citep{zhang2020note} and the SVD-based Varimax to find sparse rotations of the factor loadings \citep{rohe2023vintage}. 

\begin{algorithm}[h!]
\caption{Outline of the Layerwise Double-SVD Initialization}
\KwData{Data matrix $\YY_{N\times J}$, latent dimensions $\mck$.}
    \begin{enumerate}[leftmargin=*,itemsep=0pt]
        \item {De-noise the data matrix $\YY$ using a first SVD, and linearize this matrix by applying the inverse-link function $(\mu \circ g)^{-1}$ elementwisely. Let $\hat{\mathbf{Z}}$ denote the inverted matrix.}
        \item Let $\hat{\mathbf{Z}}_0$ be the column-centered version of $\hat{\mathbf{Z}}$, and compute its rank-$K^{(1)}$ approximation by a second SVD: $\hat{\mathbf{Z}}_0 \approx \mathbf{U}_{N \times K^{(1)}} \mathbf{\Sigma}_{K^{(1)} \times K^{(1)}} \mathbf{V}_{J \times K^{(1)}}^\top$.  
        \item Rotate $\mathbf{V}$ according to the Varimax criteria, and denote it as $\hat{\BB}^{(1)}$. Modify the sign ($\pm$) of each column so that \cref{assmp:monotone family}(c) is satisfied.
        \item Use the relationship $\hat{\mathbf{Z}} \approx [\mathbf{1}_N, {\ma}^{(1)}] {\BB}^{(1) \top}$ to estimate $\ma^{(1)}$.
        \item Now, suppose that the estimated $\hat{\ma}^{(1)}$ is the ``observed data'' of a one-latent-layer DDE. Repeat steps 1-4 to estimate ${\BB}^{(2)}$ and $\ma^{(2)}$.
    \end{enumerate}\label{algo-init}
\end{algorithm}
\vspace{-5mm}

Our main idea is to view the responses $\YY$ as a perturbation of a population expectation
$\E (\YY \mid \ma^{(1)}, \BB^{(1)}) = (\mu \circ g)\left( [\mathbf{1}_N, \ma ^{(1)}] \BB^{(1) \top} \right)$, which is an elementwise (nonlinear) transformation of a low-rank matrix.
Here, $\mu:H \to \mcy$ is the known mean function of the observed-layer parametric family in \eqref{eq:observed exp fam} and $g:\mathbb{R}\to H$ is the link function. 
The function $(\mu \circ g)$ is equal to $\gl$ for Bernoulli responses, the exponential function for Poisson, and the identity function for Normal.
When $(\mu \circ g)$ is nonlinear, we use the aforementioned double SVD procedure \citep{zhang2020note}. This procedure applies a first SVD to de-noise the data, and then linearizes the data through inverting the link function $(\mu \circ g)$, and finally performs a second SVD to find the low-rank matrix $\hat{\mathbf{Z}} \approx [\mathbf{1}_N, \ma ^{(1)}] \BB^{(1) \top}$. 
Next, noting that the true coefficient matrix $\BB^{(1)}$ is sparse, we estimate it by seeking a sparse rotation of the right singular subspace of $\hat{\mathbf{Z}}$, using the popular Varimax criterion \citep{kaiser1958varimax, rohe2023vintage}.
This procedure also provides an estimate of the latent variables $\ma^{(1)}$, which can be treated as the ``observed data'' to initialize the deeper layer's $\BB^{(2)}$ and $\ma^{(2)}$ in a similar fashion as described above. This layer-by-layer algorithm readily generalizes to deeper models and is reminiscent of the greedy learning strategy for DBNs \citep{hinton2006fast,salakhutdinov2015learning}.

We summarize the overall procedure in \cref{algo-init}, and postpone further details of each step to Supplementary Material S.2. In the Supplement, we also illustrate the effectiveness of the spectral initialization by comparing the estimation accuracy of our two-stage computational pipeline to that of the EM algorithm with a random initialization.

\subsection{Stage Two: Penalized SAEM Algorithm}\label{subsec:saem}

For DDEs with a large number of latent variables, the E-step in standard EM algorithms (see Supplement S.3.1) computes  all conditional probabilities $\PP (\ma^{(1)}_i = \aaa^{(1)}, \ma^{(2)}_i = \aaa^{(2)} \mid \YY; \TT^{[t]})$ for all $\aaa^{(1)}\in\{0,1\}^{K^{(1)}}$ and $\aaa^{(2)}\in\{0,1\}^{K^{(2)}}$. This requires computing and storing $O(N \times 2^{\sum_{d=1}^D K^{(d)}})$ terms.
The exponential dependency in $K^{(d)}$ is concerning even for moderately large latent dimensions, say $K^{(d)}=10$, and quickly becomes prohibitive for larger $K^{(d)}$. 
Therefore, we propose a penalized Stochastic Approximate EM  \citep[SAEM; see][]{delyon1999convergence, kuhn2004coupling} by modifying both the E-step and M-step to more scalable versions using approximate sampling. As we illustrate below, this is a method with linear dependence of $\sum_{d=1}^D K^{(d)}$ on the computing time as well as memory.

We elaborate on the details on deriving the SAEM. \textit{First}, we replace the E-step to a simulation step, which consists of simulating only a small number (denoted as $C$) of posterior samples of the latent variables. As exact sampling from the joint distribution $\PP(\ma_i^{(1)}, \ma_i^{(2)} \mid \YY; \TT^{[t]})$ is expensive, we sample each latent variable from their complete conditionals. That is, we sample each $A_{i,k}^{(1),[t+1]}$ from $\PP(A_{i,k}^{(1)} \mid (-), \TT^{[t]})$, where $(-)$ denotes the estimates of all other latent variables from the $t$th iteration. Since the latent variables are binary, the conditional distribution is Bernoulli and easy to evaluate.
Consequently, the computationally expensive E-step is replaced by the simulation step, which computes and stores only $O(N \times \sum_{d=1}^D K^{(d)})$ terms. 
In terms of choosing the number of samples $C$ in each iteration, we empirically find that taking $C=1$ is computationally efficient without sacrificing much accuracy (see Supplement S.4.4). 
This choice of $C=1$ was also suggested in the original paper that proposed the SAEM \citep{delyon1999convergence}.

\textit{Second}, we modify the standard M-step objective function (expected complete data log-likelihood) by
(i) replacing the exact conditional probability values 
to sample-based quantities, (ii) stochastically averaging the objective functions, and (iii) including the sparsity-inducing penalties in \eqref{eq:penalized optimization}. The M-step objective for updating $\BB^{(2)}$ is
\begin{align}\notag
    & Q^{(2),[t+1]}(\BB^{(2)}) \\  
    \label{eq:SAEM stochastic approx} 
    & :=
    (1 - \theta_{t+1}) Q^{(2),[t]}(\BB^{(2)}) + \theta_{t+1} \sum_{i=1}^N \log \PP(\ma_{i}^{(1)} = \ma_{i}^{(1), [t+1]} \mid \ma_i^{(2)} = \ma_{i}^{(2), [t+1]}; \BB^{(2)}),\\
    & \BB^{(2),[t+1]} := \argmax_{\BB^{(2)}} \left[Q^{(2),[t+1]}(\BB^{(2)}) - p_{\lambda_N,\tau_N}(\BB^{(2)}) \right], \label{eq:SAEM maximization B2}
\end{align}
where $Q^{(2),[0]} = 0$ and $\theta_{t+1}$ is a pre-determined step size that decreases in $t$. Here, in the log probability term in \eqref{eq:SAEM stochastic approx}, $\ma_i^{(d)}$ denotes the \emph{latent random variable} and $\ma_{i}^{(d), [t+1]}$ denotes the \emph{realized sample} from the simulation step in the current $(t+1)$th iteration.
In the $(t+1)$th iteration, we update the objective function $Q^{(2),[t+1]}$ by taking a weighted average of the previous objective function $Q^{(2),[t]}$, and the conditional probabilities computed using the current iteration's simulated samples $\ma^{(1),[t+1]}$. Then, in \eqref{eq:SAEM maximization B2} we compute the parameters that maximize the penalized objective function.

\begin{algorithm}[h!]
\caption{Penalized SAEM algorithm for the two-latent-layer DDE}
\label{algo-saem}
\SetKwInOut{Input}{Input}
\SetKwInOut{Output}{Output}

\KwData{$\YY, \mck$, tuning parameters $\lambda_{N}, \tau_N$.}
Initialize $\ma^{(1),[0]}, \ma^{(2),[0]}$ and $\TT^{[0]}$ based on \cref{algo-init}.  \\
\While{$\|{\TT}^{[t]} - {\TT}^{[t-1]} \|$ is larger than a threshold}{
 In the $t$th iteration,
 
\texttt{// Simulation-step}

Sample each $A_{i,k}^{(1), [t+1]}, A_{i,l}^{(2), [t+1]}$ from the complete conditionals using the previous parameter estimates $\TT^{[t]}, \ma^{[t]}$

\vspace{2mm}
\texttt{// Stochastic approximation M-step}

update the parameters $\TT^{[t+1]}$ by maximizing the stochastic averaged objectives (e.g. see \eqref{eq:SAEM maximization B2})

}
Estimate $\GG$ based on the sparsity structure of $\hat{\BB}$ according to \eqref{eq:graphical matrix estimator}.\\
\Output{Estimated continuous parameters $\hat{\TT}$, estimated graphical matrices $\mcg$.}
\end{algorithm}
\begin{algorithm}[h!]
    \caption{\darkblue{Practical methods for selecting the latent dimensions $\mck$}}
    \label{algo:select_K}
    \SetKwInOut{Input}{Input}
    \SetKwInOut{Output}{Output}
    
    \KwData{$\hat{\ma}^{(0)} := \YY_{N\times J}, \hat{K}^{(0)}:=J$, the number of latent layers $D$}
    For each $1 \le d \le D$, repeat:
    \begin{enumerate}[topsep=0pt,itemsep=-1ex]
        \item Let $\mathfrak{K}^{(d)} := \{\lceil \hat{K}^{(d-1)}/4 \rceil, \ldots, \lfloor \hat{K}^{(d-1)}/2 \rfloor \}$ be the candidate grid for $\hat{K}^{(d)}$. 
        \item Define $\hat{K}^{(d)}$ based on the largest spectral ratio of the denoised/linearized $\hat{\ma}^{(d-1)}$ (see step 1 in Algorithm \ref{algo-init}):
        $$\hat{K}^{(d)} := \argmax_{k \in \mathfrak{K}^{(d)}} \sigma_{k}/\sigma_{k+1}-1.$$
        \item Given $\hat{K}^{(d)}$, estimate the $d$-th layer latent variables $\hat{\ma}^{(d)}_{N\times \hat{K}^{(d)}}$ using \cref{algo-init}.
    \end{enumerate}
    \Output{Estimated latent dimensions for all layers $\hat K^{(1)},\ldots,\hat K^{(D)}$.}
\end{algorithm}

\cref{algo-saem} summarizes our proposed SAEM algorithm with $C=1$, where detailed formulas are deferred to Supplement S.3.2. Here, the detailed M-step updates can be written in terms of low-dimensional maximizations for each row of the coefficient matrices.

\paragraph{Selecting the latent dimensions.}\label{sec:select k}
To apply the above computational pipeline to real data, one also needs to specify the number of latent variables, $\mck$. \darkblue{We propose a layer-wise estimation strategy in \cref{algo:select_K}, which can be incorporated into our initialization procedure in \cref{algo-init}.} Recall the denoised data matrix $\hat{\mathbf{Z}}$ from Step 1 of \cref{algo-init} and let $\sigma_{1},\sigma_2, \ldots$ be its singular values in the descending order. Given a candidate grid $\mathfrak{K}^{(1)}$, we estimate the size of the first latent layer based on the largest spectral ratio:
    $\hat{K}^{(1)} := \argmax_{k \in \mathfrak{K}} ({\sigma_{k}}/{\sigma_{k+1}}) - 1$.
Now, given $\hat{K}^{(1)}$, we proceed with the remaining steps of \cref{algo-init} to estimate the first-layer latent variables $\ma^{(1)}$. Inductively treating the estimated $\hat{\ma}^{(d)}$ as the observed variables of a one-latent-layer DDE, we can repeat the above procedure to estimate $\hat{K}^{(d+1)}$. See Supplement S.3.3 for alternative selection criteria for $\mck$.

\section{Simulation Studies}\label{sec:simulations}
We conduct extensive simulation studies in various settings to assess the performance of the proposed computation pipeline (Algorithms \ref{algo-init} and \ref{algo-saem}) and validate our identifiability conditions (in \cref{sec:identifiability}).

\vspace{-4mm}
\paragraph{Two-latent-layer DDEs with general response types.}
First, we generate data from two-latent-layer DDEs, exploring a total of 90 true settings by varying the following:
\begin{enumerate}[(a),topsep=0pt,itemsep=-1ex]
    \item three \emph{parametric families}: Bernoulli, Poission, Normal, 
    \item three \emph{paramter dimensions}: $(J, K^{(1)}, K^{(2)}) = (18, 6, 2), (54, 18, 6), (90, 30, 10)$,
    \item two \emph{parameter values}: see Supplement S.4.1,
    \item five varying \emph{sample sizes}: $N = 500, 1000, 2000, 4000, 8000$.
\end{enumerate}
Here, we consider two sets of parameter values that each satisfy the strict and generic identifiability conditions in Theorems \ref{thm:deep id} and \ref{thm:deep gid}. Regarding the parameter dimensions, given a value of $K^{(2)}$, we set $K^{(1)} = 3K^{(2)}$ and $J = 9K^{(2)}$. This allows a large latent dimension with as many as $K^{(1)} = 30$ binary latent variables in the shallowest latent layer.

\begin{figure}[h!]
    \centering
    \includegraphics[width=0.9\linewidth]{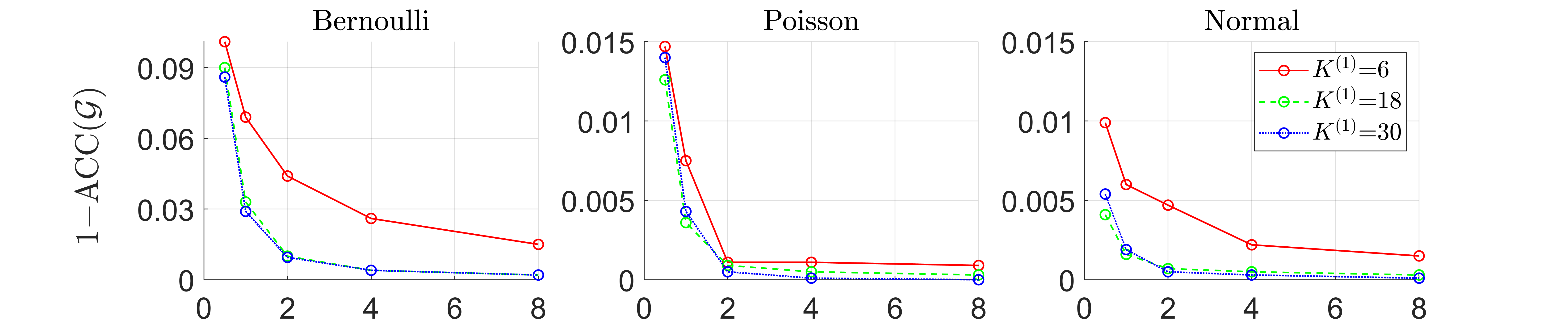}
    \includegraphics[width=0.9\linewidth]{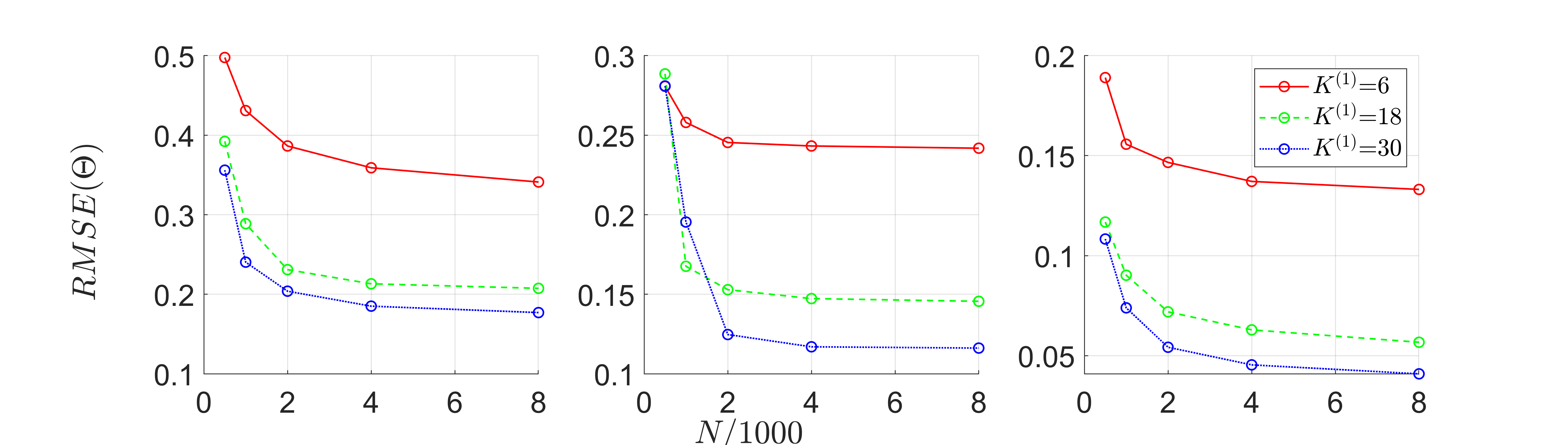}
    \caption{Estimation error for $\mcg$ and $\TT$ under the two-latent-layer DDE with strictly identifiable true parameters and various observed-layer parametric families. }
    \label{fig:acc g sid}
    \vspace{-2mm}
\end{figure}

For each scenario, we run $100$ independent simulations, and display the estimation results in Figure \ref{fig:acc g sid}.
The results under the generic identifiable parameters {and computation times} are included in Supplement S.4.3.
We measure the estimation accuracy of the graphical matrices $\mcg$ by computing the average entrywise accuracy. For the continuous parameters $\TT$, we report the root mean squared error (RMSE). Under all response types and parameter values, the estimation errors for both $\mcg$ and $\TT$ decrease as the sample size $N$ increases. This empirically justifies the identifiability and consistency results. Additionally, by comparing the estimation accuracy across different parametric families, we observe that the Bernoulli is the most challenging to estimate, and the Normal is the easiest. 

\vspace{-2mm}
\paragraph{\darkblue{Simulation studies under deeper models.}}
We assess the scalability of our proposed method for \emph{deeper models} (with $D=3,4,5$ latent layers) with potentially \emph{high-dimensional responses} (with the observed data dimension set to $J = 54, 162, 486$, respectively). For all settings, we consider Normal observations with $K^{(D)} = 2$ latent variables for the top (deepest) layer and $K^{(d-1)} = 3K^{(d)}$ variables for the $d$th layer for each  $d=D, \ldots, 2$.

We summarize the estimation accuracy of the graph structure in each layer and the average runtime for DDEs with $D=5$ latent layers in Table \ref{tab:D=5}. The results for $D=3,4$ are displayed in Supplement S.4.6. The computation time illustrates that our proposed method is scalable to deeper models. 
More specifically, the $D = 3, 4$ case only took a few minutes, and the most challenging case with $D = 5$ and $N = 16,000$ also took less than an hour on a personal laptop. 
In terms of estimation accuracy of graph structures $\mathcal{G}$, the accuracy is higher for shallower layers than for deeper layers. This results from the accumulation of uncertainty for deeper layers, which is inevitable as each layer consists of stochastic latent variables. 
Note that even for such deeper models with $D = 4, 5$ latent layers, the shallower structures $\GG^{(1)}, \GG^{(2)}$ are recovered with high accuracy. This indicates that one can include additional latent layers to increase the models' representational power, without sacrificing the accuracy of learning graph structures closer to the data layer.

\begin{table}[h!]
\centering
\begin{tabular}{cccccccccc}
\toprule
Layer \textbackslash $N$ & 500 & 1000 & 2000 & 4000 & 8000 & 16000 \\
\midrule
$\GG^{(1)}$ & 1.00 & 1.00 & 1.00 & 1.00 & 1.00 & 1.00 \\
$\GG^{(2)}$ & 0.81 & 0.90 & 0.95 & 0.97 & 0.98 & 0.98 \\
$\GG^{(3)}$ & 0.77 & 0.83 & 0.86 & 0.88 & 0.90 & 0.92 \\
$\GG^{(4)}$ & 0.61 & 0.62 & 0.65 & 0.67 & 0.66 & 0.74 \\
$\GG^{(5)}$ & 0.60 & 0.59 & 0.62 & 0.62 & 0.65 & 0.64 \\
\midrule
runtime (s) & 220 & 259 & 326 & 563 & 893 & 2925 \\
\bottomrule
\end{tabular}
\caption{\darkblue{Average entrywise-accuracy of estimating the graphical matrices and runtime in seconds for DDEs with $D=5$ latent layers.}}
\label{tab:D=5}
\vspace{-3mm}
\end{table}

\paragraph{\darkblue{Ablation studies.}}
As our model closely resemble DBNs, we mainly compare DDEs with DBNs. For fair comparison, we considered two settings: (a) data generated from a Berounlli-DDE, and (b) data generated from a DBN with identical coefficients. The results in  \cref{fig:ablation true DDE} demonstrate that our proposed algorithm has better estimation accuracy for both the graphical structure and the continuous parameters under both well-specified and mis-specified settings, even when data are generated from a DBN with undirected edges in the top layer. This is because the DBN algorithm does not learn sparse coefficients, and also fundamentally suffers from local optima due to a random initialization. 
Note that our identifiability results in Propositions S.1-2 immediately guarantee the identifiability of DBNs, and theoretically justify the above positive results of our algorithm.

The ablation studies also indicate that the current implementations of DDEs are slower than DBNs. We believe that our algorithm can be further scaled up by replacing the M-step in the SAEM algorithm to first-order optimization methods (e.g. gradient ascent or stochastic gradient ascent), which we leave for future work.

\begin{figure}[h!]
    \centering
    \begin{minipage}{0.24\textwidth}
     \centering
     \includegraphics[width=\linewidth]{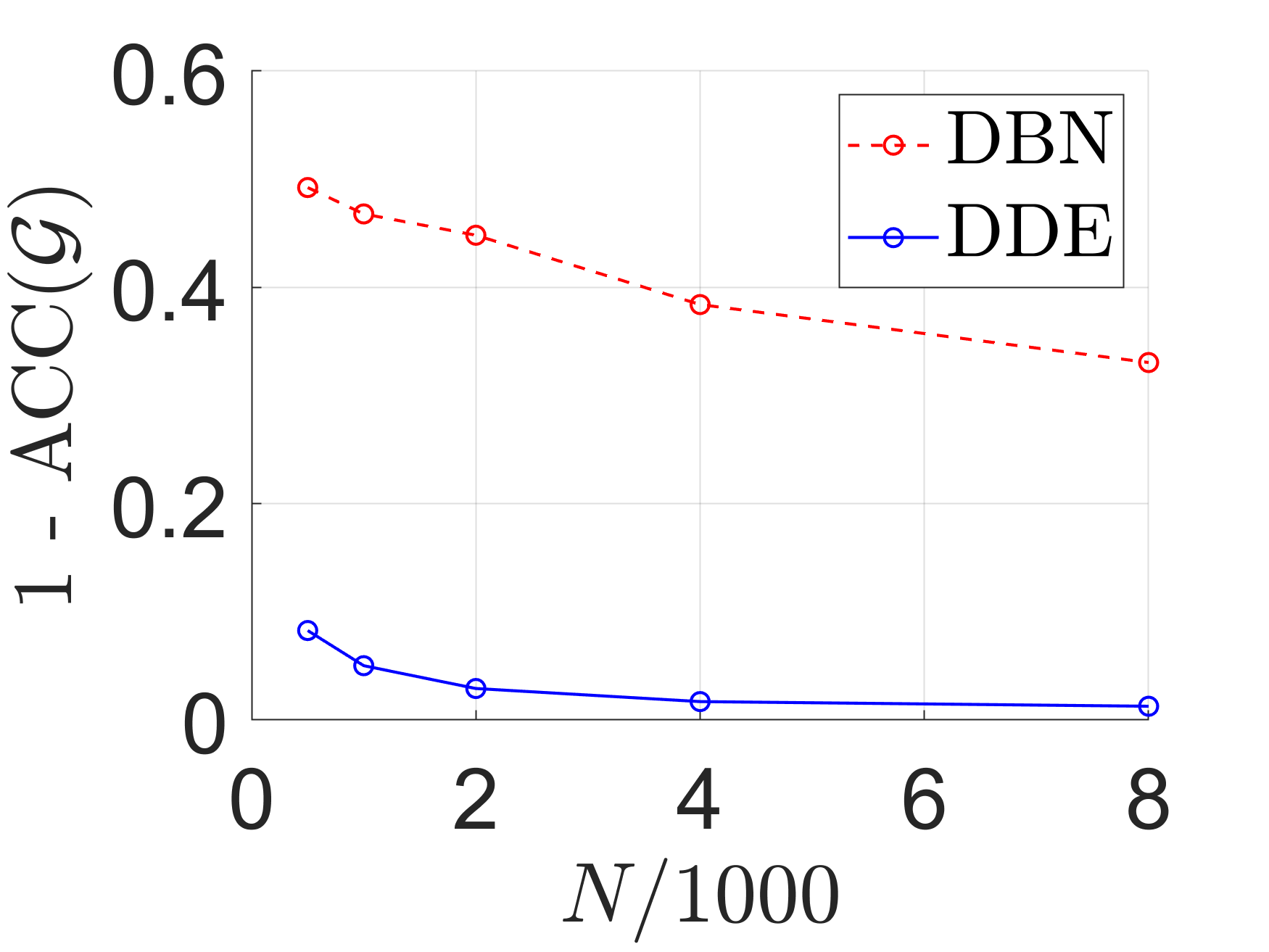}
     \subcaption{}
    \end{minipage}
    \begin{minipage}{0.24\textwidth}
     \centering
    \includegraphics[width=\linewidth]{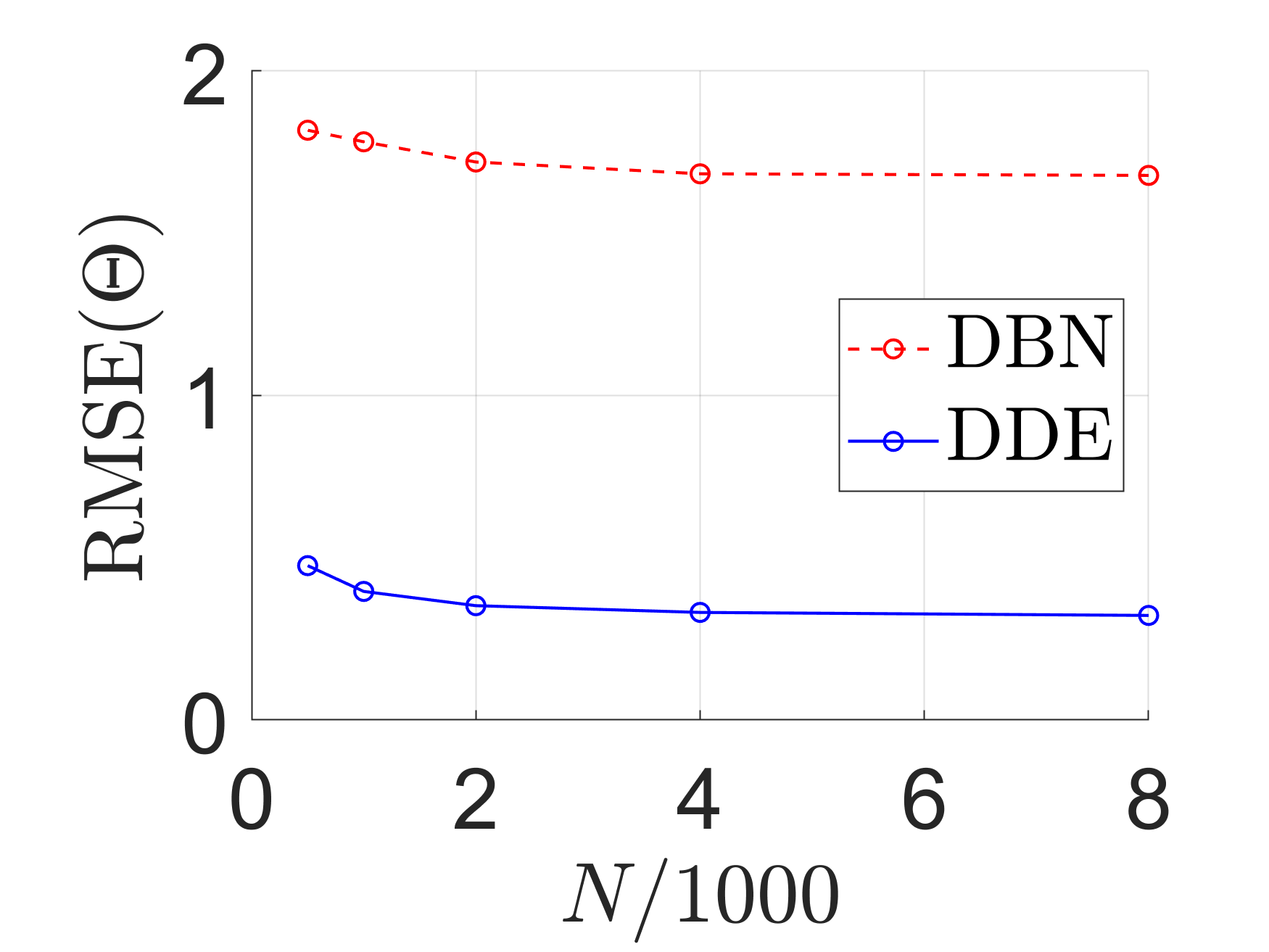}
    \subcaption{}
    \end{minipage}
    \begin{minipage}{0.24\textwidth}
     \centering
    \includegraphics[width=\linewidth]{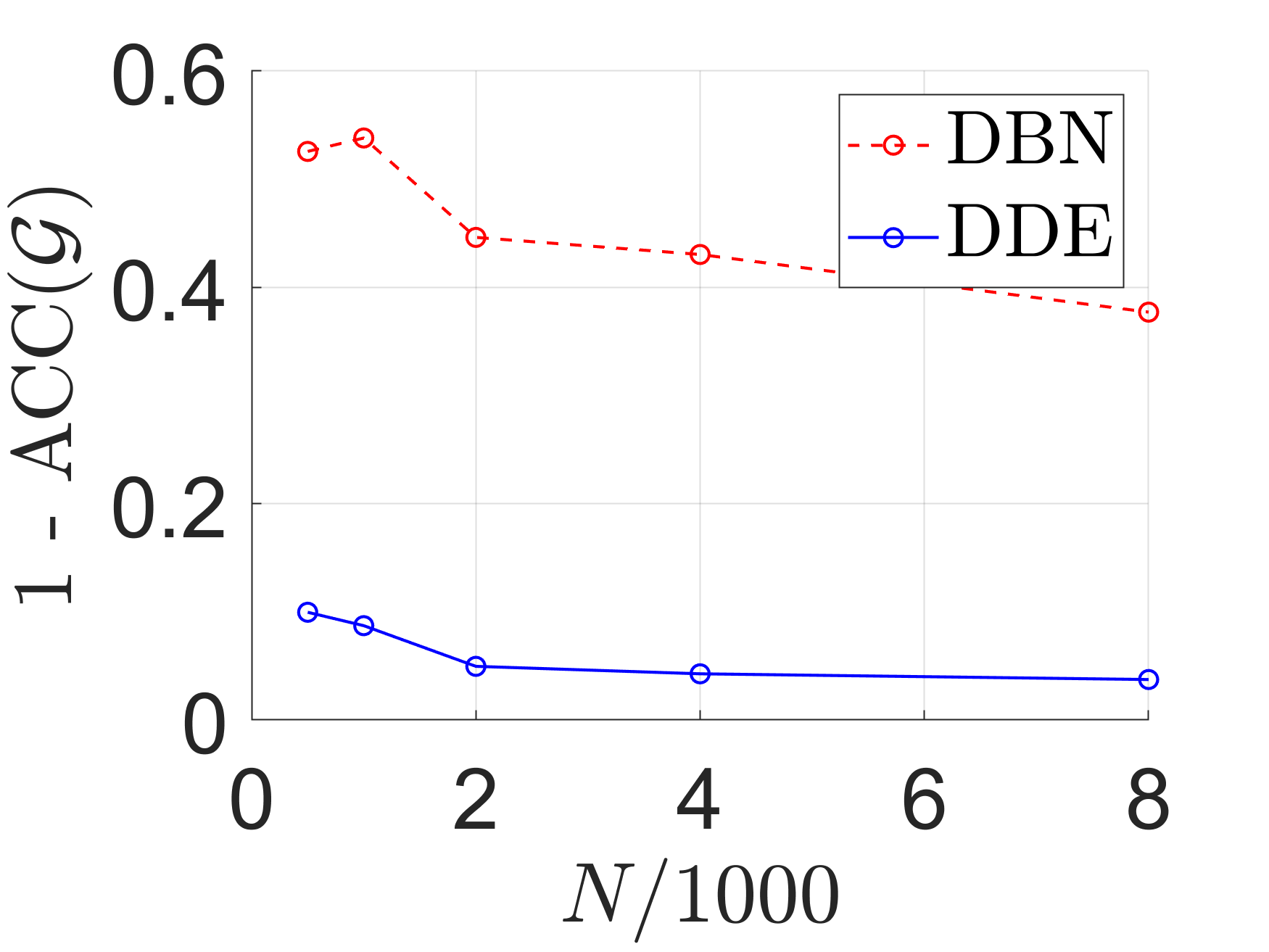}\subcaption{}
    \end{minipage}
    \begin{minipage}{0.24\textwidth}
     \centering
    \includegraphics[width=\linewidth]{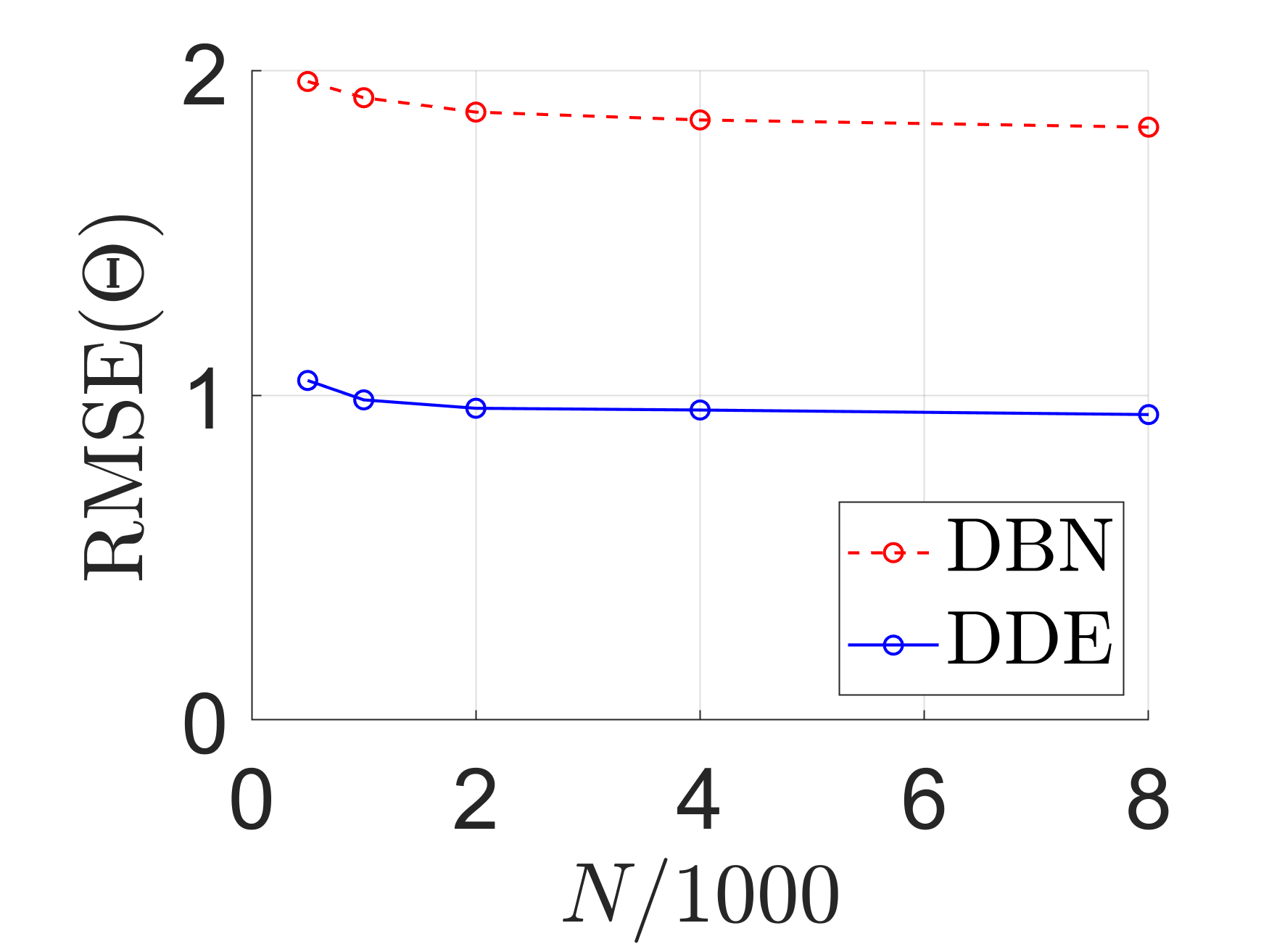}
    \subcaption{}
    \end{minipage}
    \caption{\darkblue{Comparison of Bernoulli-DDE vs DBN.  True data are generated from a \textbf{DDE} in (a)--(b), and from a \textbf{DBN} in (c)--(d). Red and blue lines show results obtained by the DBN algorithm and our DDE algorithm, respectively.
    (a) and (c): estimation error for the graphs $\mathcal{G}$. (b) and (d): estimation error for continuous parameters $\Theta$. {Smaller} values are better.}}
    \label{fig:ablation true DDE}
\end{figure}

\paragraph{Additional simulation results.}
In Supplement S.4.4 and S.4.6, we present additional simulation results where the latent dimension $\mck$ is unknown. We illustrate that the spectral-gap estimator has near-perfect selection accuracy for a large $N$ (e.g., larger than $4000$), and is superior in terms of both accuracy and computation time compared to alternatives. 

\section{Real Data Applications}\label{sec:real data}
We illustrate DDEs' interpretability, representation power, and downstream prediction accuracy on three diverse real-world datasets. Supplementary Material S.5.1 gives the preprocessing details of all datasets.

\subsection{Binary Data: Bernoulli-DDE for MNIST Handwritten Digits} 
The MNIST dataset for handwritten digits is very popular for classification as well as unsupervised learning \citep{MNIST}. We fit the two-latent-layer DDE with binary responses (Bernoulli-DDE), where the observed layer distributions in \eqref{eq:observed exp fam} are
$Y_j \mid (\ma^{(1)} = \aaa^{(1)}) \sim \text{Ber}\big(\gl(\beta_{j,0}^{(1)} + \sum_{k \in [K^{(1)}]} \beta_{j,k}^{(1)} \alpha_k^{(1)})\big)$.
This resembles existing generative models for images such as DBN and DBM, but we instead consider a much low-dimensional shrinking-ladder shaped latent structure that is identifiable and interpretable.
For easier presentation, we consider the subset of images whose true digit labels are 0, 1, 2, and 3.
After preprocessing, our training set consists of $N = 20,679$ images each with $J = 264$ binary pixels. Compared to many existing works that analyzed MNIST, we are considering a more challenging fully-unsupervised setting by holding out all other information about the images, such as the true labels, number of classes, and the spatial location among the pixels.

\begin{table}
\centering
\resizebox{\textwidth}{!}{
\begin{tabular}{{l}*6{C}@{}}
\toprule
Basis image & \includegraphics[width = \linewidth]{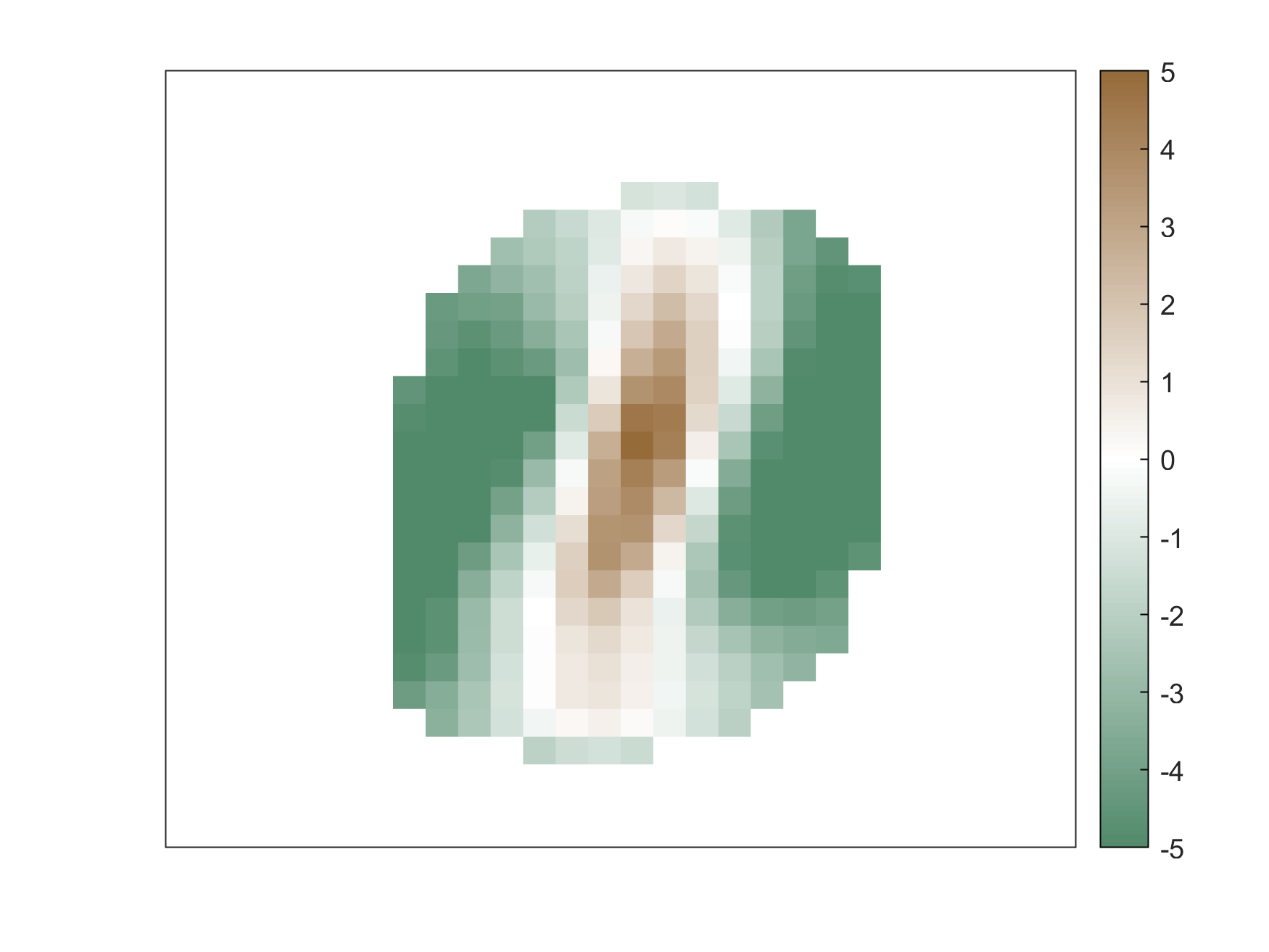} &
\includegraphics[width = \linewidth]{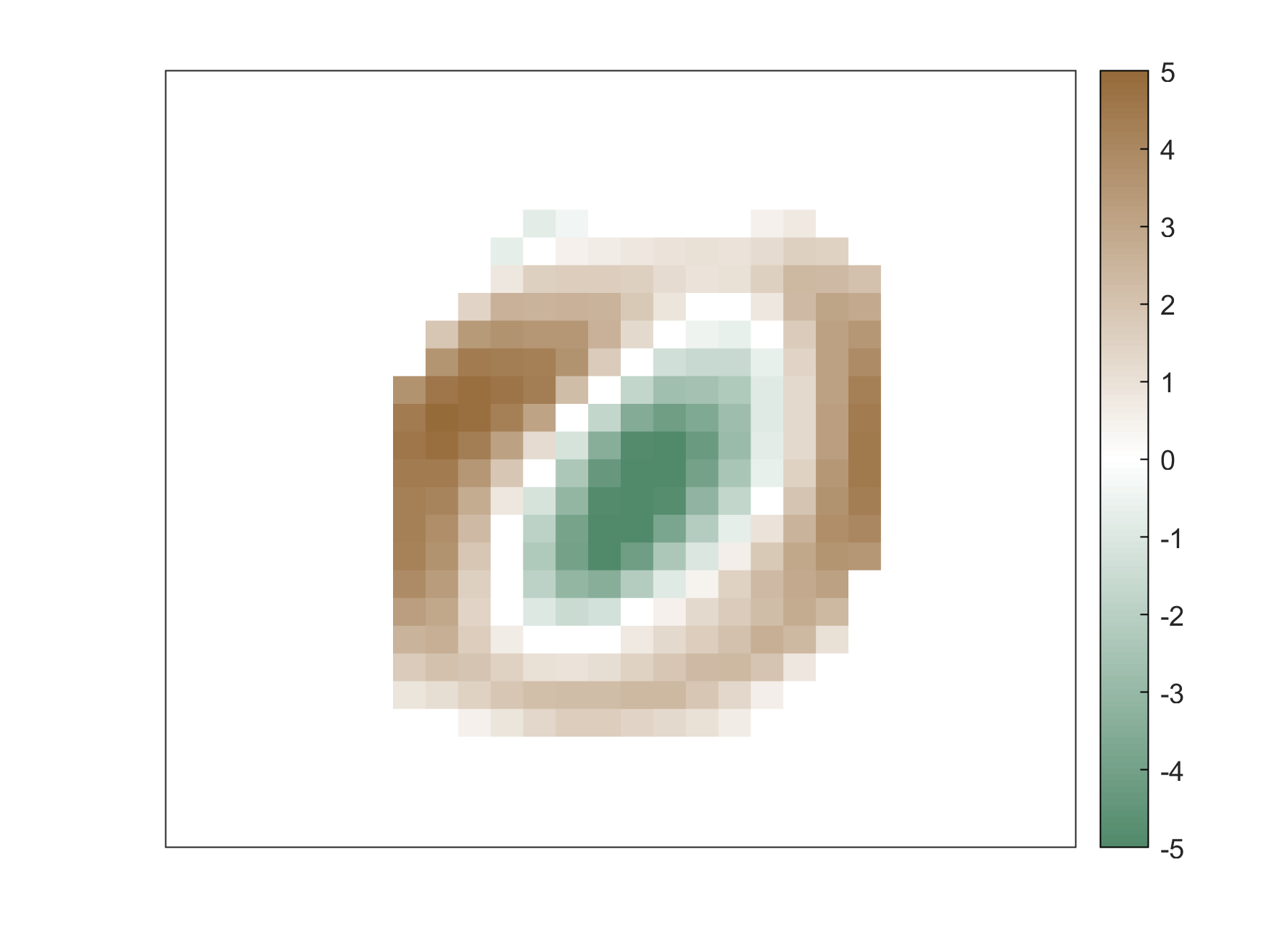} & \includegraphics[width = \linewidth]{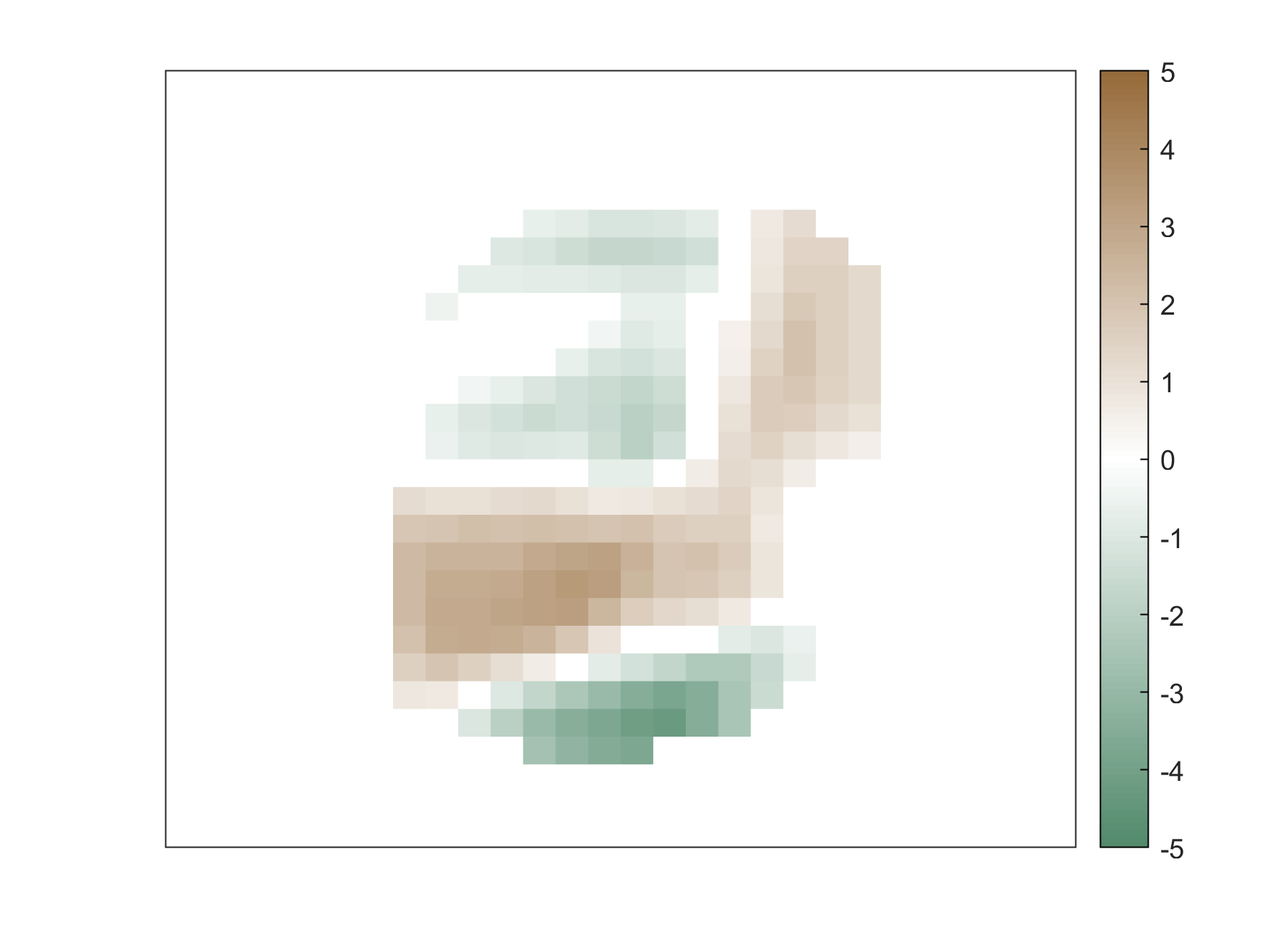} & \includegraphics[width = \linewidth]{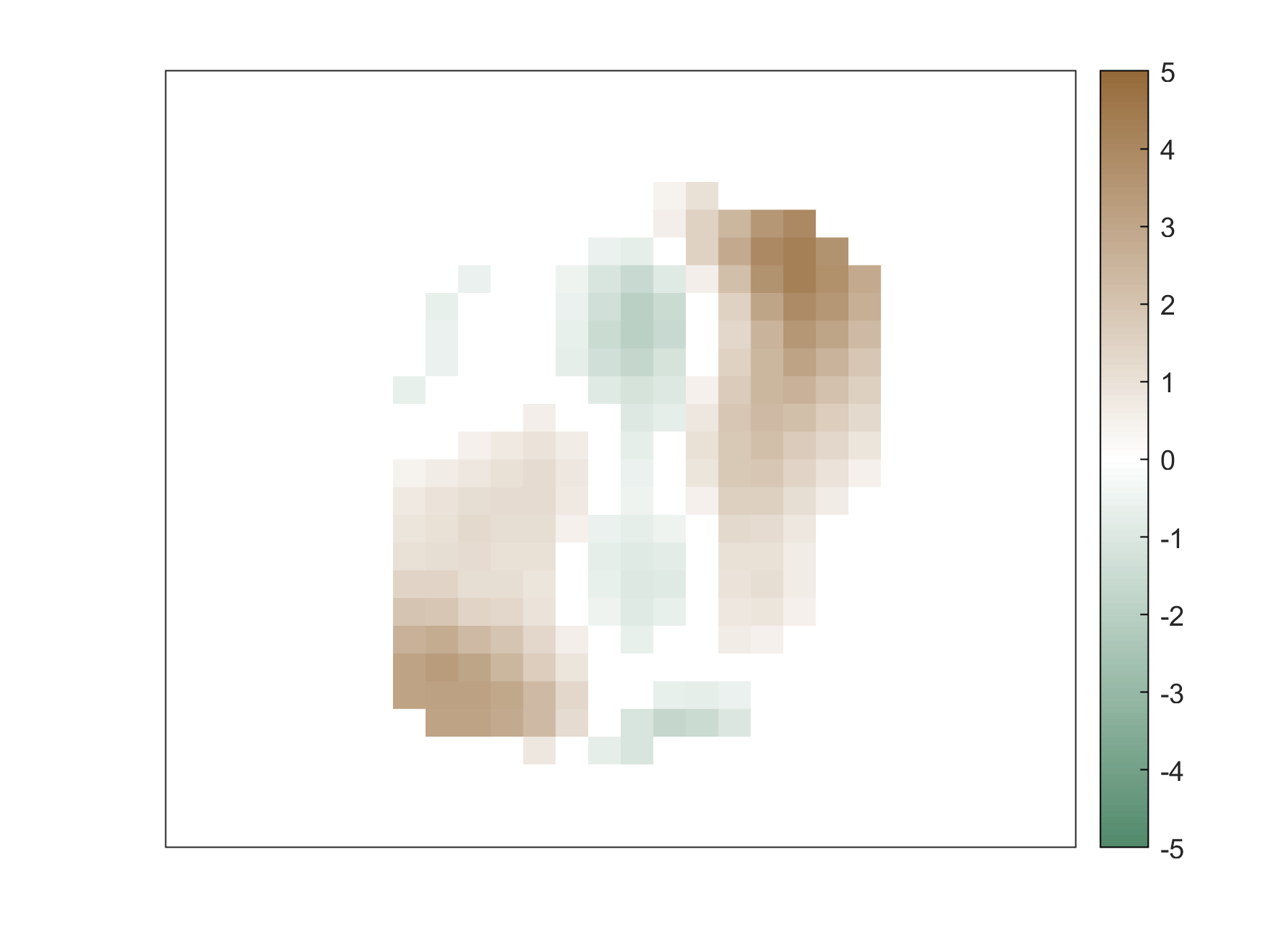} & \includegraphics[width = \linewidth]{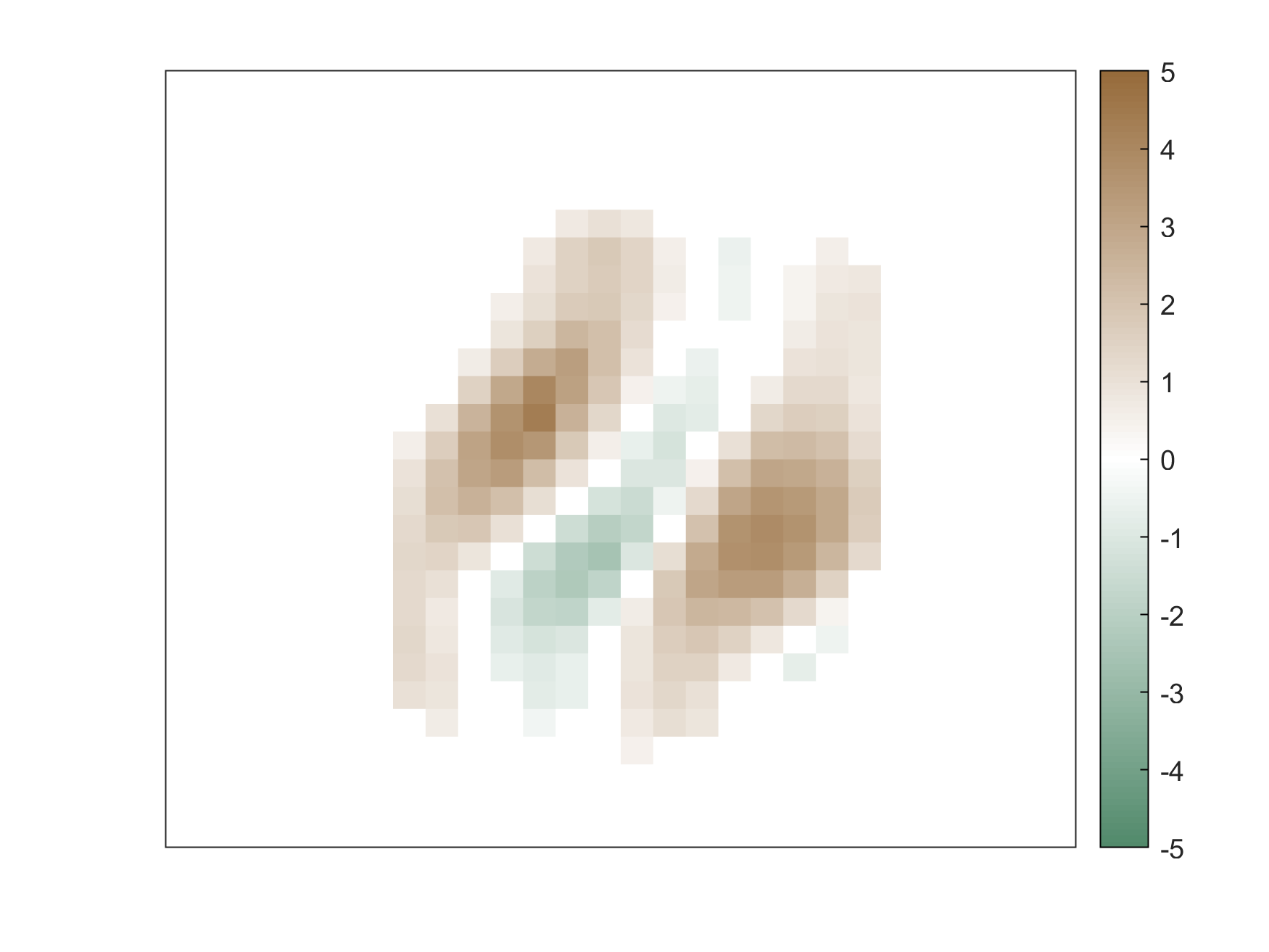} & \includegraphics[width = \linewidth]{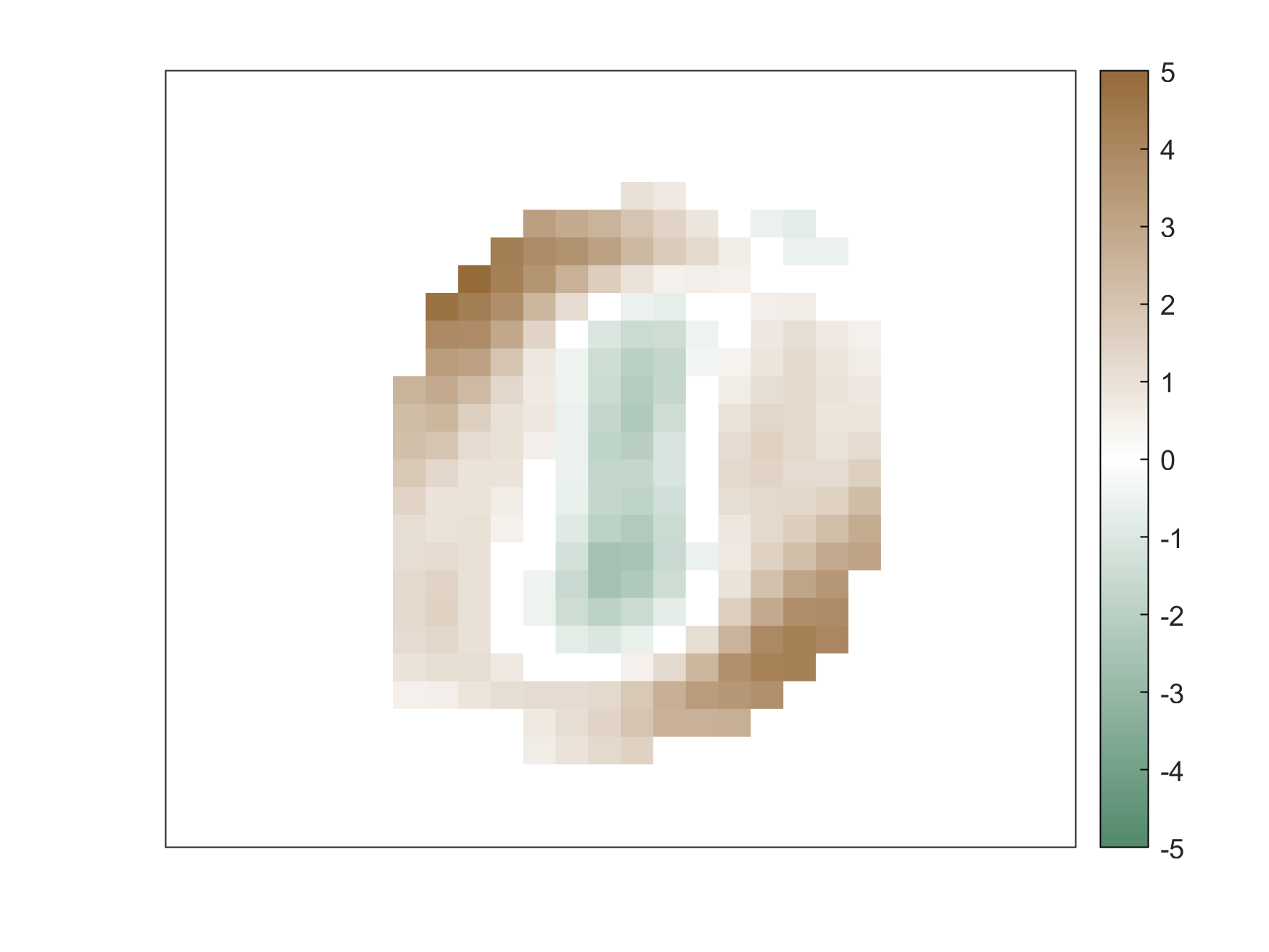} \\
\midrule
Positive part & \includegraphics[width = \linewidth]{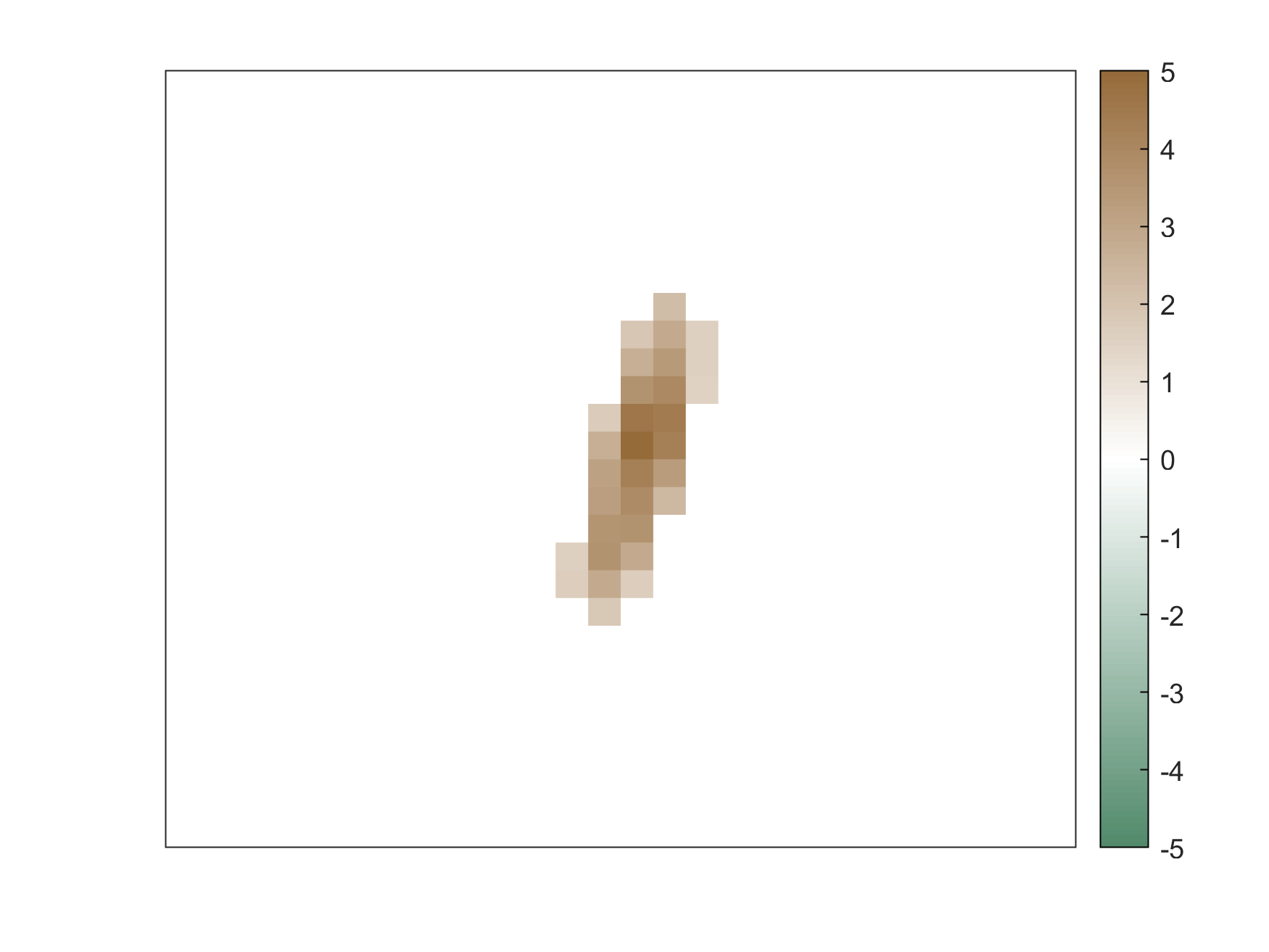} &
\includegraphics[width = \linewidth]{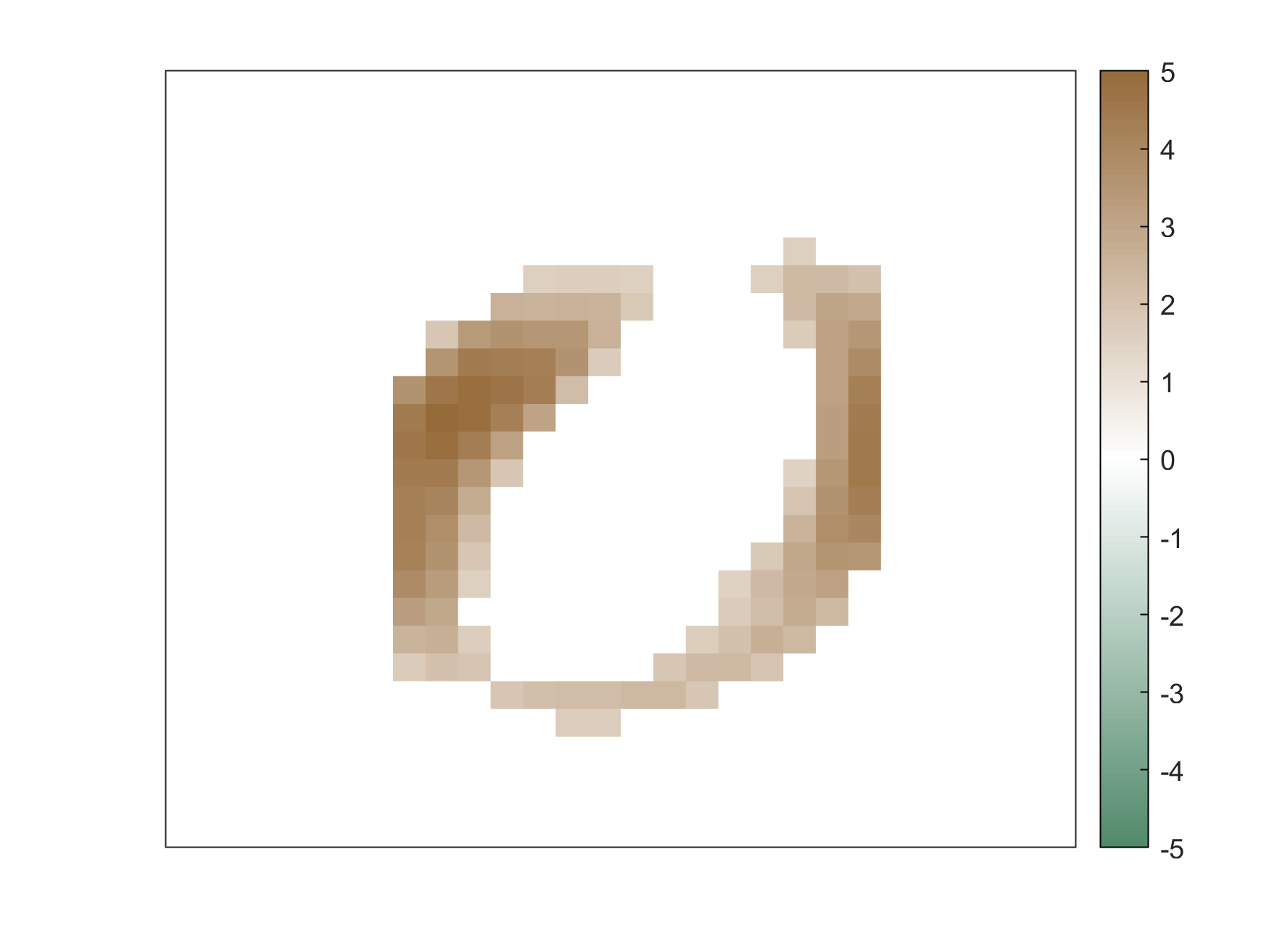} & \includegraphics[width = \linewidth]{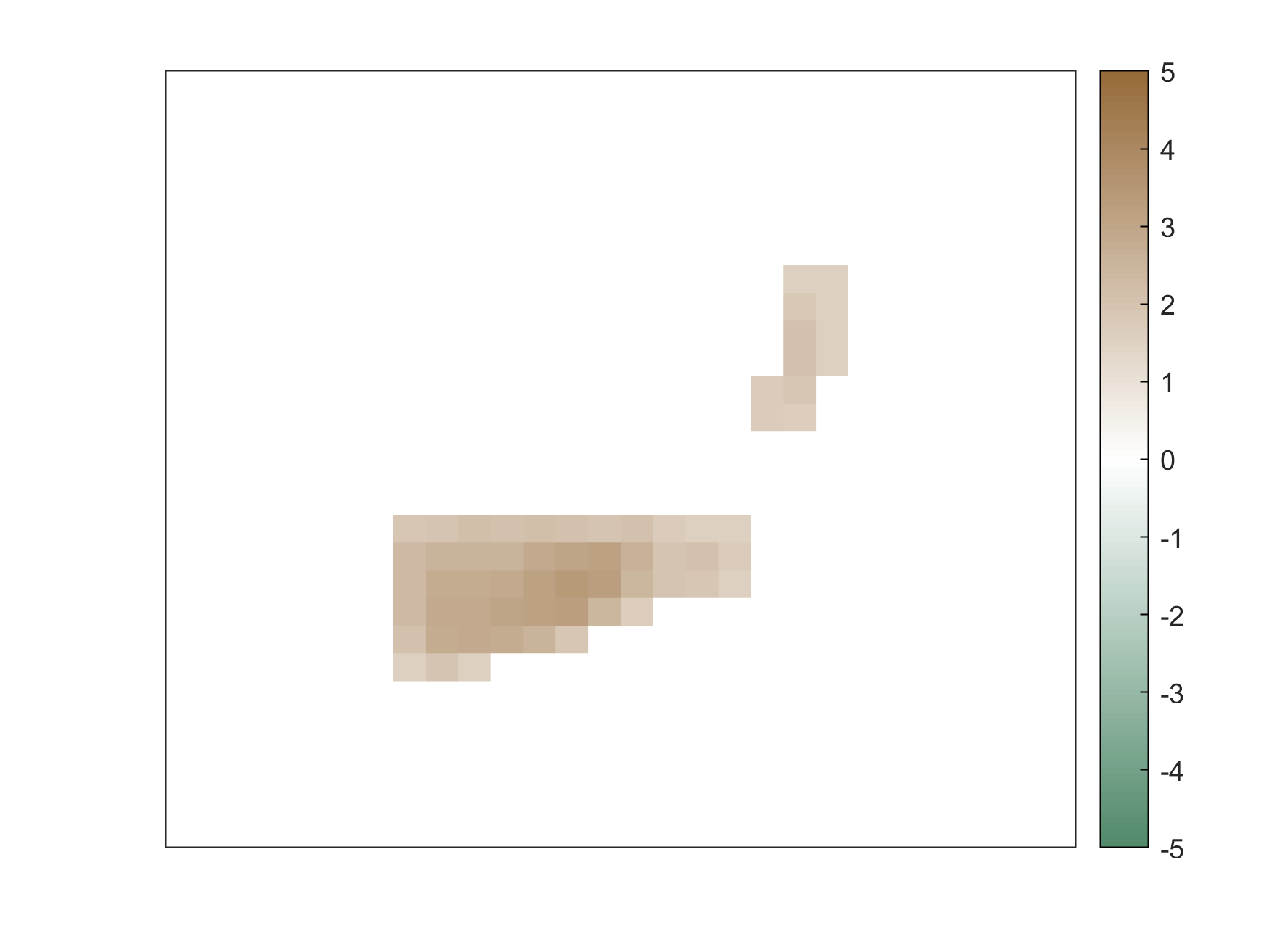} & \includegraphics[width = \linewidth]{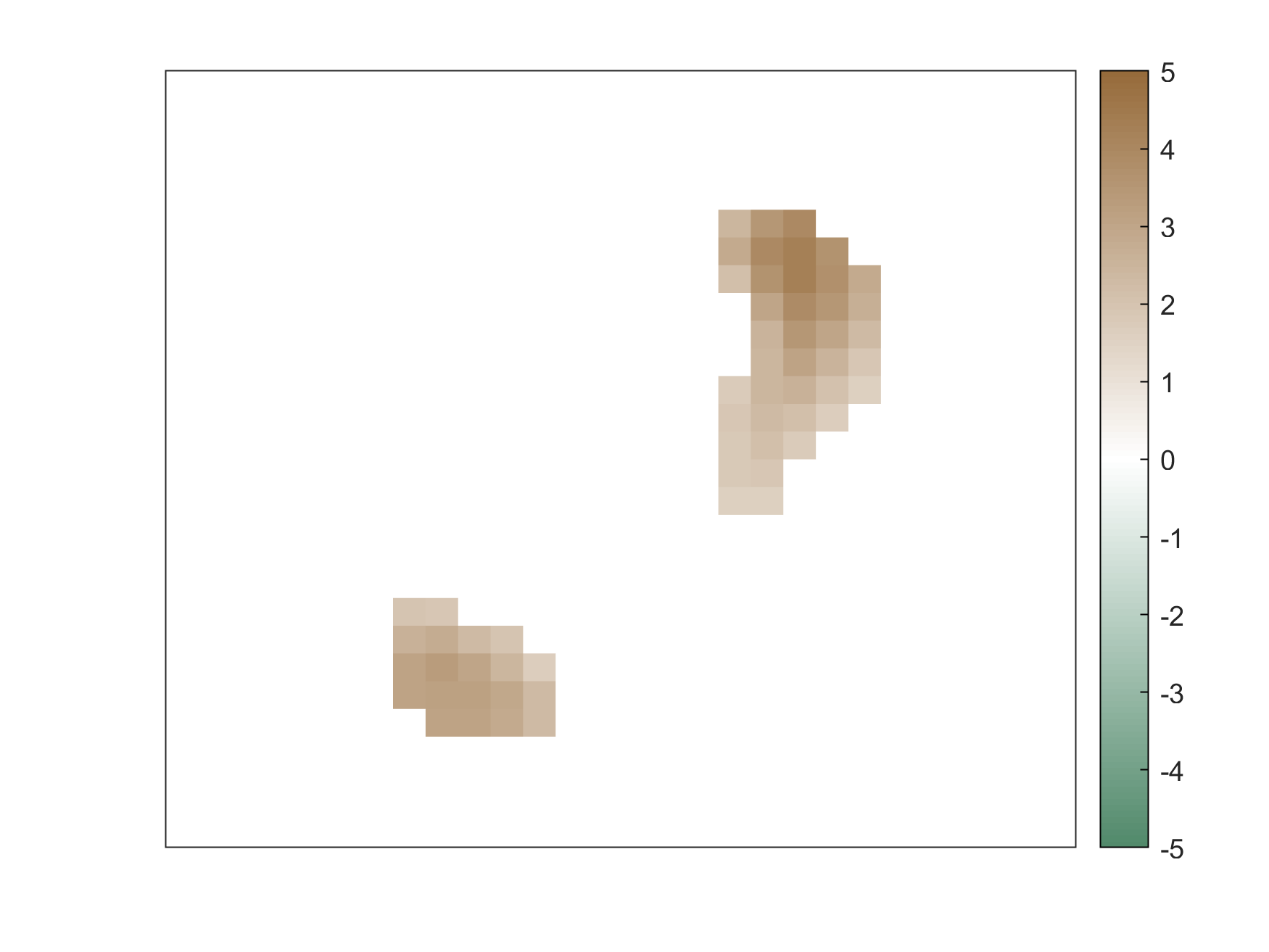} & \includegraphics[width = \linewidth]{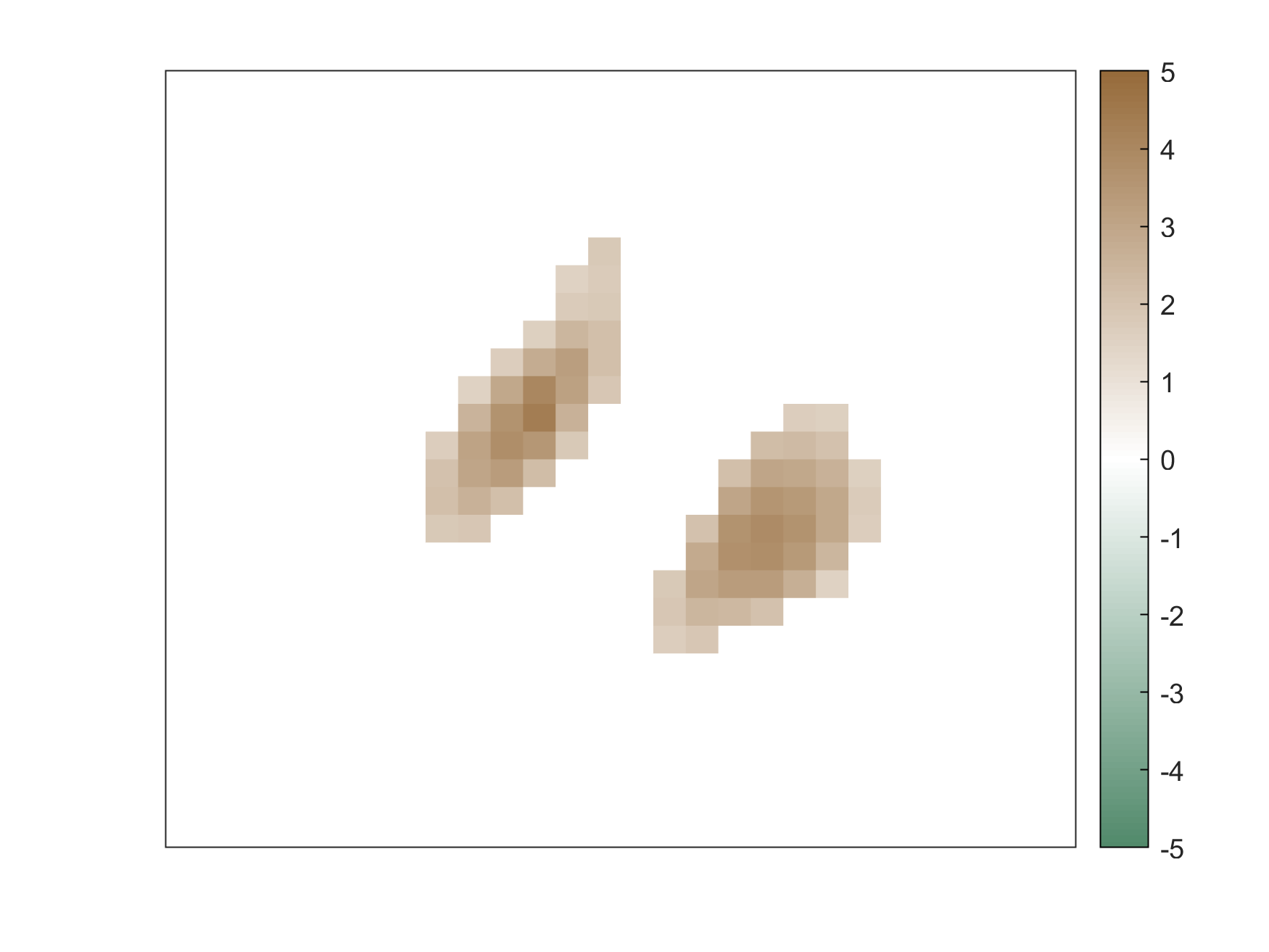} & \includegraphics[width = \linewidth]{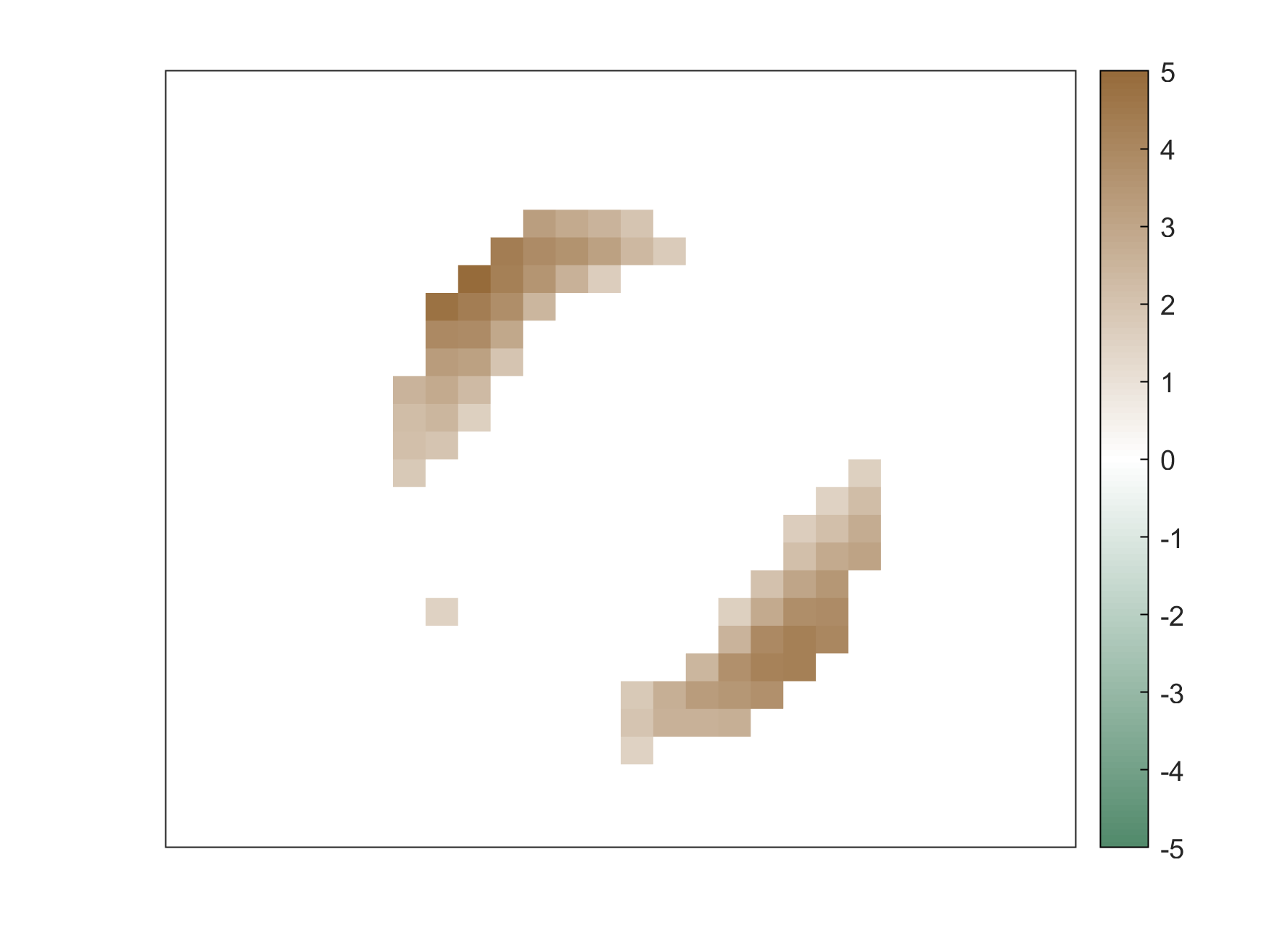} \\
Negative part & \includegraphics[width = \linewidth]{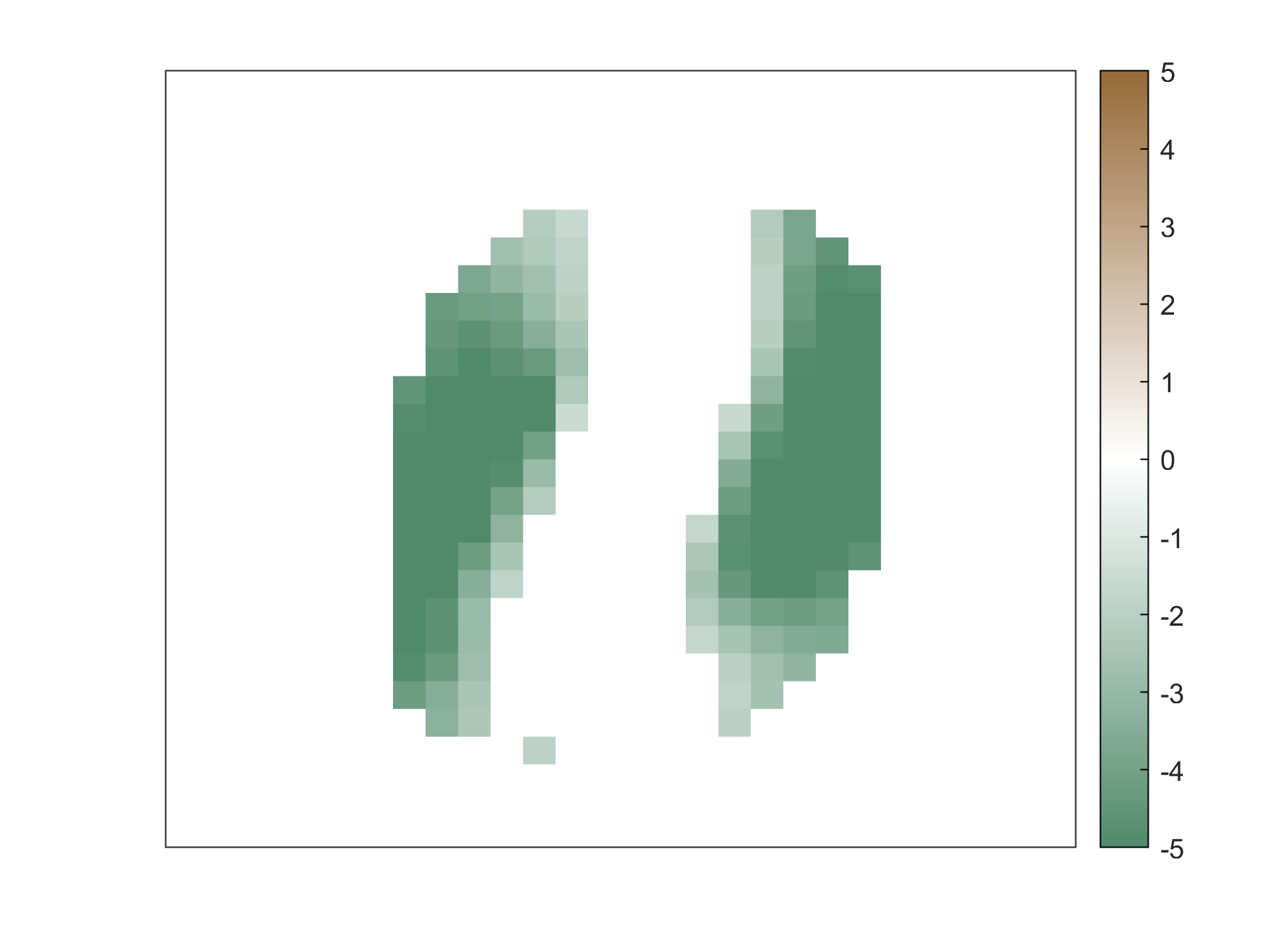} &
\includegraphics[width = \linewidth]{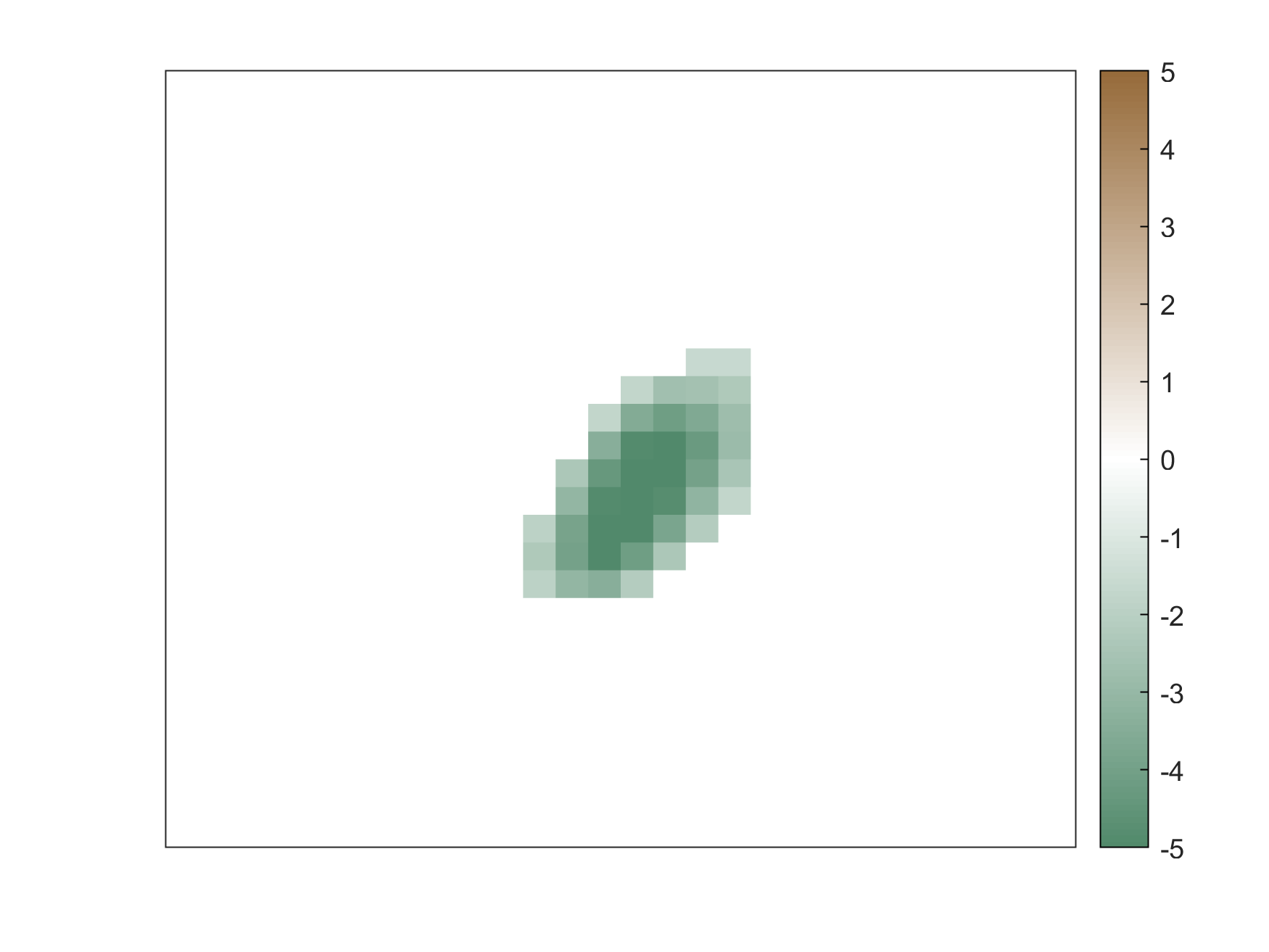} & \includegraphics[width = \linewidth]{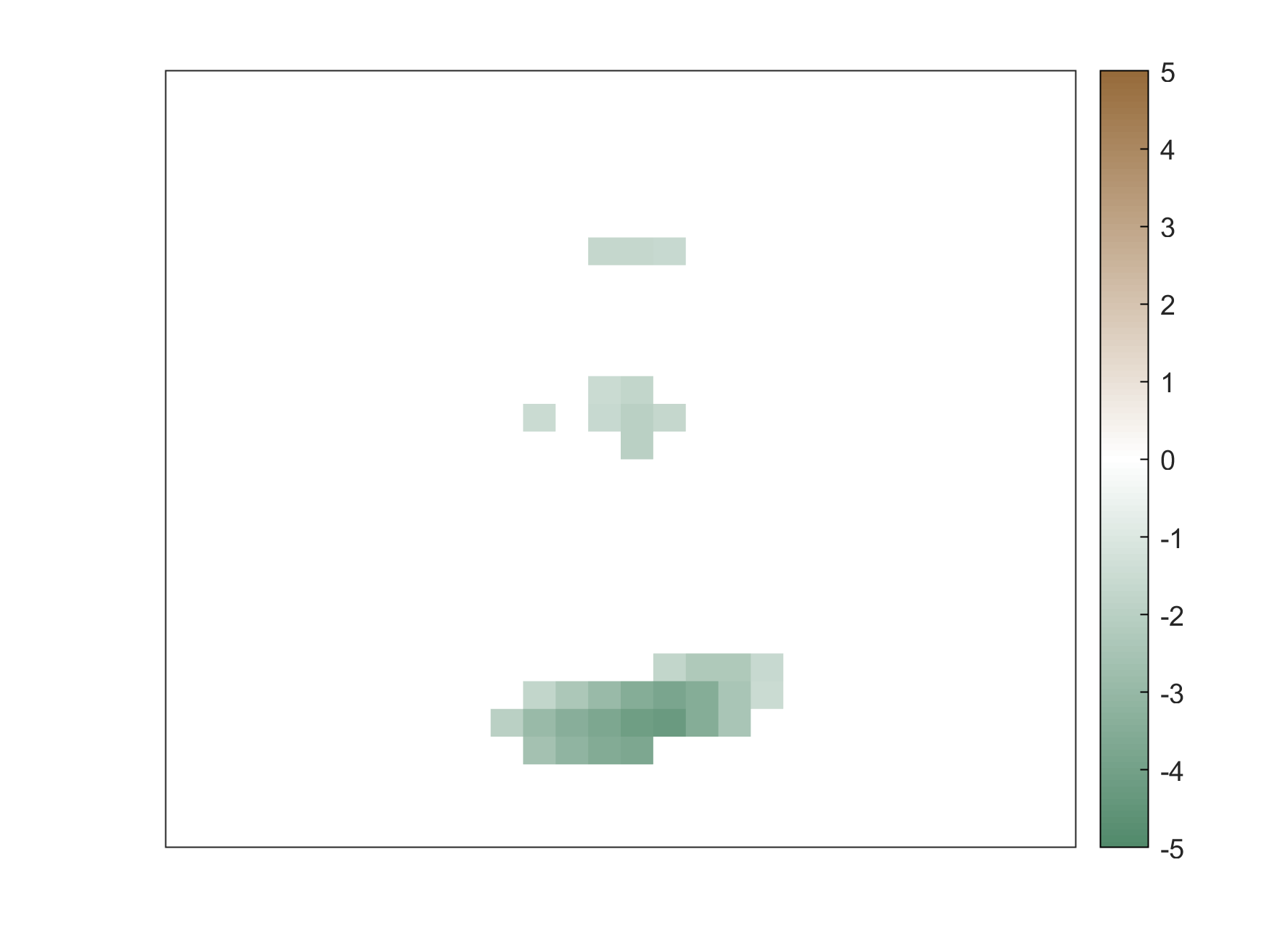} & \includegraphics[width = \linewidth]{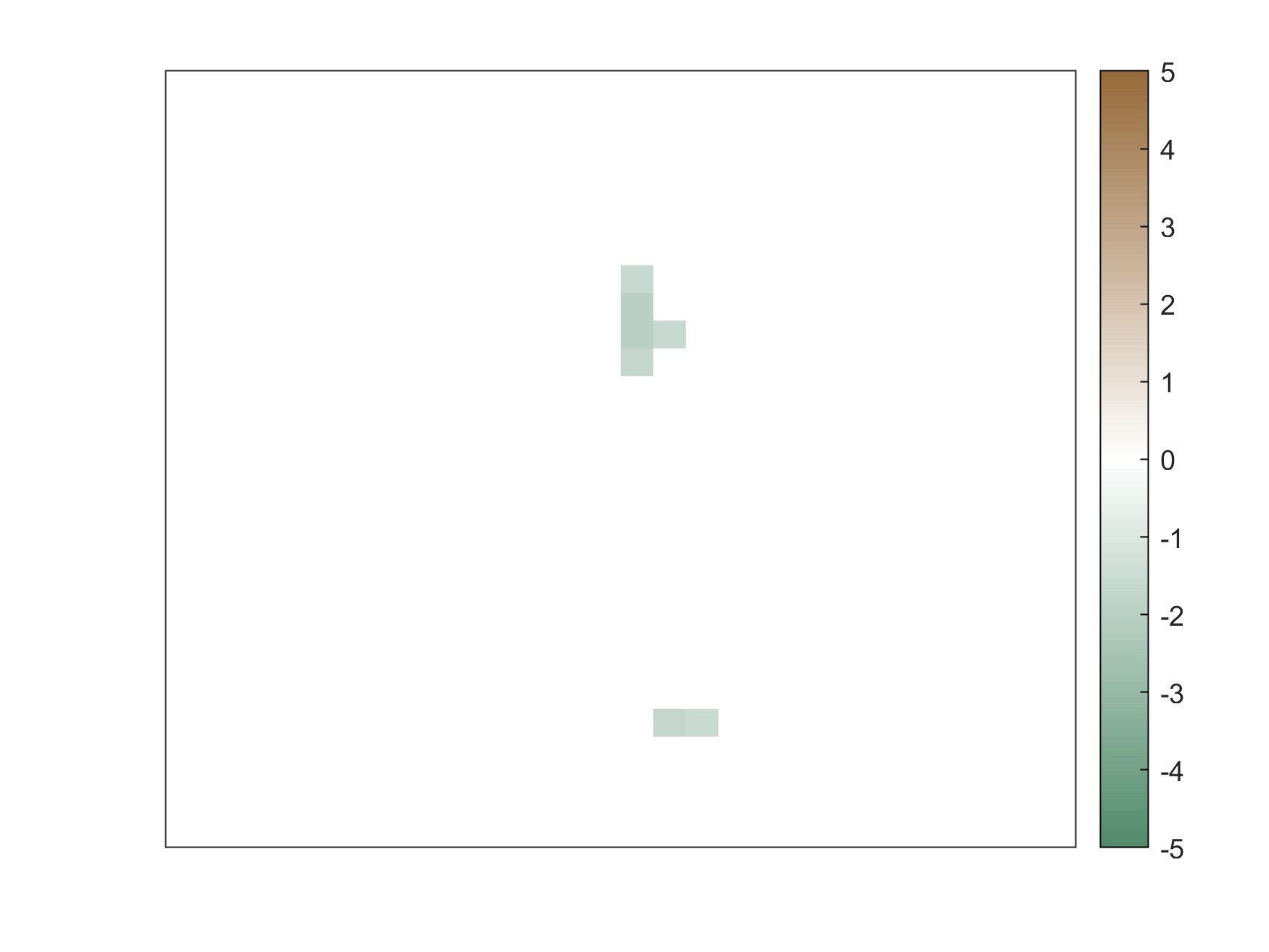} & \includegraphics[width = \linewidth]{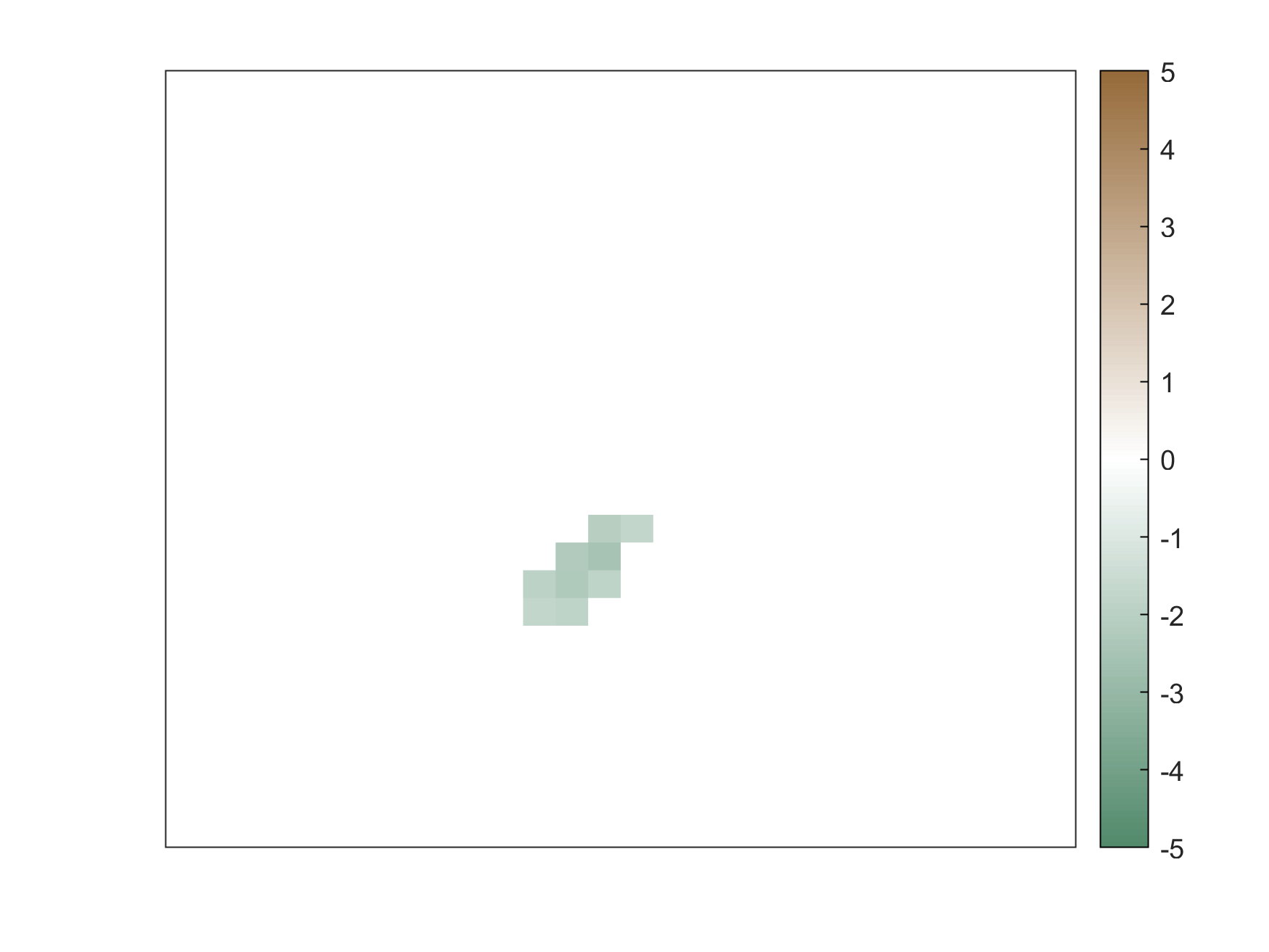} & \includegraphics[width = \linewidth]{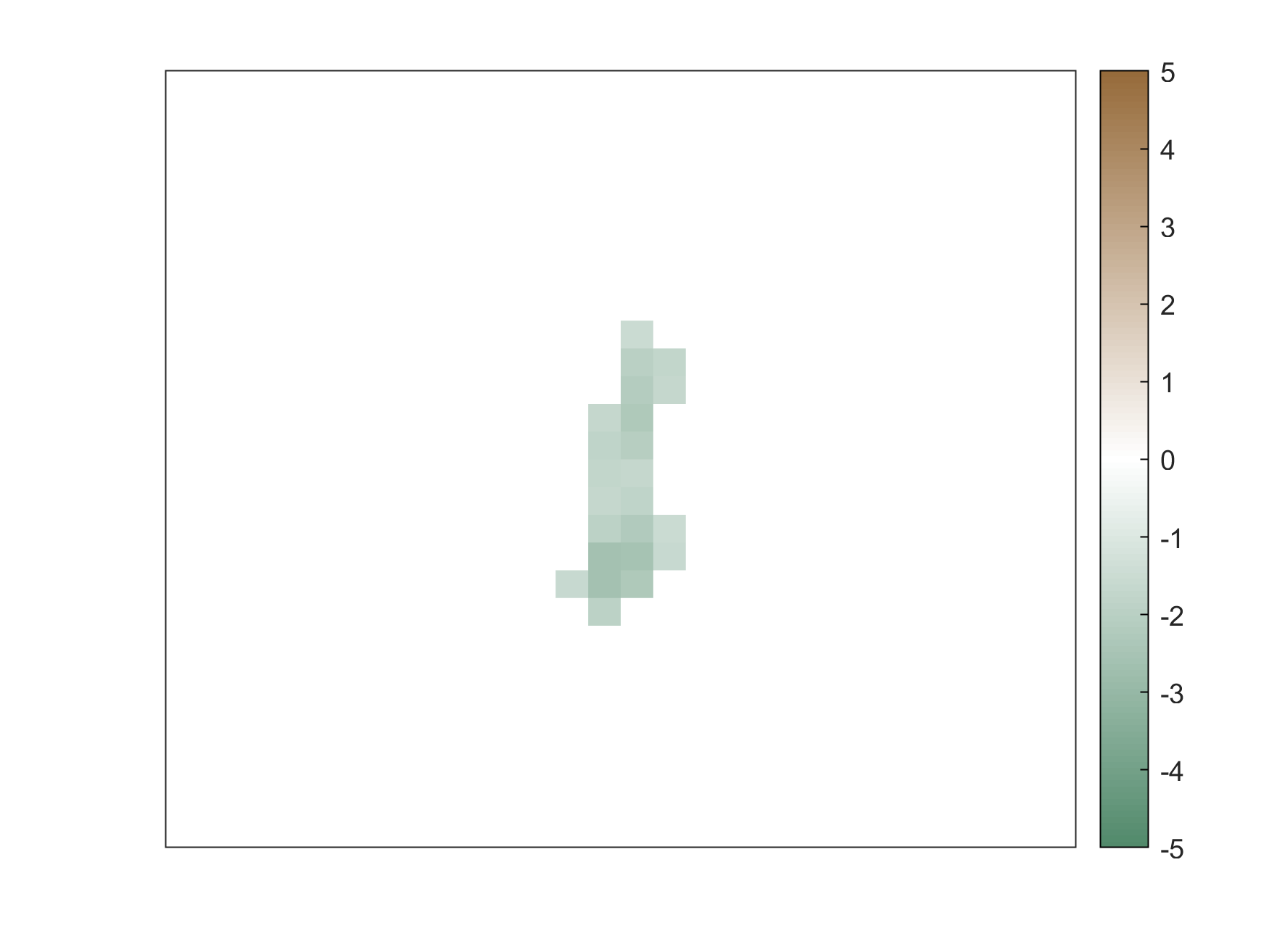} \\
\midrule
& $\bbeta_{0,0}^{(1)}$ & $\bbeta_{0,1}^{(1)}$ & $\bbeta_{0,2}^{(1)}$ & $\bbeta_{0,3}^{(1)}$ & $\bbeta_{0,4}^{(1)}$ & $\bbeta_{0,5}^{(1)}$ \\
\end{tabular}
}
\caption{From left to right: Estimated basis images (reshaped from the estimated coefficients $\bbeta_{0,k}^{(1)}$) from the MNIST data, which define sparse subregions in the $28 \times 28$ image. The second and third row shows the most significant positive and negative parts of the basis images by thresholding at the value $\pm 1.5$.
}
\label{tab:basis images MNIST}
\vspace{-2mm}
\end{table}

We fit the two-latent-layer DDE with the latent dimensions set to $K^{(1)} = 5 $ and $K^{(2)} = 2$ (see the Supplementary Material S.5.2 for the rationale for this choice).
\cref{tab:basis images MNIST} displays the first-layer coefficients $\B^{(1)}$ by rearranging the each column into the original $28\times 28$ grid. Note that the value of $\B^{(1)}$ is in the logit-scale, so the negative coefficients make the corresponding pixel more likely to be zero. From the reshaped columns $\bbeta_{0,0}^{(1)}$, $\bbeta_{0,1}^{(1)}$, \ldots, $\bbeta_{0,5}^{(1)}$ of $\B^{(1)}$, we can interpret the meaning of each latent variable: $A_1^{(1)} = 1$ indicates a zero-like shape, $A_5^{(1)} = 1$ indicates symmetric curves on the upper-left and bottom-right corners, and the other latent variables represent different rotations.
Additionally, using the deeper graphical matrix $\mathbf{G}^{(2)}$ displayed in the Supplementary Material S.5.4, we can also interpret the top layer latent variables as broader information about the images. For example, $A_1^{(2)}$ indicates large pixel density and $A_2^{(2)}$ indicates symmetry with respect to the x-axis.

\begin{figure}[h!]
\centering
\resizebox{0.4\textwidth}{!}{
\vspace{-2mm}
    \includegraphics[width = 0.5\textwidth]{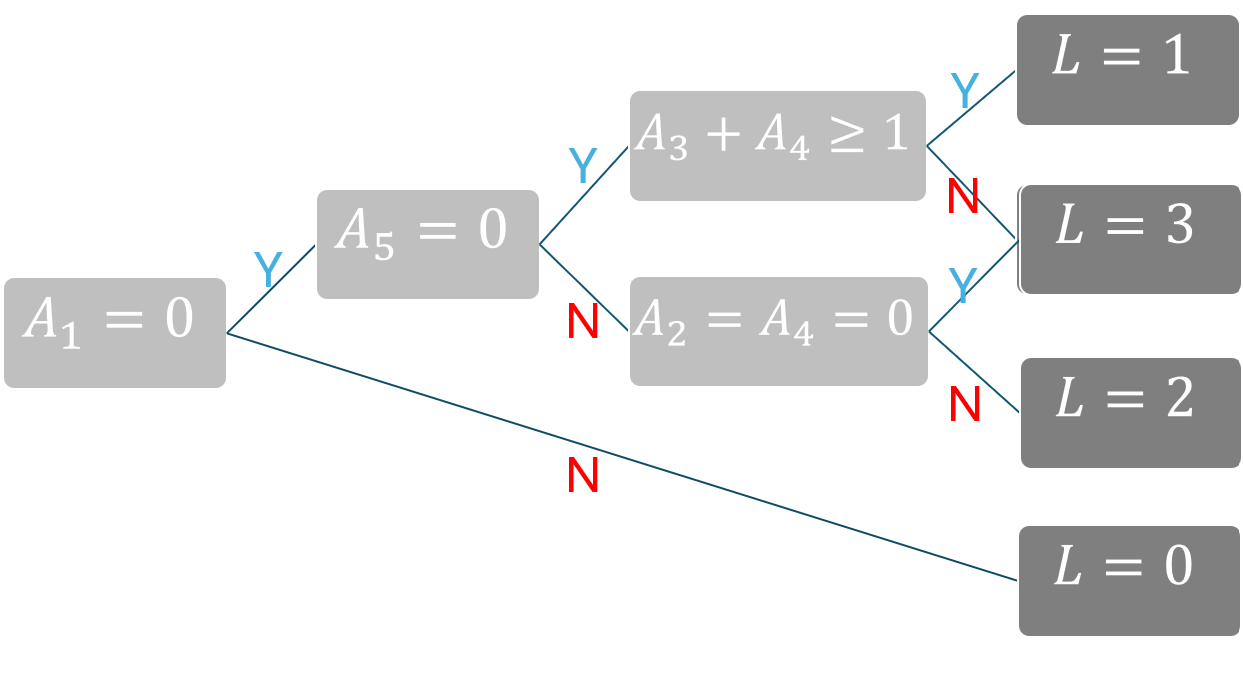}
}\quad
\resizebox{0.55\textwidth}{!}{
\includegraphics[height = 5cm]{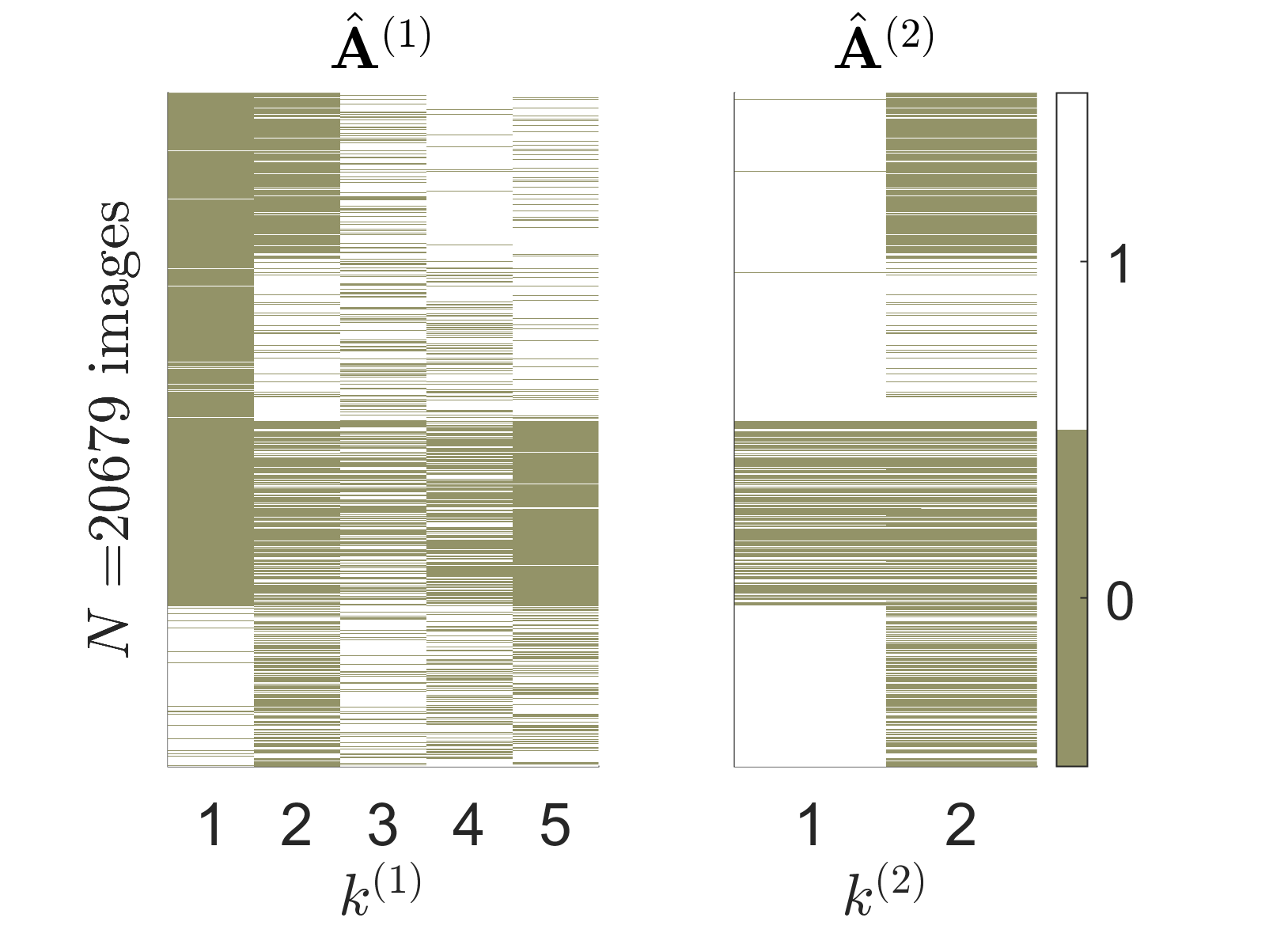}
\includegraphics[height = 5cm]{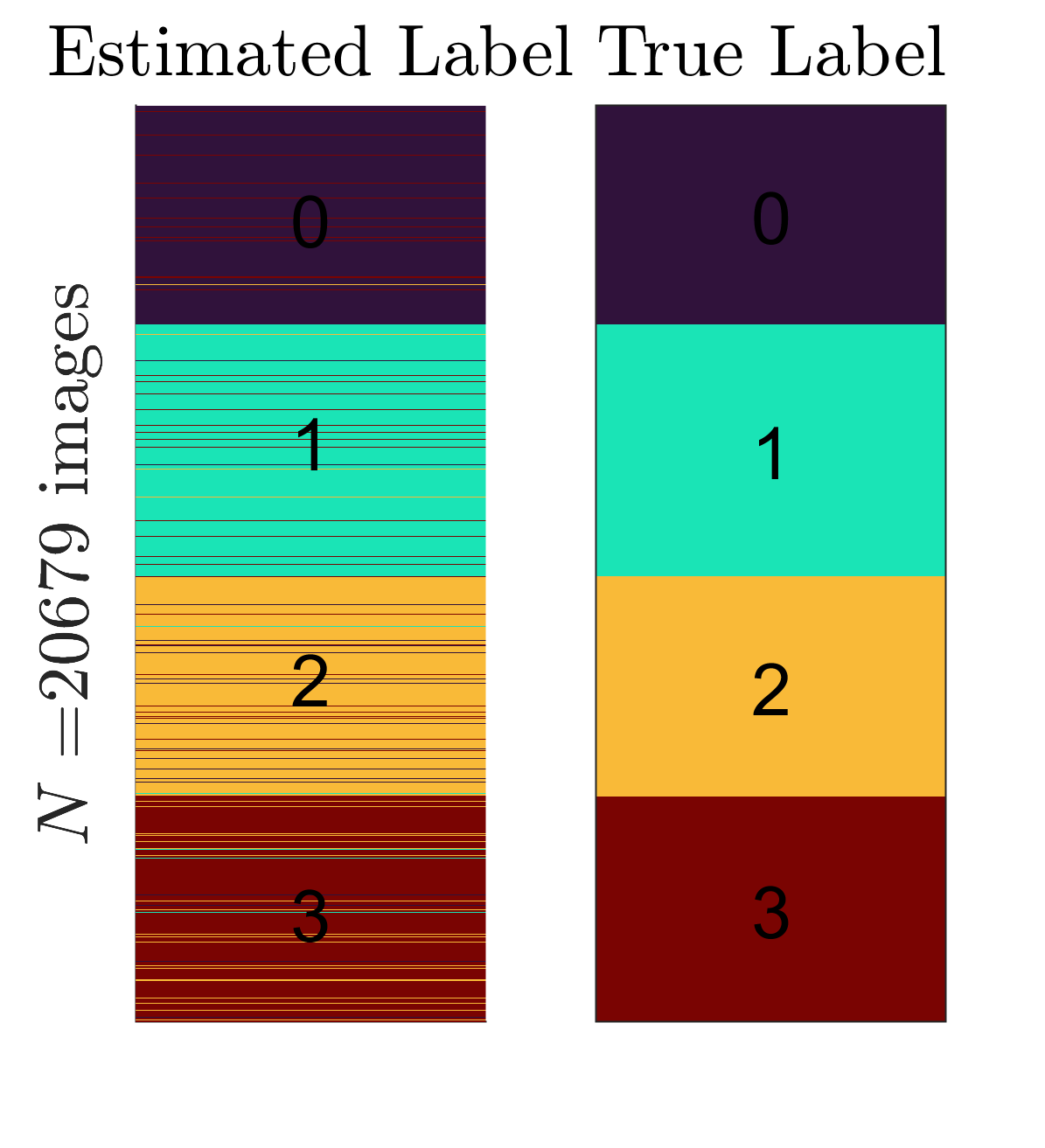}
}
\caption{\textbf{Left}: Decision tree to estimate the digit $L = 0, 1, 2, 3$. \textbf{Center}: Estimated latent representations $\hat{\mathbf A}^{(1)}, \hat{\mathbf A}^{(2)}$. \textbf{Right}: Estimated and true digits in the train set. 
}
\label{fig:mnist label estimation}
\vspace{-4mm}
\end{figure}

The learned shallower and deeper latent representations $(\hat\ma^{(1)}, \hat\ma^{(2)})$ are evaluated under two measures of performance: \emph{classification accuracy} and \emph{reconstruction accuracy}.
We first estimate the latent variables using the $\varphi$ matrix in the EM algorithm:
\begin{equation}\label{eq:latent variable estimation}
    (\hat{\ma}^{(1)}_i, \hat{\ma}^{(2)}_i) = \argmax_{\aaa^{(1)}\in\{0,1\}^{K_1},~ \aaa^{(2)}\in\{0,1\}^{K_2}} \varphi_{i, \aaa^{(1)}, \aaa^{(2)}};
    \quad \aaa^{(d)}\in\{0,1\}^{K^{(d)}},~d=1,2.
\end{equation}
Then, a decision tree that classifies the binary latent representations to the categorical label {is built by using the misclassification error as the splitting criteria}; see the left panel of \cref{fig:mnist label estimation}.
The center and right panels display the estimated latent traits and estimated labels. Our classifier leads to a high train/test accuracy of $92.0\%/92.6\%$, even though our model is not fine-tuned for image classification.
Although the state-of-art machine learning methods can achieve an accuracy as high as $97\%$ \citep{monnier2020deep}, we point out that the main goal here is not classification, but on interpretability and parsimony; indeed, the DDE uses more limited information and is a less complex but more interpretable model that provides the generative process for the image. 
In \cref{tab:reconstruction MNIST}, we compare the classification accuracy and pixel-wise reconstruction accuracy of the two-latent-layer DDE with alternative interpretable models (latent class model (LCM) and the one-latent-layer DDE), as well as \darkblue{popular unsupervised machine learning models (two-latent-layer DBN and VAE)}. 
The results show that the two-latent-layer DDE performs well for both measures. We also display example images generated from DDE alongside their latent representations in \cref{fig:generated digits}, which illustrate various handwriting styles for each digit. We provide implementation details, \darkblue{detailed comparison with iVAEs}, and additional visualization in Supplementary Material S.5.3.

\begin{figure}[h!]
\adjustbox{valign=c}{
  \begin{minipage}[b]{.44\linewidth}
    \centering
\resizebox{\textwidth}{!}{
    \begin{tabular}[h!]{cccc}
    \toprule
    $[1, 0, 1, 0, 1]$ & $[0, 0, 0, 0, 0]$  & $[0, 1, 0, 1, 1]$ & $[0, 0, 1, 1, 1]$ \\
    \includegraphics[height = 2cm, valign=m]{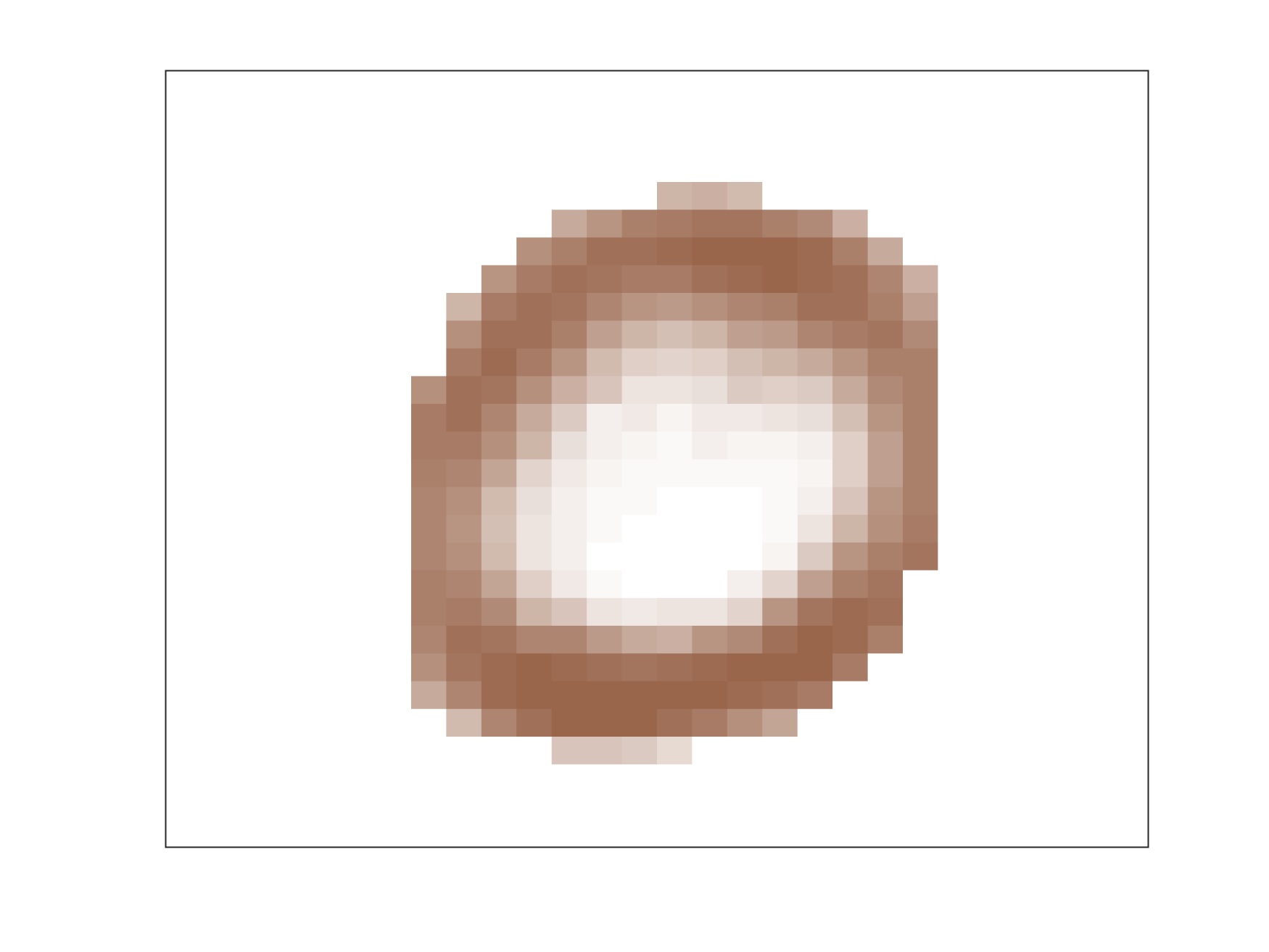} & \includegraphics[height = 2cm, valign=m]{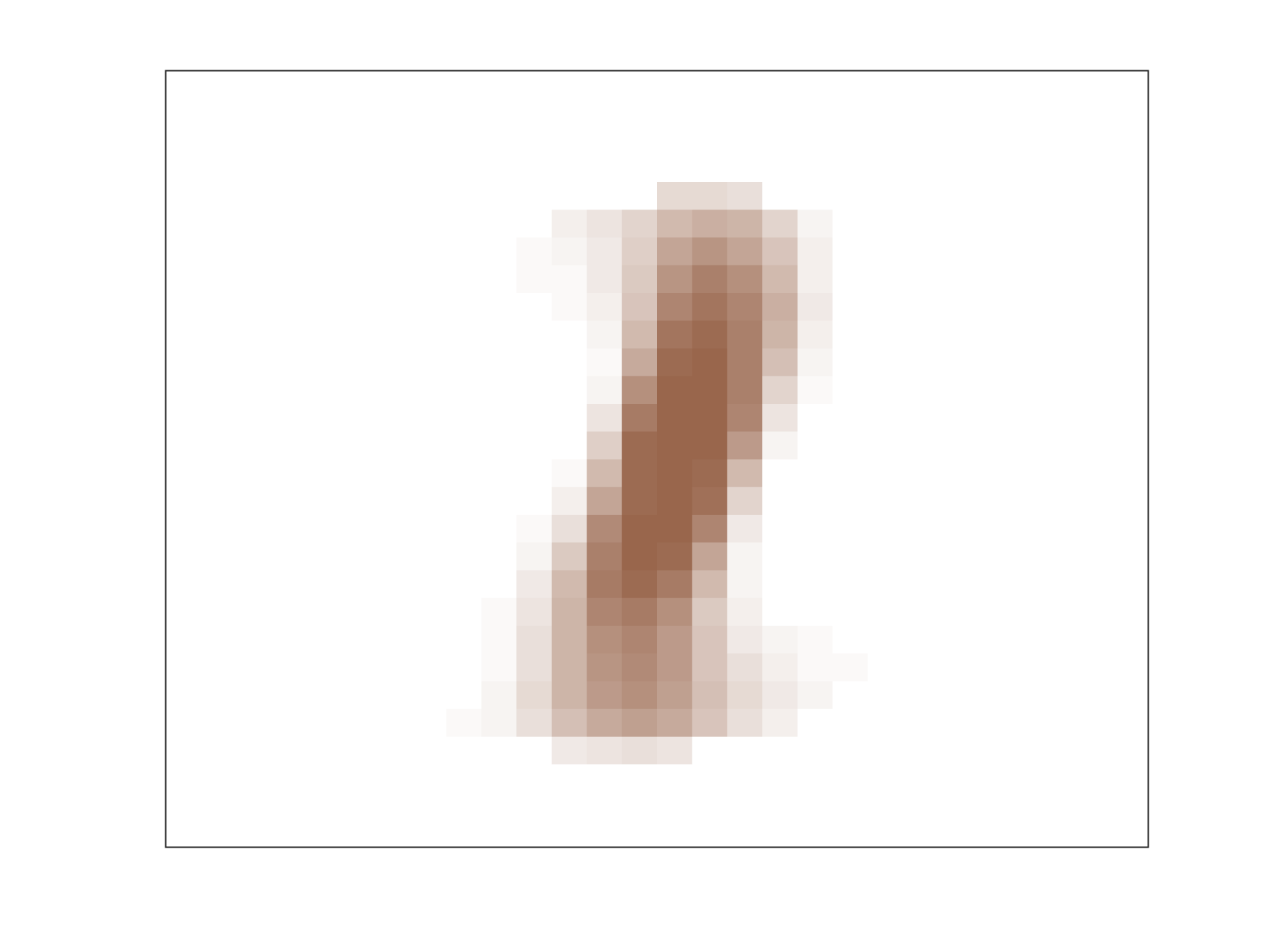} & \includegraphics[height = 2cm, valign=m]{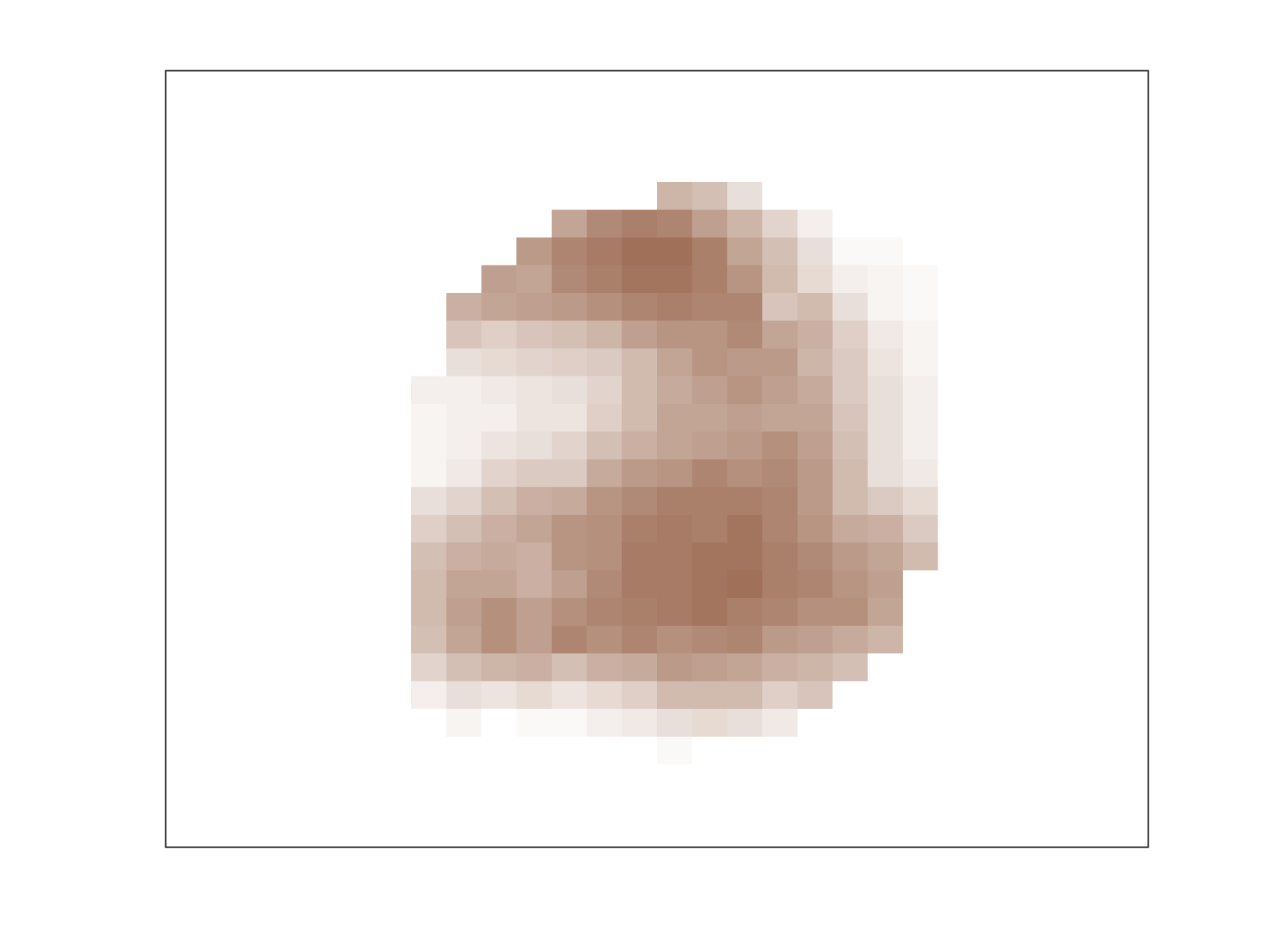} & \includegraphics[height = 2cm, valign=m]{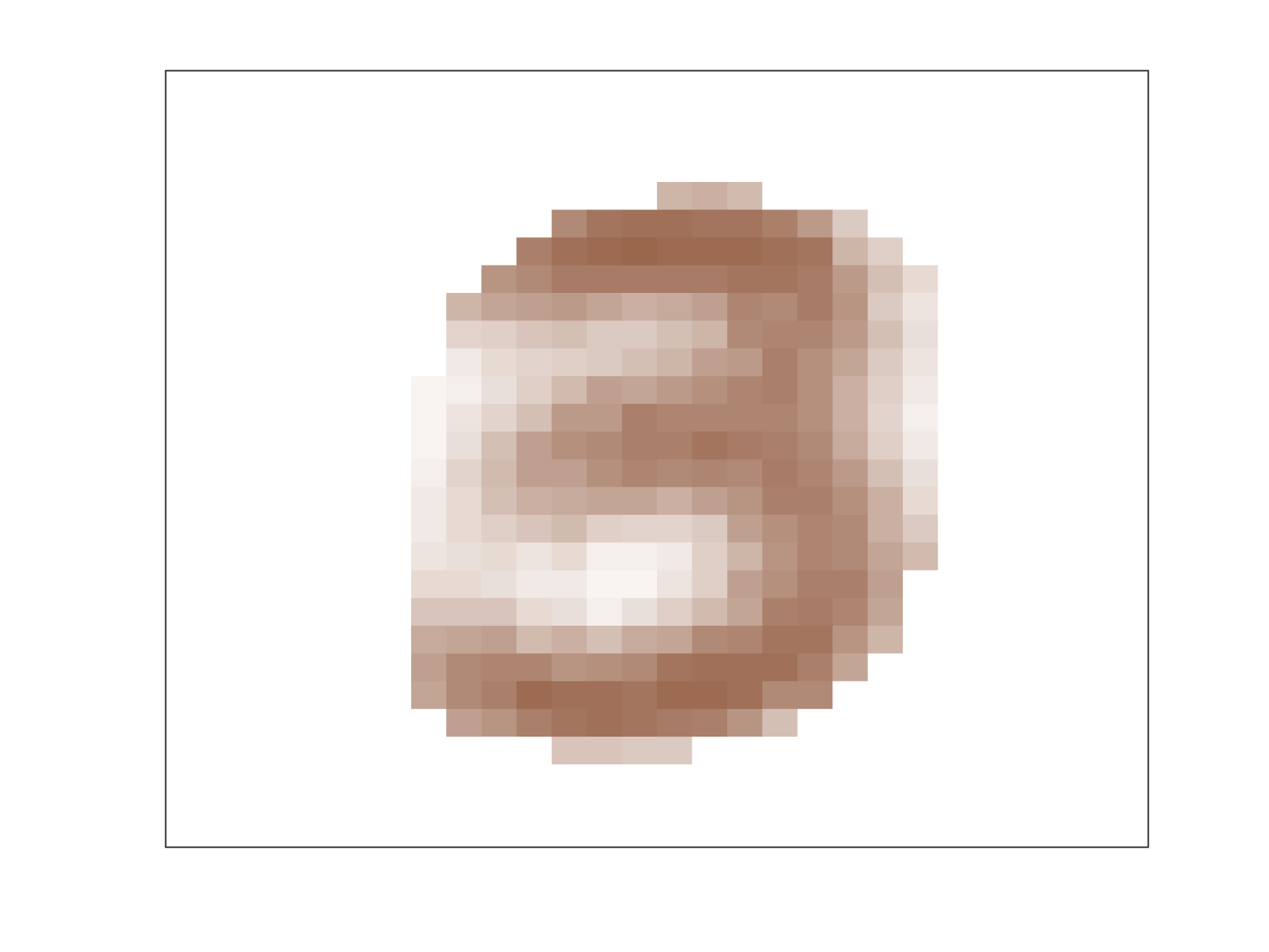} \\
    \midrule
    $[1, 0, 1, 1, 1]$ & $[0, 0, 0, 1, 0]$ & $[0, 1, 1, 0, 1]$ & $[0, 0, 1, 1, 0]$ \\
    \includegraphics[height = 2cm, valign=m]{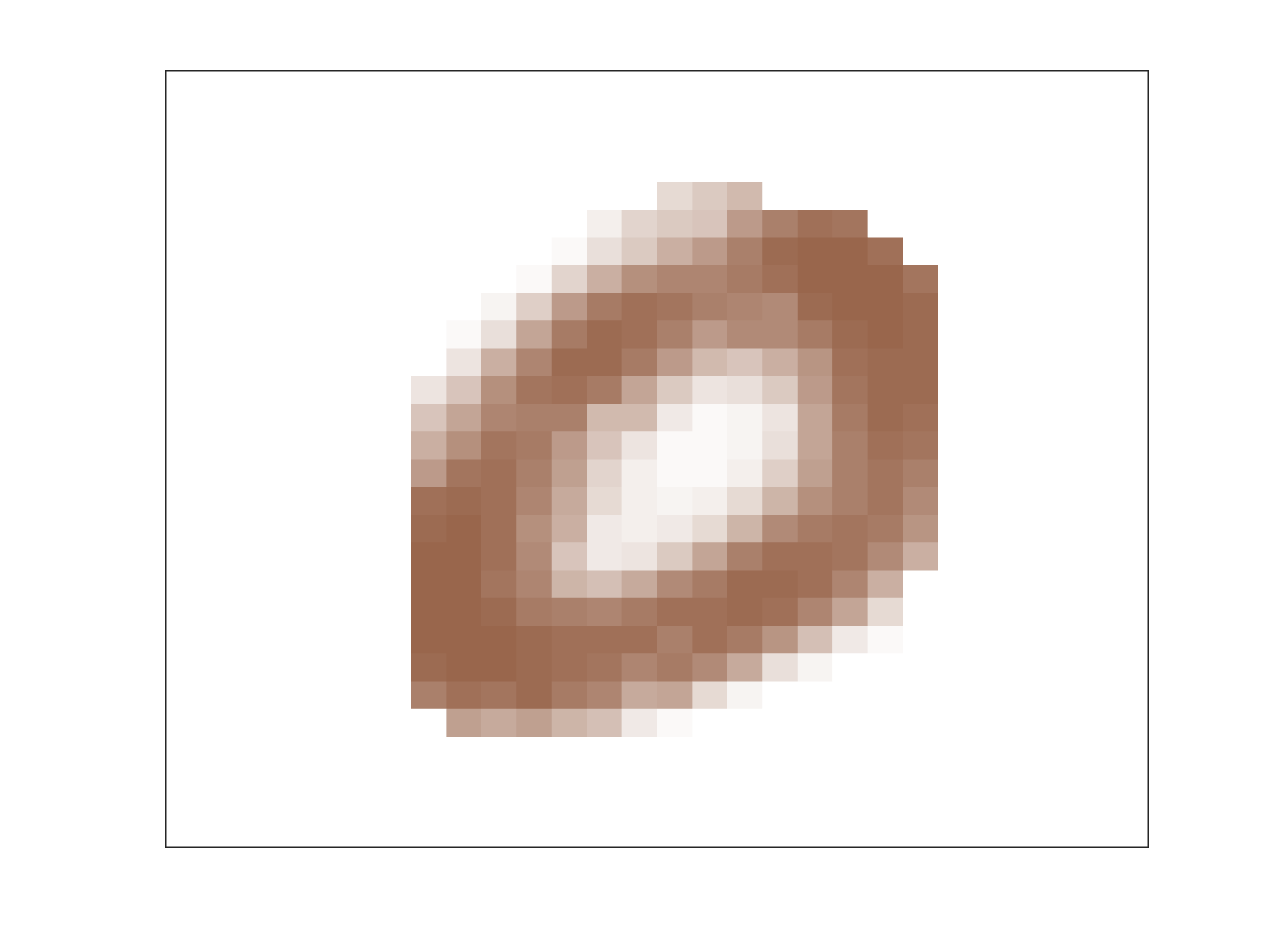}
     & \includegraphics[height = 2cm, valign=m]{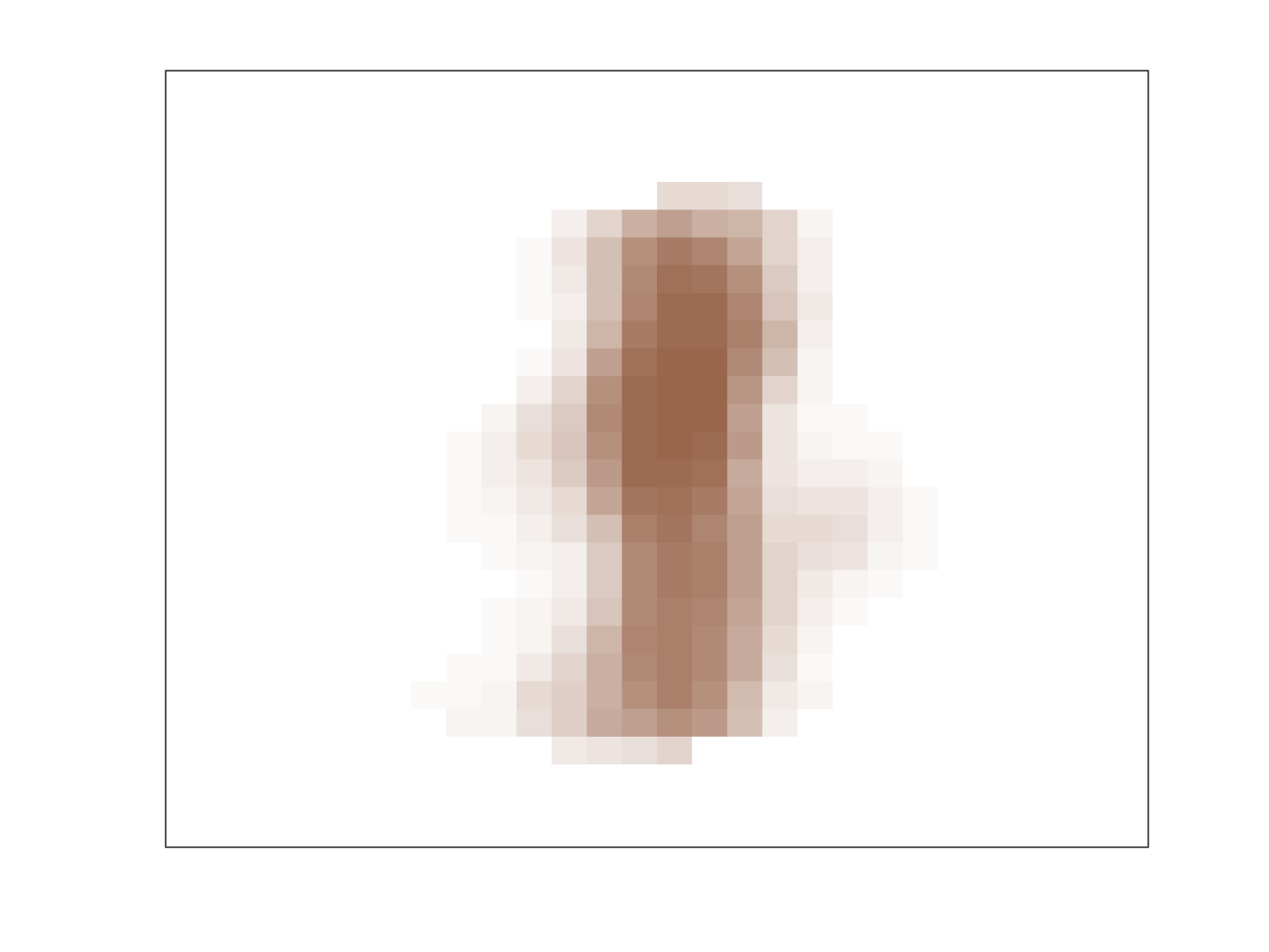} & \includegraphics[height = 2cm, valign=m]{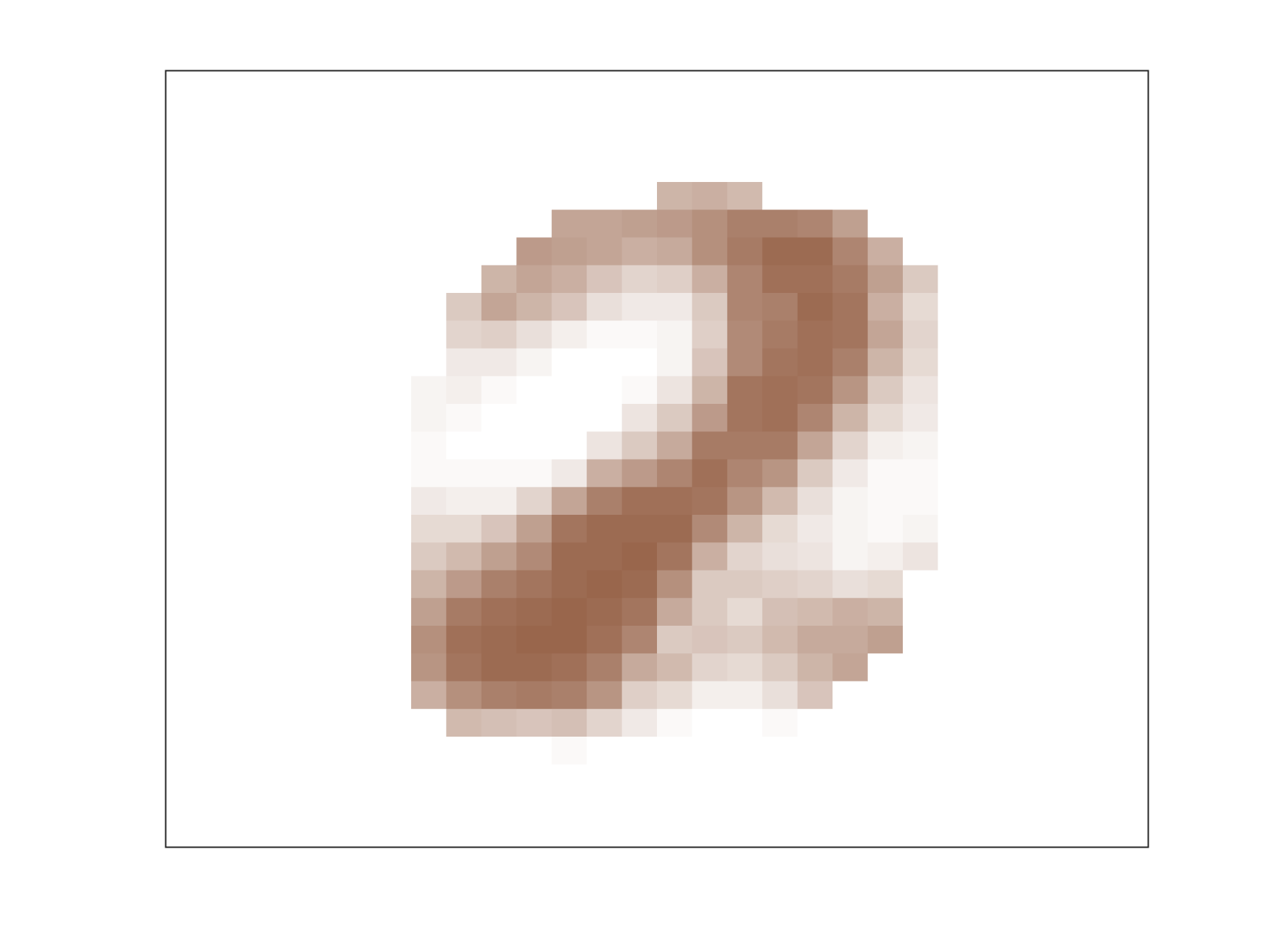} & \includegraphics[height = 2cm, valign=m]{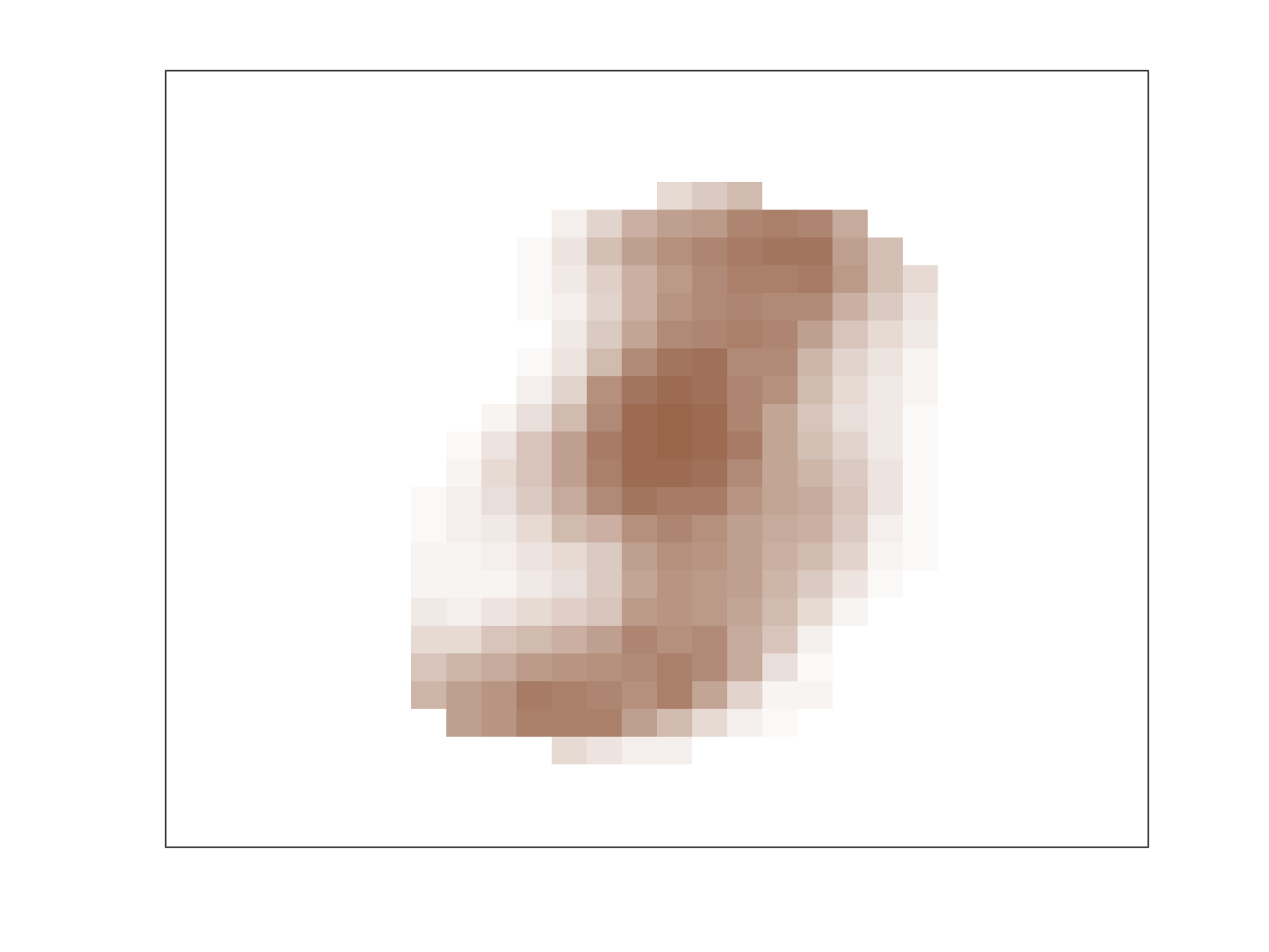} \\
    \bottomrule
    \end{tabular}
}
    \captionof{table}{{\small Example images generated from DDE and their latent representations.}}
    \label{fig:generated digits}
  \end{minipage}
  }
  \hfill
  \adjustbox{valign=c}{
  \begin{minipage}[b]{.51\linewidth}
    \centering
    \resizebox{\textwidth}{!}{
        \begin{tabular}[h!]{llllll}
        \toprule
        Accuracy & LCM & 1-DDE & \textbf{2-DDE} & \darkblue{2-DBN} & \darkblue{VAE} \\ 
        \midrule
        Train classif. (\%) & 89.5 & 84.8 & \textbf{92.0} & 89.3 & 97.1 \\
        Test classif. (\%) & 90.9 & 86.2 & \textbf{92.6} & 90.6 & 92.0\\
        Train recon.  (\%) & 48.2 & 79.5 & \textbf{79.6} & 76.4 & 82.5 \\
        Test recon. (\%) & 48.5 & 79.1 & \textbf{79.9} & 77.0 & 82.7 \\
        \bottomrule 
        \end{tabular}
    }
    \captionof{table}{{\small Classification accuracy and pixel-wise reconstruction accuracy for the MNIST dataset.}}
    \label{tab:reconstruction MNIST}
  \end{minipage}
  }
  \vspace{-5mm}
\end{figure}

\subsection{Count Data: Poisson-DDE Hierarchical Topic Modeling} 
Next, we apply DDEs to learn hierarchical latent topics from text documents. Within the field of topic modeling, it is natural for the topics to be correlated with each other \citep{blei2006correlated}, and hierarchical topic modeling is often adopted \citep{griffiths2003hierarchical, paisley2014nested, chakraborty2024learning}. While many of the existing works assume a tree-structured hierarchy, DDEs flexibly allow multiple parents for each variable.

\begin{figure}
\centering
\resizebox{0.8\textwidth}{!}{
\begin{tikzpicture}[
  auto,
  node distance=0.5cm,
  every node/.style={
    draw,
    rectangle,
    minimum size=1cm,
    font=\tiny\sffamily\bfseries
  },
  arrow/.style={
    ->,
    line width=#1
  },
  every text node part/.style={align=center}
  ]

  \node (T1) {recreation};

  \node (M3) [below=of T1] {cars};
  \node (M2) [left=of M3] {sports};
  \node (M1) [left=of M2] {motorcycle};
  \node (M4) [right=of M3] {software};
  \node (M5) [right=of M4] {location/names};
  \node (M6) [right=of M5] {hardware};
  \node (M7) [right=of M6] {graphics};
  \node (M8) [right=of M7] {space};

  \node (T2) [above=of M6] {technology};

  \node (B1) [below=of M1] {geb \\	bike \\	banks \\	riding \\	pitt \\	dod \\	bmw \\ gordon};
  \node (B2) [below=of M2] {hockey\\	nhl\\	playoffs\\	teams\\	players\\	league\\	season\\	baseball};
  \node (B3) [below=of M3] {engine\\	cheap\\	parts\\	turned\\	models\\	dealer\\	market\\ car};
  \node (B4) [below=of M4] {files\\	directory\\	file\\	ms\\	copy\\	deleted\\	size\\	charles};
  \node (B5) [below=of M5] {peter \\	ny	\\chicago\\	devices\\	period\\	york\\	thomas\\ usa};
  \node (B6) [below=of M6] {motherboard \\ dx \\ CPU \\	mhz \\	controller \\	board \\ ram \\ cards};
  \node (B7) [below=of M7] {mouse \\	motif \\	lines \\	color \\	display \\	screen \\	window \\ image};
  \node (B8) [below=of M8] {orbit \\	earth \\	research \\	science	 \\nasa \\	development	 \\project \\ institute};

  \draw[arrow=0.98mm] (T1) -- (M1);
  \draw[arrow=0.94mm] (T1) -- (M2);
  \draw[arrow=0.3mm] (T1) -- (M3);
  \draw[arrow=0.69mm] (T1) -- (M4);
  \draw[arrow=1.07mm] (T1) -- (M5);
  \draw[arrow=0.8mm] (T2) -- (M3);
  \draw[arrow=0.71mm] (T2) -- (M4);
  \draw[arrow=0.515mm] (T2) -- (M5);
  \draw[arrow=0.4mm] (T2) -- (M6);
  \draw[arrow=0.95mm] (T2) -- (M7);
  \draw[arrow=0.3mm] (T2) -- (M8);
  \draw[red, arrow=0.57mm] (T2) -- (M2);
  
  \draw[dotted] (M1) -- (B1);
  \draw[dotted] (M2) -- (B2);
  \draw[dotted] (M3) -- (B3);
  \draw[dotted] (M4) -- (B4);
  \draw[dotted] (M5) -- (B5);
  \draw[dotted] (M6) -- (B6);
  \draw[dotted] (M7) -- (B7);
  \draw[dotted] (M8) -- (B8);
\end{tikzpicture}
}
\caption{DDE estimated from the 20 newsgroups dataset. For each shallow layer latent variable, we display the top eight representative words. The width of the upper layer arrows is proportional to the corresponding coefficients and the red arrow indicates negative values.}
    \label{fig:topic model}
    \vspace{-3mm}
\end{figure}
We analyze the text corpus from the 20 newsgroups dataset \citep{lang95}, which was previously analyzed by other topic models with binary latent variables \citep{srivastava2013modeling, gan2015scalable}.
After preprocessing and focusing on 12 newsgroups,
the dataset consists of $N=5,883$ documents and $J=653$ words. 
We fit the two-latent-layer DDE with a Poisson-distributed data layer (Poisson-DDE) $Y_j \mid (\ma^{(1)} = \aaa^{(1)}) \sim \text{Poi}\big(\exp \big[\beta_{j,0}^{(1)} + \sum_{k \in [K^{(1)}]} \beta_{j,k}^{(1)} \alpha_k^{(1)}\big] \big)$ and latent dimensions $K^{(1)} = 8, K^{(2)}=2$, and display the estimated latent structure in \cref{fig:topic model}.
Additional details behind this choice are given in Supplementary Material S.5.2.
To better interpret individual latent variables, we define representative words for each topic $k$ based on the discrepancy between $\beta_{j,k}$ and all other coefficients, $(\beta_{j,l})_{l \neq k}$.
That is, for each index $k$, we choose the words $j$ with the largest values of 
$\max\{ \min\{\beta_{j,k} - \beta_{j,l}: l \neq k \}, 0\}$. We display the representative words for each latent variable in the bottom row of \cref{fig:topic model}. Here, each latent variable is named based on the representative words and the held-out newsgroup categories.

Compared to the held-out tree structure of the 12 newsgroup labels (see Figure S.7 in the Supplement), the DDE discovered a lower-dimensional structure in \cref{fig:topic model}. The latent structure in DDEs allow multiple parents for each topic and effectively model the complex label dependence. For example, `cars', `software', `location/names' have both `recreation' and `technology' as parents, and many bottom-layer words are assigned to multiple topics.
We also observe that similar true labels are combined into a single latent variable, for example `computer' and `science' are combined into `technology' in the second latent layer, and `baseball' and `hockey' are combined as `sports'
in the bottom latent layer.

We also compare our model fit to existing directed graphical models with matching latent dimensions:
LDA \citep{blei2003latent} and DPFA-SBN \citep{gan2015scalable}. LDA has a single latent layer with mixed membership scores as continuous latent variables, and DPFA-SBN is a multilayer model with binary latent variables similar to DDEs.
We consider the following three metrics widely used in topic modeling to measure different aspects of fit \citep{chen2023learning}. The first is the \emph{perplexity}, measuring the predictive likelihood of the words in the held-out set. 
The second is the average negative \emph{coherence}, measuring the quality within each topic by computing 
$-\frac{1}{K^{(1)}}\sum_{k=1}^{K^{(1)}} \sum_{v_1, v_2 \in V_k} \log \left({(\text{freq}(v_1, v_2) + 1)}/{\text{freq}(v_2)} \right),$ where $V_k$ is the top $15$ representative words for the $k$th topic, and the function ``freq'' counts the number of documents containing the words specified in the input argument. The third is the \emph{similarity}, computing the number of overlapping representative words across different topics:
$\sum_{1 \le k_1 < k_2 \le K^{(1)}} \sum_{v_1 \in V_{k_1}, v_2 \in V_{k_2}} I(v_1 = v_2).$ For all three measures, smaller values are better.

\cref{tab:perplexity} summarizes the results and shows the promising fit of DDE. Compared to other models with the same latent dimensions, DDEs have better test perplexity and similarity. The similarity measure shows that while the other model fits exhibit common representative words among different topics, the representative words learned from DDE are entirely disjoint and effectively represent different topics. In terms of coherence, the DDE fit is better than LDA but worse than DPFA. We have also fit models with larger dimensions for comparison, by considering LDA with $256$ latent variables, and DPFA with $K^{(1)} = 128, K^{(2)} = 64$ latent variables. The dimension for LDA is motivated by that the DDE has $2^8 = 256$ mixture components; while for DPFA, this is the same latent dimension specified in the original paper \citep{gan2015scalable}. We can see that while considering a larger latent dimension may help in terms of perplexity, this leads to a loss of the within-topic coherence as well as dilutes the boundary of each topic, and hence gives less interpretable results.

\begin{table}[h!]
\centering
\resizebox{0.8\textwidth}{!}{
\begin{tabular}{cccccc}
\toprule
Model & Dimension $\mck$ & Train perplexity & Test perplexity & Neg. coherence & Similarity \\ 
\midrule
LDA & $8$       & 499 & 512 & 276 & 43\\
LDA & $256$     & 269 & 515 & 321 & (41450) \\
\midrule
DPFA & $8-2$ & 289 & 499 & 211 & 5 \\
DPFA & $128-64$ & 175 & 232 & 280 & (3378) \\
\midrule
DDE & $8-2$ & 322 & 398 & 270 & 0 \\
DDE & $8-3$ & 322 & 399 & 275 & 0 \\
\bottomrule
\end{tabular}
}
\caption{Train and test perplexity scores of different models on the 20 Newsgroups dataset. For all measures, smaller values are better. We parenthesize  similarity scores for the models with different dimensions, as the measure is not normalized.}
\label{tab:perplexity}
\vspace{-5mm}
\end{table}

\vspace{-3mm}
\subsection{\darkblue{Multimodal Educational Data: Bernoulli-Lognormal-DDE}}
\vspace{-3mm}
We apply the DDE to an educational assessment dataset from the Trends in International Mathematics and Science Study (TIMSS)  \citep{fishbein2021timss}. 
We analyze the eighth-grade students' responses for an internet-based mathematics assessment. 
\darkblue{As the assessment is electronically conducted, multiple modalities of information are recorded. Here, we focus on two important modalities: \emph{binary} response accuracy and \emph{continuous} response time. For each individual student, our data consists of response accuracy (whether the student gave a correct answer) and response time (how long the student took) for each of the $J/2 = 29$ items. We use the same latent variables to model both data modalities and model the binary response accuracy via Bernoulli distributions, and model the continuous positive response times via Lognormal distributions: $Y_j \mid (\ma^{(1)} = \aaa^{(1)}) \sim \text{lognormal}\big(\beta_{j,0}^{(1)} + \sum_{k \in [K^{(1)}]} \beta_{j,k}^{(1)} \alpha_k^{(1)}, \gamma_j \big)$.}

    \begin{table}[h!]
    \centering
    \resizebox{\textwidth}{!}{
    \begin{tabular}{cccccc}
    \toprule
    Response \textbackslash{} Latent skill & $A_1^{(2)}$   & $A_1^{(1)}$: Number & $A_2^{(1)}$: Algebra & $A_3^{(1)}$: Geometry & $A_4^{(1)}$: Data and Prob \\ 
    \midrule
    Agree a lot                                & 0.62 & 0.59 & 0.62 & 0.42 & 0.57  \\
    Agree a little                             & 0.50 & 0.47  & 0.52 & 0.34 & 0.43 \\
    Disagree a little                          & 0.37 & 0.33  & 0.38 & 0.31 & 0.37 \\
    Disagree a lot                             & 0.29 & 0.29 & 0.32 & 0.22 & 0.29 \\
    \bottomrule
    \end{tabular}
    }
    \caption{\darkblue{Average latent variable estimate for each response category for the question ``Mathematics is one of my favorite subjects''.}}\label{tab:like math}
    \vspace{-4mm}
    \end{table}

We fit the two-latent-layer DDE with $K^{(1)}=7,~K^{(2)}=1$, and estimate the latent skills based on the posterior probability using \eqref{eq:latent variable estimation} (see Supplement S.5.2 for details). We compare the estimated latent variables with the held-out information of each student's categorical response to a survey question: ``Mathematics is one of my favorite subjects'', and display the results in \cref{tab:like math}. The first column shows that the higher-order latent variable, $A_1^{(2)}$, is highly correlated with the extent that students like math. This suggests that $A_1^{(2)}$ can be interpreted as a general indicator of the students' interest in math, while the fine-grained latent variables $\mathbf A^{(1)}$ represent students' specific skill mastery profiles.
In addition, we observe that the students who enjoy math tend to have a higher probability of mastering specific skills as well. This is coherent with the fact that the estimated $\BB^{(1)}$-coefficients are nonnegative for both modes. In Supplement S.5.5, we also illustrate a strong correlation of the estimated intercepts for both data modalities, compared to another held-out information (whether or not the items are multiple choice questions).

\section{Discussion}\label{sec:discussion}
This paper makes significant contributions to core AI problems from statisticians' perspective by proposing a broad family of interpretable DGMs with solid identifiability guarantees, scalable computational pipelines, and promising application potential.
It also opens up interesting directions for future research.
First, the current formulation of DDEs mainly focuses on binary latent variables and a multi-layer graphical structure, but it provides foundations for understanding more complex discrete latent variable models. 
We believe both the identifiability theory and estimation methods are extendable to general categorical/polytomous latent variables. Additionally, our theoretical identifiability guarantees extend to ``generalized DDEs'' with cross-level edges, which we illustrate in Supplement S.1.5. It would be interesting to propose suitable estimation methods for these more flexible model settings.

Another interesting problem pertains to high-dimensional settings, where the number of observed responses, $J$, may grow with the sample size $N$. Our current notion of identifiability focuses on identifying the population model parameters under the traditional asymptotics with a fixed $J$, where the latent variables are marginalized out in the likelihood. 
As modern datasets often comes with a large number of observed features, it would be interesting to explore whether our identifiability and estimability results can be generalized to such settings.

In terms of the methodology and applications, 
it would be interesting to extend DDEs to datasets with additional covariates. For example, the MNIST dataset comes with the actual digit labels as well as the spatial structure of the pixels in the image. 
Finally, it would be interesting to extend DDEs for identifiable causal representation learning \citep{scholkopf2021toward} to uncover causal structures among the higher-order latent variables.

\spacingset{1}

\paragraph{Supplementary Material.}
The Supplement contains all technical proofs, additional theoretical results, and additional details about algorithms, simulations, and real data analyses.

\paragraph{Acknowldgement.}
The authors are partially supported by NSF Grant DMS-2210796.

\vspace{2mm}
\spacingset{1}
\bibliographystyle{apalike}
\bibliography{ref_rev}


\clearpage
\renewcommand{\thesection}{S.\arabic{section}}  
\renewcommand{\thetable}{S.\arabic{table}}  
\renewcommand{\thefigure}{S.\arabic{figure}}
\renewcommand{\theequation}{S.\arabic{equation}}
\renewcommand{\thetheorem}{S.\arabic{theorem}}
\renewcommand{\thedefinition}{S.\arabic{definition}}
\renewcommand{\theremark}{S.\arabic{remark}}
\renewcommand{\theexample}{S.\arabic{example}}
\renewcommand{\thelemma}{S.\arabic{lemma}}

\setcounter{table}{0}
\setcounter{section}{0}
\setcounter{equation}{0}
\setcounter{figure}{0}
\setcounter{theorem}{0}
\setcounter{definition}{0}
\setcounter{example}{0}
\setcounter{lemma}{0}
\setcounter{remark}{0}

\begin{center}
    \LARGE Supplement to ``Deep Discrete Encoders: Identifiable Deep Generative Models for Rich Data with Discrete Latent Layers''
\end{center}
\vspace{5mm}

\spacingset{1.7}

This Supplementary Material is organized as follows. \cref{sec:supp proof} proves all main theorems and provides additional identifiability results under one-layer saturated models and generalized DDEs that goes beyond multi-layer structures. \cref{sec:supp initialization} provides details regarding the spectral initialization algorithm. \cref{sec:supp algo details} gives details regarding the EM algorithm) such as the penalized EM algorithm, M-step update formulas, implementation details, and selection of the number of latent variables. \cref{sec:supp simulation} provides various additional simulation results under (a) generically identifiable true parameters, (b) varying numbers of Monte Carlo samples in the SAEM algorithm, (c) unknown latent dimensions.
\cref{sec:supp data analysis} gives additional data analysis details such as preprocessing, latent dimension selection, and additional visualizations. Finally, \cref{sec:supp literature} discusses additional related works.

\section{Proof of Theorems}\label{sec:supp proof}
Recall from \cref{sec:identifiability} that our identifiability results for general DDEs with multiple latent layers build upon the identifiability of a model with only one latent layer (the shallowest latent layer), where the deeper latent layers have been marginalized out. Here, we formally state identifiability conditions for such one-latent-layer saturated models in \cref{subsec:one-layer}, before proving the identifiability results for general DDEs in \cref{subsec:main proofs}. We prove the claims stated for the one-latent-layer models in \cref{subsec:supp proof of one layer}, and \cref{thm:estimation consistency} in \cref{subsec:proof of consistency}. 
\cref{sec:local dependent} establishes identifiability for generalized DDEs that allow additional edges compared.
Additional identifiability results for selecting the latent dimension and results for related models with interaction effects (instead of the main-effect DDEs introduced in the main paper) are presented in \cref{subsec:additional identifiability}.

\subsection{Identifiability Under One-latent-layer Saturated Models}\label{subsec:one-layer}
The \emph{one-latent-layer saturated model} is defined as follows. 
\begin{definition}[One-latent-layer saturated model]\label{def:saturated model}
    The one-latent-layer saturated model with $K^{(1)}$ latent variables, responses $\YY \in \prod_{j \in [J]} \mcy_j$, and parameters $(\TT^{(1)}, \GG^{(1)})$ is defined by the distribution of $\YY \mid \ma^{(1)}$ in \eqref{eq:observed exp fam}, and the saturated latent distribution
    \begin{align}\label{eq:saturated pmf}
        \PP(\ma^{(1)} = \aaa) = \pi_{\aaa}, \quad \text{for all} \quad \aaa \in \{0,1\}^{K^{(1)}}.
    \end{align}
    Here, the parameter $\boldsymbol{\pi} := (\pi_{\aaa})_{\aaa}$ satisfies $\pi_{\aaa} \in (0,1)$ and $\sum_{\aaa} \pi_{\aaa} = 1$. The notation $\TT^{(1)} := (\bo\pi, \BB^{(1)}, \ggamma)$ collects all continuous parameters. We define the parameter spaces $\Omega_{K^{(1)}}(\TT^{(1)}; \GG^{(1)})$ and $\Omega_{K^{(1)}}(\TT^{(1)}, \GG^{(1)})$ similar to \cref{def:parameter space} in the main paper.
\end{definition}
Here, the term ``saturated'' indicates that no additional distributional assumptions are imposed on the latent variables, except that they are discrete. 
Similar to \cref{def:identifiability up to permutation}, we define an equivalence relationship $\sim_{K^{(1)}}$ when the parameters are identical up to label switching, and use it to define identifiability. 

\begin{definition}[Equivalence relation]\label{def:identifiability up to permutation saturated}
    For the one-latent-layer saturated model, define an equivalence relationship ``$\sim_{K^{(1)}}$'' by setting $(\TT^{(1)}, \GG^{(1)}) \sim_{K^{(1)}} (\tilde{\TT}^{(1)}, \tilde{\GG}^{(1)})$ if and only if $ \bo\gamma = \tilde{ \bo\gamma}$ and there exist a permutation $\sigma^{(1)} \in S_{[K^{(1)}]}$ such that the following conditions hold:
    \begin{itemize}
        \item $\bo\pi_{(\alpha_{\sigma(1)}, \ldots, \alpha_{\sigma(K^{(1)})})} = \tilde{\bo\pi}_{\aaa}$ for all $\aaa \in \{0, 1\}^{K^{(1)}}$
        \item ${g}_{j,k}^{(1)} = \tilde{{g}}_{j ,\sigma^{(1)}(k)}^{(1)}$ and $\beta_{j,k}^{(1)} = \tilde{\beta}_{j,\sigma^{(1)}(k)}^{(1)}$ for all $k \in [K^{(1)}], j \in [J]$.
    \end{itemize}
    We say that the one-latent-layer saturated model with true parameters $(\TT^{(1)\star}, \GG^{(1)\star})$ is identifiable up to $\sim_{K^{(1)}}$, if for any alternate parameter value $(\TT^{(1)}, \GG^{(1)}) \in \Omega_{K^{(1)}}(\TT^{(1)}, \GG^{(1)})$ with $\PP_{\TT^{(1)}, \GG^{(1)}} = \PP_{\TT^{(1)\star}, \GG^{(1)\star}}$, it holds that $(\TT^{(1)},\GG^{(1)}) \sim_{\mck} (\TT^{(1)\star}, \GG^{(1)\star})$. Here, $\PP_{\TT^{(1)},\GG^{(1)}}$ is the marginal distribution of $\YY$, which follows from \eqref{eq:observed exp fam} and \eqref{eq:saturated pmf}.
\end{definition}

Under these definitions, we state identifiability results for the one-latent-layer saturated model. \cref{prop:sid} and \cref{prop:gid one-layer} are one-layer analogues of \cref{thm:deep id} and \cref{thm:deep gid}, respectively. We postpone the proofs of these results to \cref{subsec:supp proof of one layer}.

\begin{proposition}\label{prop:sid}
Given the knowledge of $K^{(1)}$, the one-latent-layer saturated model with parameters $(\TT^\star, \GG^{(1)\star}) \in \Omega_{K^{(1)}}(\TT^{(1)}, \GG^{(1)})$ is identifiable up to $\sim_{K^{(1)}}$ when the true parameters $\B^{(1)\star}, \GG^{(1)\star}$ satisfy conditions A, B from \cref{thm:deep id}.
In particular, condition B holds when $\GG^{(1)\star}$ contains another identity matrix.
\end{proposition}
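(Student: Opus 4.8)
The plan is to reduce the identifiability of the one-latent-layer saturated model to the uniqueness of a three-way tensor decomposition, as outlined after \cref{thm:deep id}. Write $M := 2^{K^{(1)}}$ for the number of latent configurations. Using condition A, fix two pure children $j_{k,1}, j_{k,2} \in [J]$ of each $A_k^{(1)}$ and partition $[J] = S_1 \cup S_2 \cup S_3$ with $S_1 = \{j_{k,1} : k \in [K^{(1)}]\}$, $S_2 = \{j_{k,2} : k \in [K^{(1)}]\}$, and $S_3 = [J]\setminus(S_1\cup S_2)$; note $S_3$ is exactly the index set appearing in condition B when $d=1$ (since $K^{(0)}=J$). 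By the conditional independence in \eqref{eq:observed exp fam} and the saturated latent law \eqref{eq:saturated pmf}, the joint distribution of $(\YY_{S_1},\YY_{S_2},\YY_{S_3})$ is
\begin{equation*}
  \PP(\YY_{S_1},\YY_{S_2},\YY_{S_3}) = \sum_{\aaa \in \{0,1\}^{K^{(1)}}} \pi_{\aaa}\; \PP(\YY_{S_1}\mid\aaa)\otimes \PP(\YY_{S_2}\mid\aaa)\otimes\PP(\YY_{S_3}\mid\aaa),
\end{equation*}
i.e.\ a three-way tensor of rank at most $M$ (applying the finite-dimensional Kruskal theorem to arbitrary finite measurable partitions of each $\mcy_j$ and passing to the limit, as in \cite{allman2009identifiability}).

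Next I would compute the Kruskal ranks of the three factors. For $b\in\{1,2\}$ one has $\PP(\YY_{S_b}\mid\aaa)=\bigotimes_k \PP(Y_{j_{k,b}}\mid\alpha_k)$; by \cref{assmp:proportion}(b) (faithfulness, so $\beta^{(1)\star}_{j_{k,b},k}\neq 0$) and identifiability of $\mathrm{ParFam}_{j_{k,b}}$, the two measures $\PP(Y_{j_{k,b}}\mid A_k=0)$ and $\PP(Y_{j_{k,b}}\mid A_k=1)$ are distinct, hence linearly independent; taking tensor products over $k$ shows the $M$ vectors $\{\PP(\YY_{S_b}\mid\aaa)\}_{\aaa}$ are linearly independent, so each of these factors has full column rank and Kruskal rank $M$. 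For $S_3$, the Kruskal rank is $\ge 2$ iff $\PP(\YY_{S_3}\mid\aaa)\neq\PP(\YY_{S_3}\mid\aaa')$ for all $\aaa\neq\aaa'$, which by conditional independence and family identifiability is precisely condition B. Since $M+M+2 = 2M+2$, Kruskal's theorem yields that the displayed decomposition into at most $M$ rank-one terms is unique up to a common relabeling $\rho$ of the $M$ components; as a product probability measure has unique marginals, this identifies, up to the single bijection $\rho$ of $\{0,1\}^{K^{(1)}}$, every single-variable conditional $\PP(Y_j\mid\aaa)$ and every weight $\pi_\aaa$. Consequently, for any alternative parameters $(\tilde{\TT}^{(1)},\tilde{\GG}^{(1)})$ in the parameter space with the same data law, injectivity of the links $g_j$ and identifiability of $\mathrm{ParFam}_j$ give $\tilde{\gamma}_j=\gamma_j^\star$ and $\tilde{\eta}_j(\rho(\aaa))=\eta_j^\star(\aaa)$ for all $j,\aaa$, where $\eta_j(\aaa):=\beta^{(1)}_{j,0}+\sum_k\beta^{(1)}_{j,k}\alpha_k$.

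It then remains to upgrade the atom-relabeling $\rho$ to a coordinate permutation. I would use the fact that a variable $Y_j$ is a pure child of some $A_k^{(1)}$ iff its conditional law takes exactly two distinct values over $\aaa\in\{0,1\}^{K^{(1)}}$ — this holds because an affine function on the Boolean cube with two or more nonzero coefficients takes at least three distinct values, and $g_j$ is injective. Hence each true pure child $j_{k,b}$ is also a pure child in the alternative model, of some $A_{\ell(k,b)}^{(1)}$; matching the two-set partitions of the cube induced by $Y_{j_{k,1}}$ and by $Y_{j_{k,2}}$ forces $\ell(k,1)=\ell(k,2)=:\ell(k)$ and $(\rho^{-1}(\bb))_k\in\{b_{\ell(k)},\,1-b_{\ell(k)}\}$. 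If $\ell$ failed to be injective, the image of $\rho^{-1}$ would lie in a strict coordinate subcube, contradicting that $\rho$ is a bijection, so $\ell$ is a permutation of $[K^{(1)}]$. Comparing the coefficient of $\alpha_m$ on both sides of $\tilde{\eta}_j(\rho(\aaa))=\eta_j^\star(\aaa)$ gives $\tilde{\beta}^{(1)}_{j,\ell(m)}=\pm\beta^{(1)\star}_{j,m}$ for every $j$, with a single sign per $m$; summing over $j$ and invoking \cref{assmp:proportion}(c) (positive non-intercept column sums, for both parameter values) rules out the minus sign. Thus $\tilde{\beta}^{(1)}_{j,\ell(m)}=\beta^{(1)\star}_{j,m}$, $\tilde{\beta}^{(1)}_{j,0}=\beta^{(1)\star}_{j,0}$, $\tilde{g}^{(1)}_{j,\ell(m)}=g^{(1)\star}_{j,m}$, and $\tilde{\pi}_{\rho(\aaa)}=\pi^\star_{\aaa}$ with $\rho$ acting as the coordinate permutation $\ell^{-1}$, which is exactly $(\tilde{\TT}^{(1)},\tilde{\GG}^{(1)})\sim_{K^{(1)}}(\TT^{(1)\star},\GG^{(1)\star})$.

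For the ``in particular'' claim, if $\GG^{(1)\star}$ contains a second identity submatrix on a row set disjoint from $\bigcup_k\{j_{k,1},j_{k,2}\}$, then every $A_k^{(1)}$ acquires a third pure child $j'_k$, and for any $\aaa\neq\aaa'$, choosing $k$ with $\alpha_k\neq\alpha'_k$ gives $\sum_l\beta^{(1)\star}_{j'_k,l}(\alpha_l-\alpha'_l)=\beta^{(1)\star}_{j'_k,k}(\alpha_k-\alpha'_k)\neq 0$, so condition B holds. The main obstacle I anticipate is twofold: (i) verifying the \emph{Kruskal} rank (not merely the rank) of the pure-children factors equals $M$ and transporting Kruskal's theorem to general, possibly continuous, response spaces; and (ii) the combinatorial step that extracts a genuine coordinate permutation of the $K^{(1)}$ latents — with the correct orientation via condition (c) — from the abstract relabeling of the $2^{K^{(1)}}$ mixture atoms.
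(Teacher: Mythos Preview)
Your proposal is correct and follows essentially the same route as the paper: partition the observed variables using the two sets of pure children plus the rest, verify the Kruskal rank conditions ($M$, $M$, $\ge 2$ via condition~B), apply Kruskal's theorem to obtain uniqueness up to a relabeling $\rho$ of the $2^{K^{(1)}}$ atoms, and then use the ``exactly two conditional laws'' characterization of pure children (the paper's Lemma~S.2) together with Assumption~\ref{assmp:proportion}(c) to upgrade $\rho$ to a coordinate permutation $\sigma\in S_{[K^{(1)}]}$. The only cosmetic differences are that the paper packages the Kruskal step and the ``parameters given $\GG$'' step by citing \cite{lee2024new}, and handles general response spaces via a separating class rather than a limiting argument; your self-contained treatment is equivalent.
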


\begin{proposition}\label{prop:gid one-layer}
Consider the one-latent-layer saturated model where all parametric families and link functions $g_j$s in \eqref{eq:observed exp fam} are analytic, and the true parameter lives in $\Omega_{K^{(1)}} (\TT^{(1)}; \GG^{(1) \star})$. Then, the model is generically identifiable when $\GG^{(1) \star}$ satisfies condition C from \cref{thm:deep gid}.
\end{proposition}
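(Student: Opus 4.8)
\emph{Proof strategy.} The plan is to view the marginal law of $\YY$ in \eqref{eq:observed exp fam}--\eqref{eq:saturated pmf} as a three-way mixture with $r := 2^{K^{(1)}}$ components indexed by the latent configurations $\aaa \in \{0,1\}^{K^{(1)}}$, and to verify Kruskal's uniqueness hypothesis in the functional form of \cite{allman2009identifiability} outside a measure-zero subset of $\Omega_{K^{(1)}}(\TT^{(1)};\GG^{(1)\star})$. Fixing the partition $[J] = \mci_1 \cup \mci_2 \cup \mci_3$ supplied by condition C and setting $f^{(a)}_\aaa(\yy_{\mci_a}) := \prod_{j \in \mci_a} p_j(y_j \mid \aaa)$ for each block $a\in\{1,2,3\}$ (legitimate by conditional independence within blocks), we obtain $\PP(\yy) = \sum_\aaa \pi_\aaa\, f^{(1)}_\aaa \otimes f^{(2)}_\aaa \otimes f^{(3)}_\aaa$, so it suffices to show that for generic parameters the families $\{f^{(a)}_\aaa : \aaa\}$ have Kruskal ranks at least $r$, $r$, and $2$, which meets Kruskal's bound $r+r+2 = 2r+2$.

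The genericity arguments all rest on one principle: a property of the continuous parameters that is stable under small perturbations and equivalent to the non-vanishing of finitely many quantities analytic in the parameters can fail only on a Lebesgue-null set, provided it holds at some single parameter value in the closure of $\Omega_{K^{(1)}}(\TT^{(1)};\GG^{(1)\star})$; analyticity of the relevant Gram/determinant quantities comes from \cref{assmp:monotone family}. I would take as the witnessing point a parameter obtained by assigning, for each present edge of $\GG^{(1)\star}$, a large positive coefficient to a chosen matched edge of its block (using the two perfect matchings in condition C(i)) and small positive coefficients to all remaining present edges, with interior $\bo\pi$ and positive $\ggamma$; positivity automatically satisfies \cref{assmp:proportion}(c). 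Sending the non-matched coefficients to $0$ degenerates each $f^{(a)}_\aaa$ for $a=1,2$ into a tensor product, over disjoint coordinate blocks, of $K^{(1)}$ pairs of \emph{distinct} densities --- distinctness because each matched coefficient is nonzero, each $g_j$ is injective, and each $\mathrm{ParFam}_j$ is identifiable --- so the $r$ functions $\{f^{(a)}_\aaa\}_\aaa$ are linearly independent, and since linear independence of finitely many functions is an open condition (invertibility of an $r\times r$ matrix of evaluations), it persists at the nearby witnessing point, giving Kruskal rank $r$ for $a=1,2$. For $a=3$, condition C(ii) guarantees that for each pair $\aaa\neq\aaa'$ some $j\in\mci_3$ is a child of a latent variable where $\aaa,\aaa'$ differ, making $\sum_k \beta^{(1)}_{j,k}(\alpha_k-\alpha'_k)\neq 0$ a nontrivial affine condition that holds at the witnessing point; hence $\aaa\mapsto f^{(3)}_\aaa$ is injective and, distinct densities being non-proportional, its Kruskal rank is at least $2$. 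This is exactly where condition C relaxes conditions A--B of \cref{thm:deep id,prop:sid}: perfect matchings replace pure children, and the nonzero-column requirement on the $\mci_3$-block replaces the explicit parameter condition B.

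Kruskal's theorem then provides, for every parameter in this generic set, a bijection $\tau$ of $\{0,1\}^{K^{(1)}}$ with $\pi_\aaa = \tilde\pi_{\tau(\aaa)}$ and $f^{(a)}_\aaa = \tilde f^{(a)}_{\tau(\aaa)}$ for all $a,\aaa$; identifiability of each $\mathrm{ParFam}_j$ and injectivity of $g_j$ upgrade this to $\gamma_j = \tilde\gamma_j$ together with the affine relations $\beta^{(1)}_{j,0} + \sum_k \beta^{(1)}_{j,k}\alpha_k = \tilde\beta^{(1)}_{j,0} + \sum_k \tilde\beta^{(1)}_{j,k}\tau(\aaa)_k$ for all $j$ and all $\aaa$. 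It remains to see that $\tau$ is induced by a coordinate permutation $\sigma^{(1)}\in S_{[K^{(1)}]}$. Restricting these relations to the matched variables of block $\mci_1$, whose coefficient submatrix is diagonally dominant at the witnessing point and hence invertible on the generic set, one solves $\tau$ as the restriction to $\{0,1\}^{K^{(1)}}$ of an affine bijection of $\mathbb{R}^{K^{(1)}}$; such a map carries the hypercube onto itself, hence is a signed coordinate permutation, and \cref{assmp:proportion}(c) excludes the sign flips. Substituting $\sigma^{(1)}$ back yields $\beta^{(1)}_{j,k} = \tilde\beta^{(1)}_{j,\sigma^{(1)}(k)}$ (so $\GG^{(1)} = \tilde\GG^{(1)}$ up to $\sigma^{(1)}$) and the matching of $\bo\pi$, i.e.\ $(\TT^{(1)},\GG^{(1)}) \sim_{K^{(1)}} (\TT^{(1)\star},\GG^{(1)\star})$. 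The non-identifiable parameters are then contained in the union of the finitely many analytic zero sets produced above, which is Lebesgue-null, exactly as demanded by \cref{def:gid}.

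I expect the main obstacle to be this final reduction of $\tau$, together with making the ``single witnessing point'' argument airtight. Without pure children the blocks $f^{(1)}_\aaa, f^{(2)}_\aaa$ no longer factor over coordinates, so forcing the abstract label permutation $\tau$ to respect the product structure of $\{0,1\}^{K^{(1)}}$ genuinely needs the extra generic full-rank input on the matched submatrix and the classification of affine self-maps of the cube; and one must check carefully that the chosen degenerate configuration lies in the closure of $\Omega_{K^{(1)}}(\TT^{(1)};\GG^{(1)\star})$ and that all required determinants are simultaneously nonzero there, so that a single non-identically-zero analytic function controls the whole exceptional set.
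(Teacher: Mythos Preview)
Your outline is sound and agrees with the paper on the first half: you partition $[J]$ via condition~C, form the three-way decomposition $\sum_\aaa \pi_\aaa\, f^{(1)}_\aaa\otimes f^{(2)}_\aaa\otimes f^{(3)}_\aaa$, and verify Kruskal's rank condition by analyticity plus a pure-children witnessing point. This is exactly how the paper's Lemma~\ref{lem:kruskal gid} works (perturbing from $\GG_a=\I_K$ and invoking the zero-set lemma for real-analytic functions). Your treatment of the $\mci_3$ block is slightly loose at the stated witnessing point---with all ``small positive'' coefficients, the sum $\sum_k\beta_{j,k}(\alpha_k-\alpha'_k)$ can vanish---but the hyperplane argument you allude to (``nontrivial affine condition'') is precisely what the paper uses there, so this is easy to patch.

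Where you genuinely diverge is in reducing the configuration bijection $\tau\in S_{\{0,1\}^K}$ to a coordinate permutation $\sigma\in S_{[K]}$. The paper does this via Lemma~\ref{lem:sign flip}, a fairly intricate combinatorial reconstruction: recover $|\beta_{j,k}|$ from the ordered multiset $\{\eta_{j,\aaa}\}_\aaa$, then inductively build partitions $T_{j,k},V_{j,k}$ to pin down each coordinate, under the generic assumption that all $2^{|H_j|}$ partial sums are distinct. Your route is more direct: use generic invertibility of the $K\times K$ submatrix $\BB_{\mci_1}$ (granted by the perfect matching) to force $\tau$ to be affine, classify affine self-bijections of $\{0,1\}^K$ as signed coordinate permutations, and kill the signs with Assumption~\ref{assmp:proportion}(c). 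This is cleaner and uses a different (and simpler) null set. The one step you skip and should make explicit is why the \emph{alternative} submatrix $\tilde{\BB}_{\mci_1}$ is invertible---you cannot impose genericity on $\tilde\TT$. It does follow: the affine span of $\BB_{\mci_1}\{0,1\}^K$ is all of $\mathbb{R}^K$, and since $\tilde{\BB}_{\mci_1}\{0,1\}^K$ equals it up to translation, $\tilde{\BB}_{\mci_1}$ must be surjective, hence invertible. With that filled in, your argument goes through.
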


\subsection{Proof of Theorems \ref{thm:deep id} and \ref{thm:deep gid}}\label{subsec:main proofs}
We prove the identifiability results for DDEs using Propositions \ref{prop:sid} and \ref{prop:gid one-layer}.

\begin{proof}[Proof of \cref{thm:deep id}]
Our argument is based on applying Proposition \ref{prop:sid} in a layer-wise manner. First, consider the bottom two layers with 
\begin{align}\label{eq:prop d layer}
    \pi^{(1)}_{ \aaa^{(1)}} = \PP(\ma^{(1)} = \aaa^{(1)}),\quad \forall\aaa^{(1)}\in\{0,1\}^{K_1}
\end{align}
defined by marginalizing out the deeper latent layers. Now, we can consider this as the one-layer model with proportion parameters $\bo{\pi}^{(1)} = (\pi^{(1)}_{\aaa^{(1)}}, \aaa^{(1)}\in\{0,1\}^{K_1})$. Then, Proposition \ref{prop:sid} gives the identifiability of $\B^{(1)}, \GG^{(1)}, \bo{\pi}^{(1)}$ up to $\sim_{K^{(1)}}$.

Having identified $\bo{\pi}^{(1)}$, the marginal distribution of the shallowest latent layer $\ma^{(1)}$ is uniquely identified. We generalize the notation in \eqref{eq:prop d layer} and define $\bo\pi^{(d)}$ similarly.
Inductively, for $1 \le d < D$, we apply Proposition \ref{prop:sid} by considering $\ma^{(d)}$ and $\bo{\pi}^{(d)}$ as the ``observed'' binary response vector and its proportion parameters that characterize its marginal probability mass function:
$$
\pi^{(d)}_{ \aaa^{(d)}} = \PP(\ma^{(d)} = \aaa^{(d)}),\quad 
\forall\aaa^{(d)}\in\{0,1\}^{K_d}.
$$
Consequently, the parameters $\B^{(d+1)}, \GG^{(d+1)}, \bo{\pi}^{(d+1)}$ between the $d$th and $(d+1)$th layers are identified up to $\sim_{K^{(d+1)}}$. In particular, when $d = D-1$, it remains to determine $\pp$ from the unstructured proportion parameter vector $\bo{\pi}^{(D)}$. Since we already have identified the $D$th layer labels up to $\sim_{K^{(D)}}$, this simply follows by marginalizing out the irrelevant coordinates:
$$p_k = \PP(A_k^{(D)} = 1) = \sum_{\aaa^{(D)}: \alpha^{(D)}_k = 1} \pi_{\aaa^{(D)}}^{(D)}.$$
The proof is complete.
\end{proof}

\begin{proof}[Proof of \cref{thm:deep gid}]
    \cref{prop:gid one-layer} shows that the non-identifiable measure-zero set of the one-latent-layer saturated model only depends on the coefficients $\BB^{(1)}$ and $\ggamma$. By marginalizing out all layers except the bottom two layers, the DDE becomes a one-latent-layer saturated model with parameters $\TT^{(1)}:= (\pi^{(1)}, \BB^{(1)}, \GG^{(1)}, \ggamma)$. Since we assume that $\GG^{(1)}$ satisfies condition C, the parameters $\TT^{(1)}$ are identifiable (up to a permutation $\sigma^{(1)} \in S_{[K^{(1)}]}$) as long as $(\BB^{(1)},\ggamma) \not \in N^{(1)}$. Here, $N^{(1)}$ is a measure-zero subset of the coefficient space $\Omega(\BB^{(1)},\ggamma ; \GG^{(1)})$.

    Now, assuming $\BB^{(1)} \not \in N^{(1)}$, we can use a similar argument for deeper layers inductively. For $2 \le d \le D-1$, let $N^{(d)}$ be the non-identifiable measure-zero subset of the $d$th layer coefficient space $\Omega(\BB^{(d)} ; \GG^{(d)})$.
    Note that we can still apply \cref{prop:gid one-layer} since the conditional distribution of $\ma^{(d)} \mid \ma^{(d+1)}$ is modeled as a Bernoulli distribution with a logistic link $g_{\text{logistic}}$, which is indeed analytic and satisfy Assumption \ref{assmp:monotone family}. Consequently, as long as $\BB^{(d)} \not \in N^{(d)}$ for all $1 \le d \le D-1$, the $D$-layer DDE is identifiable up to permutations of the latent variables within each layer. The proof is complete since $\cup_{d =1}^D (N^{(d)})^c$ is a union of a finite number of measure-zero sets, and hence again measure-zero.
\end{proof}

\subsection{Proof of Propositions \ref{prop:sid} and \ref{prop:gid one-layer}}\label{subsec:supp proof of one layer}
\subsubsection{Additional Notations}
We introduce additional notations that will be used to prove identifiability results for the one-latent-layer saturated model. Most of these notations are consistent with \cite{lee2024new}.
First, we omit the superscript ``${(1)}$'' that indicates the first latent layer when dealing with the one-latent-layer saturated model. Let $j, k, \aaa$ denote typical indices for $j \in [J], k \in [K], \aaa \in \{0,1\}^K$.
Write $\GG = \GG^{(1)}= \{\mathbf{g}_1^\top, \ldots, \mathbf{g}_J^\top \}^\top$ and let $H_j := \{k \in [K]: G_{j,k} = 1\}$ be the index of the parent latent variables for $Y_j$. Recall that for each $j$, the sample space $\mathcal{Y}_j$ is a separable metric space. Let $m_j$ be a base measure on $\mathcal{Y}_j$, this will be the counting measure for discrete sample spaces and the Lebesgue measure for continuous cases.
Given $j$ and $\aaa$, define the measure $\PP_{j,\aaa}$ on $\mathcal{Y}_j$ by setting
\begin{align}\label{eq:p_j,aaa def}
    \PP_{j,\aaa}(S) := \PP (Y_j \in S \mid \ma = \aaa) = \int_{S} p_j(y; \beta_{j,0}+\sum_k \beta_{j,k} \alpha_k, \gamma_j) dm_j(y).
\end{align}
In other words, $\PP_{j,\aaa}$ denotes the conditional distribution of $Y_j \mid \ma = \aaa$ in \eqref{eq:observed exp fam}.

For each $j \in [J]$, construct measurable subsets $S_{1,j}, \ldots, S_{\kappa_j, j} \subseteq \mathcal{Y}_j$ with $\kappa_j \ge 2$, where the collection of vectors $\mathbf{s}_j(\aaa):= \big(\PP_{j,\aaa} (S_{1, j}), \ldots, \PP_{j,\aaa}(S_{\kappa_j, j}) \big)_{\aaa \in \{0,1\}^K}$ is ``faithful'' in the following sense: 
\begin{enumerate}[(a)]
    \item for $\aaa, \aaa'$ with $\aaa_{H_j} = \aaa'_{H_j}$, it holds that $\mathbf{s}_j(\aaa) = \mathbf{s}_j(\aaa')$,
    \item there exists $\aaa, \aaa'$ with  $\aaa_{H_j} \neq \aaa'_{H_j}$ such that $\mathbf{s}_j(\aaa) \neq \mathbf{s}_j(\aaa')$.
\end{enumerate}
This construction is possible since Assumption \ref{assmp:proportion}(a) on the parameter space lead to a faithful graphical model. Without loss of generality, suppose that $S_{\kappa_j,j} = \mcy_j$ for all $j$. Also, define the following (unordered) set
\begin{align}\label{eq:S_j definition}
    \mathcal{S}_j := \Big\{ \mathbf{s}_j(\aaa): \aaa \in \{0,1\}^K \Big\}.
\end{align}

Define $\NN_1$ to be a $\kappa_1 ... \kappa_K \times 2^K$ matrix by setting
$$\NN_1((l_1, ..., l_K), \aaa) := \PP(Y_1 \in S_{l_1, 1}, ..., Y_K \in S_{l_K, K} \mid \aaa).$$ 
Here, we index the $2^K$ columns of $\NN_1$ using the binary vector $\aaa \in \{0, 1\}^K$, and the rows by $\xi_1 = (l_1, \ldots, l_K)$, where $l_j \in [\kappa_j]$. 
Similarly, let $\NN_2$ be a $\kappa_{K+1} \ldots \kappa_{2K} \times 2^K$ matrix whose $((l_{K+1}, \ldots, l_{2K}), \aaa)$-th entry is $\PP(Y_{K+1} \in S_{l_{K+1}, K+1}, ..., Y_{2K} \in S_{l_{2K}, 2K} \mid \aaa)$, and $\NN_3$ be a $\kappa_{2k+1} \ldots \kappa_{J} \times 2^K$ matrix whose $((l_{2K+1}, \ldots, l_{J}), \aaa)$-th entry is $\PP(Y_{2K+1} \in S_{l_{2K+1}, 1}, ..., Y_J \in S_{l_{J}, J} \mid \aaa)$. Similar to $\NN_1$, we index the rows of $\NN_2$ and $\NN_3$ by $\xi_2 = (l_{K+1}, \ldots, l_{2K})$ and $\xi_3 = (l_{2K+1}, \ldots, l_J)$, respectively. For notational simplicity, let $\upsilon_1 = \prod_{k = 1}^{K} \kappa_{k}, ~\upsilon_2 = \prod_{k = K+1}^{2K} \kappa_{k}, ~\upsilon_3 = \prod_{k = 2K+1}^{J} \kappa_{k}$. Note that the assumption $S_{\kappa_j, j} = \mcy_j$ implies forces the last row in all $\NN_a$s to be $\mathbf{1}_{2^K}^\top$.

Next, let $\mathbf{P}_0$ be a 3-way marginal probability tensor with size $\upsilon_1 \times \upsilon_2 \times \upsilon_3$, defined as
\begin{align*}
    \mathbf{P}_0(\xi_1, \xi_2, \xi_3) &= \PP(Y_1 \in S_{l_1}, \ldots, Y_J \in S_{l_J}) \\ &= \sum_{\aaa} \pi_{\aaa} \NN_1((l_{1}, \ldots, l_{K}), \aaa) \NN_2((l_{K+1}, \ldots, l_{2K}), \aaa) \NN_3((l_{2K+1}, \ldots, l_{J}), \aaa).
\end{align*}
We introduce an additional notation for tensor products as follows. For $a = 1, 2, 3$, consider $\nu_a \times r$ matrices $\M_a$ whose $l$th column is indexed as $\mathbf{m}_{a,l}$. Also, let $\circ$ denote the outer product between vectors. Then, we define the tensor product of $\M_1, \M_2, \M_3$ as
$$[\M_1, \M_2, \M_3] := \sum_{l=1}^r \mathbf{m}_{1,l} \circ \mathbf{m}_{2,l} \circ \mathbf{m}_{3,l}.$$
Using this notation, we can write $\mathbf{P}_0$ as  follows:
\begin{align}\label{eq:tensor decomposition}
    \mathbf{P}_0 = [\NN_1\text{Diag}(\bo\pi), \NN_2, \NN_3].
\end{align}

Now, our notation is almost identical to that in the proof of Theorem 1 in \cite{lee2024new}. Under conditions A and B from \cref{thm:deep id}, we can apply \emph{step 3} and the first two paragraphs of \emph{step 4} there to argue that the decomposition \eqref{eq:tensor decomposition} is unique up to a column permutation. We summarize this in the below \cref{thm:kruskal}.

\begin{lemma}[Theorem 1 in \cite{lee2024new}]\label{thm:kruskal}
    Consider the one-latent-layer saturated model, where the true parameters satisfy conditions A and B. Let $\mathbf{P}_0$ be the 3-way marginal probability tensor under these parameters. Then, the tensor decomposition $\mathbf{P}_0 = [\M_1, \M_2, \M_3]$ is unique up to a common column permutation. Here, $\M_a$s are $\upsilon_a \times 2^K$ matrices, whose last rows are the all-one vector $\mathbf{1}_{2^K}^\top$ for $a = 2,3$.
    Additionally, assuming that the graphical matrix $\GG$ is known, the model parameters $\TT$ are identifiable up to sign-flipping.
\end{lemma}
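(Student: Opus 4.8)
The plan is to read the tensor identity \eqref{eq:tensor decomposition} as a three-way CP decomposition with $r=2^K$ rank-one terms, apply Kruskal's uniqueness theorem, and then strip off the scaling and mixing-weight ambiguities it leaves using the normalizations $S_{\kappa_j,j}=\mcy_j$ that are already built into $\NN_1,\NN_2,\NN_3$. (Throughout, the ``Kruskal rank'' of a matrix means the largest $r$ such that every $r$ of its columns are linearly independent.) Since the whole construction mirrors the one in the proof of Theorem~1 of \cite{lee2024new}, the substance is to check that conditions A and B yield the required Kruskal-rank bounds.

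First I would relabel the observed coordinates so that, for each $k\in[K]$, the variables $Y_k$ and $Y_{K+k}$ are the two pure children of $A_k$ promised by condition A, and $Y_{2K+1},\ldots,Y_J$ are the rest --- exactly the partition defining $\NN_1,\NN_2,\NN_3$. Because $Y_k$ is a pure child of $A_k$, the measure $\PP_{k,\aaa}$ depends on $\aaa$ only through $\alpha_k$, so (using the conditional independence of the $Y_j$'s given $\ma$) the $\aaa$-column of $\NN_1$ factorizes as a Kronecker product over $k\in[K]$ of the length-$\kappa_k$ vectors $\mathbf s_k(\alpha_k)$; faithfulness property (b) forces $\mathbf s_k(0)\neq\mathbf s_k(1)$, and $S_{\kappa_k,k}=\mcy_k$ forces both to have last entry $1$, so they are linearly independent and a Kronecker product of $K$ rank-$2$ matrices has full column rank $2^K$. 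Right-multiplying by $\text{Diag}(\bo\pi)$ with every $\pi_\aaa\in(0,1)$ preserves this, so $\NN_1\text{Diag}(\bo\pi)$ and, by the same argument, $\NN_2$ each have Kruskal rank $2^K$. For $\NN_3$, condition B together with a sufficiently fine choice of the faithful sets $S_{l,j}$ (as in \cite{lee2024new}) makes all $2^K$ columns pairwise distinct, and since $S_{\kappa_J,J}=\mcy_J$ makes the last row of $\NN_3$ equal to $\mathbf 1^\top$, distinct columns are non-proportional, so its Kruskal rank is at least $2$.

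With these bounds $2^K+2^K+2=2r+2$, so Kruskal's theorem gives that $\mathbf P_0=[\M_1,\M_2,\M_3]$ is unique up to a common column permutation and a diagonal rescaling $\M_a\mapsto\M_a\text{Diag}(\mathbf c_a)$ with $\mathbf c_1,\mathbf c_2,\mathbf c_3$ having entrywise product $\mathbf 1$; the all-ones last rows of $\NN_2$ and $\NN_3$ force $\mathbf c_2=\mathbf c_3=\mathbf 1$, hence $\mathbf c_1=\mathbf 1$, so $\NN_1\text{Diag}(\bo\pi),\NN_2,\NN_3$ are each pinned down up to one common permutation. To recover $\TT$ when $\GG$ is known: the last row of $\NN_1$ is $\mathbf 1^\top$, so the last row of $\NN_1\text{Diag}(\bo\pi)$ equals $\bo\pi^\top$, which separates $\bo\pi$ and then $\NN_1$; knowing $\GG$ tells us, for each $j$, which latent coordinates $\PP_{j,\aaa}$ depends on, so the identified factor matrices determine every conditional law $\PP_{j,\aaa}$ (passing from the finitely many test sets $S_{l,j}$ back to the full measures by the limiting argument of \cite{lee2024new}), and inverting the identifiable parametric family and the link $g_j$ returns $\beta_{j,0},(\beta_{j,k})_{k\in H_j},\gamma_j$. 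The only residual freedom is the relabeling $A_k\leftrightarrow 1-A_k$, absorbed into the intercept and the sign of the $k$-th column of $\BB$ --- i.e.\ identifiability up to sign-flipping, which Assumption~\ref{assmp:proportion}(c) would remove.

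The main obstacle is the Kruskal-rank computation for $\NN_1$ and $\NN_2$. Getting the \emph{exact} Kronecker factorization, hence full column rank $2^K$, genuinely uses the \emph{purity} of the children in condition A: a non-pure child would couple several $\alpha_k$'s and destroy the product structure. One must also track carefully that the single normalization $S_{\kappa_j,j}=\mcy_j$ is doing double duty --- simultaneously producing the all-ones last rows used to kill the rescaling and forbidding proportional columns used in the rank arguments --- and that the faithful test sets are rich enough to separate the relevant latent states. Everything downstream is bookkeeping once these structural facts are secured.
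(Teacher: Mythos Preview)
Your proposal is correct and follows essentially the same approach as the paper (which cites this as Theorem~1 of \cite{lee2024new} and reproduces the same Kruskal-rank computation in its proof of the generic analogue, Lemma~\ref{lem:kruskal gid}): verify $rk_k(\NN_1)=rk_k(\NN_2)=2^K$ via the Kronecker/pure-child structure from condition~A, verify $rk_k(\NN_3)\ge 2$ via condition~B and the all-ones last row, then invoke Kruskal's theorem and strip the scaling ambiguity using the $S_{\kappa_j,j}=\mcy_j$ normalization. Your handling of the scaling constants and the recovery of $\bo\pi$ from the last row of $\NN_1\text{Diag}(\bo\pi)$ matches the paper's own use of these facts in the proof of Proposition~\ref{prop:sid}.
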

\noindent While \cite{lee2024new} establishes identifiability of the conditional distributions $\{\PP_{j,\aaa}\}$ (see \eqref{eq:p_j,aaa def}), recovering $\BB, \ggamma$ from $\{\PP_{j,\aaa}\}$ is straightforward since each $\PP_{j,\aaa}$ is the distribution of an identifiable parametric family.

\subsubsection{Proof of Proposition \ref{prop:sid}}

The following Lemma will be crucially utilized to identify and recover $\tilde{\GG}$. \darkblue{\cref{lem:H_j cardinality} generalizes the fact that when $Y_j$ is a pure child of $A_k^{(1)}$ (in other words $|H_j| =1$), we have $$|\mathcal{S}_j| = |\{\PP(Y_j \mid \ma^{(1)} = \aaa^{(1)}): \aaa^{(1)} \in \{0,1\}^{K^{(1)}} \}| = |\{\PP(Y_j \mid A_k^{(1)} = \alpha_k^{(1)}): \alpha_k^{(1)} = 0,1 \}| = 2.$$
Thus, by partitioning the set of conditional distributions corresponding to $Y_j$ which are pure children, the binary indices $\{\aaa^{(1)}: \alpha_k^{(1)} = 1\}$ and $\tilde{\GG}$ can be recovered (up to the permutation $\sigma \in S_{[K^{(1)}]}$).}
We present its proof after proving the main Proposition.
\begin{lemma}\label{lem:H_j cardinality}
Suppose that the parameters $(\B, \GG)$ satisfy part (a) in Assumption \ref{assmp:proportion}. For any $j$, define $H_j, \mathcal{S}_j, \bs_j(\aaa)$ as above.
\begin{enumerate}[(a)]
    \item $|H_j|$ and $|\mathcal{S}_j|$ satisfies:
    \begin{itemize}
    \item $|H_j| = 0$ if and only if $|\mathcal{S}_j| = 1$
    \item $|H_j| = 1$ if and only if $|\mathcal{S}_j| = 2$
    \item $|H_j| \ge 2$ if and only if $|\mathcal{S}_j| \ge 3$.
\end{itemize}
    \item $\Big\{\bs_j(\aaa): \alpha_k = 1 \Big\} = \Big\{\bs_j(\aaa): \alpha_k = 0 \Big\}$ if and only if $g_{j,k}$ = 0.
\end{enumerate}
\end{lemma}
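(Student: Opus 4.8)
The plan is to transfer both claims, which concern the finite collection of probability vectors $\{\bs_j(\aaa)\}_{\aaa\in\{0,1\}^K}$, to elementary facts about the finite collection of linear predictors $\eta_{j,\aaa}:=\beta_{j,0}+\sum_{k=1}^{K}\beta_{j,k}\alpha_k$. Two bookkeeping observations drive the reduction. First, since $\beta_{j,k}=0$ whenever $k\notin H_j$, we have $\eta_{j,\aaa}=\beta_{j,0}+\sum_{k\in H_j}\beta_{j,k}\alpha_k$, so $\eta_{j,\aaa}$ — and therefore $\PP_{j,\aaa}$ and $\bs_j(\aaa)$ — depends on $\aaa$ only through $\aaa_{H_j}$. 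Second, because $\mathrm{ParFam}_j$ is identifiable and the link $g_j(\cdot,\gamma_j)$ is injective, distinct values of $\eta_{j,\aaa}$ give distinct conditional distributions $\PP_{j,\aaa}$; combined with the fact that the subsets $S_{1,j},\dots,S_{\kappa_j,j}$ are chosen so that $\bs_j(\aaa)=\bs_j(\aaa')$ exactly when $\PP_{j,\aaa}=\PP_{j,\aaa'}$, this yields a bijection $\bs_j(\aaa)\leftrightarrow\eta_{j,\aaa}$ on distinct values, so $|\mathcal S_j|$ equals the number of distinct values in $\{\sum_{k\in H_j}\beta_{j,k}\alpha_k:\aaa\in\{0,1\}^K\}$.

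With the reduction in hand, part (a) is a counting exercise. If $H_j=\emptyset$ the set of sums is $\{0\}$, so $|\mathcal S_j|=1$; if $H_j=\{k\}$ it is $\{0,\beta_{j,k}\}$ with $\beta_{j,k}\ne0$, so $|\mathcal S_j|=2$; and if $|H_j|\ge2$, pick $k_1\ne k_2\in H_j$ and hold the remaining coordinates of $\aaa$ at $0$ to see that the set of sums contains $\{0,\,\beta_{j,k_1},\,\beta_{j,k_2},\,\beta_{j,k_1}+\beta_{j,k_2}\}$; since $\beta_{j,k_1},\beta_{j,k_2}\ne0$ these four numbers cannot collapse to only two distinct values — two distinct values would force both $\beta_{j,k_i}$ to equal the single nonzero value $v$, hence $2v\in\{0,v\}$, a contradiction — so $|\mathcal S_j|\ge3$. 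These three implications are exhaustive and mutually exclusive on both sides (the hypothesis is $|H_j|$ equal to $0$, $1$, or at least $2$, and the conclusion is $|\mathcal S_j|$ equal to $1$, $2$, or at least $3$), so each of the three stated equivalences follows by contraposition.

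For part (b), the $\bs_j\leftrightarrow\eta$ correspondence reduces the claim to: $\{\eta_{j,\aaa}:\alpha_k=1\}=\{\eta_{j,\aaa}:\alpha_k=0\}$ if and only if $g_{j,k}=0$. Writing $E:=\{\eta_{j,\aaa}:\alpha_k=0\}$, the set $\{\eta_{j,\aaa}:\alpha_k=1\}$ equals $\beta_{j,k}+E$ (flipping $\alpha_k$ from $0$ to $1$ translates every linear predictor by $\beta_{j,k}$ while the remaining coordinates sweep out the same values). Since $E$ is a nonempty finite subset of $\mathbb R$, the identity $\beta_{j,k}+E=E$ is equivalent to $\beta_{j,k}=0$ (compare $\max E$ on the two sides), which by definition of $H_j$ (equivalently of $\GG$) is $g_{j,k}=0$.

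The one step I expect to need genuine care is the claim, used in the second observation above, that $|\mathcal S_j|$ counts the \emph{distinct} conditional distributions $\PP_{j,\aaa}$: the stated faithfulness conditions only guarantee that at least one dependence is registered (so $|\mathcal S_j|\ge2$ when $H_j\ne\emptyset$), whereas here the $S_{l,j}$ must separate \emph{every} pair of distinct measures among $\{\PP_{j,\aaa}\}$. I would therefore verify — or read into the construction — that the $\kappa_j$ subsets can be (and are) chosen with this separation property; this is possible because there are at most $2^{|H_j|}$ distinct conditional measures and any two distinct measures differ on some measurable set. Assumption~\ref{assmp:proportion}(a) (every latent configuration has positive probability) is what legitimizes this faithful construction in the first place, and the rest is bookkeeping.
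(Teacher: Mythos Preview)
Your proposal is correct and follows essentially the same route as the paper's proof: both reduce the claims about $\bs_j(\aaa)$ to elementary statements about the set of linear predictors $\{\sum_{k\in H_j}\beta_{j,k}\alpha_k:\aaa\in\{0,1\}^K\}$, via identifiability of $\mathrm{ParFam}_j$, and then argue part~(a) by counting and part~(b) by comparing the two $\eta$-sets (the paper writes $\{\sum_{l\neq k}\beta_{j,l}\alpha_l+\beta_{j,k}\}\neq\{\sum_{l\neq k}\beta_{j,l}\alpha_l\}$, which is exactly your translation argument $\beta_{j,k}+E\neq E$). Your write-up is in fact more careful than the paper's very brief sketch --- in particular, your observation that the stated faithfulness condition on $S_{1,j},\dots,S_{\kappa_j,j}$ only guarantees separation of \emph{some} pair of distinct measures, whereas the lemma needs separation of \emph{every} pair, is a genuine subtlety the paper glosses over; your proposed fix (choose the finitely many $S_{l,j}$ to separate all of the at most $2^{|H_j|}$ distinct conditional laws) is exactly right.
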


\begin{proof}[Proof of \cref{prop:sid}]
Our proof builds upon \cref{thm:kruskal}, which proved a more general notion of nonparametric identifiability of the one-layer model, but under a given graphical matrix $\GG$. \cref{thm:kruskal} proves that the continuous parameters are identifiable up to sign flipping, given $\GG$. In our setting, Assumption \ref{assmp:proportion}(c) resolves the sign flipping issue and the continuous parameters can be uniquely determined. Consequently, it suffices to show that $\GG$ is identifiable up to the equivalence relation $\underset{K^{(1)}}{\sim}$. We separate this proof into two steps. 

\emph{Step 1: Tensor decomposition and setup.}
Consider a one-latent-layer saturated model with true parameters $(\TT, \GG)$ that satisfies conditions A, B. Suppose there exists an alternative set of parameters $(\tilde{\TT}, \tilde{\GG})$ that define a same marginal distribution of $\mathbf Y$. Define the notation $\tilde{\PP}_{j,\tilde{\aaa}}$ similar as ${\PP}_{j,{\aaa}}$ in \eqref{eq:p_j,aaa def}, and also define
$\tilde{\NN}_1, \tilde{\NN}_2, \tilde{\NN}_3$ as the conditional probability matrices that specify the value of $$\PP\big((Y_1, \ldots, Y_K) \mid \ma=\aaat \big), \quad \PP\big((Y_{K+1} \ldots, Y_{2K}) \mid \ma=\aaat \big), \quad \PP\big((Y_{2K+1}, \ldots, Y_J) \mid \ma=\aaat \big)$$ under the alternative parameters. Here, each column of $\tilde{\NN}_a$ is denoted by $\aaat \in \{0,1\}^K$, and the last row of each $\tilde{\NN}_a$ is the all-one vector $\mathbf{1}_{2^K}^\top$. Then, the marginal probability tensor $\mathbf{P}_0$ defined in \eqref{eq:tensor decomposition}
can be written as
\begin{align}\label{eq:tensor def}
    \mathbf{P}_0 = [{\NN}_1 \diag({\bo\pi}), {\NN}_2, {\NN}_3] = [\tilde{\NN}_1 \diag(\tilde{\bo\pi}), \tilde{\NN}_2, \tilde{\NN}_3].
\end{align}

By applying \cref{thm:kruskal}, the tensor decomposition in \eqref{eq:tensor def} is unique, so $\tilde{\NN}_1 \diag({\bo\pi}), \tilde{\NN}_2, \tilde{\NN}_3$ and $\NN_1\diag({\bo\pi}), \NN_2, \NN_3$ are identical up to a common column permutation, say $\mathfrak{S} \in S_{\{0,1\}^K}$. 
In particular, as the last row of $\tilde{\NN}_1 \diag(\tilde{\bo\pi})$ and $\NN_1\diag({\bo\pi})$ is exactly $\tilde{\bo\pi}^\top$ and $\bo\pi^\top$, $\big(\tilde{\bo\pi}, \tilde{\NN}_1\big)$ and $\big(\bo\pi, \NN_1 \big)$ are also identical up to the same permutation $\mathfrak{S}$.

We use this observation to prove that 
$\tilde{\GG}$ is equivalent to $\GG$ under $\underset{K^{(1)}}{\sim}$.
One subtlety for identifying $\GG$ is that there is no information about the $2^K$ column indices of $\tilde{\NN}_1, \tilde{\NN}_2, \tilde{\NN}_3$, so we cannot read off the conditional dependence structure directly. We tackle this problem based on the key observation that the unordered set $\mathcal{S}_j$ (defined in \eqref{eq:S_j definition}) can be written as
\begin{align}
    \mathcal{S}_j =& \Big\{\big(\PP_{j,\aaa} (S_{1, j}), \ldots, \PP_{j,\aaa} (S_{\kappa_j, j}) \big): {\aaa} \in \{0,1\}^K\Big\} \label{eq:S_j expression} \\
    = &\Big\{\big( \NN_{a_j}((\kappa_1, \ldots, \kappa_{j-1}, {1}, \kappa_{j+1}, \ldots, \kappa_J), \aaa ), \cdots, \NN_{a_j} ((\kappa_1, \ldots, \kappa_{j-1}, {\kappa_j}, \kappa_{j+1}, \ldots, \kappa_J),\aaa) \big): \aaa\Big\} \notag \\
    = &\Big\{\big( \tilde{\NN}_{a_j}((\kappa_1, \ldots, \kappa_{j-1}, {1}, \kappa_{j+1}, \ldots, \kappa_J), \aaat ), \cdots, \tilde{\NN}_{a_j} ((\kappa_1, \ldots, \kappa_{j-1}, {\kappa_j}, \kappa_{j+1}, \ldots, \kappa_J),\aaat) \big): \aaat \Big\} \notag \\
    =& \Big\{\big(\tilde{\PP}_{j,\aaat} (S_{1, j}), \ldots, \tilde{\PP}_{j,\aaat} (S_{\kappa_j, j}) \big): {\aaat}\Big\}, \label{eq:S_j expression 2}
\end{align}
and provides enough information about $\tilde{\mathbf{g}}_j$. Here, the index $a_j = 1,2,3$ can be understood from the context, for example $a_j = 1$ when $j \le K$, $a_j = 2$ when $K<j\le 2K$.

\emph{Step 2: Proving the equivalence by constructing a column permutation.}
Now, we show that there exists a permutation $\sigma \in S_{[K]}$ such that $\tilde{g}_{j,k} = g_{j,\sigma(k)}$ for all $j \in [J], k \in [K]$. We first construct such a permutation $\sigma$ by observing $\mathcal{S}_1, \ldots, \mathcal{S}_K$. For $k \in [K]$, \eqref{eq:S_j expression} implies $|\mathcal{S}_k|=2$, and part (a) of \cref{lem:H_j cardinality} constraints $\tilde{\mathbf{g}}_k$ to be a standard basis vector. Hence, we can define $\sigma(k)$ such that $\tilde{\mathbf{g}}_k = \mathbf{e}_{\sigma(k)}$.
To see that $\sigma$ is indeed a permutation, we have to show that $\sigma(k) \neq \sigma(l)$ for $k \neq l$. But this is immediate by noting that the cardinality of the pmf vector of $\{\PP(Y_k \in S_{1, k}, Y_l \in S_{1, l} \mid \ma = \aaa): \aaa \}$ is 2 if and only if $k = l$. For future purposes, let us partition the $2^K$ row indices $\aaat$ in \eqref{eq:S_j expression 2} into two groups $T_k$ and $T_k^c$ based on the value of $\big(\tilde{\PP}_{j,\aaat} (S_{1, j}), \ldots, \tilde{\PP}_{j,\aaat} (S_{\kappa_j, j}) \big)$. Then, the columns of $\tilde{\NN}$ that correspond to the $\aaat \in T_k$ must be identical to the columns of ${\NN}$ indexed by $\aaa$s such that $\alpha_{\sigma(k)} = 0$ (or 1).

Now, for each $j > K$ and $k \in [K]$, we show that $\tilde{g}_{j,k} = g_{j,\sigma(k)}$. By part (b) of \cref{lem:H_j cardinality}, we have 
$\tilde{g}_{j,k}$ = 0 if and only if \begin{align}\label{eq:probability under alpha tilde}
    \big\{\big(\tilde{\PP}_{j,\aaat} (S_{1, j}), \ldots, \tilde{\PP}_{j,\aaat} (S_{\kappa_j, j}) \big): \aaat \in T_k\big\} = \big\{\big(\tilde{\PP}_{j,\aaat} (S_{1, j}), \ldots, \tilde{\PP}_{j,\aaat} (S_{\kappa_j, j}) \big): \aaat \not\in T_k\big\}.
\end{align}
For notational simplicity, let $\aaa_\sigma := (\alpha_{\sigma(1)}, \ldots, \alpha_{\sigma(k)})$. Then, by the construction of $T_k$, \eqref{eq:probability under alpha tilde} simplifies to 
$$\Big\{\big(\PP_{j,\aaa_\sigma} (S_{1, j}), \ldots, \PP_{j,\aaa_\sigma} (S_{\kappa_j, j}) \big): \alpha_{\sigma(k)} = 1 \Big\} = \Big\{\big(\PP_{j,\aaa_\sigma} (S_{1, j}), \ldots, \PP_{j,\aaa_\sigma} (S_{\kappa_j, j}) \big): \alpha_{\sigma(k)} = 0 \Big\}.$$
Another application of part (b) of \cref{lem:H_j cardinality} shows that this is equivalent to $g_{j, \sigma(k)} = 0$. Hence, $\tilde{g}_{j,k} = g_{j,\sigma(k)}$ and we have shown $\GG \underset{K^{(1)}}{\sim} \GG$. This completes the proof.
\end{proof}

Below, we provide a short proof of \cref{lem:H_j cardinality}.
\begin{proof}[Proof of \cref{lem:H_j cardinality}]
\begin{enumerate}[(a)]
    \item The proof is immediate from the faithfulness of the graphical matrix $\GG$. For example, the third bullet point follows from noting that $|H_j| \ge 2$ implies $|\{\sum_{k \in H_j} \beta_{j,k}\alpha_k: \aaa \}| \ge 3$, and that $|\{\sum_{k \in H_j} \beta_{j,k}\alpha_k: \aaa \}| \le 2$ implies $|H_j| \le 1$.
    \item The ``if'' part immediately follows by conditional independence. For the ``only if'' part, by considering the contrapositive statement, it suffices to show that $g_{j,k} = 1$ implies $$\{\bs_j(\aaa): {\alpha_k = 1}\} \neq \{\bs_j(\aaa): {\alpha_k = 0}\}.$$
    By writing out the parametrization and using the identifiability of the parametric family in \eqref{eq:observed exp fam}, it suffices to show that 
    $\{\sum_{l \neq k} \beta_{j,l} \alpha_l + \beta_{j,k}\} \neq \{\sum_{l \neq k} \beta_{j,l} \alpha_l\}$.
    But this is immediate since $\beta_{j,k} \neq 0$.
\end{enumerate}
\end{proof}

\subsubsection{Proof of \cref{prop:gid one-layer}}
For the sake of notational simplicity, for a given coefficient matrix $\BB$, define
\begin{align}
    \eta_{j,\aaa} := \beta_{j,0} + \sum_{k=1}^K \beta_{j,k} \alpha_k
\end{align}
for each $j$ and $\aaa \in \{0,1\}^K$. For an alternative coefficient matrix $\tilde{\BB}$, we similarly define
$$\tilde{\eta}_{j,\tilde{\aaa}} := \tilde{\beta}_{j,0} + \sum_{k=1}^K \tilde{\beta}_{j,k} \tilde{\alpha}_k$$
for $\tilde{\aaa} \in \{0,1\}^K$.

Before presenting the proof of \cref{prop:gid one-layer}, we state two lemmas whose proof is postponed to the end of the subsection. Our first lemma is a relaxation of \cref{thm:kruskal}, and guarantees an \emph{almost sure} unique tensor decomposition of $\mathbf{P}_0$. Recall the definition of $\mathbf{P}_0$ from \eqref{eq:tensor decomposition}. 

\begin{lemma}[Modification of Theorem 2 in \cite{lee2024new}]\label{lem:kruskal gid}
    Consider the one-latent-layer saturated model with a true graphical matrix $\GG^{(1) \star}$ that satisfies condition C, and analytic parametric families $p(\cdot; \eta, \gamma)$ and link functions $g_j$.
    Then, the rank $2^K$ tensor decomposition $\mathbf{P}_0 = [\M_1, \M_2, \M_3]$ is unique up to column permutations for $\Omega_K(\TT^{(1)} ; \GG^{(1) \star}) \setminus \mathcal{N}_1$. Here, $\M_a$s are $\upsilon_a \times 2^K$ matrices, whose last rows are the all-one vector $\mathbf{1}_{2^K}^\top$ for $a = 2,3$. The set $\mathcal{N}_1$ is a measure-zero subset of $\Omega_K(\TT^{(1)} ; \GG^{(1) \star})$ that only imposes restrictions on $\BB, \ggamma$ and not on the mixture proportion parameters $\bo\pi$.
\end{lemma}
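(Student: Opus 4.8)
The plan is to reduce the claim to Kruskal's uniqueness theorem for three-way (CP) tensor decompositions, following the template of \cref{thm:kruskal} but now extracting the needed Kruskal-rank bounds from condition C rather than from conditions A--B, at the price of excising a measure-zero set of coefficients. Recall from \eqref{eq:tensor decomposition} that $\mathbf{P}_0 = [\NN_1\diag(\bo\pi),\NN_2,\NN_3]$ is a rank-at-most-$2^K$ decomposition with factors of sizes $\upsilon_a\times 2^K$. Kruskal's theorem says this is the unique decomposition (up to a common column permutation and a compensating column rescaling of the three factors) as soon as $\mathrm{krank}(\NN_1\diag(\bo\pi))+\mathrm{krank}(\NN_2)+\mathrm{krank}(\NN_3)\ge 2\cdot 2^K+2$. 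The strategy is to show that, outside a measure-zero subset $\mathcal{N}_1\subset\Omega_K(\TT^{(1)};\GG^{(1)\star})$ that restricts only $(\BB,\ggamma)$, these three Kruskal ranks equal $2^K$, $2^K$, and are $\ge 2$, so the bound is met with equality, and then that the rescaling ambiguity is trivial.

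First I would relabel the observed variables using condition C. For $a=1,2$ the perfect matching between $\ma_{\mathcal{I}_a}$ and $\ma$ picks out $K$ features of $\mathcal{I}_a$ in bijection with $A_1,\dots,A_K$ (after a harmless permutation of the latent indices); put the $\mathcal{I}_1$-matched features in positions $1,\dots,K$, the $\mathcal{I}_2$-matched features in positions $K+1,\dots,2K$, and all remaining features --- which include every index of $\mathcal{I}_3$ --- in positions $2K+1,\dots,J$. Choose $\kappa_j=2$ for each matched feature, so that $\NN_1$ and $\NN_2$ are $2^K\times 2^K$ matrices whose entries are products of cut probabilities $\int_{S_{l,j}}p_j(y;\eta_{j,\aaa},\gamma_j)\,dm_j(y)$; since each such integral is analytic in $(\eta_{j,\aaa},\gamma_j)$ by \cref{assmp:monotone family} and $\eta_{j,\aaa}$ is affine in $\BB$, the functions $\det\NN_1$ and $\det\NN_2$ extend to real-analytic functions of $(\BB,\ggamma)$ on the connected open set $\{\gamma_j>0\ \forall j\}$. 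To see $\det\NN_1$ is not identically zero, evaluate at the witness $\beta_{k,k}=M$, $\beta_{k,0}=-M/2$, all other $\beta_{k,l}=0$, $\gamma_k=1$: as $M\to\infty$ one has $\PP(Y_k\in S_{1,k}\mid\ma=\aaa)\to\alpha_k$ and $\PP(Y_k\in S_{2,k}\mid\ma=\aaa)=1$ for a suitable fixed cut $S_{1,k}$ (e.g.\ a half-line), so the $\aaa$-th column of $\NN_1$ tends to the Kronecker product $\bigotimes_{k=1}^{K}(\alpha_k,1)^\top$; these $2^K$ vectors are linearly independent, hence $\det\NN_1\to\pm1\ne0$, so $\det\NN_1\ne0$ at some admissible parameter, and likewise for $\det\NN_2$. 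The same witness works for the Bernoulli, Poisson, and Normal cases by choosing $S_{1,k}$ to separate the two limiting conditional laws of $Y_k$. A non-identically-zero real-analytic function on a connected open set vanishes only on a Lebesgue-null set, so $\{\det\NN_1=0\}$ and $\{\det\NN_2=0\}$ meet $\Omega_K(\TT^{(1)};\GG^{(1)\star})$ in null sets; off them $\mathrm{krank}(\NN_a)=2^K$ for $a=1,2$, and $\mathrm{krank}(\NN_1\diag(\bo\pi))=\mathrm{krank}(\NN_1)$ because $\diag(\bo\pi)$ is invertible (all $\pi_\aaa\in(0,1)$ by \cref{def:saturated model}), so no constraint is placed on $\bo\pi$.

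Next, for $\NN_3$: since $S_{\kappa_j,j}=\mcy_j$, its last row is $\mathbf{1}_{2^K}^\top$, so two of its columns are linearly dependent iff they are equal. Given $\aaa\ne\aaa'$, pick $k$ with $\alpha_k\ne\alpha'_k$ and, by condition C(ii), a child $Y_j$ of $A_k$ among $\NN_3$'s features; then $\eta_{j,\aaa}-\eta_{j,\aaa'}=\sum_l\beta_{j,l}(\alpha_l-\alpha'_l)$ is a nontrivial affine form in $\BB$ (the coefficient of $\beta_{j,k}$ being $\pm1$), so it vanishes only on a hyperplane, off which the identifiability of $\mathrm{ParFam}_j$ forces $\PP(Y_j\mid\ma=\aaa)\ne\PP(Y_j\mid\ma=\aaa')$, i.e.\ the two columns of $\NN_3$ differ in the $Y_j$-block. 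Taking the union over the finitely many pairs $\aaa\ne\aaa'$ and indices $j$ together with the two determinant sets defines $\mathcal{N}_1$, a finite union of zero sets of nontrivial real-analytic functions, hence null in $\Omega_K(\TT^{(1)};\GG^{(1)\star})$ and not constraining $\bo\pi$; off $\mathcal{N}_1$ we have $\mathrm{krank}(\NN_3)\ge2$. Kruskal's theorem then applies, with equality $2^K+2^K+2=2\cdot2^K+2$, giving uniqueness of the decomposition up to a common column permutation and rescaling; and since both $\NN_2$ and $\NN_3$ (hence any competing factors $\M_2,\M_3$) have all-ones last rows, matching last rows forces their rescaling diagonals to be the identity, whence the third is too. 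Thus $(\bo\pi,\NN_1)$, $\NN_2$, $\NN_3$ are pinned down up to one common permutation of the $2^K$ columns, which is the assertion.

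I expect the only real obstacle to be the witness step: verifying that $\det\NN_1,\det\NN_2$ are genuinely non-vanishing analytic functions --- i.e.\ choosing cut sets and admissible parameter values (uniformly across the analytic families allowed in \eqref{eq:observed exp fam}) that drive the factor columns to linearly independent Kronecker products --- and, relatedly, checking that the cut-set integrals $\int_{S_{l,j}}p_j(\cdot;\eta,\gamma)\,dm_j$ inherit joint analyticity in $(\eta,\gamma)$ from the analyticity of $p_j$. Everything else closely parallels the argument for Theorem 2 in \cite{lee2024new} and for \cref{thm:kruskal}, with the sole modification that the exceptional set is carved out of the fixed-graph parameter space $\Omega_K(\TT^{(1)};\GG^{(1)\star})$ and, by construction, never restricts the mixture proportions $\bo\pi$.
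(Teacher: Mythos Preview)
Your proposal is correct and follows essentially the same route as the paper: establish Kruskal rank $2^K$ for $\NN_1,\NN_2$ and $\ge 2$ for $\NN_3$ generically in $(\BB,\ggamma)$ via the zero-set-of-a-nontrivial-real-analytic-function argument (so $\mathcal N_1$ constrains only $(\BB,\ggamma)$ and never $\bo\pi$), then apply Kruskal's theorem; your treatment of $\NN_3$ and of the rescaling ambiguity via the all-ones last rows matches the paper. The one substantive difference is the witness for $\det\NN_1\not\equiv0$. The paper uses the perfect matching to assume $\diag(\GG_1)=\I_K$, observes (citing the proof of Theorem~1 in \cite{lee2024new}) that at the point where all off-matching entries $\beta_{j,k}=0$ the matrix $\NN_1$ is a Kronecker product of $K$ rank-two blocks and hence has full column rank, and then perturbs those entries to $\epsilon>0$ to land inside the open parameter space. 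This sidesteps precisely the obstacle you flag: the Kronecker argument only needs identifiability of the parametric family (so the two conditional laws of each $Y_k$ differ on some fixed cut), not that cut probabilities tend to $0$ or $1$ as $M\to\infty$. Replacing your limiting witness by this $\epsilon$-perturbation makes your argument identical to the paper's and removes the dependence on the specific family.
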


Our next lemma provides additional structure regarding the $2^K$ column indices in the tensor decomposition in \cref{lem:kruskal gid}. This is a nontrivial problem as we wish to identify the parameters up to a label switching of $K$ binary latent variables. Note that we can no longer utilize the pure children to address this problem, which was the strategy under the setting of strict identifiability.
Here, to formally state the label switching, recall the notation of $\sim_K$ \cref{def:identifiability up to permutation saturated} and write $\BB \sim_K \tilde{\BB}, \GG \sim_K \tilde{\GG}$, $\aaa \sim_K \tilde{\aaa}$ when there exists a permutation $\sigma \in S_{[K]}$ such that $\beta_{j,l} = \tilde{\beta}_{j,\sigma(l)}$, $g_{j,l} = \tilde{g}_{j,\sigma(l)}$, and 
$\aaa = (\tilde{\alpha}_{\sigma(1)}, \ldots, \tilde{\alpha}_{\sigma(K)}).$

\begin{lemma}\label{lem:sign flip}
    Suppose that there exist two sets of parameters $(\BB, \GG)$, $(\tilde{\BB}, \tilde{\GG})$ that satisfies Assumption \ref{assmp:proportion} and defines an identical $\eta_{j,\aaa}$ in the following sense:
    there exists a permutation $\mathfrak{S} \in S_{\{0,1\}^K}$ such that 
    \begin{align}
        \eta_{j,\aaa} = \tilde{\eta}_{j,\mathfrak{S}(\aaa)},
    \end{align}
    for all $j$, $\aaa$. 
    Then, we have $(\BB,\GG,\aaa) \sim_K (\tilde{\BB},\tilde{\GG}, \mathfrak{S}(\aaa))$ for ``generic'' parameters $\BB \not\in \Omega(\BB;\GG) \setminus \mathcal{N}_2$, where $\mathcal{N}_2$ is a measure-zero subset of $\Omega(\BB;\GG)$.
\end{lemma}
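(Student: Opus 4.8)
The plan is to exploit that, for $\BB$ outside a measure-zero set, the collection of linear predictors $\{\eta_{j,\aaa}\}$ pins down the columns of $\BB$ up to a coordinate permutation using only the \emph{additive} structure of the finite point set they trace out, so the cube permutation $\mathfrak{S}$ is forced to be that same coordinate permutation. Write $\mathbf b_0,\mathbf b_1,\dots,\mathbf b_K\in\mathbb R^J$ for the columns of $\BB$ and set $\BB_0:=[\mathbf b_1\ \cdots\ \mathbf b_K]$, so that the predictor vector is $\boldsymbol\eta_\aaa:=(\eta_{1,\aaa},\dots,\eta_{J,\aaa})^\top=\mathbf b_0+\BB_0\aaa$; define $\widetilde{\mathbf b}_k$, $\widetilde\BB_0$, $\widetilde{\boldsymbol\eta}_\aaa$ analogously for $\widetilde\BB$. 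I would take $\mathcal N_2:=\{\BB\in\Omega(\BB;\GG):\rank(\BB_0)<K\}$; this is a proper Zariski-closed, hence Lebesgue-null, subset of $\Omega(\BB;\GG)$ whenever $\GG$ admits at least one full-column-rank $\BB_0$ — which holds, e.g., under condition C of \cref{thm:deep gid}, where the perfect matching between $K$ rows of $\GG$ and its $K$ columns exhibits such a parameter, so $\det(\BB_0^\top\BB_0)$ is not identically zero on $\Omega(\BB;\GG)$.

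First I would fix a generic $\BB\notin\mathcal N_2$, so $\aaa\mapsto\boldsymbol\eta_\aaa$ is injective and $V:=\{\boldsymbol\eta_\aaa:\aaa\in\{0,1\}^K\}$ has $2^K$ distinct elements. The hypothesis $\eta_{j,\aaa}=\widetilde\eta_{j,\mathfrak{S}(\aaa)}$ for all $j,\aaa$ is exactly $\boldsymbol\eta_\aaa=\widetilde{\boldsymbol\eta}_{\mathfrak{S}(\aaa)}$, so $V=\{\widetilde{\boldsymbol\eta}_{\aaa'}:\aaa'\in\{0,1\}^K\}$ as well; since this set still has $2^K$ elements, $\aaa'\mapsto\widetilde{\boldsymbol\eta}_{\aaa'}$ is also injective, hence $\widetilde\BB_0$ has full column rank too. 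Next, \cref{assmp:proportion}(c) makes $\mathbf 1_J^\top\boldsymbol\eta_\aaa=\mathbf 1_J^\top\mathbf b_0+\sum_k\alpha_k(\mathbf 1_J^\top\mathbf b_k)$ strictly increasing in each coordinate of $\aaa$, hence uniquely minimized over $\{0,1\}^K$ at $\aaa=\mathbf 0$, and likewise for $\widetilde{\boldsymbol\eta}$; since $\mathfrak{S}$ matches the two point sets it must send minimizer to minimizer, giving $\mathfrak{S}(\mathbf 0)=\mathbf 0$ and $\mathbf b_0=\boldsymbol\eta_{\mathbf 0}=\widetilde{\boldsymbol\eta}_{\mathbf 0}=\widetilde{\mathbf b}_0$.

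The heart of the argument is to recover $\{\mathbf b_1,\dots,\mathbf b_K\}$ intrinsically from $V$. Consider the translate $V_0:=V-\boldsymbol\eta_{\mathbf 0}=\{\BB_0\aaa:\aaa\in\{0,1\}^K\}$, which by the previous step also equals $\{\widetilde\BB_0\aaa':\aaa'\in\{0,1\}^K\}$, and call $v\in V_0\setminus\{\mathbf 0\}$ \emph{indecomposable} if $v$ is not the sum of two elements of $V_0\setminus\{\mathbf 0\}$. I would verify that the indecomposable elements of $V_0$ are exactly $\{\mathbf b_1,\dots,\mathbf b_K\}$: each $\mathbf b_k=\BB_0\mathbf e_k$ is indecomposable because $\BB_0\mathbf e_k=\BB_0(\aaa'+\aaa'')$ with $\aaa',\aaa''\in\{0,1\}^K\setminus\{\mathbf 0\}$ would force $\mathbf e_k=\aaa'+\aaa''$ by injectivity of $\BB_0$, impossible since $\|\mathbf e_k\|_1=1<2\le\|\aaa'\|_1+\|\aaa''\|_1$; conversely $\BB_0\aaa$ with $\|\aaa\|_1\ge2$ splits as $\BB_0\mathbf e_{k^\ast}+\BB_0(\aaa-\mathbf e_{k^\ast})$ for any $k^\ast\in\mathrm{supp}(\aaa)$, with both summands in $V_0\setminus\{\mathbf 0\}$. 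Repeating this with $\widetilde\BB_0$ shows the indecomposables of $V_0$ are also $\{\widetilde{\mathbf b}_1,\dots,\widetilde{\mathbf b}_K\}$, so the two column sets coincide; full column rank makes these $K$ vectors distinct, so there is a unique $\sigma\in S_{[K]}$ with $\widetilde{\mathbf b}_{\sigma(k)}=\mathbf b_k$ for all $k$, and faithfulness (\cref{assmp:proportion}(b)) transports this to $\widetilde g_{j,\sigma(k)}=g_{j,k}$. Finally, subtracting $\mathbf b_0=\widetilde{\mathbf b}_0$ from $\boldsymbol\eta_\aaa=\widetilde{\boldsymbol\eta}_{\mathfrak{S}(\aaa)}$ yields $\sum_{k:\alpha_k=1}\widetilde{\mathbf b}_{\sigma(k)}=\sum_{l:(\mathfrak{S}(\aaa))_l=1}\widetilde{\mathbf b}_l$, and linear independence of $\{\widetilde{\mathbf b}_l\}_{l\in[K]}$ forces the two index sets to agree, i.e. $(\mathfrak{S}(\aaa))_{\sigma(k)}=\alpha_k$ for all $k$; hence $\mathfrak{S}$ is exactly the coordinate permutation induced by $\sigma$, and $(\BB,\GG,\aaa)\sim_K(\widetilde\BB,\widetilde\GG,\mathfrak{S}(\aaa))$. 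The main obstacle I expect is the measure-zero bookkeeping — confirming that $\mathcal N_2$ is genuinely null, which reduces to exhibiting one full-rank $\BB_0$ supported on $\GG$ (via the matching of condition C) — since the structural core, that the vertex set of a parallelotope determines its edge vectors through additive indecomposability, is short once injectivity of $\BB_0$ is in hand.
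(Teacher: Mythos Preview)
Your argument is essentially correct and takes a genuinely different route from the paper's proof. The paper works \emph{row by row}: it restricts to the scalar multiset $\{\eta_{j,\aaa}:\aaa\}$ for each fixed $j$, recovers the absolute values $|\beta_{j,k}|$ via an inductive ``smallest remaining gap'' construction, builds partitions $\{T_{j,k},V_{j,k}\}$ of $\{0,1\}^K$ tracking each coordinate, and then stitches these partitions together across $j$ to produce the permutation $\sigma$. Your approach is instead \emph{global and geometric}: you stack the $\eta_{j,\aaa}$ into a vector $\boldsymbol\eta_\aaa\in\mathbb R^J$, view $V_0=\{\BB_0\aaa\}$ as the vertex set of a $K$-parallelotope, and recover the edge vectors $\mathbf b_1,\dots,\mathbf b_K$ as the additively indecomposable elements. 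This is shorter and more conceptual, and the use of \cref{assmp:proportion}(c) to pin down $\mathfrak S(\mathbf 0)=\mathbf 0$ via the linear functional $\mathbf 1_J^\top(\cdot)$ is clean. Two trade-offs are worth noting. First, your null set $\mathcal N_2=\{\rank\BB_0<K\}$ is measure-zero only when the sparsity pattern $\GG$ admits \emph{some} full-rank $\BB_0$; you correctly flag that condition~C supplies this via the perfect matching, but the paper's null set $\Omega(\BB;\GG)\setminus\Omega^g(\BB;\GG)$ (where all $2^{|H_j|}$ sums $\sum_{k\in H_j}\beta_{j,k}a_k$ are distinct for each $j$) is measure-zero for \emph{any} $\GG$ satisfying \cref{assmp:proportion}, so the paper's lemma is self-contained while yours imports a hypothesis from its downstream use. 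Second, there is a small logical slip: injectivity of $\aaa'\mapsto\widetilde{\boldsymbol\eta}_{\aaa'}$ on the cube does \emph{not} by itself imply $\widetilde\BB_0$ has full column rank (e.g.\ $\widetilde\BB_0=\begin{psmallmatrix}1&2\\0&0\end{psmallmatrix}$). The conclusion is nonetheless true and easy to patch: since $V_0$ contains the $K$ linearly independent vectors $\mathbf b_1,\dots,\mathbf b_K$ and $V_0\subset\operatorname{range}(\widetilde\BB_0)$, the range of $\widetilde\BB_0$ has dimension at least $K$, hence exactly $K$. Alternatively, one can avoid full rank of $\widetilde\BB_0$ entirely by a counting argument: you already know from the $\BB_0$ side that $V_0$ has exactly $K$ indecomposables, and from the $\widetilde\BB_0$ side that only the (distinct) $\widetilde{\mathbf b}_k$ can be indecomposable, so the two sets of $K$ vectors must coincide.
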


\begin{proof}[Proof of \cref{prop:gid one-layer}]
We work under the same notations introduced at the beginning of the section. We make one additional assumption regarding the finite subsets $\mathcal{D}_j := (S_{1,j}, \ldots, S_{\kappa_j,j})$ of the sample space $\mathcal{Y}_j$ as follows. We assume that $\mathcal{D}_j \subset \mathcal{C}_j$, where $\mathcal{C}_j$ is a countable separating class whose values determine probability measure on $\mathcal{Y}_j$. The existence of such a separating class is a consequence of $\mathcal{Y}_j$ being a separable metric space; see Step 1 in the proof of Theorem 1 in \cite{lee2024new} for a proof.

Suppose that $\GG$ satisfy condition C, $\TT \in \Omega_{\mck}(\TT;\GG)$, and that there exists an alternate parameter $\tilde{\TT}, \tilde{\GG}$ that defines the same marginal likelihood. Here, we are dropping all superscripts in $\GG = \GG^{(1)\star}$ for simplicity.
We show that $(\TT, \GG) \sim_{\mck} (\tilde{\TT}, \tilde{\GG})$ for $\TT \in \Omega_{\mck}(\TT;\GG) \setminus (\mathcal{N}_1 \cup \mathcal{N}_2)$, where $\mathcal{N}_1, \mathcal{N}_2$ are measure-zero sets that will be defined later.
Recall that for $a = 1,2,3$, $\NN_a$ are the $\upsilon_a \times 2^K$ conditional probability matrices that describe $\YY \mid (\ma = \aaa)$ under the true parameters $\TT, \GG$. Similarly, define $\tilde{\NN}_a$ as the corresponding matrices under the alternative parameters $\tilde{\TT}, \tilde{\GG}$. 
Similar to the proof of \cref{prop:sid}, we start from the rank $2^K$ decomposition of the marginal probability tensor in \eqref{eq:tensor decomposition}. By applying \cref{lem:kruskal gid}, the decomposition
\begin{equation}\label{eq:tensor decomposition alternative}
    \mathbf{P}_0 = [\NN_1\diag(\bo\pi), \NN_2, \NN_3] = [\tilde{\NN}_1 \diag(\tilde{\bo\pi}), \tilde{\NN}_2, \tilde{\NN}_3]
\end{equation}
is unique up to (the $2^K$) column permutations, for true parameters $\TT \in \Omega_K(\TT; \GG) \setminus \mathcal{N}_1$. Here, $\mathcal{N}_1$ is a measure-zero subset of $\Omega_K(\TT; \GG)$ that is defined in \cref{lem:kruskal gid}.
This implies that $(\bo\pi, \NN_1, \NN_2, \NN_3)$ and $(\tilde{\bo\pi}, \tilde{\NN}_1, \tilde{\NN}_2, \tilde{\NN}_3)$ are identical up to a common column permutation $\mathfrak{S} \in S_{\{0,1\}^K}$. In other words, for any $\aaa \in \{0,1\}^K$, the $\aaa$th column in $\NN_a$ is identical to the $\mathfrak{S}(\aaa)$th column of $\tilde{\NN}_a$ and $\pi_{\aaa} = \tilde{\pi}_{\mathfrak{S}(\aaa)}$.

By considering the row of $\NN_a$ that corresponds to the set $S_{l_j,j}$, we have
\begin{align}
    \PP_{j,\aaa}(S_{l_j, j}) &=  \NN_a(\kappa_{\cdot}, \ldots, \kappa_{j-1}, l_{j}, \kappa_{j+1}, \ldots, \kappa_{\cdot}, \aaa) \label{eq:probability comparing} \\
    &= \tilde{\NN}_a(\kappa_{\cdot}, \ldots, \kappa_{j-1}, l_{j}, \kappa_{j+1}, \ldots, \kappa_{\cdot}, \mathfrak{S}(\aaa)) = \tilde{\PP}_{j,\mathfrak{S}(\aaa)}(S_{l_j,j}) \notag
\end{align}
for all $j, l_j \in [\kappa_j]$, and $\aaa$. Furthermore, note that this identity holds for any finite subset $\mathcal{D}_j$ of $\mathcal{C}_j$, as the column indices of $\tilde{\NN}_a$ are determined by the alternative model and do not depend on the discretization $\mathcal{D}_j$. Hence, \eqref{eq:probability comparing} holds for all $S_{l_j,j} \in \mathcal{C}_j$. Since $\mathcal{C}_j$ is a separating class, we have $\PP_{j,\aaa} = \tilde{\PP}_{j,\mathfrak{S}(\aaa)}$. Recalling that $\PP_{j,\aaa} = \text{ParFam}_j(\eta_{j,\aaa}, \gamma_j)$ for some identifiable parametric family (see \eqref{eq:p_j,aaa def}), we must have $\eta_{j,\aaa} = \tilde{\eta}_{j,\mathfrak{S}(\aaa)}$ and $\gamma_j = \tilde{\gamma}_j$ for all $j, \aaa$. 

Now, additionally assuming that $\TT \not \in \mathcal{N}_2$, we can apply \cref{lem:sign flip}. Here, $\mathcal{N}_2$ is the null set defined in \cref{lem:sign flip}. Then, we have $(\BB, \GG) \sim_K (\tilde{\BB}, \tilde{\GG})$. Also, because $\aaa \sim_K \mathfrak{S}(\aaa)$, $\tilde{\pi}_{\mathfrak{S}(\aaa)} = \pi_{\aaa}$ implies $\tilde{\bo\pi} \sim_K \bo\pi$ and the proof is complete.
\end{proof}

We finally present the postponed proof of Lemmas \ref{lem:kruskal gid} and \ref{lem:sign flip}. While the main proof idea of \cref{lem:kruskal gid} is similar to that of Theorem 2 in \cite{lee2024new}, we provide a detailed proof for the sake of completeness.

\begin{proof}[Proof of \cref{lem:kruskal gid}]
For a matrix $\NN$, let $rk_k(\NN)$ be the Kruskal column-rank of $\NN$, that is, the largest integer $r$ such that any $r$ columns of $\NN$ are linearly independent.
We claim that it suffices to show that 
\begin{align}\label{eq:Kruskal rank checking}
    rk_k(\NN_1) = 2^K, ~rk_k(\NN_2) = 2^K, ~rk_k(\NN_3) \ge 2
\end{align}
for generic parameters in $\Omega(\TT; \GG) \setminus \mathcal{N}$. Assuming this, one can apply Kruskal's Theorem \citep{kruskal1977three}, which guarantees the uniqueness of the three-way tensor decomposition of $\mathbf{P}_0$ up to a universal column permutation and gives the desired result. Along the way, we show that the candidate set of non-identifiable parameters, $\mathcal{N}$, only imposes restrictions on $\BB$ and $\ggamma$ but not on $\bo\pi$. 

For the remainder of the proof, let $\bbeta_{I_1, I_2}$ denote the sub-matrix of the $J \times (K+1)$ coefficient matrix $\bbeta$ whose rows and columns are indexed by $I_1$ and $I_2$, respectively. Similarly, $\ggamma_{I_1}$ denotes the sub-vector of $\ggamma$ by collecting the entries indexed by $I_1$.

\paragraph{Proof of $rk_k(\NN_1) = 2^K$.} 
First, write the parameter space $$\Omega(\bbeta_{1:K, 0:K}, \ggamma_{1:K}; \GG_1) = \{\bbeta_{1:K, 0:K}, \ggamma_{1:K}: \beta_{j,k} \neq 0 \text{ for } g_{j,k} = 1\}$$ as a finite union of open, connected subsets of Euclidean space $\mathbb{R}^{\sum_{j,k \le K} g_{j,k}} \times {\mathbb{R}}^K$. Without loss of generality, let $$\Omega_{\text{positive}}(\bbeta_{1:K, 0:K}, \ggamma_{1:K}; \GG_1) := \{\bbeta_{1:K, 0:K}, \ggamma_{1:K}: \beta_{j,k} > 0 \text{ for } g_{j,k} = 1 \}$$ be our domain.
Here, we consider $\NN_1 = \NN_1(\bbb_{1:K,0:K}, \ggamma_{1:K})$ to be a matrix-valued function of $(\bbb_{1:K,0:K}, \ggamma_{1:K})$. 
Because $\NN_1$ has full column rank if and only if $\det (\NN_1^{\top} \NN_1) \neq 0$, it suffices to show that 
$$\{ \bbeta_{1:K, 0:K} \in \Omega_{\text{positive}}(\bbeta_{1:K, 0:K}, \ggamma_{1:K}; \GG_1) : \det (\NN_1^{\top} \NN_1) = 0 \}$$ is a measure-zero set in $\Omega(\bbeta_{1:K, 0:K}, \ggamma_{1:K}; \GG_1 )$. Note that the following argument also holds for other connected sub-domains of $\Omega(\bbeta_{1:K, 0:K}, \ggamma_{1:K}; \GG_1)$, and the union of a finite number of measure-zero sets are still measure-zero.

In particular, when $\GG_1 = \I_K$, the proof of Theorem 1 in \cite{lee2024new} showed that $\NN_1(\bbb_{1:K,0:K}, \ggamma_{1:K})$ always have full column rank. We use technical real analysis arguments to claim that this statement can be generalized to an arbitrary $\GG_1$ that satisfies $\diag(\GG_1) = \I_K$.
Consider the mapping
\begin{equation}\label{eq:determinent}
(\det (\NN_1^{\top} \NN_1)) (\bbb_{1:K,0:K}, \ggamma_{1:K}) : \Omega(\bbb_{1:K,0:K}, \ggamma_{1:K}; \GG_1) \rightarrow \mathbb{R}.
\end{equation}
Observe that $\det (\NN_1^{\top} \NN_1)$ defined in \eqref{eq:determinent} is a polynomial of entries of $\NN_1$. Each entry in $\NN_1$ can be written in the form of 
$$\NN_1((l_1, \ldots, l_K), \aaa) = \prod_{j=1}^K \int_{S_{l_j, j}} p_j(y_j ; g_j(\beta_{j,0}+\sum_k \beta_{j,k}\alpha_k), \gamma_j) m(dy_j).$$
Since we assume that the density $p_j$ and link function $g_j$ are analytic, each entry of $\NN_1$ is analytic. Consequently, $\det (\NN_1^{\top} \NN_1)$ is also analytic.

Next, note that $\det (\NN_1^{\top} \NN_1)$ cannot be identically zero, in other words, there exists $(\bbb^{\star}_{1:K,0:K}, \ggamma^{\star}_{1:K})$ such that $\det (\NN_1^{\top} \NN_1)(\bbb^{\star}_{1:K,0:K}, \ggamma^{\star}_{1:K}) \neq 0$. 
This is because one can make small perturbations from a parameter value in $\Omega(\bbb_{1:K,0:K}, \ggamma_{1:K}; \I_K)$ by setting $\beta_{j,k} = \epsilon$ for indices with $g_{j,k} = 1$ without making the determinant become zero.
Hence, by the following technical Lemma, we conclude that $$\{\bbb_{1:K,0:K}, \ggamma_{1:K} \in \Omega(\bbb_{1:K,0:K}, \ggamma_{1:K}; \GG_1): (\det (\NN_1^{\top} \NN_1))(\bbb_{1:K,0:K}, \ggamma_{1:K}) = 0 \}$$ is a null set in $\Omega(\bbb_{1:K,0:K}, \ggamma_{1:K}, \GG_1)$. The conclusion $rk_k(\NN_2) = 2^K$ automatically follows.

\begin{lemma}[\citet{mityagin2020zero}]
    Let $f: \Omega \rightarrow \mathbb R$ be a real analytic function defined on an open, connected domain $\Omega \in \mathbb R^d$ that is not identically zero. Then, $\{\omega \in \Omega: f(\omega) = 0\}$ is a measure-zero set in $\Omega$.
\end{lemma}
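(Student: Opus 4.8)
The final statement is the classical fact that the zero set of a nonzero real-analytic function on a connected open set is Lebesgue-null, and the plan is to reduce it to one-dimensional fibers. Since $\Omega$ is second countable and $Z:=\{f=0\}$ is relatively closed in $\Omega$, hence Borel and Lebesgue measurable, it suffices to produce, for each $p_0\in\Omega$, an open neighborhood $U\subseteq\Omega$ with $\lambda_d(Z\cap U)=0$, where $\lambda_d$ is Lebesgue measure; covering $\Omega$ by countably many such neighborhoods then finishes. If $f(p_0)\neq 0$ this is immediate, so I would assume $f(p_0)=0$.

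The one place where connectedness is genuinely used is the claim that $f$ vanishes to \emph{finite} order at every point of $\Omega$. To see this I would consider $A:=\{p\in\Omega:\ \partial^{\alpha}f(p)=0\ \text{for every multi-index }\alpha\}$: it is closed as an intersection of zero sets of continuous functions, and it is open because if all derivatives of $f$ vanish at $p$ then $f$ coincides near $p$ with its (identically zero) Taylor series, so a whole neighborhood of $p$ lies in $A$. As $\Omega$ is connected and $f\not\equiv 0$, we get $A=\emptyset$; thus for every $p$ there is a least integer $m=m(p)\ge 1$ with some order-$m$ derivative nonzero at $p$ while all lower-order derivatives vanish.

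Now fix the above $p_0$, with order $m\ge 1$. The degree-$m$ homogeneous part of the Taylor expansion of $f$ at $p_0$ is a nonzero homogeneous polynomial, hence nonzero along some direction; applying an orthogonal linear change of coordinates (a measure-preserving diffeomorphism, so harmless for the conclusion) I would arrange that this direction is $e_d$, which gives $\partial_{x_d}^{m}f(p_0)\neq 0$ while $\partial_{x_d}^{j}f(p_0)=0$ for $j<m$. Set $h:=\partial_{x_d}^{m-1}f$; then $h(p_0)=0$ and $\partial_{x_d}h(p_0)=\partial_{x_d}^{m}f(p_0)\neq 0$, so on a small product box $U=U'\times I$ around $p_0$ the derivative $\partial_{x_d}h$ has constant sign, and therefore $x_d\mapsto h(x',x_d)$ is strictly monotone on $I$ for every fixed $x'\in U'$, hence has at most one zero there. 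Applying Rolle's theorem $m-1$ times: if $f(x',\cdot)$ had $m+1$ or more distinct zeros in $I$, then $\partial_{x_d}^{m-1}f(x',\cdot)=h(x',\cdot)$ would have at least two, a contradiction; so for every $x'\in U'$ the fiber $Z\cap(\{x'\}\times I)$ has at most $m$ points, in particular is $\lambda_1$-null.

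Finally, since $Z\cap U$ is measurable and every $x'$-fiber is null, Tonelli's theorem gives
\[
\lambda_d(Z\cap U)=\int_{U'}\lambda_1\!\big(Z\cap(\{x'\}\times I)\big)\,dx'=0,
\]
and a countable subcover of $\Omega$ completes the proof. The main obstacle I anticipate is not any single hard estimate but the organization of the two reductions: verifying finite vanishing order at each point (the only essential use of connectedness and of analyticity, via ``$f$ equals its Taylor series locally'') and then setting up the linear change of coordinates together with the monotonicity/Rolle argument so that the fibers are provably finite; the one-variable isolated-zeros theorem is not even needed, and the Tonelli step is routine.
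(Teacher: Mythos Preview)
Your proof is correct. The paper does not supply its own proof of this lemma; it simply quotes the result from \cite{mityagin2020zero} and applies it. Your argument---the open/closed decomposition to rule out infinite-order vanishing, the rotation to a preferred axis followed by the Rolle/monotonicity bound on fiberwise zeros, and Tonelli to integrate out---is a clean self-contained route and is essentially the standard one; Mityagin's note proceeds in the same spirit (reduction to one-dimensional fibers via the isolated-zeros property). One small remark: you could equally well skip the explicit Rolle count and just invoke that a nonzero one-variable analytic function has isolated zeros on each fiber (after first checking, via your finite-order argument, that the fiber function is not identically zero on a small enough box), but your version has the virtue of giving a uniform finite bound $m$ on the number of zeros per fiber, which makes the Tonelli step entirely elementary.
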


\paragraph{Proof of $rk_k(\NN_3) \ge 2$.}
Theorem 1 in \cite{lee2024new} showed that $rk_k(\NN_3) \ge 2$ under condition B. Hence, it suffices to show that condition B holds for generic parameters in $\Omega(\bbeta_{2K+1:J, 1:K}; \GG_3)$.
Fix any $\aaa \neq \aaa'$, and let $l = l(\aaa, \aaa')$ be an index among $[K]$ such that $\alpha_l \neq \alpha_l'$. Since we assume that all columns of $\GG_3$ are nonzero, there exists $j = j(\aaa, \aaa') > 2K$ such that $g_{j,l} = 1$. As $\beta_{j,l}(\alpha_l - \alpha_l') \neq 0$, 
$$\{\bbeta_{j,1:K} \in \Omega(\bbeta_{j,1:K}; \mathbf{g}_j): \sum_{k=1}^K \beta_{j,k} g_{j,k} (\alpha_k - \alpha_k') = 0 \}$$
is a measure-zero set in $\Omega(\bbeta_{j,1:K}; \mathbf{g}_j)$. Consequently, 
$$\Phi_{\aaa, \aaa'} := \{\bbeta_{2K+1:J, 1:K} \in \Omega(\bbeta_{2K+1:J, 1:K}; \GG_3) : \sum_{k=1}^K \beta_{j,k} g_{j(\aaa, \aaa'), k} (\alpha_k - \alpha_k') = 0 \}$$
is a measure-zero set in $\Omega(\bbeta_{2K+1:J, 1:K}; \GG_3)$. The proof is complete by taking a union over all $\aaa \neq \aaa'$.
\end{proof}

\begin{proof}[Proof of \cref{lem:sign flip}]
    We first assume that the true parameter $\BB$ belongs in the coefficient space 
    $$\Omega^g(\BB ; \GG) := \{\BB: \text{for each $j$, all elements of the form } \sum_{k \in H_j} \beta_{j,k} a_k, a_k = 0, 1, \text{ are distinct}\}.$$
    It is clear that the complement $\Omega(\BB ; \GG) \setminus \Omega^g(\BB ; \GG)$ can be spelled out as a finite union of lower dimensional subspaces, and has measure-zero with respect to $\Omega(\BB ; \GG)$. The following proof is somewhat technical, but the main idea is to show that one can entirely recover $\BB, \GG$, $\mathfrak{S}$ up to label switching, based on the \emph{values} of $\eta_{j,\aaa}$ while \emph{not using the indices} $\aaa$. 
    Steps 1 and 2 characterizes the coefficients $\BB$ based on the values $\{\eta_{j,\aaa}\}_{\aaa}$, which allows us to prove equivalence of $\BB, \GG$ and $\tilde{\BB}, \tilde{\GG}$ in step 3.

    \paragraph{Step 1: recovering $|H_j|$ and $|\BB|$.}
    We first claim that for each $j$, there uniquely exists an integer $I_j$, coefficients $c_{j,0}$, $c_{j,1} > \ldots > c_{j,I_j} > 0$ that satisfy
    \begin{align}\label{eq:eta_j decomposition}
        \eta_{j,\aaa} = \beta_{j,0} + \sum_{l \in H_j} \beta_{j,l} \alpha_l = c_{j,0} + \sum_{k \le I_j} c_{j,k} (\mathfrak{T}_j(\aaa))_k
    \end{align}
    for all $\aaa$ and some permutation $\mathfrak{T}_j \in S_{\{0,1\}^{K}}$.
    Here, we also prove that that $I_j = |H_j|$.
    
    The \emph{existence} directly follows as we can take $I_j = |H_j|$ and define $\{c_{j,k}: k \le I_j\}$ to be the coefficients $\{|\beta_{j,l}|: l \in H_j\}$ in decreasing order. Then, for each $k \le I_j$, there must be an $l_k \in H_j$ such that $c_{j,k} = |\beta_{j,l_k}| > 0$.
    Consequently, we can construct the permutation $\mathfrak{T}_j$ by setting $$(\mathfrak{T}_j(\aaa))_k := \begin{cases}
        \alpha_{l_k} & \text{if  } \beta_{j,l_k} > 0\\
        1 - \alpha_{l_k} & \text{if  } \beta_{j,l_k} < 0,
    \end{cases}$$
    for $k \le I_j$ and arbitrarily defining the remaining coordinates. By plugging in these definitions, we get
    \begin{align*}
        \sum_{k \le I_j} c_{j,k} (\mathfrak{T}_j(\aaa))_k &= \sum_{l \in H_j} \text{sgn}(\beta_{j,l}) |\beta_{j,l}| \alpha_{l} + \sum_{l \in H_j, \beta_{j,l}<0} |\beta_{j,l}| \\
        &= \sum_{l \in H_j} \beta_{j,l} \alpha_l - \sum_{l \in H_j, \beta_{j,l}<0} \beta_{j,l}.
    \end{align*}
    Hence, \eqref{eq:eta_j decomposition} holds for    $c_{j,0} := \beta_{j,0} + \sum_{l \in H_j, \beta_{j,l}<0} \beta_{j,l}$.

    For \emph{uniqueness}, we provide an inductive characterization of the $c_{j,k}$s based on the values $U_j := \{\eta_{j,\aaa}: \aaa \in \{0,1\}^K\}$. First, $c_{j,0}$ must be the minimum of the set $U_j$. Next, we define $c_{j,I_j}$ by noting that $c_{j,0} + c_{j,I_j}$ should be the smallest element in $\{\eta_{j,\aaa}: \eta_{j,\aaa} > c_{j,0}\}$. Inductively for each $k < I_j$, given $c_{j,0}$ and $c_{j,k+1}, \ldots, c_{j,I_j}$, $c_{j,0} + c_{j,k}$ must be the smallest element in $U_j \setminus \{c_{j,0}+\sum_{l > k} c_{j,l} a_l : a_l = 0, 1 \}$. We continue the induction until the set $U_j \setminus \{c_{j,0}+\sum_{l > k} c_{j,l} a_l : a_l = 0, 1 \}$ is empty. 
    By considering the true parameterization and the fact that $\BB \in \Omega^g(\BB;\GG)$, we have $|U_j| = 2^{|H_j|}$, and hence $|H_j| = I_j$.
    Additionally, this observation gives $c_{j,1} > \ldots > c_{j, |H_j|} > 0$.

    \paragraph{Step 2: Recovering the indexing $\aaa$.}
    While we have recovered the absolute values of the coefficients $|\beta_{j,k}|$ in Step 1, they are ordered based on their absolute values. Thus, the ordering is different across different $j$. In this step, we fully recover $\beta_{j,k}$ up to a column permutation $\tau \in S_{[K]}$, by representing each $\beta_{j,k}$ in terms of $\eta$ and $c$-values.
    To this end, we inductively construct sets $T_{j,k}, V_{j,k} \subseteq [2^K]$ for all $j \in [J]$ and $k \le |H_j|$. We claim that they correspond to ${\aaa}$s with identical ${\alpha}_{\sigma_j(k)}$ values, for some injective mapping $\sigma_j:[|H_j|] \to [K]$ (see \eqref{eq:t_jk v_jk} for a precise statement). This claim will be crucially utilized to recover the coefficients $\beta_{j,k}$.
    
    First, fix $j \in [J]$, $k \le |H_j|$, and define $T_{j,k}^{(1)} = \{\aaa: \eta_{j, \aaa} = c_{j,0}\}$ and $V_{j,k}^{(1)} = \{{\aaa}: \eta_{j,{\aaa}} = c_{j,0} + c_{j,k}\}$. Since there exists a unique index $l_k$ such that $|\beta_{j, l_k}| = c_{j,k} > 0$, these two sets have distinct $\alpha_{l}$ values if and only if $l = l_k$. Define $\sigma_j(k)$ as this $l_k$. Since $l_k \neq l_{k'}$ for $k \neq k'$, $\sigma_j$ is injective. Without the loss of generality, suppose $\beta_{j,l_k} > 0$. This is only for the sake of explicitly characterizing the sets $T_{j,k}$ and $V_{j,k}$, and does not affect their construction. Under this assumption, $\aaa \in T_{j,k}^{(1)}$ must satisfy $\alpha_{\sigma_j(k)} = 0$, $\aaa \in V_{j,k}^{(1)}$ must satisfy $\alpha_{\sigma_j(k)} = 1$. Also, as the values of $\alpha_l$ for $l \not \in H_j$ does not change the value of $\eta_{j,\aaa}$, we have $|T_{j,k}^{(1)}| = |V_{j,k}^{(1)}| = 2^{K-|H_j|}$.
    
    Inductively for $t > 1$, let
    \begin{align}
        \mathcal{T}_{j,k}^{[t]} &:= \{\aaa \not \in T_{j,k}^{(t-1)}, V_{j,k}^{(t-1)}: \eta_{j,\aaa} = \min_{\aaa' \not \in T_{j,k}^{(t-1)}, V_{j,k}^{(t-1)}} \eta_{j,\aaa'} \}, \label{eq:T_j,k} \\
        \mathcal{V}_{j,k}^{[t]} &:= \{\aaa \not \in T_{j,k}^{(t-1)}, V_{j,k}^{(t-1)}: \eta_{j,\aaa} = \min_{\aaa' \not \in T_{j,k}^{(t-1)}, V_{j,k}^{(t-1)}} \eta_{j,\aaa'} + c_{j,k} \}, \label{eq:V_j,k}
    \end{align}
    and define 
    $T_{j,k}^{[t]} = T_{j,k}^{(t-1)} \cup \mathcal{T}_{j,k}^{[t]}$ and $V_{j,k}^{[t]} = V_{j,k}^{(t-1)} \cup \mathcal{V}_{j,k}^{[t]}$. By the construction in \eqref{eq:T_j,k} and \eqref{eq:V_j,k}, again because $\beta_{j, \sigma_j(k)} > 0$,
    we must have $\alpha_{\sigma_j(k)} = 0$ for $\aaa \in \mathcal{T}_{j,k}^{[t]} $, and $\alpha_{\sigma_j(k)} = 1$ for $\aaa \in \mathcal{V}_{j,k}^{[t]} $. We also have $|\mathcal{T}_{j,k}^{[t]}| = |\mathcal{V}_{j,k}^{[t]}| = 2^{K-|H_j|}$, which inductively gives $|T_{j,k}^{[t]}| = |V_{j,k}^{[t]}| =  2^{K-|H_j|} t$.
    We continue the construction until $T_{j,k}^{[t]} \cup V_{j,k}^{[t]} = \{0,1\}^K$, that is when $t = 2^{|H_j|-1}$. 
    
    Finally, we define $T_{j,k} := T_{j,k}^{\left(2^{|H_j|-1}\right)}$ and $V_{j,k} := V_{j,k}^{\left(2^{|H_j|-1}\right)}$ as the final induction outputs. 
    Then, $\{T_{j,k}, V_{j,k}\}$ is a partition of $\{0,1\}^K$ with equal cardinality, where $T_{j,k} = \{\aaa: \alpha_{\sigma_j(k)} = 0\}$ and $V_{j,k} = \{\aaa: \alpha_{\sigma_j(k)} = 1\}$. In general, without the positivity assumption $\beta_{j,\sigma_j(k)} > 0$, we can conclude that 
    \begin{align}\label{eq:t_jk v_jk}
        \{T_{j,k}, V_{j,k}\} = \big\{\{\aaa: \alpha_{\sigma_j(k)} = 0\}, \{\aaa: \alpha_{\sigma_j(k)} = 1\} \big\},
    \end{align}
    as an \emph{unordered} set.
    As all columns of $\GG$ are not empty (see Assumption \ref{assmp:proportion} (b)), for each $l$, we must have at least one $j$ such that $g_{j,l} = 1$.
    Hence, the set of all possible partitions $\{\{T_{j,k}, V_{j,k}\}:j \in [J], k \in [|H_j|]\}$ must take exactly $K$ distinct values. Let us index each element by $\{T_l, V_l\}$ for ${l \in [K]}$, and let $\tau \in S_{[K]}$ be a permutation such that 
    \begin{align}\label{eq:t_l,v_l}
        \{T_l, V_l\} = \big\{\{\aaa: \alpha_{\tau(l)} = 0\}, \{\aaa: \alpha_{\tau(l)} = 1\} \big\}.
    \end{align}
    Without loss of generality, suppose that $T_l$ and $V_l$ are defined so that the mean of the vector $(\eta_{j,\aaa} : \aaa \in T_l)$ is strictly smaller than the mean of $(\eta_{j,\aaa} : \aaa \in V_l)$.
    Then, the monotonicity assumption Assumption \ref{assmp:proportion}(c) implies that
    \begin{align}\label{eq:t_l,v_l characterization}
        T_l = \{\aaa: \alpha_{\tau(l)} = 0\}, \quad V_l = \{\aaa: \alpha_{\tau(l)} = 1\},
    \end{align}
    and resolves the sign-flipping for the latent variable $A_{\tau(l)}$.  
    Next, to recover the parameter $\beta_{j,\tau(l)}$ for each $j$ and $l$, we claim that
    \begin{enumerate}[(a)]
        \item if $\{T_l, V_l\} \not\in \{\{T_{j,k}, V_{j,k}\}: k \le |H_j| \}$, $\beta_{j,\tau(l)} = 0$. 
        \item if $\{T_l, V_l\} = \{T_{j,k}, V_{j,k}\}$ for some $ k \le |H_j|$, $|\beta_{j,\tau(l)}| = c_{j,k}$ and $$\text{sgn}(\beta_{j,\tau(l)}) = \begin{cases}
            1 & \text{ if  } T_{l} = T_{j,k}, \\
            -1 & \text{ if  } T_{l} = V_{j,k}.
        \end{cases}$$
    \end{enumerate}
    Here, to show (a), suppose $\{T_l, V_l\} \not\in \{\{T_{j,k}, V_{j,k}\}: k \le |H_j| \}$. Then, by the characterization in \eqref{eq:t_jk v_jk} and \eqref{eq:t_l,v_l}, we must have $\sigma_j(k) \neq \tau(l)$ for all $k \le |H_j|$. Recalling that $\sigma_j$ collects all indices such that $|\beta_{j, \sigma_j(k)}| > 0$, we must have $\beta_{j,\tau(l)} = 0$. 
    To prove part (b), note that the assumption implies $\sigma_j(k) = \tau(l)$, so $|\beta_{j,\tau(l)}| = |\beta_{j,\sigma_j(k)}| = c_{j,k}$. The claim regarding the sign holds because $T_{j,k}$ has a smaller average of $\eta_{j, \cdot}$ values than $V_{j,k}$ by construction. Thus, $T_l = T_{j,k}$ if and only if $\beta_{j,\tau(l)} > 0$.
    
    \paragraph{Step 3: explicit representation of all parameters.}
    Finally, we prove the desired result by applying the same characterization of the parameters under the alternative values $\tilde{\eta}_{j,\tilde{\aaa}}$. Here, note that $\BB \in \Omega^g(\BB; \GG)$ implies that $\tilde{\BB} \in \Omega^g(\BB; \GG)$.
    We apply steps 1-2 using $\tilde{\eta}_{j,\tilde{\aaa}}$ instead of $\eta_{j,\aaa}$.
    Step 1 holds without any change, and we obtain the same coefficients $c_{j,k}$s as well as $|H_j| = |\tilde{H}_j|$.
    Next, in step 2, the definition of $T_{j,k}, V_{j,k}$ in \eqref{eq:T_j,k}, \eqref{eq:V_j,k} needs to be modified to be the collection of $\tilde{\aaa}$s instead of $\aaa$s. Let $\tilde{T}_{j,k}$ and $\tilde{V}_{j,k}$ be the corresponding sets under the alternative parametrization. As the coefficients $c_{j,k}$ are identical, $({T}_{j,k}, V_{j,k})$ and $(\tilde{T}_{j,k}, \tilde{V}_{j,k})$ are consistent in the sense that 
    \begin{align}\label{eq:t_jk tilde characterization}
        \tilde{T}_{j,k} = \mathfrak{S}({T}_{j,k}), \quad \tilde{V}_{j,k} = \mathfrak{S}({V}_{j,k}).
    \end{align}
    In other words, by viewing $T_{j,k}$ as the collection of \emph{indices} $\aaa$ in $\eta_{j,\aaa}$, $\tilde{T}_{j,k}$ and $T_{j,k}$ are the same collection of indices. 
    Thus, in terms of defining $\tilde{T}_l$ and $ \tilde{V}_l$, we can take the same indexing for $l \in [K]$ as in \eqref{eq:t_l,v_l} by letting $\tilde{T}_l := \mathfrak{S}(T_l)$ and $\tilde{V}_l := \mathfrak{S}(V_l)$. Hence, for some permutation $\tilde{\tau} \in S_{[K]}$, we must have 
    \begin{align}\label{eq:tilde t_l characterization}
        \tilde{T}_l &= \{\tilde{\aaa}: \tilde{\alpha}_{\tilde{\tau}(l)} = 0\} = \mathfrak{S}(\{\aaa: \alpha_{\tau(l)} = 0\}) = \mathfrak{S}(T_l), \\ 
        \quad \tilde{V}_l &= \{\tilde{\aaa}: \tilde{\alpha}_{\tilde{\tau}(l)} = 1\} = \mathfrak{S}(\{\aaa: \alpha_{{\tau}(l)} = 1\}) = \mathfrak{S}(V_l).
    \end{align}

    Now, we finish the proof by proving $\BB \sim \tilde{\BB}$ and $\aaa \sim \mathfrak{S}(\aaa)$. By taking $\sigma := \tilde{\tau} \cdot \tau^{-1}$ in the definition of the equivalence relations, it suffices to show $\beta_{j,\tau(l)} = \tilde{\beta}_{j, \tilde{\tau}(l)}$ for all $j,l$, and $\alpha_{\tau(l)} = (\mathfrak{S}(\aaa))_{\tilde{\tau}(l)}$ for all $\aaa, l$.
    The conclusion for $\beta_{j,\tau(l)}$ follows from its characterization in Step 2. For any $j, l$, the relationship \eqref{eq:t_jk tilde characterization} and \eqref{eq:tilde t_l characterization} shows that $\{T_l, V_l\} \in \{\{T_{j,k}, V_{j,k}\}: k \le |H_j| \}$ if and only if $\{\tilde{T}_l, \tilde{V}_l\} \in \{\{\tilde{T}_{j,k}, \tilde{V}_{j,k}\}: k \le |H_j| \}$. For case (a), we have $\beta_{j,\tau(l)} = \tilde{\beta}_{j,\tilde{\tau}(l)}  = 0$. For case (b), we have $|\beta_{j,\tau(l)}| = c_{j,k} = |\tilde{\beta}_{j,\tilde{\tau}(l)}|$ as well as $\text{sgn}(\beta_{j,\tau(l)}) = \text{sgn}(\tilde{\beta}_{j,\tilde{\tau}(l)})$, thus $\beta_{j,\tau(l)} = \tilde{\beta}_{j,\tilde{\tau}(l)}$.
    We next show the claim for $\alpha$. Without loss of generality, suppose $\alpha_{\tau(l)} = 0$. Then, by applying \eqref{eq:t_l,v_l characterization} and \eqref{eq:tilde t_l characterization}, we have $\mathfrak{S}(\aaa) \in \tilde{T}_l$ and $(\mathfrak{S}(\aaa))_{\tilde{\tau}(l)} = 0$. Similarly, for $\alpha_{\tau(l)} = 1$, we get $(\mathfrak{S}(\aaa))_{\tilde{\tau}(l)} = 1$ and the proof is complete.
\end{proof}

\subsection{Technical Conditions and Proof of Theorem \ref{thm:estimation consistency}}\label{subsec:proof of consistency}
We first state the technical assumptions regrading the penalty function $p_{\lambda_N,\tau_N}$ and tuning parameters $(\lambda_N,\tau_N)$ that are imposed for \cref{thm:estimation consistency}.
For some tuning parameters $\lambda_N, \tau_N > 0$, $p_{\lambda_N,\tau_N}:\mathbb{R}\to [0,\infty)$ is a sparsity-inducing symmetric penalty that is nondecreasing on $[0, \infty)$, nondifferentiable at $0$, differentiable at $(0, \tau_N)$, $p_{\lambda_N,\tau_N} \propto {\lambda_N}/{\tau_N}$ around $0$ and satisfy
\begin{align*}
    p_{\lambda_N,\tau_N} (b) = 0, ~\text{if} ~ b = 0; \quad
    p_{\lambda_N,\tau_N}' (b) \le \frac{C\lambda_N}{\tau_N},~ \text{if} ~ |b| \le \tau_N; \quad
    p_{\lambda_N,\tau_N} (b) = \lambda_N, \quad \text{if} ~ |b| \ge \tau_N.
\end{align*}
Note that $\lambda_N$ is the magnitude of the penalty, and $\tau_N$ is the point of truncation. We also assume that $\lambda_N$ and $\tau_N$ depend on $N$ such that ${1}/{\sqrt{N}} \ll \tau_N \ll {\lambda_N}/{\sqrt{N}} \ll 1.$ Here, for nonnegative sequences $\{(a_N, b_N)\}_{N\ge 1}$, we write $a_N \ll b_N$ when ${a_N}/{b_N} \to 0$ as $N \to \infty$.

\begin{proof}
For simplicity, we omit displaying the dependence of data in the likelihood $\ell(\TT) = \ell(\TT \mid \YY)$ and omit the subscript in the equivalence relation $\sim_{\mck}$.
We also define the objective function in \eqref{eq:penalized optimization} as 
$$Q_N(\TT) := \frac{1}{N} \Big[- \ell (\TT) + \sum_{d=1}^D p_{\lambda_N,\tau_N} (\B^{(d)}) \Big].$$
Below, we separately prove the consistency of the continuous parameters $\TT$ and discrete parameters $\mcg$. {In the proof, we use the usual Bachmann-Landau asymptotic notations for deterministic sequences as well as its analog for random sequences.}

We first show that $\hat{\TT}$ is consistent up to label permutations. This follows by modifying the usual consistency proof of M-estimators \citep{van2000asymptotic} under our identifiability notion. Fix $\epsilon > 0$, and define the pointwise limit of $Q_N(\TT)$ as 
$$Q_\infty (\TT) := - \E_{\TT^\star} \log \PP (\YY \mid \TT).$$
Define the $\epsilon$-ball around $\TT^\star$ under the equivalence relation $\sim$ as 
$$B_\sim (\epsilon, \TT^\star) := \{\TT: \lVert \TT' - \TT^\star \rVert \le \epsilon \text{ for some } \TT' \sim \TT\}.$$
Also, define the constant $\eta:= \inf_{\TT \not \in B_\sim (\epsilon, \TT^\star)} Q_\infty (\TT) - Q_\infty(\TT^\star)$. Since we assume the model identifiability up to $\sim$, we must have $Q_\infty (\TT) - Q_\infty(\TT^\star) = \text{KL}\big(\PP(\cdot \mid \TT) \| \PP(\cdot \mid \TT^\star) \big) > 0$ for all $\TT \not\sim \TT^\star$. Here, KL denotes the Kullback–Leibler divergence.
Then, we get $\eta > 0$, as we are considering a compact parameter space. 
By a standard argument, we have
\begin{align}
    \PP( \min_{\TT' \sim \hat{\TT}} &\lVert \TT'-\TT^\star \rVert > \epsilon) \le \PP\big(\min_{\TT \not \in B_\sim (\epsilon, \TT^\star)} Q_N(\TT) \le Q_N(\TT^\star) \big) \label{eq:1st line lemma} \\
    &\le \PP\big(\min_{\TT \not \in B_\sim (\epsilon, \TT^\star)} Q_N(\TT) \le Q_N(\TT^\star) , ~ \sup_{\TT} \big|Q_N(\TT) - Q_\infty(\TT) \big| < \frac{\eta}{2} \big) + o(1). \label{eq:2nd line lemma}
\end{align}
Here, \eqref{eq:1st line lemma} uses the definition that $\hat{\TT}$ is a minimizer of $Q_N(\TT)$.
The inequality in \eqref{eq:2nd line lemma} follows from the uniform law of large numbers, which holds under a compact parameter space and a vanishing penalty with $\lambda_N = o(N)$. Now, the proof is complete by noting that the first term in \eqref{eq:2nd line lemma} is zero, since $\TT \not \in B_\sim (\epsilon, \TT^\star)$ and $\sup_{\TT} |Q_N(\TT) - Q_\infty(\TT)| < \frac{\eta}{2}$ implies
$$ Q_N(\TT) >  Q_\infty(\TT) - \frac{\eta}{2} \ge Q_\infty(\TT^\star) + \frac{\eta}{2} > Q_N(\TT^\star).$$
Hence, there exists some $\tilde{\TT} \sim \hat{\TT}$ that is consistent for $\TT^\star$.

Now, we prove that $\tilde{\TT}$ is $\sqrt{N}$-consistent, additionally using the assumption on the Fisher information. We first re-write the inequality $Q_N (\tilde{\TT}) \le Q_N (\TT^\star)$ as
\begin{equation}\label{eq:Q inequality}
    -\ell (\tilde{\TT}) + \ell(\TT^\star) \le p_{\lambda_N,\tau_N}(\B_0) -p_{\lambda_N,\tau_N}(\tilde{\B}).
\end{equation}
By a Taylor expansion, we can bound the LHS of \eqref{eq:Q inequality} as
\begin{align*}
    -\ell (\tilde{\TT} ) + \ell(\TT^\star) &= - (\tilde{\TT} - \TT^\star)^{\top} \ell'(\TT^\star) + \frac{1}{2} (\tilde{\TT} - \TT^\star)^{\top} (NI(\TT^\star)+o_p(N)) (\tilde{\TT} - \TT^\star) \\
    &\ge \|\tilde{\TT} - \TT^\star\|_2 O_p(\sqrt{N}) + N \|\tilde{\TT} - \TT^\star\|_2^2 \left( \frac{\lambda_{\min} (I(\TT^\star))}{2} + o_p(1) \right).
\end{align*}
Here, $\lambda_{\min} (I(\TT^\star)) > 0$ denotes the smallest eigenvalue of the  positive definite Fisher information $I(\TT^\star)$. 
On the other hand, the RHS of \eqref{eq:Q inequality} must be negative since $\tau_N \rightarrow 0$. Indeed, for $\beta_{l,k}^{(d)\star} \neq 0$ and a large enough $N$, consistency gives $|\tilde{\beta}_{l,k}^{(d)}| = |\beta_{l,k}^{(d)\star}| + o_p(1) > \tau_N$, and the bound
\begin{align*}
    p_{\lambda_N,\tau_N}(\B_0^{(d)}) -p_{\lambda_N,\tau_N}(\tilde{\B}^{(d)}) &\le \sum_{l,k:\beta_{l,k}^{(d)\star} \neq 0} \left[ p_{\lambda_N,\tau_N} (\beta_{l,k}^{(d)\star}) - p_{\lambda_N,\tau_N} (\tilde{\beta}_{l,k}^{(d)}) \right] = 0
\end{align*}
holds with high probability.
Hence, we have
$$\|\tilde{\TT} - \TT^\star\|_2 O_p(\sqrt{N}) + \frac{\lambda_{\min} (I(\TT^\star))}{2} N \|\tilde{\TT} - \TT^\star\|_2^2 \le 0$$
with high probability,
which is impossible when $\|\tilde{\TT} - \TT^\star\|_2 \gg \frac{1}{\sqrt{N}}$. Thus, $\|\tilde{\TT} - \TT^\star\|_2 = O_p \left( \frac{1}{\sqrt{N}} \right)$.

Finally, we prove the estimation consistency for the discrete graph structures $\GG^{(d)}$s.
It suffices to show that for any fixed $d, l, k$, $\tilde{g}_{l,k}^{(d)} = g_{0; l,k}^{(d)}$ with high probability.
As a first case, suppose that $g_{0; l,k}^{(d)} = 1$. Then, the consistency result implies $\tilde{\beta}_{l,k}^{(d)} \xrightarrow{p} \beta_{0; l,k}^{(d)} \neq 0$. Hence, $\tilde{\beta}_{l,k}^{(d)} \neq 0$ with high probability, so $\tilde{g}_{l,k}^{(d)} = 1$.
Next, consider the case when $g_{0; l,k}^{(d)} = 1$. Assume the converse, and suppose $\tilde{\beta}_{l,k}^{(d)} \neq 0$. By the $\sqrt{N}$-consistency and the assumption that $\tau_N \gg \frac{1}{\sqrt{N}}$, we must have $|\tilde{\beta}_{l,k}^{(d)}| \ll \tau_N$ with high probability. Then, the first-order conditions (KKT conditions) give that
$$\partial_{\beta_{l,k}^{(d)}} \ell(\tilde{\TT}) := \frac{\partial \ell(\TT)}{\partial \beta_{l,k}^{(d)}} \mid_{\TT = \tilde{\TT}} = p'_{\lambda_N, \tau_N}(\tilde{\beta}_{l,k}^{(d)}) = \Theta_p \left(\frac{\lambda_N}{\tau_N} \right).$$
But we have a contradiction because a Taylor expansion of the partial derivative gives
$$\partial_{\beta_{l,k}^{(d)}} \ell (\tilde{\TT}) = \partial_{\beta_{l,k}^{(d)}} \ell (\TT^\star) + N O_p(\tilde{\TT} - \TT^\star) = O_p \left( \sqrt{N}\right),$$
and $\sqrt{N} \ll {\lambda_N}/{\tau_N}.$ Hence, we must have $\tilde{g}_{l,k}^{(d)} = 0$, and the proof is complete.
\end{proof}

\begin{remark}\label{rem:mixture}
    One natural question is to whether our estimator would still be consistent when the number of latent variables are unknown. This extension is not straightforward since the number of the top-layer latent variable, $K^{(D)}$, determines the number of deepest mixture components of DDEs. Estimating the number of mixture components is a challenging problem even in simple parametric models, and often leads to a slower (than ${1}/{\sqrt{N}}$) rate of convergence in parameter estimation \citep{goffinet1992testing, ho2016convergence}. Note that for such cases, the Fisher information becomes singular, and \cref{thm:estimation consistency} cannot be applied. 
\end{remark}

\subsection{\darkblue{Identifiability of Generalized DDEs with More Complex Latent Structures}}\label{sec:local dependent}

As pointed out by a reviewer, the current architecture of DDEs do not allow cross-level edges, which may limit the models' representational power. Here, we illustrate that DDEs can still be identified, even under the presence of cross-level dependencies. 

Our main idea is to re-formulate multi-layer graphical structures with \emph{cross-level edges} into a structure with \emph{within-layer edges}. See \cref{fig:local dependence} for a visual illustration; where the latent variable $A_1$ with cross-level edges (in the left panel) is moved to the lower latent layer (in the right panel). Based on this re-formulation, we instead establish identifiability of multi-layer latent structures where within-layer arrows are permitted. In other words, layer-wise local dependence is allowed. 

\begin{figure}[h!]
\centering
\begin{minipage}{0.45\textwidth}
\centering
\resizebox{\textwidth}{!}{
    \begin{tikzpicture}[scale=1.5]
\tikzset{
    node/.style={circle, draw, minimum size=1.2cm, inner sep = 0pt, align=center,}
    shaded/.style={circle, draw, fill=gray!30, minimum size=1.2cm, inner sep = 0pt, align=center},
}

    \node (hb1)[draw, line width=2pt] at (0,-1) {$A_{1}$};
    \node (hb2)[hidden] at (2,-1) {$A_{2}$};
    
       
    \node (v1)[hidden] at (-1,-2.2) {$A_3$};
    \node (v2)[hidden] at (1,-2.2) {$A_4$};
    \node (v3)[hidden] at (3,-2.2) {$A_5$};

    \node (vv1)[neuron] at (-1.8,-3.4) {$Y_1$};
    \node (vv2)[neuron] at (-1,-3.4) {$Y_2$};
    \node (vv3)[neuron] at (-0.2,-3.4) {$Y_3$};
    \node (vv4)[neuron] at (0.6,-3.4) {$Y_4$};
    \node (vv5)[neuron] at (1.4,-3.4) {$Y_5$};
    \node (vv6)[neuron] at (2.2,-3.4) {$Y_6$};
    \node (vv7)[neuron] at (3.0,-3.4) {$Y_7$};
    \node (vv8)[neuron] at (3.8,-3.4) {$Y_8$};

    \draw[arr, red] (hb1) -- (v1);
    \draw[arr] (hb2) -- (v1);
    \draw[arr] (hb2) -- (v3);
    \draw[arr] (hb2) -- (v2);
    \draw[arr] (hb2) -- (v3);
    \draw[arr] (v1) -- (vv3);
    \draw[arr] (v1) -- (vv4);
    \draw[arr] (v2) -- (vv4);
    \draw[arr] (v2) -- (vv5);
    \draw[arr] (v2) -- (vv6);
    \draw[arr] (v3) -- (vv7);
    \draw[arr] (v3) -- (vv8);
    
    \draw[->, blue,dashed, bend right=70, thick] (hb1) to (vv1);
    \draw[->, blue,dashed, thick] (hb1) to (vv2);
    \draw[->, blue,dashed, thick] (hb1) to (vv3);

\end{tikzpicture}
}
\end{minipage}
\quad
\begin{minipage}{0.45\textwidth}
\centering
\resizebox{\textwidth}{!}{
    \begin{tikzpicture}[scale=1.5]
\tikzset{
    node/.style={circle, draw, minimum size=1.2cm, inner sep = 0pt, align=center,}
    shaded/.style={circle, draw, fill=gray!30, minimum size=1.2cm, inner sep = 0pt, align=center},
}

    \node (hb1)[draw, line width=2pt] at (-1,-2.2) {$A_{1}$};
    \node (hb2)[hidden] at (1,-1) {$A_{2}$};
    
    \node at (4.6,-1) {$d=2$};
    \node at (4.6,-2.2) {$d=1$};
    \node at (4.6,-3.4) {$d=0$};
       
    \node (v1)[hidden] at (0,-2.2) {$A_3$};
    \node (v2)[hidden] at (1,-2.2) {$A_4$};
    \node (v3)[hidden] at (2,-2.2) {$A_5$};

    \node (vv1)[neuron] at (-1.8,-3.4) {$Y_1$};
    \node (vv2)[neuron] at (-1,-3.4) {$Y_2$};
    \node (vv3)[neuron] at (-0.2,-3.4) {$Y_3$};
    \node (vv4)[neuron] at (0.6,-3.4) {$Y_4$};
    \node (vv5)[neuron] at (1.4,-3.4) {$Y_5$};
    \node (vv6)[neuron] at (2.2,-3.4) {$Y_6$};
    \node (vv7)[neuron] at (3.0,-3.4) {$Y_7$};
    \node (vv8)[neuron] at (3.8,-3.4) {$Y_8$};

    \draw[arr,red] (hb1) -- (v1);
    \draw[arr] (hb2) -- (v1);
    \draw[arr] (hb2) -- (v3);
    \draw[arr] (hb2) -- (v2);
    \draw[arr] (hb2) -- (v3);
    \draw[arr] (v1) -- (vv3);
    \draw[arr] (v1) -- (vv4);
    \draw[arr] (v2) -- (vv4);
    \draw[arr] (v2) -- (vv5);
    \draw[arr] (v2) -- (vv6);
    \draw[arr] (v3) -- (vv7);
    \draw[arr] (v3) -- (vv8);
    
    \draw[->, blue,dashed, thick] (hb1) to (vv1);
    \draw[->, blue,dashed, thick] (hb1) to (vv2);
    \draw[->, blue,dashed, thick] (hb1) to (vv3);

\end{tikzpicture}
}
\end{minipage}
    \caption{\textbf{Left}: Example graphical structure without within-level edges, but allowing cross-level edges. \textbf{Right}: Equivalent formulation by moving $A_1$ to the middle layer; now we have within-level edges $\Lambda^{(1)} = \{A_1 \to A_3\}$ instead of cross-level edges.}
    \label{fig:local dependence}
\end{figure}

\paragraph{Generalized DDEs.}
To formalize each layer in such \emph{generalized DDEs}, we assume that this multi-layer latent structure is ordered in a manner such that $$\pa(\ma_{1:K^{(d)}}^{(d)}) \setminus \ma_{1:K^{(d)}}^{(d)} = \ma_{1:K^{(d+1)}}^{(d+1)}, \quad \forall 0 \le d \le D-1,$$
where we write $\ma^{(0)} = \YY$ for notational simplicity.
That is, the parents of variables in the $d$th layer must belong either in the current ($d$th) or directly upper ($d+1$th) layer. 

Assume that there are no edges $Y_j \to Y_{j'}$ in the observed (bottom) layer, in other words, assume that local dependence occurs only among the latent variables. For each latent layer (indexed by $d$), assume without loss of generality that $\ma^{(d)}$ are indexed in a topological manner, so that there exist no arrows of the form $A_k^{(d)} \to A_{k'}^{(d)}$ for $k \ge k'$. Let $\Lambda^{(d)}$ denote the graph on the vertex set $\ma^{(d)}$; for example, $\Lambda^{(1)} = \{A_1 \to A_3\}$ in \cref{fig:local dependence}.
Also, for each $A_k^{(d)}$, extend the conditional probabilities in \eqref{eq:latent conditional distribution} by including $\delta$-parameters to model the dependence of the same ($d$th) layer variables:
\begin{align}\label{eq:conditional probability local dependence}
    A_k^{(d)} \mid \ma_{1:(k-1)}^{(d)}, \ma^{(d+1)} \sim \text{Ber}\Big(\gl(\beta_{k,0}^{(d+1)} + \sum_{l=1}^{K^{(d+1)}} \beta_{k,l}^{(d+1)} A_l^{(d+1)} + \sum_{k'=1}^{k-1} \delta_{k',k}^{(d)} A_{k'}^{(d)}) \Big).
\end{align}
For simplicity, assume monotonicity of the conditional distributions in the sense that all coefficients $\beta_{k,l}^{(d+1)}, \delta_{k',k}^{(d)}$ are nonnegative.

In the following, we establish identifiability of generalized DDEs under the class of ``topologically double triangular'' models. This notion follows from a recent work \cite{lee2025identifiability}, where a single latent layer was considered and no arrows between the observed variables were allowed. Here, we extend the ``double triangular'' graphical matrices by additionally incorporating the edges $\Lambda^{(d)}$ in each layer as follows.

\begin{definition}[Triangular graphical matrix]\label{def:double triangular}
    A $L \times K^{(d)}$ matrix $\GG_1^{(d)}$ with binary entries is ``triangular'' when it takes the following form:
    $$\GG_1^{(d)} = \begin{pmatrix}
        \GG_{1,1}^{(d)} \\ \GG_{1,2}^{(d)} \\ \vdots \\ \GG_{1,L}^{(d)}
    \end{pmatrix}, \quad \GG_{1,k}^{(d)} = \begin{pmatrix}
    0      & \cdots & 0      & * & \cdots & * \\
    0      & \cdots & 0      & \vdots & \ddots & \vdots \\
    0      & \cdots & 0      & * & \cdots & * \\
    \multicolumn{3}{c}{\underbrace{0 \quad \cdots \quad 0}_{k-1}} & 1 & \cdots & * \\
    \end{pmatrix}.$$
    Here, the starred entries are allowed to take any values 0 or 1, and each $\GG_{1,k}^{(d)}$ can be a row-vector.
    
    For each $d \ge 2$, we say that the $K^{(d-1)} \times K^{(d)}$ binary matrix $\GG^{(d)}$ is ``topologically double-triangular'' when there exists a topological ordering of $[K^{(d-1)}]$ (with respect to the graph $\Lambda^{(d-1)}$) such that:
    $$\GG^{(d)} = \begin{pmatrix}
        \GG_1^{(d)} \\ \GG_2^{(d)} \\ \GG_3^{(d)}
    \end{pmatrix},$$
    where 
    \begin{enumerate}[(i)]
        \item the matrices $\GG_1^{(d)}, \GG_2^{(d)}$ are triangular after individual column permutations,
        \item $\GG_3^{(d)}$ does not have empty columns,
        \item there exists no arrows between $\mathcal{I}_1, \mathcal{I}_2, \mathcal{I}_3$ in $\Lambda^{(d-1)}$ (i.e. $\mathcal{I}_1, \mathcal{I}_2, \mathcal{I}_3$ are $d$-separated given $\ma$). Here, each $\mathcal{I}_a$ denotes the set of row indices corresponding to $\GG_a^{(d)}$.
    \end{enumerate}
\end{definition}

We elaborate on the details of \cref{def:double triangular}. For simplicity, assume that each $\GG_{1,k}^{(d)}$ is a row vector $(\underbrace{0, \ldots, 0}_{k-1}, 1, \ldots, *)$. Then, $\GG_1^{(d)}$ simplifies to the upper triangular matrix $\begin{pmatrix}
    1 & * & \cdots & * \\
    0 & 1 & \cdots & * \\
    \vdots & \vdots & \ddots & \vdots \\
    0 & 0 & \cdots & 1 \\
\end{pmatrix}.$ Thus, the topologically double-triangular condition is requiring two such triangular matrices. Here, note that condition (i) requires $\GG_{1}^{(d)}, \GG_2^{(d)}$ to be a triangular matrix after arbitrary column permutations. For example, in \cref{fig:double triangular}, by letting 
$$\GG_{1}^{(d)} = \begin{pmatrix}
    1 & 1 \\ 0 & 1
\end{pmatrix}, \quad \GG_{2}^{(d)} = \begin{pmatrix}
    1 & 1 \\ 1 & 0
\end{pmatrix}$$
respectively denote the $(1,2)$nd and $(3,4)$th rows of $\GG^{(d)}$, both matrices are triangular. We also have $\GG_3^{(d)} = (1, 1)$, so condition (ii) as well as (iii) are also satisfied, and \cref{fig:double triangular} defines an topologically double-triangular graphical model.

\begin{figure}[h!]
\centering
\resizebox{0.35\textwidth}{!}{
    \begin{tikzpicture}[scale=2]

    \node (v1)[hidden] at (0, 0) {$A_1^{(d-1)}$};
    \node (v2)[hidden] at (0.8, 0) {$A_2^{(d-1)}$};
    \node (v3)[hidden] at (1.6, 0) {$A_3^{(d-1)}$};
    \node (v4)[hidden] at (2.4, 0) {$A_4^{(d-1)}$};
    \node (v5)[hidden] at (3.2, 0) {$A_5^{(d-1)}$};
       
    \node (h1)[hidden] at (1.07, 1) {$A_1^{(d)}$};
    \node (h2)[hidden] at (2.14, 1) {$A_2^{(d)}$};

    \draw[ultra thick, qedge, blue] (h1) -- (v1) node [midway,above=-0.12cm,sloped] {}; 
    \draw[ultra thick, qedge, blue] (h2) -- (v2) node [midway,above=-0.12cm,sloped] {};  
    \draw[ultra thick, qedge, red] (h1) -- (v3) node [midway,above=-0.12cm,sloped] {}; 
    \draw[ultra thick, qedge, red] (h2) -- (v3) node [midway,above=-0.12cm,sloped] {}; 
    \draw[thick, qedge, blue] (h2) -- (v1) node [midway,above=-0.12cm,sloped] {}; 
    \draw[thick, qedge, red] (h1) -- (v4) node [midway,above=-0.12cm,sloped] {};
    \draw[qedge] (h1) -- (v5) node [midway,above=-0.12cm,sloped] {}; 
    \draw[qedge] (h2) -- (v5) node [midway,above=-0.12cm,sloped] {}; [midway,above=-0.12cm,sloped] {};
    \draw[thick, qedge, green!70!black] (v1) -- (v2) node [midway,above=-0.12cm,sloped] {};
    \draw[thick, qedge, green!70!black] (v3) -- (v4) node [midway,above=-0.12cm,sloped] {};
\end{tikzpicture}
}
\caption{Graphical illustration of a topologically double-triangular structure, with $\mathcal{I}_1 = \{1,2\}, \mathcal{I}_2 = \{3,4\}, \mathcal{I}_3 = \{5\}$.}
\label{fig:double triangular}
\end{figure}

\begin{definition}
    A $D$-latent-layer generalized DDE is a statistical model defined by the conditional distributions in \eqref{eq:top layer}, \eqref{eq:conditional probability local dependence}, and \eqref{eq:observed exp fam}. 
\end{definition}

Now, we establish identifiability of generalized DDEs, under the class of topologically double triangular models.

\begin{theorem}\label{thm:local dependence}
    Consider a $D$-latent-layer generalized DDE where the number of latent variables per layer, $\mck$, is known. Among the class of topologically double-triangular generalized DDEs, the model is identifiable up to (i) Markov equivalence for tree-structures within each layer and (ii) label-switching, when the observed (bottom) layer satisfy conditions A and B (see \cref{thm:deep id}).
\end{theorem}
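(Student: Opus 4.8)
The plan is to combine the layer-wise peeling argument behind \cref{thm:deep id} with the cross-level-to-within-level reformulation illustrated in \cref{fig:local dependence}, and then to handle each within-layer latent structure by a ``double-triangular'' tensor argument in the spirit of \cite{lee2025identifiability}. After the reformulation we may assume the model has the stated generalized form, with within-layer DAGs $\Lambda^{(d)}$ and extra coefficients $\delta^{(d)}_{k',k}$ entering through \eqref{eq:conditional probability local dependence}, and with every $\GG^{(d)}$ topologically double-triangular as in \cref{def:double triangular}. As in the proof of \cref{thm:deep id}, it suffices to (a) identify the bottom-layer ingredients $\BB^{(1)},\GG^{(1)},\ggamma$ together with the marginal pmf $\{\PP(\ma^{(1)}=\aaa^{(1)})\}$, and then (b) inductively, having identified the marginal pmf of $\ma^{(d)}$, identify $\BB^{(d+1)},\GG^{(d+1)}$, the within-layer graph $\Lambda^{(d)}$ and its coefficients $\delta^{(d)}$, and the marginal pmf of $\ma^{(d+1)}$, each time up to the inevitable permutation $\sigma^{(d)}$ and, for the within-layer graph, up to Markov equivalence. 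For step (a): marginalizing out all layers above the bottom two yields a one-latent-layer saturated model whose latent layer $\ma^{(1)}$ has an unconstrained (saturated) pmf, so the presence of $\Lambda^{(1)}$ is irrelevant there; since the observed layer satisfies conditions A and B, \cref{prop:sid} applies verbatim and identifies $\{\PP(Y_j\mid\ma^{(1)})\}$ — hence $\BB^{(1)},\GG^{(1)},\ggamma$ — and the marginal pmf of $\ma^{(1)}$, all up to a single $\sigma^{(1)}$.

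\textbf{Inductive step.} Fix $d$ and treat $\ma^{(d)}$ as a vector of observed binary variables whose marginal pmf is already known. This is again a one-latent-layer saturated model, now with ``observed'' vector $\ma^{(d)}$, latent vector $\ma^{(d+1)}$, Bernoulli--logistic emissions, and within-layer edges $\Lambda^{(d)}$ acting through $\delta^{(d)}$. Using that $\GG^{(d+1)}$ is topologically double-triangular, write $[K^{(d)}]=\mathcal{I}_1\cup\mathcal{I}_2\cup\mathcal{I}_3$ with $\GG_1^{(d+1)},\GG_2^{(d+1)}$ triangular after column permutations, $\GG_3^{(d+1)}$ with no empty column, and — crucially — $\mathcal{I}_1,\mathcal{I}_2,\mathcal{I}_3$ mutually $d$-separated given $\ma^{(d+1)}$ by condition (iii) of \cref{def:double triangular}. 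The $d$-separation gives $\PP(\ma^{(d)}\mid\ma^{(d+1)})=\prod_{a=1}^3\PP(\ma^{(d)}_{\mathcal{I}_a}\mid\ma^{(d+1)})$, so the $3$-way marginal tensor factors as $\mathbf{P}_0=[\NN_1\diag(\bo\pi^{(d+1)}),\NN_2,\NN_3]$, with $\NN_a$ the conditional-probability matrix of $\ma^{(d)}_{\mathcal{I}_a}$ given $\ma^{(d+1)}$ and $\bo\pi^{(d+1)}$ the known marginal pmf of $\ma^{(d+1)}$. Invoking the double-triangular Kruskal-rank computation of \cite{lee2025identifiability}, adapted to allow the within-block $\delta^{(d)}$ dependence, the triangular blocks force $rk_k(\NN_1)=rk_k(\NN_2)=2^{K^{(d+1)}}$ while the nonempty columns of $\GG_3^{(d+1)}$ give $rk_k(\NN_3)\ge 2$, so Kruskal's theorem makes the decomposition unique up to a common column permutation, which we identify with $\sigma^{(d+1)}$. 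This determines all conditionals $\PP(\ma^{(d)}_{\mathcal{I}_a}\mid\ma^{(d+1)})$, hence the full conditional $\PP(\ma^{(d)}\mid\ma^{(d+1)})$, and the marginal pmf of $\ma^{(d+1)}$; at $d=D-1$ this marginal factorizes into $\pp$ by independence, closing the induction.

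\textbf{Reading off the graph and coefficients; Markov equivalence.} From the identified conditional $\PP(\ma^{(d)}\mid\ma^{(d+1)})$ we recover $\BB^{(d+1)},\GG^{(d+1)},\Lambda^{(d)},\delta^{(d)}$ by factoring according to the chain rule \eqref{eq:conditional probability local dependence}. The only residual non-uniqueness is the choice of within-layer DAG consistent with the identified joint distribution: two within-layer DAGs induce the same joint iff they are Markov equivalent, and when $\Lambda^{(d)}$ is a tree (forest) it has no v-structures, so its Markov equivalence class is exactly the set of acyclic re-orientations of its skeleton. Within any fixed such orientation, the nonnegativity/monotonicity imposed on $\beta^{(d+1)}_{k,l}$ and $\delta^{(d)}_{k',k}$ pins down the remaining sign and labeling ambiguities exactly as Assumption~\ref{assmp:proportion}(c) does in the proof of \cref{prop:sid}. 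Hence the model is identifiable up to within-layer Markov equivalence and label-switching, which is the claimed conclusion.

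\textbf{Main obstacle.} The technical heart is the Kruskal-rank claim $rk_k(\NN_a)=2^{K^{(d+1)}}$ for $a=1,2$. Unlike the pure-children situation, where $\NN_a$ contains an easily recognizable identity-like block, here one must show that the triangular sparsity pattern of $\GG_a^{(d+1)}$ — together with the within-block $\delta^{(d)}$ terms, which couple the rows of $\NN_a$, and the logistic link — still yields linearly independent columns; this is where the staircase/inductive linear-algebra argument of \cite{lee2025identifiability} must be re-derived in the perturbed setting. A secondary point requiring care is verifying that the reformulation of \cref{fig:local dependence} preserves, at every layer, both the topological ordering of the row indices and the $d$-separation of the blocks $\mathcal{I}_1,\mathcal{I}_2,\mathcal{I}_3$, so that the inductive hypothesis is genuinely available at each step and the marginalization of upper layers does not disturb the double-triangular structure below.
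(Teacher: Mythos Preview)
Your overall architecture --- layer-wise peeling, with \cref{prop:sid} handling the bottom layer and a double-triangular Kruskal argument handling each upper layer --- matches the paper's proof, and you correctly flag the full-rank claim $rk_k(\NN_a)=2^{K^{(d+1)}}$ as a nontrivial step requiring an inductive block argument (this is the paper's \cref{lem:local dependence}).

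There is, however, a genuine gap in the inductive step. When you write that Kruskal's theorem makes the decomposition unique ``up to a common column permutation, which we identify with $\sigma^{(d+1)}$,'' you are conflating two different objects. Kruskal only gives uniqueness up to some $\mathfrak{S}\in S_{\{0,1\}^{K^{(d+1)}}}$, a permutation of the $2^{K^{(d+1)}}$ latent \emph{configurations}; it does not tell you that $\mathfrak{S}$ is induced by a relabeling $\sigma^{(d+1)}\in S_{[K^{(d+1)}]}$ of the $K^{(d+1)}$ latent \emph{coordinates}. In your step~(a) this gap is closed by the pure-children argument of \cref{prop:sid} (Step~2 there, via \cref{lem:H_j cardinality}), but in the inductive step the double-triangular $\GG_a^{(d+1)}$ need not contain any identity rows, so no pure children are available and that argument cannot be invoked. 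The paper handles this with a separate ``Step~2'' after Kruskal: a backwards induction on $k=K^{(d+1)},\ldots,1$ that uses the triangular pattern and the topological ordering of the $\mathcal{I}_1$ block to reconstruct each set $T_k=\{\aaa:\alpha_k=1\}$ from $\NN_1$ and $\bo\pi$ \emph{without} access to the column labels, exploiting that the child $Y_{j_k}$ with smallest index has latent parents only among $A_k,\ldots,A_K$. Only after this structuring step is the conditional $\PP(\ma^{(d)}\mid\ma^{(d+1)})$ actually pinned down up to $\sigma^{(d+1)}$, allowing you to read off $\BB^{(d+1)},\GG^{(d+1)},\Lambda^{(d)},\delta^{(d)}$. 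Your third paragraph jumps from ``$\NN_a$ identified up to $\mathfrak{S}$'' directly to ``the full conditional is determined,'' and so the recovery of coefficients and graph there is not justified as written. Your ``main obstacle'' paragraph should list two obstacles, not one.
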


Here, the restriction to Markov equivalence is a fundamental ambiguity for identifying DAG models. For example, suppose that the true graph is $A_1^{(1)} \to A_2^{(1)} \to A_3^{(1)} \leftarrow A_1^{(2)}$, where the graph in the first latent layer is a tree. Then, this structure is indistinguishable from $A_1^{(1)} \leftarrow A_2^{(1)} \to A_3^{(1)} \leftarrow A_1^{(2)}$. Note that colliders with multiple parents (e.g. $A_3^{(1)}$) do not suffer from this ambiguity, thanks to the parametric assumption in \eqref{eq:conditional probability local dependence}.
Compared to identifiability results in the main text, we additionally impose structural restrictions on the class of alternative graphical models. In other words, both the true and alternative models are required to be topologically double-triangular.

\paragraph{Proof.}
To prove \cref{thm:local dependence}, it suffices to show the following proposition for the generalized DDE with one-latent layer. Then, \cref{thm:local dependence} follows by the exact same layer-wise identifiability argument as that in the proof of \cref{thm:deep id}.

\begin{proposition}\label{prop:local dependence}
    Consider a saturated generalized DDE with only one latent layer, and known $K$. Among the class of topologically double-triangular saturated generalized DDEs, the model is identifiable up to Markov equivalence.
\end{proposition}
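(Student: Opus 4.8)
The plan is to follow the tensor-decomposition route used for \cref{prop:sid}, with the topologically double-triangular structure of \cref{def:double triangular} replacing the pure-children conditions, and with the single-latent-layer local-dependence identifiability of \cite{lee2025identifiability} as the key building block. First I would fix the partition $[L] = \mathcal{I}_1 \cup \mathcal{I}_2 \cup \mathcal{I}_3$ of the bottom-layer indices from \cref{def:double triangular}, write $\GG_1,\GG_2,\GG_3$ for the corresponding row blocks of $\GG$ (with $\GG_1,\GG_2$ triangular after individual column permutations), and observe that property (iii) of \cref{def:double triangular} — that $\mathcal{I}_1,\mathcal{I}_2,\mathcal{I}_3$ are $d$-separated given $\ma$ — makes $\mathbf{V}_{\mathcal{I}_1},\mathbf{V}_{\mathcal{I}_2},\mathbf{V}_{\mathcal{I}_3}$ mutually conditionally independent given $\ma$. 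Discretizing each coordinate into finitely many cells exactly as in the proof of \cref{prop:sid} then yields a rank-$2^K$ three-way decomposition $\mathbf{P}_0 = [\NN_1\diag(\bo\pi),\NN_2,\NN_3]$, where $\NN_a$ collects the conditional probabilities of $\mathbf{V}_{\mathcal{I}_a}\mid\ma$.

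Next I would establish the Kruskal ranks $rk_k(\NN_1) = rk_k(\NN_2) = 2^K$ and $rk_k(\NN_3)\ge 2$. The bound on $\NN_3$ follows as in \cref{thm:deep id}: since $\GG_3$ has no empty column, for any $\aaa\ne\aaa'$ some variable in $\mathcal{I}_3$ has a latent parent on which $\aaa$ and $\aaa'$ disagree, and the monotone logistic form of \eqref{eq:conditional probability local dependence} forces the corresponding columns of $\NN_3$ to differ. For the two triangular blocks, I would factor $\PP(\mathbf{V}_{\mathcal{I}_a}=\bv\mid\aaa)$ along the topological order of $\Lambda$ restricted to $\mathcal{I}_a$ and use that, after the column permutation making $\GG_a$ triangular, the $k$th variable of the block has latent parent $k$ but none of $1,\dots,k-1$; this reduces the full-rank claim to the triangular computation of \cite{lee2025identifiability} (which extends the $\GG_a=\I_K$ case in the proof of \cref{prop:sid}), the within-block $\delta$-edges merely reparametrizing the block conditionals. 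Kruskal's theorem \citep{kruskal1977three} then gives uniqueness of the decomposition up to a single permutation $\mathfrak{S}\in S_{\{0,1\}^K}$ of the $2^K$ latent configurations, identifying $\bo\pi$ and each block conditional $\PP(\mathbf{V}_{\mathcal{I}_a}\mid\cdot)$ up to $\mathfrak{S}$.

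I would then descend from the block conditionals to individual parameters and resolve $\mathfrak{S}$ to label switching. From $\PP(\mathbf{V}_{\mathcal{I}_a}=\bv\mid\aaa)$ the conditional-independence pattern among the $V_j$'s given $\ma$ recovers the skeleton and all v-structures of $\Lambda|_{\mathcal{I}_a}$; a node with two or more in-block parents has its orientation identified (using the parametric form \eqref{eq:conditional probability local dependence} to rule out degenerate reparametrizations at colliders), which is why only tree components are left ambiguous, precisely up to Markov equivalence. Fixing a representative of the equivalence class, identifiability of the observed parametric families recovers every conditional $\PP(V_j\mid\pa(V_j))$, hence the associated $\beta$- and $\delta$-coefficients, and the triangular structure supplies for each latent coordinate $k$ a distinguished ``pivot'' variable singling out column $k$, which pins down the configurations with $\alpha_k=1$; the nonnegativity of the coefficients then removes sign-flipping, upgrading $\mathfrak{S}$ to an honest $\sigma\in S_{[K]}$ and determining $\GG,\BB,\bo\delta,\bo\pi$. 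This is the one-latent-layer analogue of Step 2 of the proof of \cref{lem:sign flip}; \cref{thm:local dependence} then follows by feeding this base case into the same layer-wise recursion as in the proof of \cref{thm:deep id}, using \cref{prop:sid} at the genuinely observed bottom layer.

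The hard part will be the full-Kruskal-rank claim for the triangular blocks under within-block local dependence: the block conditional is no longer a product of univariate logistic factors, so one must check that composing with the lower-triangular $\bv$-to-$\bv$ dependence induced by the $\delta$-edges cannot collapse the column rank — this is exactly where the topological ordering built into \cref{def:double triangular} is essential and where \cite{lee2025identifiability} is imported. A secondary delicate point is the Markov-equivalence bookkeeping: one must verify that every collider orientation is genuinely recovered, so that the residual indeterminacy is precisely Markov equivalence on the tree components together with the label switching $\sigma$, and nothing more.
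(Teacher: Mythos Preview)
Your plan matches the paper's: the same three-block partition from \cref{def:double triangular}, conditional independence of the blocks given $\ma$, Kruskal's theorem with full column rank on the two triangular blocks and distinct $\NN_3$-columns, then resolution of $\mathfrak{S}\in S_{\{0,1\}^K}$ to a label permutation $\sigma\in S_{[K]}$, with the graph read off via faithfulness.

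Two points where the paper is sharper than your sketch. First, the full-rank claim for the triangular blocks under local dependence is \emph{not} available in \cite{lee2025identifiability}---that paper allows no arrows among the observed variables---so you cannot simply import it. The paper proves it as a standalone lemma by induction on $K$: the key observation is that once $Y_1$ is taken (WLOG) to be the unique $\mathcal{I}_1$-child of $A_1$, the topologically-triangular ordering yields $A_1\perp \YY_{2:J}\mid Y_1,\ma_{2:K}$, which gives a $2\times 2$ block structure for $\NN_1$; one column operation then reduces full rank of $\NN_1$ to full rank of the two conditional blocks $\PP(\YY_{2:J}\mid \ma_{2:K},Y_1=0)$ and $\PP(\YY_{2:J}\mid \ma_{2:K},Y_1=1)$, each an instance of the $(K-1)$-latent-variable problem. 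Second, your ``pivot'' description elides that the pivot $Y_{j_k}$ has latent parents in all of $\ma_{k:K}$, not just $A_k$, so it does not by itself isolate $T_k=\{\aaa:\alpha_k=1\}$. The paper therefore runs a \emph{backwards} induction: first recover $T_K$ from the two values of $\PP(Y_{j_K}=1\mid \ma,\YY_{1:j_K-1}=\mathbf 0)$, then for each $k<K$ restrict to configurations with $\aaa_{k+1:K}=\mathbf 0$ (now identifiable) before reading off $T_k$ via monotonicity. The paper also inverts your order in the last step---$\mathfrak{S}$ is structured first via this induction, and only then is the graph $\GG\cup\Lambda$ recovered from the fully-identified joint $\PP(\YY,\ma)$ up to Markov equivalence.
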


We prove \cref{prop:local dependence} using the following key property for matrix ranks, whose proof is deferred to the end of the section.
\begin{lemma}\label{lem:local dependence}
    Suppose that $\GG_{\mathcal{I}}$ is triangular. Then, $\PP(Y_{\mathcal{I}} \mid \ma)$ has full column-rank.
\end{lemma}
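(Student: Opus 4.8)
The plan is to prove \cref{lem:local dependence} by peeling off one latent coordinate at a time and then invoking a Kronecker-product argument, mirroring the way full column rank of $\NN_1$ is obtained in the proof of \cref{prop:sid}. First I would make precise what ``$\PP(Y_{\mathcal I}\mid\ma)$ has full column rank'' means: for each $j\in\mathcal I$ pick a sufficiently fine finite partition $\{S_{l,j}\}_{l\in[\kappa_j]}$ drawn from a separating class for $\mathcal Y_j$, and let $M$ be the matrix with rows indexed by tuples $(l_j)_{j\in\mathcal I}$, columns indexed by $\aaa\in\{0,1\}^K$, and entries $M_{(l_j)_j,\aaa}=\prod_{j\in\mathcal I}\PP_{j,\aaa}(S_{l_j,j})$ --- the product form being valid since the bottom-layer variables are conditionally independent given $\ma$. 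The claim is that $M$ has column rank $2^K$. Since appending extra rows and refining partitions can only increase rank, it suffices to retain, for each $k\in[K]$, the \emph{designated} row of the $k$th block of $\GG_{\mathcal I}$, i.e.\ the one with a $1$ in column $k$ and zeros in columns $1,\dots,k-1$. After undoing the column permutation of \cref{def:double triangular} (which only relabels latent variables, permuting the columns of $M$) these $K$ rows form an upper-triangular $0/1$ matrix with unit diagonal; relabel the corresponding responses $Y_1,\dots,Y_K$, so that $Y_j\mid\ma$ depends on $\ma$ only through $(\alpha_j,\dots,\alpha_K)$ and genuinely depends on $A_j$ by faithfulness (\cref{assmp:proportion}(b)).

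Next I would induct on $K$. The base case $K=1$ is immediate: faithfulness plus identifiability of $\mathrm{ParFam}_1$ give $\PP(Y_1\mid A_1=0)\neq\PP(Y_1\mid A_1=1)$, so for a separating partition the two columns of $M$ are linearly independent. For the inductive step I peel off $A_K$ together with $Y_K$: by the triangular structure $Y_K$ depends on $\ma$ only through $\alpha_K$, so setting $b_\epsilon:=\big(\PP(Y_K\in S_{l,K}\mid\alpha_K=\epsilon)\big)_l$ for $\epsilon\in\{0,1\}$ we get that $b_0,b_1$ are linearly independent. Splitting the columns of $M$ according to the value of $\alpha_K$ and writing $\aaa=(\aaa',\alpha_K)$ yields the block/Kronecker form $M=[\,A^{(0)}\otimes b_0 \mid A^{(1)}\otimes b_1\,]$, where $A^{(\epsilon)}$ is the conditional-probability matrix of $(Y_1,\dots,Y_{K-1})$ with $\alpha_K$ set to $\epsilon$ and $\otimes$ acts columnwise. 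Fixing $\alpha_K=\epsilon$ merely shifts the intercepts of the remaining rows, leaving all slope coefficients (in particular the genuine dependence on each $A_j$, $j<K$) intact, so the sub-model on $(Y_1,\dots,Y_{K-1})$ over $(\alpha_1,\dots,\alpha_{K-1})$ is again triangular with unit diagonal and the induction hypothesis gives $\rank(A^{(\epsilon)})=2^{K-1}$.

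To close the induction: if $(A^{(0)}c^{(0)})\otimes b_0+(A^{(1)}c^{(1)})\otimes b_1=0$, then restricting to each fixed $(Y_1,\dots,Y_{K-1})$-row-block gives $(A^{(0)}c^{(0)})_{\xi'}\,b_0+(A^{(1)}c^{(1)})_{\xi'}\,b_1=0$ for every $\xi'$; linear independence of $b_0,b_1$ forces $A^{(0)}c^{(0)}=A^{(1)}c^{(1)}=0$, and hence $c^{(0)}=c^{(1)}=0$ by the full column rank of the $A^{(\epsilon)}$. Thus $M$ has column rank $2\cdot 2^{K-1}=2^K$, as claimed. When within-layer edges are present among the variables of $\mathcal I$ --- relevant if one wants to apply \cref{lem:local dependence} with a deeper latent layer playing the role of the ``observed'' layer --- the same scheme goes through after peeling off a topological sink of the within-layer graph (which, by the topologically-double-triangular structure, still connects to the latent layer through a single coordinate) and conditioning in addition on that sink's finitely many within-layer parents; this adds row bookkeeping but no new ideas. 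I expect the main obstacle to be exactly this bookkeeping, together with the measure-theoretic care needed to choose a \emph{single} partition fine enough that $b_0\neq b_1$ and every auxiliary matrix $A^{(\epsilon)}$ is simultaneously full rank --- routine given the countable separating-class construction already used in the proof of \cref{prop:sid}, but it is where the details live.
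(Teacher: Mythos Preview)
Your argument is correct and takes a different route from the paper. Both induct on $K$, but peel in opposite directions. You reduce via rank monotonicity to one designated child per latent variable, obtain an upper-triangular $K\times K$ graphical matrix with unit diagonal, and peel off the \emph{last} pair $(A_K,Y_K)$: since $Y_K$ is then a pure child of $A_K$, the Khatri--Rao factorisation $M=[A^{(0)}\otimes b_0\mid A^{(1)}\otimes b_1]$ is clean and your closing linear-independence argument goes through. The paper keeps all of $\YY_{\mathcal I}$ and peels off the \emph{first} pair $(A_1,Y_1)$, where $Y_1$ is a topological source but not a pure child; lacking a Kronecker structure, it writes the $2\times2$ block form in terms of $\theta_{\aaa}=\PP(Y_1=1\mid\ma=\aaa)$ and $\mathbf M_\epsilon=\PP(\YY_{2:J}\mid \ma_{2:K},Y_1=\epsilon)$ and uses explicit column operations (subtract $\frac{1-\theta_{1,\aaa}}{1-\theta_{0,\aaa}}$ times the $(0,\aaa)$-column from the $(1,\aaa)$-column) to reach a block-triangular matrix, relying on $\theta_{1,\aaa}\neq\theta_{0,\aaa}$.

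The paper's direction pays off precisely for the within-layer-edges case that \cref{lem:local dependence} must cover when applied to deeper latent layers: a source $Y_1$ has no within-layer parents, and \emph{conditioning} on $Y_1=\epsilon$ (rather than marginalising) only shifts intercepts in~\eqref{eq:conditional probability local dependence}, so $\mathbf M_\epsilon$ is again a parametric triangular model and the induction hypothesis applies verbatim. Your sink-based extension can be made to work too, but with one caveat your sketch glosses over: you must \emph{not} first marginalise down to the $K$ designated rows in the within-layer case, because marginalising out intermediate $Y_j$'s destroys the parametric form~\eqref{eq:conditional probability local dependence} and the induction hypothesis no longer applies to the reduced $A^{(\epsilon)}$. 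Instead keep all rows, peel off the last row (the designated row of block $K$, a sink with sole latent parent $A_K$), let the vectors $b_\epsilon$ depend on that sink's within-layer parents (they remain linearly independent in $\epsilon$ for each fixed value of those parents), and apply the hypothesis to the full $\PP(\YY_{1:J-1}\mid\ma_{1:K-1},\alpha_K=\epsilon)$ after dropping the remaining block-$K$ rows via \cref{lem:rank}.
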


\begin{proof}[Proof of \cref{prop:local dependence}]
    We use the notations from \cref{subsec:supp proof of one layer}, and denote the latent variables as $\ma$ and observed variables as $\YY$.
    Let $\cup_{a=1}^3 I_a = [J]$ denote the partition that corresponds to the double triangular structure in \cref{def:double triangular}. As we assume that each $\ma_{\mathcal{I}_a}^{(d)}$ are disconnected (or d-separated) by $\Lambda^{(d)}$, we have the conditional independence
    $$\YY_{I_1} \perp \YY_{I_2} \perp \YY_{I_3} \mid \ma.$$
    Thus, the tensor decomposition in \eqref{eq:tensor decomposition} still holds. We separate the proof into two steps below.
    
    \emph{Step 1: Kruskal's theorem.}
    \cref{lem:local dependence} ensures that $\NN_1, \NN_2$ in the tensor decomposition \eqref{eq:tensor decomposition} are full rank. Additionally, all $2^K$ columns in $\NN_3$ are distinct. To see this, note that for any $\aaa \neq \aaa' \in \{0,1\}^K$, there exists some $k$ such that $\alpha_k \neq \alpha_k'$. By the assumption that $\GG_3$ (corresponding to the rows indexed by $\mathcal{I}_3$) does not have empty columns, there exists some $j \in \mathcal{I}_3$ such that $A_k \to Y_j$. Then, as $\beta_{j,k} \neq 0$, $\PP(Y_j \mid \ma = \aaa) \neq \PP(Y_j \mid \ma = \aaa')$. Hence, the corresponding columns in $\NN_3$ are distinct.
    
    Hence, \eqref{eq:Kruskal rank checking} holds. Applying Kruskal's theorem shows that the decomposition \eqref{eq:tensor decomposition} is unique up to column permutations. In other words, $(\NN_a, \boldsymbol{\pi})$ can be recovered up to a common column permutation $\mathfrak{S} \in S_{\{0,1\}^K}$. For more details, see Step 1 in the proof of \cref{prop:sid}.

    \emph{Step 2: structuring the permutation $\mathfrak{S}$.}
    The above argument recovers $\PP(\YY \mid \ma)$ and $\PP(\ma)$ up to a common column permutation $\mathfrak{S}$ for the latent configurations $\ma$. Now, it suffices to structure $\mathfrak{S}$ up to the trivial label permutation ambiguities. Then, as the probability distribution $\PP(\YY, \ma)$ is faithful to the underlying DAG $\GG \cup \Lambda$, we can identify all components up to Markov equivalence. This allows us to construct the graphical structure up to the completed partially DAG (CPDAG). Directions can be assigned to the undirected edges using (i) the assumption that there are no directed edges of the type $A_k \to Y_j$, (ii) the linearity of the conditional distributions in \eqref{eq:conditional probability local dependence}, excluding the fundamental ambiguities regarding tree structures among $\YY$ in the observed graph $\Lambda$.
    
    Without loss of generality, let $Y_1, \ldots, Y_{J_1}$ be the observations corresponding to $\mathcal{I}_1$, that are indexed via topological ordering under true graph $\Lambda$. Index the latent variables $\ma$ according to the triangular graphical structure in \cref{def:double triangular}, and let ${j_k} := \min \{j \in \mathcal{I}_1: A_k \to Y_j\}$ denote the child of $A_k$ with the largest index. Note that we assume $\pa(\YY_{\mathcal{I}_1}) \supseteq \ma$, so $j_k$ is well-defined.

    We prove by backwards induction and show that we can recover the set $T_k = \{\aaa: \alpha_k = 1\}$ only from the conditional probability matrix $\NN_1$ and $\boldsymbol{\pi}$ (without the correct column labels). 
    For the base case with $k = K$, $Y_{j_K}$ has an unique latent parent $A_K$ (excluding the observed parents in $\YY_{1:j_K-1}$). Hence, for any $\mathbf{y}_{1:j_K} \in \{0,1\}^{j_K}$,
    $$\PP(Y_{j_K} = y_{j_K} \mid \ma = \aaa, \YY_{1:j_K-1} = \yy_{1:j_K-1}) = \PP(Y_{j_K} = y_{j_K} \mid A_K = \alpha_K, \YY_{1:j_K-1} = \yy_{1:j_K-1}).$$
    By fixing $\mathbf{y} = \mathbf{0}$ and considering $\aaa \in \{0,1\}^K$, the above takes exactly two values, depending on $\alpha_K = 0/1$. As the LHS of the above display can be computed from $\NN_1$ and $\boldsymbol{\pi}$, $T_K$ must correspond to the column indices of $\NN_1$ with the larger value. Here, we have used the monotonicity assumption that $\beta_{j_K, K} > 0$.

    Next, we recover $T_k$, assuming that we are given $T_l$ for all $k < l \le K$. We similarly proceed by noting that $Y_{j_k}$ must have $A_{k}$ as a latent parent, and other latent parents are a subset of $\ma_{k+1:K}.$ This implies that for any $\mathbf{y}_{1:j_k} \in \{0,1\}^{j_k}$,
    $$\PP(Y_{j_k} = y_{j_k} \mid \ma = \aaa, \YY_{1:j_k-1} = \yy_{1:j_k-1}) = \PP(Y_{j_k} = y_{j_k} \mid \ma_{k:K} = \aaa_{k:K}, \YY_{1:j_k-1} = \yy_{1:j_k-1}).$$ 
    By fixing $\mathbf{y}_{1:j_k} = \mathbf{0}$ and considering any $\aaa \in \{0,1\}^K$ with $\aaa_{k+1:K} = \mathbf{0}$, the above takes exactly two values, depending on $\alpha_k = 0/1$. Similar as above, the LHS can be computed from $\NN_1, \boldsymbol{\pi}$, and hence $T_k$ is recovered by selecting the column indices of $\NN_1$ corresponding to a larger value. This completes the induction, and we have recovered all $T_k$s. This completes the proof.
\end{proof}

We finally prove \cref{lem:local dependence} using the following key property, which allows us to ignore redundant observables within $\YY_\mathcal{I}$.
\begin{lemma}\label{lem:rank}
    For any $\mathcal{I} \subset \mathcal{I}' \subseteq [J]$
    we have 
    $\rank(\PP(\YY_{\mathcal{I}} \mid \ma)) \le \rank(\PP(\YY_{\mathcal{I}'} \mid \ma))$.
\end{lemma}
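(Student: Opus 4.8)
The plan is to realize $\PP(\YY_{\mathcal{I}}\mid\ma)$ as a row-submatrix of $\PP(\YY_{\mathcal{I}'}\mid\ma)$, after which the rank bound is immediate. Recall the convention adopted just before the matrices $\NN_1,\NN_2,\NN_3$ are defined, namely that the last discretizing set of each coordinate is the whole sample space, $S_{\kappa_j,j}=\mcy_j$, so that $\PP(Y_j\in S_{\kappa_j,j}\mid\ma=\aaa)=1$ for every $\aaa$. With this convention, the matrix $\PP(\YY_{\mathcal{I}}\mid\ma)$ has columns indexed by $\aaa\in\{0,1\}^{K}$ and rows indexed by tuples $(l_j)_{j\in\mathcal{I}}$ with $l_j\in[\kappa_j]$, and its $((l_j)_{j\in\mathcal{I}},\aaa)$-entry equals $\PP\big(\bigcap_{j\in\mathcal{I}}\{Y_j\in S_{l_j,j}\}\mid\ma=\aaa\big)=\prod_{j\in\mathcal{I}}\PP(Y_j\in S_{l_j,j}\mid\ma=\aaa)$, where the factorization uses the conditional independence of $Y_1,\dots,Y_J$ given $\ma$ from \eqref{eq:observed exp fam}. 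The analogous description holds for $\PP(\YY_{\mathcal{I}'}\mid\ma)$.

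First I would exhibit an explicit embedding of the rows. Given a row index $(l_j)_{j\in\mathcal{I}}$ of $\PP(\YY_{\mathcal{I}}\mid\ma)$, extend it to a row index $(l'_j)_{j\in\mathcal{I}'}$ of $\PP(\YY_{\mathcal{I}'}\mid\ma)$ by setting $l'_j=l_j$ for $j\in\mathcal{I}$ and $l'_j=\kappa_j$ for $j\in\mathcal{I}'\setminus\mathcal{I}$. Since $\PP(Y_j\in S_{\kappa_j,j}\mid\ma=\aaa)=1$ for the added coordinates, the two matrices share the same column index set $\{0,1\}^{K}$ and the two corresponding rows are identical entry-by-entry. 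Hence $\PP(\YY_{\mathcal{I}}\mid\ma)$ equals, up to a reindexing of its rows, a row-submatrix of $\PP(\YY_{\mathcal{I}'}\mid\ma)$; equivalently there is a $0$–$1$ selection matrix $S$ with $\PP(\YY_{\mathcal{I}}\mid\ma)=S\,\PP(\YY_{\mathcal{I}'}\mid\ma)$.

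The conclusion then follows from submultiplicativity of rank: $\rank(\PP(\YY_{\mathcal{I}}\mid\ma))=\rank\big(S\,\PP(\YY_{\mathcal{I}'}\mid\ma)\big)\le\rank(\PP(\YY_{\mathcal{I}'}\mid\ma))$. There is no genuine obstacle here; the only points requiring care are (i) invoking the normalization $S_{\kappa_j,j}=\mcy_j$, which is exactly what makes ``marginalizing out a coordinate'' coincide with ``selecting the full-space cell,'' and (ii) observing that the column index sets of the two matrices coincide, so no column-matching ambiguity arises. As an alternative not relying on the $S_{\kappa_j,j}=\mcy_j$ convention, one may instead write $\PP(\YY_{\mathcal{I}}\mid\ma)=L\,\PP(\YY_{\mathcal{I}'}\mid\ma)$, where $L$ is the $0$–$1$ matrix summing the cells over the coordinates in $\mathcal{I}'\setminus\mathcal{I}$, and conclude again from $\rank(LM)\le\rank(M)$.
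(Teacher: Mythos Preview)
Your proof is correct and follows the same idea as the paper's: write $\PP(\YY_{\mathcal{I}}\mid\ma)$ as a left-multiple of $\PP(\YY_{\mathcal{I}'}\mid\ma)$ and apply $\rank(MN)\le\rank(N)$. The paper's one-line argument uses marginalization (your ``alternative'' $L$-matrix), whereas your primary route exploits the $S_{\kappa_j,j}=\mcy_j$ convention to exhibit a row-submatrix; both are special cases of the same rank inequality, and neither actually needs the conditional-independence factorization you quote.
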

\noindent The proof directly follows by the fact that $\rank(\mathbf{M} \mathbf{N}) \le \rank(\mathbf{N})$ for compatible matrices, as one can recover $\PP(\YY_{\mathcal{I}} \mid \ma)$ by appropriately marginalizing out rows in $\PP(\YY_{\mathcal{I}'} \mid \ma)$.

\begin{proof}[Proof of \cref{lem:local dependence}]
    For notational convenience, let $\mathcal{I} = \{1, \ldots, J\}$ and suppose that $\YY_\mci$ is indexed via topological ordering so that $Y_j \to Y_{j'}$ if and only if $j \le j'$. We proceed in an induction on $K$. First, the case when $K = 1$ is straightforward.

    Assuming that the statement holds when there are $K-1$ latent variables, we show the claim for models with $K$ latent variables. Suppose that we have observed variables $Y_1, \ldots, Y_J$ and latent variables $A_1, \ldots, A_{K}$. For each $k$, let $Y_{j_k}$ denote the minimum element in $\ch(A_k)$. 
    Index the latent variables $\ma$ so that $j_k < j_l$ for $k < l$. By \cref{lem:rank}, we can assume without loss of generality that $j_1 = 1$, since ignoring the observations $Y_j$s for $j < j_1$ do not increase the rank of $\PP(\YY \mid \ma)$.
    
    We show that the $2^{J} \times 2^{K}$ matrix $\mathbf{P} := \PP(Y_1, \ldots, Y_{J} \mid A_1, \ldots, A_{K})$ has full column rank (of $2^{K}$). Fix the row/column ordering of $\mathbf{P}$ by mapping each binary configuration $(a_1, \ldots, a_K)$ to an integer $\sum_{k=1}^K a_k 2^{k-1}$ and assume that this integer is increasing (e.g. $(0,0) < (1,0) < (0,1) < (1,1)$). Let $\theta_{\aaa} := \PP(Y_{1} = 1 \mid \ma = \aaa)$ denote the conditional probability of $Y_{1}=1$ given its parents. Also, define conditional probability matrices 
    $$\mathbf{M}_0 := \PP(\YY_{2:J} \mid \ma_{2:K} = \aaa_{2:K}, Y_1 = 0), \quad \mathbf{M}_1 := \PP(\YY_{2:J} \mid \ma_{2:K} = \aaa_{2:K}, Y_1 = 1).$$
    
    Our main observation is that, under the topologically triangular assumption, we have the conditional independence $A_1 \perp \YY_{2:J} \mid Y_1, \ma_{2:K}$. Using this, we have 
    $$\PP(\YY \mid \ma) = \PP(Y_1 \mid \ma) \PP(\YY_{2:J} \mid Y_1, \ma_{2:K}),$$
    which gives the following decomposition of $\mathbf{P}$:
    \begin{eqnarray*}
        &\mathbf{P}\Big((0, \yy), (0, \aaa)\Big) = \mathbf{M}_0(\yy, \aaa) (1-\theta_{0,\aaa}), \quad &\mathbf{P}\Big((0, \yy), (1, \aaa)\Big) = \mathbf{M}_0(\yy, \aaa) (1-\theta_{1, \aaa}), \\
        &\mathbf{P}\Big((1, \yy), (0, \aaa)\Big) = \mathbf{M}_1(\yy, \aaa) \theta_{0,\aaa}, \quad &\mathbf{P}\Big((1, \yy), (1, \aaa)\Big) = \mathbf{M}_1(\yy, \aaa) \theta_{1, \aaa}.
    \end{eqnarray*}

    Let $\mathbf{p}_{(1,\aaa)}$ and $\mathbf{p}_{(0,\aaa)}$ denote the corresponding columns in $\mathbf{P}$. Let $\bar{\mathbf{P}}$ denote the matrix by performing the following column operation on $\mathbf{P}$: for each $\aaa \in \{0,1\}^{K-1}$, subtract $\frac{1-\theta_{1,\aaa}}{1-\theta_{0,\aaa}}\mathbf{p}_{(0,\aaa)}$ from $\mathbf{p}_{(1,\aaa)}$.
    Then, we have
    \begin{eqnarray*}
        &\bar{\mathbf{P}}\Big((0, \yy), (0, \aaa)\Big) = \mathbf{M}_0(\yy, \aaa) (1-\theta_{0,\aaa}), \quad &\bar{\mathbf{P}}\Big((0, \yy), (1, \aaa)\Big) = 0, \\
        &\bar{\mathbf{P}}\Big((1, \yy), (0, \aaa)\Big) = \mathbf{M}_1(\yy, \aaa) \theta_{0,\aaa}, \quad &\bar{\mathbf{P}}\Big((1, \yy), (1, \aaa)\Big) = \mathbf{M}_1(\yy, \aaa) \frac{\theta_{1,\aaa} - \theta_{0,\aaa}}{1-\theta_{0,\aaa}}.
    \end{eqnarray*}
    Since $\rank(\mathbf{P}) = \rank(\bar{\mathbf{P}})$ and viewing $\bar{\mathbf{P}}$ as a $2 \times 2$ block matrix, it suffices to show that the conditional probability matrices $\mathbf{M}_0, \mathbf{M}_1$ have full column rank. 
    Here, we use the fact that $\theta_{1,\aaa} - \theta_{0,\aaa} \neq 0$ for all $\aaa \in \{0,1\}^{K-1}$ under our parametrization for $Y_1 \mid \ma$ (see \eqref{eq:conditional probability local dependence}), since the triangular assumption gives $g_{1, 1} = 1$ (in other words, $A_1 \to Y_1$). 
    
    Finally, the full rankness of $\mathbf{M}_0, \mathbf{M}_1$ follows from the induction hypothesis. To see this, use \eqref{eq:conditional probability local dependence} to spell out the conditional distribution of $Y_2 \mid \ma_{\ma_{2:K}}, Y_1$:
    \begin{align*}
        \PP(Y_2 = 1 \mid \ma_{{2:K}}, Y_1 = 0) &= \gl(\beta_{2,0} + \sum_{k=2}^K \beta_{2,k} A_k), \\
        \PP(Y_2 = 1 \mid \ma_{{2:K}}, Y_1 = 1) &= \gl(\beta_{2,0} + \delta_{1,2} + \sum_{k=2}^K \beta_{2,k} A_k).
    \end{align*}
    Each conditional distributions above can be viewed as that arising from separate reduced models with $K-1$ latent variables $\ma_{2:K}$, where the second equation considers a combined intercept parameter $\beta_{2,0} + \delta_{1,2}$. The same logic applies to all conditional distributions $Y_k \mid \ma_{k:K}, Y_{2:k-1}, Y_1$, allowing us to use the induction hypothesis.
\end{proof}

\subsection{Additional Identifiability Results}
\label{subsec:additional identifiability}
\paragraph{\darkblue{Identifying the latent dimension.}}
Our main identifiability results in the main theorems have assumed that the latent dimension $\mck$ is known. Here, we illustrate that one can additionally establish the identifiability of $\mck$ under a weaker notion of identifiability. To elaborate, we identify $\mck$ under the class of DDEs that satisfy the two-pure-children condition A.
\begin{theorem}[Modification of Theorem 1 in \cite{lee2025identifiability}]\label{thm:select K}
    Assume a $D$-latent layer DDEs satisfying the two-pure-children condition A (see Theorem 3.1), where $D$ is given but $\mck$ is unknown. Then, the number of latent variables $\mck$ is identifiable.
\end{theorem}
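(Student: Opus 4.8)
The plan is to peel the layers off from the bottom, recovering at each step enough of the graphical skeleton to pin down the current layer's size, exactly as in the layerwise reduction used for \cref{thm:deep id,thm:deep gid} but with \cref{prop:sid} replaced by its weaker counterpart that only claims \emph{the number} of latent variables is identified (the one-latent-layer instance being Theorem~1 of \cite{lee2025identifiability}, which needs only condition~A). For the shallowest layer this runs as follows. For any partition $[J]=G_1\sqcup G_2\sqcup G_3$ and any finite measurable discretization of each $\mcy_j$, form the three-way tensor $\mathbf{P}_0$ of \eqref{eq:tensor decomposition} and record the rank of one of its matrix unfoldings. For every such choice this rank is at most $2^{K^{(1)}}$, since $\mathbf{P}_0$ is a sum of $2^{K^{(1)}}$ rank-one tensors indexed by $\aaa^{(1)}\in\{0,1\}^{K^{(1)}}$. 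Conversely, condition~A at $\GG^{(1)\star}$ yields two pure children of each $A_k^{(1)}$ in $\YY$; routing one pure child of each $A_k^{(1)}$ to $G_1$, another to $G_2$, the rest to $G_3$, and discretizing those pure children along sets separating $\PP(\,\cdot\mid A_k^{(1)}=0)$ from $\PP(\,\cdot\mid A_k^{(1)}=1)$ — possible by faithfulness, \cref{assmp:proportion}(b) — the factor matrices $\NN_1,\NN_2$ from \cref{subsec:supp proof of one layer} become square and invertible, $\NN_3$ has no zero column, and $\bo{\pi}^{(1)}$ has strictly positive entries by \cref{assmp:proportion}(a) together with $\mathrm{range}(g_{\mathrm{logistic}})\subset(0,1)$, so a short rank argument gives that the unfolding has rank exactly $2^{K^{(1)}}$. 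Hence the maximum of the unfolding rank over all partitions and discretizations equals $2^{K^{(1)}}$, a functional of $\PP(\YY)$, which identifies $K^{(1)}$.

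To iterate I would recover not merely the number of shallow latent variables but the partition of the pure children in $\YY$ into blocks ``pure children of a common $A_k^{(1)}$'': two observed variables share a common sole parent in layer~1 iff suitable low-rank relations hold among the contingency tensors of small subsets of $\YY$, in the spirit of quartet/rank tests for latent-tree structure learning, and by condition~A these blocks are in bijection with the $K^{(1)}$ shallow latent variables. Treating each block $\YY_c$ as a single multivalued ``super-observation'' — whose conditional law given its layer-1 parent takes two distinct values, again by faithfulness — the layer-1 level becomes, in effect, a new observed layer of $K^{(1)}$ noisy bits distributed as the $(D-1)$-layer DDE $\PP(\ma^{(1)})$; crucially, restricting to the blocks whose layer-1 parent is itself a pure child of a layer-2 variable makes the $\YY_c$ conditionally independent given $\ma^{(2)}$, each depending on $\ma^{(2)}$ through a single coordinate, so their joint contingency tensor is a sum of only $2^{K^{(2)}}$ rank-one terms. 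Re-running the first paragraph's argument on these super-observations (condition~A at $\GG^{(2)\star}$ supplies the needed pure children) identifies $K^{(2)}$, and continuing down the ladder identifies $K^{(1)}>\dots>K^{(D-1)}$. The top size $K^{(D)}$, where no pure-children structure is assumed, I would handle separately: the resulting top-level object is a finite mixture of $2^{K^{(D)}}$ distinct product-Bernoulli laws with strictly positive weights that \cref{assmp:proportion} (no all-zero column of $\GG^{(D)\star}$, positive column sums of $\BB^{(D)\star}$, faithfulness) renders irreducible, so $K^{(D)}$ equals the minimal number of product-Bernoulli components and is thereby identified.

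The crux — and where I expect essentially all the work to lie — is the skeleton-recovery inside the recursion. The super-observations $\YY_c$ are genuine functions of the data, but the distribution of $\ma^{(1)}$ (and of the deeper layers) is itself \emph{not} identifiable under condition~A alone, so one cannot literally feed a known distribution to the one-layer argument: every rank and conditional-independence statement used at the block level must be re-expressed as a functional of the original $\PP(\YY)$ and verified to be invariant under the residual channel ambiguity, and one must rule out spurious low-rank coincidences among unrelated observed variables corrupting the grouping. Making these tests work uniformly across all $D$ layers, together with the separate irreducibility argument for the deepest layer (the only place \cref{assmp:proportion} at $\GG^{(D)\star}$ enters), is the hard part; the remaining ingredients — the rank facts for Kronecker/Khatri--Rao products, the conditional-independence identities along pure-descendant subtrees, and the faithfulness-based choice of the separating sets $\{S_{l,j}\}$ of \cref{subsec:supp proof of one layer} — are routine.
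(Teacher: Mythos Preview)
Your rank argument for $K^{(1)}$ — the maximum over tripartitions and discretizations of the unfolding rank of $\mathbf{P}_0$ equals $2^{K^{(1)}}$ — is correct and is exactly what the paper defers to Theorem~1 of \cite{lee2025identifiability}; the paper's own proof is a single sentence invoking that one-layer result and then pointing to the layerwise recursion of \cref{thm:deep id}. Where you go beyond the paper is in recognizing that condition~A alone does not identify the saturated latent law $\bo\pi^{(1)}=\PP(\ma^{(1)})$: at every recursion step beyond the first the ``observations'' are binary, and already the $K=1$, $J=2$ Bernoulli case has five parameters against three observable moments. Hence the literal ``recover $\bo\pi^{(d)}$, treat $\ma^{(d)}$ as observed, repeat'' step from the proof of \cref{thm:deep id} does not transfer verbatim, and your super-observation route (recover the pure-children blocks via rank tests, then iterate the rank argument on the blocks) is a genuine alternative mechanism that the paper's citation silently outsources to \cite{lee2025identifiability}.

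One concrete gap in your proposal: at the deepest layer, \cref{assmp:proportion} alone does not force the $2^{K^{(D)}}$ product-Bernoulli laws $\PP(\ma^{(D-1)}\mid\aaa^{(D)})$ to be pairwise distinct. Take $K^{(D)}=2$ with $\beta_{k,1}^{(D)}=\beta_{k,2}^{(D)}>0$ for every $k$; then $\aaa^{(D)}=(1,0)$ and $\aaa'^{(D)}=(0,1)$ give identical components, so your ``minimal number of product-Bernoulli components'' undercounts $2^{K^{(D)}}$. The issue disappears if condition~A is read as holding for all $d\in[D]$ — which the layerwise proof of \cref{thm:deep id} in fact uses, despite the stated $d\le D-1$ — since then the top layer is handled by one more step of your recursion and no separate irreducibility argument is needed.
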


The above result follows directly from a more general claim from \cite{lee2025identifiability} alongside the layerwise identifiability in the proof of \cref{thm:deep id}.

\paragraph{Detour: Identifiability of one-latent-layer saturated models with interaction effects of binary latent variables}
    One may ask whether the linear/additive parametrization $\beta_{j,0} + \sum_{k \in [K]} \beta_{j,k} A_k$ in the one-latent-layer saturated model is necessary for its identifiability. We next show that this is not the case, and prove that identifiability of $\GG$ can be established under two more flexible nonlinear (in terms of the dependence on $\ma$) parametric models commonly used in psychometrics, using the exact same conditions A and B.
    Here, to handle different parametrizations, define $\eta_{j,\aaa}$ to be the nonlinear parameter for $Y_j \mid \ma$: 
    $$Y_j \mid (\ma = \aaa) \sim \text{ParFam}_j (\eta_{j,\aaa}),$$
    and rewrite condition B as follows:
    \begin{enumerate}
        \item[B'.] For any $\aaa \neq \aaa'$, there exists $j > 2K$ such that $\eta_{j,\aaa} \neq \eta_{j,\aaa'}$.
    \end{enumerate}

    We first consider the \emph{ExpDINA model} \citep[see Definition 4 in][]{lee2024new}\footnote{While this model is originally named ``Exponential-family based'', this is not required for our identifiability conclusion. In other words, the family Parfam$_j$ in \eqref{eq:expdina} can be any paramteric family.}, which considers the following conjunctive form of $\eta_{j,\aaa}$:
    \begin{equation}\label{eq:expdina}
        Y_j \mid (\ma = \aaa) \sim \textnormal{ParFam}_j \Big(g_j \big(\beta_{j,0} + \beta_{j,1} \prod_{k=1}^K \alpha_k^{q_{j,k}}, \gamma_j \big) \Big).
    \end{equation}
    In other words, the conditional distribution has two possible parameter values based on whether $\prod_{k=1}^K \alpha_k^{q_{j,k}}=1$ (or equivalently, whether $\ma \succeq \mathbf{q}_j$).
    To resolve the sign flipping ambiguity, let as assume $\beta_{j,1} > 0$ for all $j$ instead of Assumption \ref{assmp:proportion}(c).
    Under this model, both parts of \cref{lem:H_j cardinality} does not hold since $|\mathcal{S}_j| = 2$ whenever $|H_j| \ge 1$. Hence, we focus on partitioning $\{0,1\}^K$ by grouping the binary patterns $\aaa$ that takes the same values of \eqref{eq:S_j expression}, instead of just looking at the cardinality of $\mathcal{S}_j$. We formally state this claim in the following Lemma. Consequently, this Lemma can be used in place of \cref{lem:H_j cardinality} to prove identifiability of the exploratory ExpDINA model. The proof is a direct modification of Step 2 above (first, use part (a) of \cref{lem:mono dina} to construct a permutation $\sigma \in S_{[K]}$ based on the first $K$ rows, and use part (b) to prove $\tilde{g}_{j,k} = g_{j,\sigma(k)}$ for the other rows), and we omit the details.

    \begin{lemma}\label{lem:mono dina}
    Consider an ExpDINA model. Fix any $j$ such that $|H_j| \ge 1$, and partition $\{0,1\}^K$ into two sets $T_j$ and $T_j^c$ based on the value of $\mathcal{S}_j = \{\mathbf{s}_j(\aaa)\}_{\aaa}$, so that $|T_j| \le |T_j^c|$. If $|T_j| = |T_j^c|$, we break the symmetry by additionally assuming that $\mathbf{1}_K \in T_j$. Then, the following holds.
    \begin{enumerate}[(a)]
        \item $T_j = \{\aaa: \aaa \succeq \mathbf{g}_j \}$ and  $|T_j| = 2^{K - |H_j|}.$
        \item Suppose that the first $K$ rows of $\GG$ is a row-permutation of $\I_K$, in other words, there exists $\sigma \in S_{[K]}$ such that $\mathbf{g}_{k} = \mathbf{e}_{\sigma(k)}$ for all $k$. Then, for any $j$, $T_j \subseteq T_k$ if and only if $g_{j,\sigma(k)} = 1$.
    \end{enumerate}
    \end{lemma}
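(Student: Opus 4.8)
\noindent\textbf{Proof plan for Lemma \ref{lem:mono dina}.}
The starting point is the observation that, under the ExpDINA parametrization \eqref{eq:expdina}, the conditional law $\PP_{j,\aaa}$ of $Y_j\mid\ma=\aaa$ depends on $\aaa$ only through the single binary quantity $\prod_{k=1}^K\alpha_k^{q_{j,k}}=\mathbbm{1}(\aaa\succeq\mathbf{q}_j)$, so $\eta_{j,\aaa}$ takes only the two values $\beta_{j,0}$ and $\beta_{j,0}+\beta_{j,1}$. With the standing assumption $\beta_{j,1}>0$ for this model, an injective link $g_j$, and an identifiable family $\text{ParFam}_j$, these two values give two distinct distributions whenever $|H_j|\ge1$; since $\mathbf{s}_j(\aaa)$ is a deterministic function of $\PP_{j,\aaa}$, the map $\aaa\mapsto\mathbf{s}_j(\aaa)$ is therefore at most two-valued, and by the faithfulness property (b) of the discretizing sets $S_{1,j},\dots,S_{\kappa_j,j}$ it is non-constant, hence exactly two-valued, with the two level sets being $\{\aaa:\aaa\succeq\mathbf{q}_j\}$ and its complement. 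Finally, under ExpDINA the graphical row $\mathbf{g}_j$ equals $\mathbf{q}_j$ (because $\beta_{j,1}>0$), so $H_j=\{k:q_{j,k}=1\}$ and the two level sets are $\{\aaa:\aaa\succeq\mathbf{g}_j\}$ and its complement.

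For part (a) I would then argue by counting: the level set $\{\aaa:\aaa\succeq\mathbf{q}_j\}$ forces the $|H_j|$ coordinates in $H_j$ to equal $1$ and leaves the other $K-|H_j|$ coordinates free, so it has cardinality $2^{K-|H_j|}$, and its complement has cardinality $2^K-2^{K-|H_j|}$. Because $|H_j|\ge1$, we have $2^{K-|H_j|}\le2^{K-1}\le2^K-2^{K-|H_j|}$, with equality only when $|H_j|=1$. Hence the smaller block of the partition is $\{\aaa:\aaa\succeq\mathbf{q}_j\}$ whenever $|H_j|\ge2$; when $|H_j|=1$ the two blocks have equal size and the tie-breaking convention requires $\mathbf{1}_K\in T_j$, and since $\mathbf{1}_K\succeq\mathbf{q}_j$ for every $\mathbf{q}_j$, the vector $\mathbf{1}_K$ sits in $\{\aaa:\aaa\succeq\mathbf{q}_j\}$, so again $T_j=\{\aaa:\aaa\succeq\mathbf{q}_j\}=\{\aaa:\aaa\succeq\mathbf{g}_j\}$ and $|T_j|=2^{K-|H_j|}$.

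For part (b) I would apply part (a) to each of the first $K$ rows: since $\mathbf{g}_k=\mathbf{e}_{\sigma(k)}$ we get $H_k=\{\sigma(k)\}$, hence $T_k=\{\aaa:\alpha_{\sigma(k)}=1\}$. For an arbitrary row $j$ with $|H_j|\ge1$, part (a) gives $T_j=\{\aaa:\alpha_l=1\text{ for all }l\in H_j\}$. Thus $T_j\subseteq T_k$ holds if and only if every configuration that is $1$ on all of $H_j$ is also $1$ at coordinate $\sigma(k)$, which is equivalent to $\sigma(k)\in H_j$: the ``if'' direction is immediate, and for ``only if'' one takes the witness $\aaa$ that equals $1$ on $H_j$ and $0$ at $\sigma(k)$ (a legitimate configuration exactly when $\sigma(k)\notin H_j$). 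Since $\sigma(k)\in H_j$ is the same as $g_{j,\sigma(k)}=1$, this proves the claim; the degenerate case $H_j=\varnothing$ is not relevant here since then $Y_j$ carries no information about $\ma$.

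The argument is short, and the only genuinely delicate point is the cardinality and tie-breaking bookkeeping in part (a): one must be careful to confirm that $T_j$, defined abstractly as the smaller block of the level-set partition, really is the upward-closed set $\{\aaa\succeq\mathbf{q}_j\}$ and not its complement, which is precisely where the bound $|H_j|\ge1$ and the $\mathbf{1}_K$ convention enter. The remaining ingredients — the reduction of $\PP_{j,\aaa}$ to the indicator $\mathbbm{1}(\aaa\succeq\mathbf{q}_j)$, the identity $\mathbf{g}_j=\mathbf{q}_j$, and the elementary set-inclusion manipulation in part (b) — are routine.
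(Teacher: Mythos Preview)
Your proposal is correct and follows essentially the same approach as the paper's proof: both arguments reduce to the observation that under ExpDINA the conditional law depends on $\aaa$ only through $\mathbbm{1}(\aaa\succeq\mathbf{g}_j)$, count the $2^{K-|H_j|}$ configurations in the upper set, and then handle part (b) via the equivalence $T_j\subseteq T_k\iff\sigma(k)\in H_j$ using a single witness configuration for the contrapositive. Your write-up is in fact more careful than the paper's on the tie-breaking and two-valuedness bookkeeping in part (a), but the logic is identical.
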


    \begin{proof}
    \begin{enumerate}[(a)]
        \item Note that the parameter of the ExpDINA model takes two values, depending on the value of $\prod_{k} \alpha_k^{g_{j,k}} = \mathbf{1}(\aaa \succeq \mathbf{g}_j)$. This value is equal to one for the $2^{K-|H_j|}$ configurations of $\aaa$ such that $\aaa \succeq \mathbf{g}_j$, and these are exactly the elements of $T_j$.
        \item The ``if'' part is immediate since $g_{j,\sigma(k)} = 1$ implies $\mathbf{g}_j \succeq \mathbf{g}_k$. \\ For the ``only if'' part, we prove the contrapositive. Suppose $g_{j,\sigma(k)} = 0$. Then, $\aaa = \mathbf{1}_K$ and $\aaa' := (\alpha_1, \ldots, \alpha_{\sigma(k)-1}, 0, \alpha_{\sigma(k)+1}, \ldots, \alpha_K)$ have the same conditional distribution as in \eqref{eq:expdina}. Hence, $\aaa$ and $\aaa'$ must both belong in $T_j$ but $\aaa' \not \in T_k$, so $T_j \not \subseteq T_k$.
        \end{enumerate}
    \end{proof}

    Next, as a second example, we consider the flexible \emph{ExpGDM} \citep[see Definition 1 in the Supplementary Material of][]{lee2024new} and prove its identifiability under the same conditions A and B'. This model considers all possible linear and interaction effects between the latent variables:
    \begin{align*}
        Y_j \mid \ma \sim &\textnormal{ParFam} \Big( \eta_{j,\aaa}, ~\gamma_j
        \Big), \\
        \text{where} \quad \eta_{j, \aaa}= & \beta_{j,\varnothing} + {\sum}_{k=1}^{K} \beta_{j,k} \left\{q_{j,k} \alpha_{k}\right\}
        + {\sum}_{1\leq k_1 < k_2\leq K} \beta_{j, k_1 k_2} \left\{q_{j,k_1}\alpha_{k_1}\right\} \left\{q_{j,k_2}  \alpha_{k_2}\right\} 
        \\ \notag
        &\qquad
        + \cdots + \beta_{j,H_j} {\prod}_{k\in H_j} \left\{q_{j,k} \alpha_{k}\right\}.
    \end{align*}
    The above model incorporates all possible main and interaction effects of the parent latent variables in the conditional distribution of $Y_j \mid \ma$.
    It is clear that this model generalizes both the one-latent-layer saturated model in \cref{def:saturated model} and the ExpDINA model.
    To address the sign-flipping issue, we assume that 
    \begin{align}\label{eq:mono gdina}
        \eta_{j, \aaa} > \eta_{j, \aaa'} \quad \text{for} \quad \aaa \succeq \mathbf{g}_j, ~ \aaa' \not\succeq \mathbf{g}_j.
    \end{align}
    This is a stronger assumption compared to Assumption \ref{assmp:proportion}(c). We impose this modified monotonicity condition as Assumption \ref{assmp:proportion}(c) cannot resolve the sign-flipping ambiguity, as we illustrate this in the following example.
    \begin{example}
         Consider a toy setting of $J=K=2$ and $\GG = \I_2$ with $\beta_{1,1} = \beta_{2,2} = 1$. Suppose that there all intercepts and interaction effects are zero for each $j = 1,2$: $\beta_{j,0} = \beta_{j,12} = 0$. Consider a stronger monotonicity assumption that all main-effects are nonnegative, that is $\beta_{j,k} \ge 0$. Define an alternative model with $\tilde{\GG} = \begin{pmatrix}
        1 & 0 \\
        1 & 1
    \end{pmatrix}$ and positive main-effects: $\tilde{\beta}_{1,1} = 1, \tilde{\beta}_{2,1} = \tilde{\beta}_{2,2} = 1$ and no intercepts, but with a negative interaction effect $\tilde{\beta}_{2,12} = -2$. Then, the matrix $\{\eta_{j,\aaa}\}_{j,\aaa}$ and $\{\tilde{\eta}_{j,\aaa}\}_{j,\aaa}$ are identical up to column permutation, so these two distinct parameters are non-distinguishable.
    \end{example}
    Also assuming a monotone condition on the parametric family (see below), we can modify \cref{lem:mono dina} as follows. Consequently, under all these assumptions, the ExpGDM is also identifiable under conditions A and B'.
    \begin{definition}[Monotone family]
        We say that a parametric family $p(\cdot \mid \eta)$ is a \emph{monotone family} with a \emph{monotone set} $U$ when there exists a measurable set $U \subset \mathcal{Y}$ not depending on $\eta$ such that $\PP(Y \in U \mid \eta)$ (or $\PP(Y \in U \mid \eta, \gamma)$) is a strictly increasing function in $\eta$. 
    \end{definition}
    Note that all parametric families considered in this paper are monotone families. For example, we can take $U = \{1\}, \{1, 2, \ldots\}, (0, \infty)$ for the Bernoulli/Poisson/Normal distribution with mean $\eta$, respectively.
    
    \begin{lemma}
        Consider an ExpGDM. Suppose all $p_j$s are monotone families with monotone sets $U_j$, and the monotonicity condition \eqref{eq:mono gdina} holds. Let $T_j:= \{\aaa: \PP_{j, \aaa}(U_j) = \max_{\aaa'} \PP_{j,\aaa'}(U_j) \}$. Then, the same conclusions as in \cref{lem:mono dina} hold.
    \end{lemma}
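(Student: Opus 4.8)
The plan is to mimic the proof of \cref{lem:mono dina}, using the monotone-family structure in place of the two-valued parametrization of the ExpDINA model. The starting observation is that, since $p_j$ is a monotone family with monotone set $U_j$, the map $\eta \mapsto \PP(Y_j \in U_j \mid \eta)$ (or $\PP(Y_j\in U_j\mid\eta,\gamma_j)$) is strictly increasing; hence for fixed $j$ we have $\PP_{j,\aaa}(U_j) \le \PP_{j,\aaa'}(U_j)$ if and only if $\eta_{j,\aaa} \le \eta_{j,\aaa'}$, and therefore
\[
T_j = \Big\{\aaa\in\{0,1\}^K : \eta_{j,\aaa} = \max_{\aaa'\in\{0,1\}^K}\eta_{j,\aaa'}\Big\}.
\]
So it suffices to describe the set of maximizers of $\aaa\mapsto\eta_{j,\aaa}$ in the ExpGDM parametrization.

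For part (a) I would argue in two steps. First, whenever $\aaa \succeq \mathbf{g}_j$, i.e.\ $\alpha_k = 1$ for all $k\in H_j$, every factor $\{q_{j,k}\alpha_k\}$ appearing in $\eta_{j,\aaa}$ equals $q_{j,k}$, so $\eta_{j,\aaa}$ is constant on $\{\aaa:\aaa\succeq\mathbf{g}_j\}$ and equals $\eta_{j,\mathbf{1}_K}$. Second, the monotonicity assumption \eqref{eq:mono gdina} gives $\eta_{j,\aaa} < \eta_{j,\aaa^\star}=\eta_{j,\mathbf{1}_K}$ for every $\aaa\not\succeq\mathbf{g}_j$ and any $\aaa^\star\succeq\mathbf{g}_j$. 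Combining the two, the maximum of $\aaa\mapsto\eta_{j,\aaa}$ is attained precisely on $\{\aaa:\aaa\succeq\mathbf{g}_j\}$, so $T_j = \{\aaa:\aaa\succeq\mathbf{g}_j\}$; since the coordinates of $\aaa$ outside $H_j$ are unconstrained, $|T_j| = 2^{K-|H_j|}$. Note that $\mathbf{1}_K\in T_j$ always, so when $|H_j|=1$ (the case where the two sides of the partition in \cref{lem:mono dina} have equal size) this $T_j$ is consistent with the tie-breaking convention there.

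For part (b), suppose the first $K$ rows of $\GG$ satisfy $\mathbf{g}_k = \mathbf{e}_{\sigma(k)}$ for a permutation $\sigma\in S_{[K]}$. Then part (a) gives $T_k = \{\aaa:\alpha_{\sigma(k)}=1\}$ and $T_j = \{\aaa:\aaa\succeq\mathbf{g}_j\}$. The inclusion $T_j\subseteq T_k$ says that $\aaa\succeq\mathbf{g}_j$ forces $\alpha_{\sigma(k)}=1$, which holds if and only if $\mathbf{g}_j\succeq\mathbf{e}_{\sigma(k)}$, i.e.\ $g_{j,\sigma(k)}=1$. For the contrapositive of the ``only if'' direction, if $g_{j,\sigma(k)}=0$ then $\aaa':=\mathbf{1}_K-\mathbf{e}_{\sigma(k)}$ still satisfies $\aaa'\succeq\mathbf{g}_j$, hence $\aaa'\in T_j$, while $\alpha'_{\sigma(k)}=0$ gives $\aaa'\notin T_k$, so $T_j\not\subseteq T_k$. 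With parts (a)--(b) in hand, one then feeds this lemma into the Step-2 argument in the proof of \cref{prop:sid} (using conditions A and B$'$ exactly as indicated after \cref{lem:mono dina}) to recover $\GG$ up to the label-switching equivalence, which yields identifiability of the ExpGDM.

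I do not expect a substantive obstacle here; the only delicate point is checking that \eqref{eq:mono gdina} is precisely the hypothesis that pins the argmax of $\eta_{j,\cdot}$ onto $\{\aaa\succeq\mathbf{g}_j\}$ rather than onto its complement, and confirming that the argmax-set definition of $T_j$ in this lemma coincides with the set produced by the partition-and-tie-break recipe used in \cref{lem:mono dina}.
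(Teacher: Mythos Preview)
Your proposal is correct and follows essentially the same approach as the paper: you use the monotone-family property to rewrite $T_j$ as the argmax set of $\eta_{j,\cdot}$, then use \eqref{eq:mono gdina} and the fact that $\eta_{j,\aaa}$ depends only on $\aaa_{H_j}$ (what the paper calls ``faithfulness'') to identify $T_j=\{\aaa:\aaa\succeq\mathbf{g}_j\}$, and part (b) then follows from part (a) exactly as in \cref{lem:mono dina}. Your write-up is in fact more detailed than the paper's terse two-line proof, and your remarks about the tie-breaking convention and the downstream use in Step~2 are extraneous to the lemma itself but not incorrect.
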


    \begin{proof}
    \begin{enumerate}[(a)]
        \item Since we consider a monotone family, we can write $T_j = \{\aaa: \eta_{j,\aaa} = \max_{\aaa'} \eta_{j,\aaa'} \}$. Then, by assumption \eqref{eq:mono gdina}, $\aaa \not \in T_j$ for $\aaa \not\succeq \mathbf{g}_j$. Also, by the faithfulness assumption, we have $\aaa \in T_j$ for all $\aaa \succeq \mathbf{g}_j$.
        \item Assuming part (a), the argument in \cref{lem:mono dina} can be applied.
    \end{enumerate}
    \end{proof}

\section{Details of the Layerwise Double-SVD Initialization}\label{sec:supp initialization}

\subsection{Details of Algorithm \ref{algo-init}}
We describe full details of the layerwise double-SVD initialization in \cref{algo-init}.
We first consider the noiseless scenario to motivate the general procedure. The setting is that we are given a $N \times J$ matrix $\E [\YY]$, and wish to recover $\ma, \BB$. 
For simplicity, we consider the one-latent-layer saturated model, and omit the layer-wise superscript to simplify the notation. We also assume identical parametric families in the observed layer and a nonlinear function $\mu \circ g$.

The first step involves denoising the nonlinearity and rewriting the data matrix as a low-rank approximation. Recall that $\mu:H \to \mathbb{R}$ computes the mean of the observed-layer parametric family, $g:\mathbb{R} \to H$ is the link function, and define a function $\gt := \mu \circ g$.
Since $$\E (Y_{i,j} \mid \ma_i) = \mu(g(\beta_{j,0} + \sum_k \beta_{j,k} A_{i,k}, \gamma_j)) = \gt(\beta_{j,0} + \sum_k \beta_{j,k} A_{i,k}),$$
we have $$\ZZ:= \gt^{-1} (\E (\YY \mid \ma) ) = [\mathbf{1}_N, ~ \ma] \BB_1^{\top}.$$ 
Let $\ZZ_0$ be the centered version of $\ZZ$ so that the column sums are zero, in other words, ${z}_0(i,j) := {z}(i,j) - \frac{1}{N} \sum_{i'=1}^N {z}(i',j)$. Similarly, let $\ma_0$ be the column-centered version of $\ma$ with $A_0(i,k) := A_(i,k) - \frac{1}{N} \sum_{i'=1}^N A(i',l)$. Then, we have 
\begin{equation}\label{eq:centered spectral decomposition}
    \ZZ_0 = \ma_0 \BB^{\top}.
\end{equation}
This is a rank $K$ decomposition, and we can write the SVD of $\ZZ_0$ as $\ZZ_0 = \UU \bo\Sigma \VV^{\top}$. Here, $\UU, \bo\Sigma, \VV$ are $N \times K$, $K \times K$, $J \times K$ matrices, respectively.

The second step addresses the rotation invariance of the SVD by finding the sparse representation in \eqref{eq:centered spectral decomposition}.
Motivated by the sparsity of $\BB$, we perform the Varimax rotation on $\VV$. As Varimax finds a sparse rotation \citep[see][for a theoretical justification]{rohe2023vintage}, we expect it to find a rotation matrix $\mathbf{R}$ such that $\hat{\BB} := \VV \mathbf{R}$ has the same sparsity pattern as $\BB$ (and the graphical matrix $\GG$). Consequently, we can use $\hat{\BB}$ as a rough estimate for $\BB$, and the sparsity pattern of $\hat{\BB}$ to estimate $\GG$. The estimate for $\ma$ follows from solving \eqref{eq:centered spectral decomposition} for $\ma_0$, using the estimated $\hat{\BB}$. 

There are several subtleties to address when extending this procedure to the sample-based setting. One immediate issue is applying the inverse-link function $\gt^{-1}$. For discrete samples, observed data may take values in the boundary of the sample space, such as $Y_{j} = 0$ when $\mcy = \mathbb{N} \cup \{0\}$. This makes the inverse function not well-defined. To resolve this, we apply the double SVD-based procedure in \cite{zhang2020note} as mentioned in the main text, which denoises and truncates the data into a subset of the sample space. The final output of this procedure is the sample-version SVD for the matrix $\mathbf{Z}_0$ (see step 6 in \cref{algo-spectral-initialization}).

Another subtlety arises while estimating $\mathbf{A}$ and $\BB$ after rotating $\mathbf{V}$, since Varimax does not account for the scaling of the row/columns. Here, we exploit the discreteness of the latent variables in $\ma$ to rescale $\BB$. Using the decomposition \eqref{eq:centered spectral decomposition}, we can crudely estimate $\ma_0$ using the sparse Varimax output $\hat{\BB}$. While the estimates $\hat{\ma}_0$ also suffer from the same scaling issue, we can still estimate the \textit{binary} $\ma$ by noting that $A_0(i,k) < 0$ if and only if $A(i,k) = 0$. Finally, using the estimated $\hat{\ma}$, we re-estimate $\BB$ via \eqref{eq:centered spectral decomposition}, now with correct scaling; see steps 9-10 in \cref{algo-spectral-initialization}. 

The entire procedure is summarized in Algorithm \ref{algo-spectral-initialization}, where we also specify the choice of tuning parameters.

\begin{algorithm}[h!]
\caption{Spectral initialization for Algorithm \ref{algo-penalized}}
\label{algo-spectral-initialization}
\SetKwInOut{Input}{Input}
\SetKwInOut{Output}{Output}

\KwData{$\YY, \{K^{(d)}\}_{d}, D$, function $\tilde{g}$, truncation parameters $\epsilon = 10^{-4}, \delta = \frac{1}{2.5 \sqrt{J}}$}

\begin{enumerate}[1.]
    \item Apply SVD to $\YY$ and write $\YY = \UU \Sigma \VV^{\top}$, where $\Sigma = \diag(\sigma_i)$ and $\sigma_1 \ge \ldots \ge \sigma_J$.
    \item Let ${\YY}_{\tilde{K}^{(1)}} = \sum_{k=1}^{\tilde{K}^{(1)}} \sigma_k \bu_k \bv_k^{\top}$, where $\tilde{K}^{(1)} := \max\{K^{(1)}+1, \arg\max_k \{ \sigma_k \ge 1.01 \sqrt{N} \} \}$.
    \item Define $\hat{\YY}_{\tilde{K}^{(1)}}$ by truncating $\YY_{\tilde{K}^{(1)}}$ to the range of responses, at level $\epsilon$. See \cref{rmk:truncation} for details.
    
    
    \item Define $\hat{\ZZ}$ by letting $ \hat{z}(i,j) =  \tilde{g}^{-1}(\hat{y}_{\tilde{K}^{(1)}}(i,j))$.

    \item Let $\hat{\ZZ}_0$ be the centered version of $\hat{\ZZ}$, that is, $\hat{z}_0(i,j)= \hat{z}(i,j) - \frac{1}{N} \sum_{k=1}^N \hat{z}(k,j)$.

    \item Apply SVD to $\hat{\ZZ}_0$ and write its rank-$K^{(1)}$ approximation as $\hat{\ZZ}_0 \approx \hat{\UU} \hat{\Sigma} \hat{\VV}$.

    \item Let $\tilde{\VV}$ be the rotated version of $\hat{\VV}$ according to the Varimax criteria.

    \item Entrywise threshold $\tilde{\VV}$ at $\delta$ to induce sparsity, and flip the sign of each column so that all columns have positive mean.
    Let $\hat{\GG^1}$ be the estimated sparsity pattern. 

    \item Estimate the centered $\ma_0$ by $\hat{\ma}_0 := \hat{\ZZ}_0 \tilde{\VV} (\tilde{\VV}^{\top} \tilde{\VV})^{-1}$, and estimate $\ma$ by reading off the signs: $\hat{A}(i,k) = \mathbbm{1}(A_0(i,k) > 0).$ 

    \item Let $\hat{\ma}_{\text{long}} := [\mathbf{1}, \hat{\ma}]$.
    Estimate $\BB_1$ by $\hat{\BB}_1 := C_g ( ( \hat{\ma}^{\top}_{\text{long}} \hat{\ma}_{\text{long}})^{-1} \hat{\ma}_{\text{long}}^{\top} \hat{\ZZ}_0) \cdot \hat{\GG}^1$, where $\cdot$ is the element-wise product and $C_g$ is a positive constant that is defined in \cref{rmk:scaling constant}

    \item Let $\YY = \hat{\ma}$ and $g$ be the logistic function. Go back to Step 1 to estimate the next layer coefficient matrix. Continue until reaching the deepest layer.

    \item For the deepest layer, estimate $\pp$ by setting $\hat{\pp}_k := \frac{1}{N} \sum_{i=1}^N \hat{A}(i,k)$. 
\end{enumerate}
  
 \Output{$\hat{\pp}, \{\hat{\BB}^{(d)}\}_d$.}
\end{algorithm}

\begin{remark}[Truncating $\hat{\YY}$]\label{rmk:truncation}
    We explain more on the truncation details in Step 3 by considering specific response types. For Normal responses, the original sample space is $\mathbb{R}$ and the truncation (Steps 1-4 in \cref{algo-spectral-initialization}) may be omitted.
    For Binary responses, we set 
    $$\hat{y}_{K^{(1)}}(i,j) = \begin{cases}
        \epsilon, &\text{ if } y_{K^{(1)}}(i,j) = 0, \\
        1-\epsilon, &\text{ if } y_{K^{(1)}}(i,j) = 1.
    \end{cases}$$
    For Poisson responses, we set $$\hat{y}_{K^{(1)}}(i,j) = \begin{cases}
        \epsilon, &\text{ if } y_{K^{(1)}}(i,j) < \epsilon, \\
        y_{K^{(1)}}(i,j), &\text{ otherwise.}
    \end{cases}$$
    In terms of implementing the method, we follow the suggestions of \cite{zhang2020note} with $\epsilon = 10^{-4}$.
\end{remark}

\begin{remark}[Constant $C_g$]\label{rmk:scaling constant}
    In Step 10 of \cref{algo-spectral-initialization}, $C_g > 0$ is an artificial scaling constant that depends on the link function $\tilde{g}$. This is introduced to better adjust the scaling of $\BB$, as the nonlinear transform $\tilde{g}^{-1}$ leads to a potentially biased estimate for $\hat{\mathbf{Z}}_0$. This adjustment is unnecessary for Normal-based DDEs, where $g$ is the identity link, and in such cases, one can simply set $C_g = 1$. For the Bernoulli or Poisson-based DDE where $\tilde{g}$ is the logistic or exponential function, we choose $C_g = \frac{1}{2}$ as the scaling factor based on simulation results.
\end{remark}

\subsection{Estimation Accuracy Without the Spectral Initialization}\label{subsec:random initialization}
We illustrate the effectiveness of our spectral initialization for by comparing it with EM parameter estimates obtained under random initialization. We show that even for the low dimensional setting $(J, K^{(1)}, K^{(2)}) = (18,6,2)$, the overall model complexity may be too large for an EM algorithm with random initialization to converge to the global optimum. In contrast, the spectral initialization provides a reliable starting point. In \cref{tab:sim-random init}, we compare the accuracy of the PEM estimates under (a) random initialization and (b) spectral initialization. Here, we consider the same three parametric families as the main paper, the identifiable true parameter values $\mcb_s$ (see eq. \eqref{eq:true parameter sid}), and two sample sizes $N = 1000, 4000$. We set the random initialization as follows:
\begin{align*}
    p_k, B^{(1)}_{j,k}, B^{(2)}_{k,l} &\sim \text{Unif}(0, 1), \quad \mbox{for all } k \in [K^{(1)}], ~ j \in [J], ~ l \in [K^{(2)}], \\
    B^{(1)}_{j,0}, B^{(2)}_{k,0} &\sim \text{Unif}(-1, 0), \quad \gamma_j \sim \text{Unif}(0.5, 1.5),
\end{align*}
where Unif$(a,b)$ denotes the uniform distribution on the interval $(a,b)$.

\begin{table}[h!]
\begin{tabular}{cccccccccc}
\toprule
\multirow{2}{*}{ParFam} & & \multicolumn{2}{c}{Accuracy($\mcg$)} & \multicolumn{2}{c}{RMSE$(\TT)$} & \multicolumn{2}{c}{Time (s)} & \multicolumn{2}{c}{\# iterations} \\
\cmidrule{2-10}
 & Initialization \textbackslash{}$N$ & 1000 & 4000 & 1000 & 4000 & 1000 & 4000 & 1000 & 4000 \\
 \midrule
\multirow{2}{*}{Bernoulli} & Random &  0.617 & 0.547 & 1.37 & 1.32 & 23.6 & 48.4 & 20.1 & 27.11 \\
 & \textbf{Spectral} & 0.966 & 0.992 & 0.30 & 0.20 & 6.7 & 37.5 & 4.1 & 4.2 \\
\midrule
\multirow{2}{*}{Poisson} & Random & 0.743 & 0.725 & 1.47 & 1.49 & 20.4 &26.8 & 21.9 & 23.2 \\
 & \textbf{Spectral} & 0.999 & 1 & 0.16 & 0.08 & 3.6 & 6.4 & 4.4 & 4.0 \\
 \midrule
\multirow{2}{*}{Normal} & Random &  0.595 & 0.581 & 1.71 & 1.83 & 36.6 & 347.7 & 14.7 & 16.9  \\
 & \textbf{Spectral} & 0.996 & 1 & 0.13 & 0.06 & 1.2 & 3.0 & 4.1 & 4.4 \\
 \bottomrule
\end{tabular}
\caption{Accuracy measures for $\mcg$ and $\TT$, computation time and iterations for 2-layer DDE estimates under different initializations. For the Accuracy$(\mcg)$ column, larger is better. For the other columns, smaller is better.}
\label{tab:sim-random init}
\end{table}

The results in \cref{tab:sim-random init} clearly illustrates that random initialization does not effectively converge to the true parameter values. In contrast, the spectral initialization results in a significantly smaller estimator error as well as shorter time and smaller numbers of iterations, demonstrating its superior performance.

It may be worth mentioning that a significant proportion of local optimizers arise from boundary cases of the identifiability condition. For example, in the Normal case, the local optimizers emptied out one or more columns in the $\BB^{(1)}$ and $\BB^{(2)}$ matrices, or exhibit two linearly dependent columns. This corresponds to the set of parameters excluded by Assumption \ref{assmp:proportion}.

\section{Additional Algorithm Details}\label{sec:supp algo details}

\subsection{Penalized EM Algorithm}
This supplement presents the details of the standard penalized EM algorithm (PEM) that was mentioned in the main text, which does not use stochastic approximation. The PEM computes the penalized maximum likelihood estimator in \eqref{eq:penalized optimization} to estimate the potentially sparse coefficients $\BB^{(1)}$ and $\BB^{(2)}$.
The PEM is an iterative procedure that consists of an expectation step followed by a penalized maximization step \citep{green1990use, chen2015q}.
In the $(t+1)$th iteration, the E-step computes the expectation of the complete data penalized-log-likelihood
\begin{align}\notag
    \ell_c(\YY, \ma^{(1)}, \ma^{(2)}; \TT) - &\sum_{d=1}^2 p_{\lambda_N, \tau_N}(\BB^{(d)}) \notag
    =~ \sum_{i=1}^N \Bigg[\log \PP(\ma_i^{(2)}; \pp) + \log \PP(\ma_i^{(1)} \mid \ma_i^{(2)}; \BB^{(2)}) \\
    &+ \log \PP(\YY_i \mid \ma_i^{(1)}, \ma_i^{(2)}; \BB^{(1)}, \ggamma)\Bigg]  - \sum_{d=1}^2 p_{\lambda_N, \tau_N}(\BB^{(d)}). \notag
\end{align}
This requires calculating the conditional probability for each latent configuration using the previous parameter estimates; that is, calculating $\PP (\ma_i^{(1)} = \aaa^{(1)}, \ma_i^{(2)} = \aaa^{(2)} \mid \YY; \TT^{[t]})$ for all $i\in[N]$, $\aaa^{(1)}\in\{0,1\}^{K_1}$, and $\aaa^{(2)}\in\{0,1\}^{K_2}$. Here, the superscript $[t]$ denotes the $t$th iteration estimates.
In the M-step, we update the parameters by maximizing the expectation computed in the E step, which boils down to solving the following three maximizations:
\begin{align}
    \pp^{[t+1]}:= &\argmax_{\pp}\sum_{i=1}^N \E\left[\log \PP(\ma_i^{(2)}; \pp) ; \pp^{[t]}\right], \label{eq:EM maximization p}\\
    \BB^{(2),[t+1]}:= &\argmax_{\BB^{(2)}} \sum_{i=1}^N \E\left[\log \PP(\ma_i^{(1)} \mid \ma_i^{(2)}; \BB^{(2)}) ;  \BB^{(2),[t]} \right] - p_{\lambda_N,\tau_N}(\BB^{(2)}) , \label{eq:EM maximization B2}\\
    (\BB^{(1),[t+1]}, \ggamma^{[t+1]}):= &\argmax_{\BB^{(1)}, \ggamma} \sum_{i=1}^N \E\left[ \log \PP(\YY_i \mid \ma_i^{(1)}, \ma_i^{(2)} ); \BB^{(1),[t]}, \ggamma^{[t]})\right]- p_{\lambda_N,\tau_N}(\BB^{(1)}).\label{eq:EM maximization B1}
\end{align}
Here, the optimizations for each layer are separated; we update the top-layer proportion parameters $\pp$ in \eqref{eq:EM maximization p}, the middle latent layer coefficients $\BB^{(2)}$ in \eqref{eq:EM maximization B2}, and the bottom layer coefficients $(\BB^{(1)}, \ggamma)$ in \eqref{eq:EM maximization B1}.
Additionally, due to the conditional independence assumption in each layer, the maximizations can be further simplified into low-dimensional optimizations over each row of $\BB$. \cref{algo-penalized} summarizes the PEM algorithm. 

\begin{algorithm}[h!]
\caption{Penalized EM (PEM) algorithm for the two-latent-layer DDE}
\label{algo-penalized}
\setstretch{0.7}
\SetAlgoLined
\SetKwInOut{Input}{Input}
\SetKwInOut{Output}{Output}

\KwData{$\YY, \mck$, tuning parameters $\lambda_{N}, \tau_N$.}
Initialize $\TT^{[0]}$ using the layerwise initialization in \cref{algo-init}. 

 \While{log-likelihood has not converged}{
 In the $[t+1]$th iteration,
 
 \texttt{// E-step}
 
    \For{$(i, \aaa^{(1)}, \aa2) \in [N]\times \{0, 1 \}^{K^{(1)}}\times \{0, 1 \}^{K^{(2)}}$}{
\begin{align*}
&\varphi^{[t+1]}_{i, \aaone, \aa2} = \PP (\ma_i^{(1)} = \aaa^{(1)}, \ma_i^{(2)} = \aaa^{(2)} \mid \YY; \TT^{[t]})
\end{align*}
}

    \texttt{// M-step} \\
    update $\TT^{[t+1]} = (\pp^{[t+1]}, \BB^{(1), [t+1]}, \BB^{(2), [t+1]}, \ggamma^{[t+1]})$ by solving \eqref{eq:EM maximization p}-\eqref{eq:EM maximization B1}. 
  }
  Estimate $\GG$ based on the sparsity structure of $\hat{\BB}$ according to \eqref{eq:graphical matrix estimator}.\\
 \Output{Estimated continuous parameters $\hat{\TT}$ and graphical matrices $\hat{\GG}^{(1)}$, $\hat{\GG}^{(2)}$.}

\end{algorithm}

\subsection{Detailed Update Formulas for \cref{algo-saem}}

Continuing from Section 4, we describe \cref{algo-saem} under $D=2$ latent layers. The extension to more latent layers is straightforward and we omit the detailed updates for simplicity.

\paragraph{Simplified simulation step for the SAEM Algorithm.}
We display the complete conditionals for the simulation step in the $(t+1)$th iteration of the SAEM (see \cref{algo-saem}). First, we sample each $A_{i,l}^{(2),[t+1]}$ from the following distribution:
\begin{align*}
    \PP(A_{i,l}^{(2)}=\alpha_l^{(2)} \mid (-)) &\propto p_l^{[t] \alpha_l^{(2)}} (1-p_l^{[t]})^{1 - \alpha_l^{(2)}} \prod_{k=1}^{K^{(1)}} g_{\text{logistic}}(A_{i,k}^{(1), [t]}; e^{\eta_{k,\ma_i^{(2),[t]}}}) \\
    &\propto p_l^{[t] \alpha_l^{(2)}} (1-p_l^{[t]})^{1 - \alpha_l^{(2)}} \prod_{k=1}^{K^{(1)}} \frac{e^{A_{i,k}^{(1), [t]} \beta^{(2),[t]}_{k,l} \alpha_l^{(2)}}}{1 + e^{\eta_{k,\ma_i^{(2),[t]}}}}.
\end{align*}
Here, (-) denotes the samples/parameter values computed in the previous ($t$th) iteration, excluding the random variable of interest, $A_{i,l}^{(2), [t]}$. The notation 
$$\eta_{k,\ma_i^{(2),[t]}} := \beta_{k,0}^{(2),[t]} + \sum_{l' \neq l} \beta_{k,l'}^{(2),[t]} A_{i,l'}^{(2), [t]} + \beta_{k,l}^{(2),[t]} \alpha_l^{(2)}$$
denotes the linear combinations computed under $\TT^{[t]}, \ma^{(2),[t]}$. As $\alpha_l^{(2)} = 0/1$, sampling from this distribution is straightforward by computing the above expression.

Next, we sample each $A_{i,k}^{(1),[t+1]}$ similarly from the complete conditionals:
\begin{align*}
    \PP(A_{i,k}^{(1)} = \alpha_{k}^{(1)} \mid (-)) &\propto  e^{\alpha_{k}^{(1)} \eta_{k,\ma_i^{(2),[t]}}} \prod_{j=1}^J \PP \left(Y_{i,j} ; g(\eta_{j,\ma_i^{(1),[t]}}, \gamma_j^{[t]}) \right).
\end{align*}
Here, $\eta_{j,\ma_i^{(1),[t]}}$ is similarly defined as $ \beta_{j,0}^{(1),[t]} + \sum_{k' \neq k} \beta_{j,k'}^{(1),[t]} A_{i,k'}^{(1), [t]} + \beta_{j,k}^{(1),[t]}\alpha_{k}^{(1)}$.

\paragraph{Simplified M-step for the SAEM Algorithm.}
In the main paper, we have motivated the SAEM M-step in terms of the parameter $\BB^{(2)}$ (see \eqref{eq:SAEM maximization B2}). Here, for completeness, we present the fully expanded formulas for updating each parameter.

First, for each $l \in [K^{(2)}]$, we update $p_l$ in closed form as follows: 
     $$p_l^{[t+1]} := \frac{\sum_{i} A_{i,l}^{(2),[t+1]}}{N}.$$
Next, for $k \in [K^{(1)}]$, we update the $k$th row of $\BB^{(2)}$ (denoted as $\bbeta_{k}^{(2)}$) as follows:
\begin{align*}
    Q_k^{(2),[t+1]}(\bbeta_k^{(2)}) &:= 
    (1 - \theta_{t+1}) Q_k^{(2),[t]}(\bbeta_k^{(2)}) + \theta_{t+1} \sum_{i=1}^N \log \PP(A_{i,k}^{(1)} = A_{i,k}^{(1), [t+1]} \mid \ma_i^{(2)} = \ma_{i}^{(2), [t+1]}; \bbeta_k^{(2)}),\\
    \bbeta_k^{(2),[t+1]} &:= \argmax_{\bbeta_k^{(2)}} \left[Q_k^{(2),[t+1]}(\bbeta_k^{(2)}) - p_{\lambda_N,\tau_N}(\bbeta_k^{(2)}) \right],
\end{align*}
Finally, for each $j \in [J]$, we update the $j$th row of $\BB^{(1)}$ (denoted as $\bbeta_j^{(1)}$) and $\gamma_j$ (if it exists) as follows:
\begin{align}
    Q_j^{(1),[t+1]}(\bbeta_j^{(1)},\gamma_j) &:= 
    (1 - \theta_{t+1}) Q_j^{(1),[t]}(\bbeta_j^{(1)},\gamma_j) + \theta_{t+1} \sum_{i=1}^N \log \PP \left(Y_{i,j} \mid \ma_i^{(1)} = \ma_i^{(1),[t+1]}; \bbeta_j^{(1)},\ggamma_j \right), \notag \\
    (\bbeta_j^{(1),[t+1]},\gamma_j^{[t+1]}) &:= \argmax_{\bbeta_j^{(1)},\gamma_j} \left[Q_j^{(1),[t+1]}(\bbeta_j^{(1)},\gamma_j) - p_{\lambda_N,\tau_N}(\bbeta_j^{(1)}) \right]. \label{eq:m-step saem normal}
\end{align}

\subsection{Alternatives for Estimating the Number of Latent Variables}
\label{subsec:unknown K estimation}
Recall that we have proposed a spectral-ratio estimator to select the latent dimension $\mck$ in \cref{sec:select k}. In this supplement, we propose two alternative estimators motivated by popular methods for selecting the number of latent variables in other statistical problems \citep{shen1994convergence, melnykov2012initializing, chen2022determining}. The performance of these will be later assessed in a simulation study in \cref{subsec:estimate K}.

Continuing from the setup in \cref{sec:select k}, we address the estimation of $\mck$ under the two-latent-layer DDE. We first focus on the scenario where the number of deepest latent variables, $K^{(2)}$, is known, and our goal is to select $K^{(1)}$ from a candidate grid $\mathfrak{K}$. This assumption is often justified in real-world applications where prior knowledge of the number of latent labels or classes in the dataset is available. The objective is to uncover finer-grained latent structures that capture additional generative information beyond the known labels.

\begin{itemize}
    \item EBIC: We compute the extended BIC \citep[EBIC;][]{EBIC} for each $k \in \mathfrak{K}$, and select the model with the smallest EBIC: $$K^{(1)}:= \argmin_{k \in \mathfrak{K}} \text{EBIC}(k).$$ 
    We postpone the detailed formula of EBIC$(k)$ to the end of this subsection.
    Here, we consider EBIC instead of more traditional information criteria such as AIC and BIC since it is designed to handle large parameter spaces. 
    \item LRT: The class of models with $k \in \mathfrak{K}$ can be viewed as a nested set of regular parametric models, and one can apply the method of sieves to select $K^{(1)}$ \citep{shen1994convergence}. In other words, we start from the smallest $k$ and sequentially conduct level-$\alpha$ likelihood ratio tests for $H_0: K^{(1)} = k$ v.s. $H_1: K^{(1)} = k+1$ using the $\chi^2$ limiting distributions, until when we cannot reject the alternative; define $\hat{K}^{(1)}$ as this $k$.
    
\end{itemize}

Below is a discussion regarding the theoretical and computational properties of the three proposed estimators (spectral-ratio, EBIC, LRT). 
Theoretically, consistency of all three methods can be justified under varying assumptions. Additionally assuming that $K^{(1)}$ is identifiable,
the EBIC is known to be consistent even under a diverging $J$ \citep{EBIC}. The LRT estimator can be justified by standard arguments invoking Wilk's theorem \citep{wilks1938large}. Finally, under the asymptotic regime $N, J \to \infty$ and assuming that $\mu \circ g$ is linear, $\sigma_1, \ldots, \sigma_r$ is close to the top $r$ singular values of $\YY$ and standard eigenvalue perturbation arguments (Weyl's theorem) guarantee that the spectral ratio estimator for $K^{(1)}$ is consistent.

In terms of computation, the spectral-ratio estimator is the most desirable as it just requires computing the SVD of the denoised data matrix once. The other two methods (EBIC, LRT) require re-fitting the model (using \cref{algo-saem}) for each candidate value of $K^{(1)}$ as well as computing the likelihood. This limits their usage when the upper bound for $K^{(1)}$ is large. 
Furthermore, the model re-fitting issue for EBIC and LRT is amplified when the number of the top layer latent variables $K^{(2)}$ is also unknown, as the cardinality of the candidate set for $\mck = (K^{(1)}, K^{(2)})$ increases. On the other hand, the spectral-ratio estimates the number of latent variables in a layerwise manner, and is computationally efficient even under an unknown $K^{(2)}$. 

For analyzing real data, we recommend making the final selection of $\mck$ by also incorporating qualitative aspects of the data such as domain knowledge and interpretability.

\paragraph{Definition of EBIC.}
Consider a 2-latent layer DDE with $\mck = (K^{(1)}, K^{(2)})$ latent variables. This is a parametric model with $|\TT| = J(K^{(1)}+1) + K^{(1)}(K^{(2)}+1) + K^{(2)} + J$ (if there exists a dispersion parameter) or $J(K^{(1)}+1) + K^{(1)}(K^{(2)}+1) + K^{(2)}$ (otherwise) parameters. Here, we do not count the discrete parameters $\mcg$, as they are implied by the coefficients $\mcb$. Let $\hat{\TT}_{\mck}$ and $\ell_{\mck} (\hat{\TT}_{\mck}; \YY)$ respectively denote the MLE and the log-likelihood under the 2-layer DDE with $\mck$ parameters.
Also, let df$(\mck)$ be the number of non-zero parameters in $\hat{\TT}_{\mck}$. Then, following \cite{EBIC}, the EBIC objective in \cref{subsec:unknown K estimation} is defined as follows:
$$EBIC(\mck) := -2 \ell_{\mck} (\hat{\TT}_{\mck}; \YY) + \text{df}(\mck) \log N + 2 {|\TT|  \choose \text{df}(\mck)}.$$

\subsection{Implementation Details}
We elaborate on the choices made to implement the EM Algorithms \ref{algo-penalized} and \ref{algo-saem}.

\paragraph{Tuning parameter selection.} For practical implementation, the penalty function and the tuning parameters $\lambda_N, \tau_N$ must be specified. In this work, we consider the truncated Lasso penalty function (TLP) proposed by \cite{shen2012likelihood}, that is $p_{\lambda, \tau}(b) := \lambda \min(|b| , \tau)$. Preliminary simulations revealed that the results are very similar under other penalties such as SCAD.

Based on the consistency result in \cref{thm:estimation consistency}, we consider $\lambda_{N} = N^{1/4}, \tau_N = \max(3N^{-0.3}, 0.3)$ for the simulation studies. For real data analysis, tuning parameters were selected from the following grid:
$$\lambda_{1,N}, \lambda_{2,N} \in \{N^{1/8}, N^{2/8}, N^{3/8} \}, \quad \tau_N \in 2 \{N^{-1/8}, N^{-2/8}, N^{-3/8} \}.$$
Here, $\lambda_{a,N}$ ($a=1,2$) denotes the penalty size for the $a$th latent layer coefficient $\BB^{(a)}$. Note that distinct $\lambda_N$-values are used for each layer, which is to better accommodate the larger uncertainty in the deeper layer.

While there are many ways to select tuning parameters such as exact or approximate cross-validation and information criteria-based methods, we choose to use the latter in order to encourage sparsity. 
We select the model with the smallest extended Bayesian information criterion (EBIC). This approach is preferred over cross-validation in unsupervised settings, where likelihood-based loss functions often yield non-sparse, overfitted models \citep{chetverikov2021cross}.

Additionally, to implement the SAEM, the step size $\theta_t$ that determines the weight of the stochastic averaging needs to be specified. Here, following the standard Robbins-Monro condition \citep{robbins1951stochastic}, we set $\theta_t = 1/t$.

\paragraph{Convergence criteria.} For the penalized EM algorithm, we set the convergence criteria such that we terminate the algorithm when the difference between the log-likelihood function values of two consecutive iterations is less than $N/500$, as the log-likelihood is proportional to $N$. For the SAEM, since we do not directly compute the log-likelihood, convergence is declared when the difference of the vectorized $L^2$ norm of the continuous parameters is smaller than $K^{(2)}/2$. Under the spectral initialization, the PEM and SAEM generally took less than 10 and 5 iterations until convergence, respectively. 

In terms of implementing the PEM algorithm to select the number of latent variables, $K^{(1)}$ or $K^{(2)}$, we consider a more generous threshold of $3\times N/500$. This is for the sake of faster implementation, as we only use the resulting likelihood to implement the EBIC and LRT method.

\paragraph{M-step implementation.}
As our M-step in both algorithms (PEM and SAEM) boils down to solving multiple lower-dimensional maximization of dimensions less or equal to $K^{(1)}+1$, we choose to simply apply a built-in optimizing package that directly computes the global maximizers. This is because there are already other approximations being made in the SAEM, and we did not want to increase the randomness for our simulation studies. 

For the simulation studies under deeper models with $D \ge 3$ (see Section 5 in the main paper), we have slightly modified the M-step of the Normal observed-layer (see \eqref{eq:m-step saem normal}) by considering the hard-thresholding penalty $p_{\tau}(b) := \frac{\tau^2}{2} I(|b| \neq 0)$. This leads to a closed-form simplification of \eqref{eq:m-step saem normal}, as spelled-out below. Letting $\mathbf{y}_j$ denote the $j$th column of $\YY$, $\ma^{[t+1]}_{\text{long}} := (\mathbf{1}_N, \ma^{(1),[t+1]})$, and $\text{thres}_\tau(b) := b I(|b| > \tau)$, we can write the updates for each $\beta_j, \gamma_j$ as
$$\hat{\bbeta}_{j}^{(1),[t+1]} = \text{thres}_\tau\Big((\ma^{[t+1] \top}_{\text{long}} \ma^{[t+1]}_{\text{long}})^{-1} \ma^{[t+1] \top}_{\text{long}} \mathbf{y}_j \Big), \quad \hat{\gamma}_j^{[t+1]} = \sqrt{\frac{\sum_{i} (Y_{i,j} - \eta_{j,\ma_i^{(1),[t+1]}})^2}{N}},$$
which speeds up the SAEM algorithm.

For practical implementation for a larger dataset with general observed-layer distributions, we recommend the user to modify the M-step to a faster but approximate optimization procedure. For example, one may choose to do a one-step gradient ascent in each iteration of the M-step.

\paragraph{Addressing latent variable permutation for simulations.}
In our simulations, resolving the latent variable permutation is necessary to accurately compute the errors. A naive approach involves computing the error across all possible permutations, but this quickly becomes computationally infeasible, even for moderate values of $K^{(1)} = 18, K^{(2)} = 6$. To address this, we formulate the optimal latent variable permutation as an assignment problem and solve it efficiently using the following bottom-up approach as follows.

First, we construct a $K^{(1)} \times K^{(1)}$ cost matrix, where each entry is the squared $L^2$ norm between the corresponding column vectors of $\BB^{(1)}$ and $\hat{\BB}^{(1)}$. Next, use the Hungarian algorithm \citep{kuhn1955hungarian} to find a column permutation that minimizes the total assignment cost. As the column indices of $\BB^{(1)}$ correspond to the row indices of $\BB^{(2)}$, we permute the rows of $\BB^{(2)}$ accordingly. Finally, we apply the same procedure to find an optimal permutation of the $\BB^{(2)}$ columns. This method ensures computational efficiency while accurately resolving the latent variable permutation.

\section{Additional Simulation Results}\label{sec:supp simulation}
\subsection{True Parameter Values}
In terms of the true parameter values, we consider two sets of values based on the strict and generic identifiability conditions in Theorems \ref{thm:deep id} and \ref{thm:deep gid}, respectively. 
First, we define $\mcb_s = \{\BB_s^{(d)}\}_{d \in [D]}$ that satisfy \cref{thm:deep id} as follows:
\begin{equation}\label{eq:true parameter sid}
\BB_s^{(d)} = \begin{pmatrix}
-2 \mathbf{1}_{K^{(d)}} & 4 \mathbf I_{K^{(d)}} \\
-4 \mathbf{1}_{K^{(d)}} & 4 \mathbf I_{K^{(d)}} \\
-2 \mathbf{1}_{K^{(d)}} & \BB_{1}^{(d)} 
\end{pmatrix},\quad \text{where} \quad \beta_{1; j,k}^{(d)} := \begin{cases}
    4 & \text{if } k=j, \\
    4/3 & \text{ else if } |k-j| = K^{(d)}/2, \\
    0 & \text{otherwise}.
\end{cases} 
\end{equation}
We also consider $\mcb_g  = \{\BB_g^{(d)}\}_{d \in [D]}$ that satisfy \cref{thm:deep gid}, defined as:
\begin{equation}\label{eq:true parameter gid}
\begin{aligned}
    \BB_g^{(d)} = \begin{pmatrix}
    -2 \mathbf{1}_{K^{(d)}} & \BB_{2}^{(d)} \\
    -4 \mathbf{1}_{K^{(d)}} & \BB_{2}^{(d)\top} \\
    -2 \mathbf{1}_{K^{(d)}} & \BB_{1}^{(d)}
    \end{pmatrix}, \quad \text{where} \quad 
\beta_{2; j,k}^{(d)} &:= \begin{cases}
    4 & \text{if } k=j, \\
    4/3 & \text{ else if } 0 < k-j \le \lceil K^{(d)}/3 \rceil, \\
    0 & \text{otherwise},
\end{cases}
\end{aligned}
\end{equation}
and $\BB_1^{(d)}$ is the matrix defined in \cref{eq:true parameter sid}.
Note that $\BB_g^{(d)}$ is a less sparse version of $\BB_s^{(d)}$, where the identity matrices are modified to $\BB_2^{(d)}$.
While $\BB_2^{(d)}$ and $\mcb_g$ have the same value of 4 on the main diagonals, $\mcb_s$ is sparser. In particular, $\mcb_s$ has two pure children per latent variable whereas $\mcb_g$ has none. 

Regarding the remaining parameters, we set the top-layer proportion parameters as $p_{k} = 0.5$ for all $k \in [K^{(D)}]$, and set the dispersion parameters for the Normal distribution as $\gamma_j = \sigma_j^2 = 1$ for all $j \in [J]$.

\subsection{\darkblue{Additional Details on Ablation Studies}}
We provide additional details regarding ablation studies. We separately considered data generated from a DDE and a DBN with the same coefficients $\mathcal{B}_s$ in \eqref{eq:true parameter sid}. We implemented the DBN using the \texttt{Deep Neural Network} toolbox in \texttt{MATLAB} \citep{tanaka2014novel}.

In addition to the analysis in the main text, we compare the RMSE values for the continuous parameters under sparse, identifiable models (panel (d) in \cref{fig:ablation true DDE}) versus that under fully-connected, non-identifiable models (\cref{fig:nonidentifiable DBN} below). Interestingly, the RMSE values returned by the DBN algorithm are similar. This indicates the unreliability of parameter estimation via estimation algorithms for learning DBNs. In contrast, our proposed estimation method for DDEs encourages sparsity as well as adopts a wiser initialization strategy that is close to the true parameters, which leads to better parameter estimation and structure recovery.

\begin{figure}[h!]
    \centering
    \includegraphics[width=0.32\linewidth]{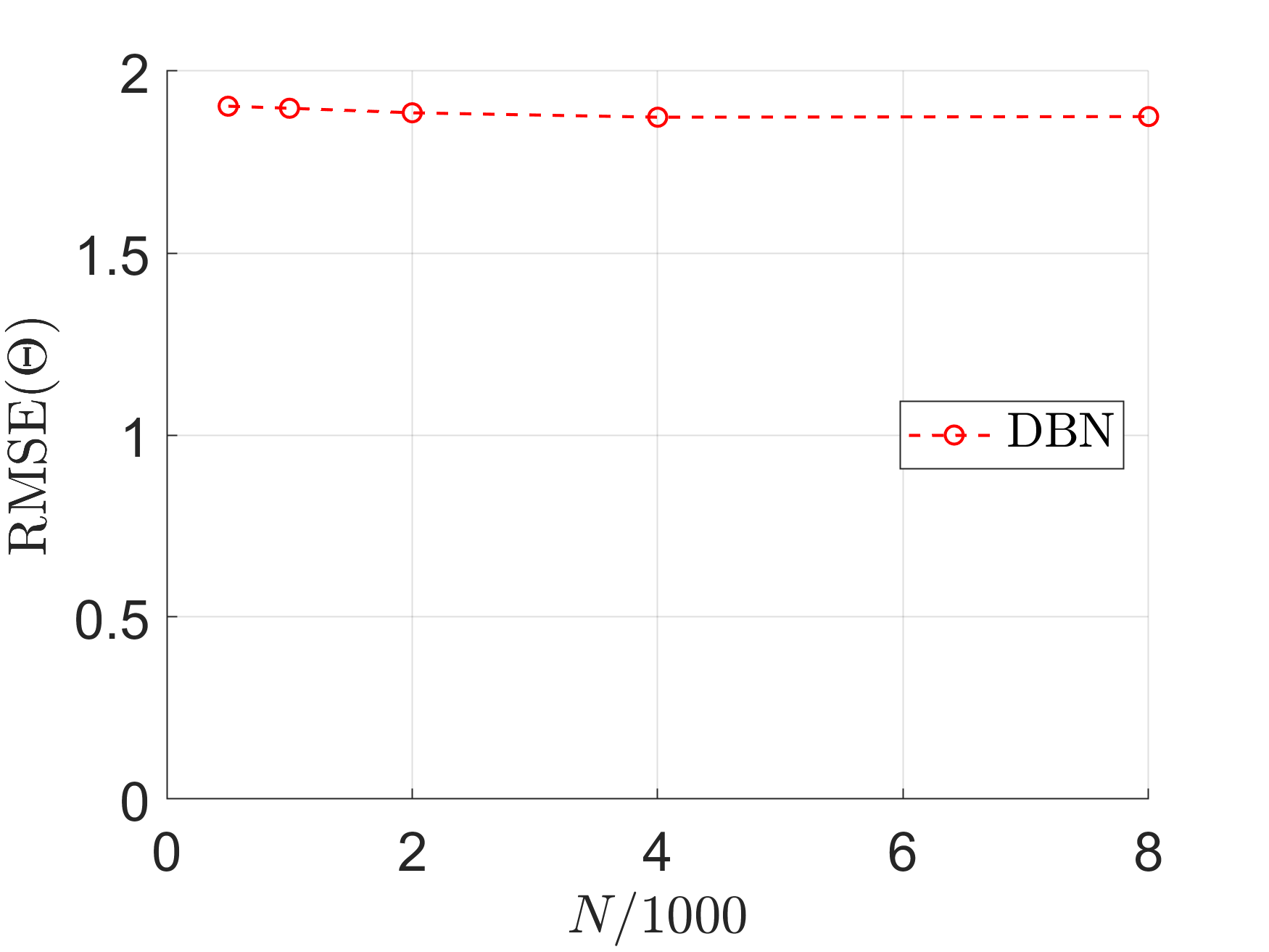}
    \caption{Parameter estimation accuracy for fully-connected DBNs with $J = K^{(1)} = K^{(2)} = 18$. The true coefficients are generated iid from $N(0, 2)$.}
    \label{fig:nonidentifiable DBN}
\end{figure}

\subsection{Simulation Results Under the Generic Identifiable Parameters $\mathcal{B}_g$ and Computation Time}\label{subsec:additional figures}

Here, we present the omitted details from \cref{sec:simulations} in the main paper, regarding (a) the estimation accuracy under generically identifiable parameters, (b) performance of the PEM algorithm, and (c) computation time.

\paragraph{Estimation accuracy under generically identifiable parameters.}
We present the analogs of Figure \ref{fig:acc g sid} under the generically identifiable true parameters $\mcb_g$. 
While all other simulation settings are identical to that described in the main text, we make the following two changes. First, we do not consider the largest parameter dimension of $(J, K^{(1)}, K^{(2)}) = (90, 30, 10)$ in this generically identifiable setting. This is because the layerwise initialization turned out to be less effective in this high-dimensional but less-sparse scenario.
Second, for Poisson-based-DDEs, we modify the intercept values to be smaller compared to the values in \eqref{eq:true parameter sid} as follows.
When $K^{(2)} = 2$, we consider a smaller intercept parameter for $\BB_g^{(1)}$ as follows:
$$\beta_{2; j,0}^{(1)} := \begin{cases}
    -3 & \text{ if } k \le K^{(1)}, \\
    -5 & \text{ else if } K^{(1)} < k \le 2K^{(1)}, \\
    -2 & \text{ otherwise}.\end{cases}$$
Similarly for $K^{(2)} =6$, we work under the smaller intercept values of
$$\beta_{2; j,0}^{(1)} := \begin{cases}
    -10 & \text{ if } \sum_{k=1}^{K^{(1)}} \beta_{j,k} \ge 8, \\
    -5 & \text{ otherwise}.\end{cases}$$
This adjustment is to make the Poisson parameters not explode, as using the original intercept values in \eqref{eq:true parameter gid} makes some Poisson parameters for $Y_j \mid \ma$ very large and generates unrealistic data. 

\begin{figure}[h!]
    \centering
    \includegraphics[width=0.9\linewidth]{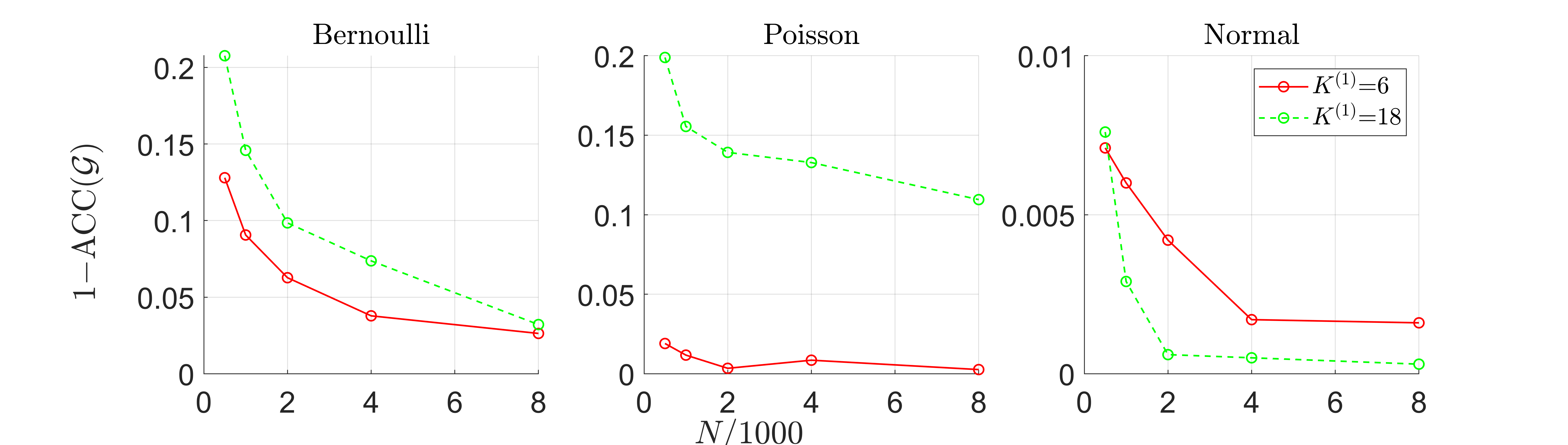}
    \includegraphics[width=0.9\linewidth]{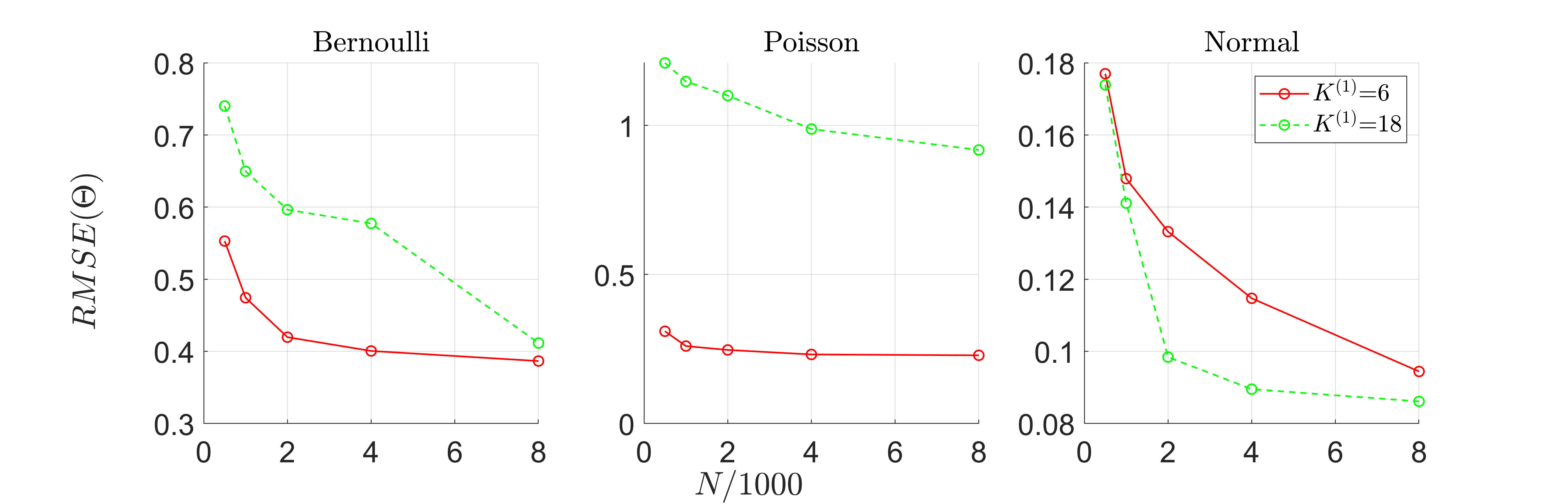}
    \caption{Estimation error for $\mcg$ and $\TT$ under the 2-layer DDE with parameters $\mathcal{B}_g$ and various observed-layer parametric families. Note that the y-axis varies across plots.}
    \label{fig:acc g gid}
\end{figure}

The estimation accuracy for $\mcg$ and $\TT$ is displayed in \cref{fig:acc g gid}. While we see a similar trend as in the results under the true parameters $\mcb_s$ (Figure \ref{fig:acc g sid}), the overall error values are larger and indicate that estimation under the less sparse model is more challenging. Recall that we are considering a different Poisson intercept compared to that under $\mcb_s$, so we cannot directly compare the Poisson results. 

\paragraph{Estimation accuracy of the PEM algorithm.}
We compare the performance of PEM (\cref{algo-penalized}) and SAEM (\cref{algo-saem}) for each parametric family. We implement the PEM under the same simulation setup as SAEM, described in \cref{sec:simulations}. However, the PEM implementation failed for higher latent dimensions, specifically under the latent dimension of $(J, K^{(1)}, K^{(2)}) = (54, 18, 6)$ or higher, due to memory allocation issues. As a result, PEM was only applied in the low-dimensional scenario with with $(J, K^{(1)}, K^{(2)}) = (18, 6, 2)$. 

The results, presented in Tables \ref{tab:ber-simulation}-\ref{tab:normal-simulation} (error of estimating $\mcg$) and \ref{tab:ber-simulation-Theta}-\ref{tab:normal-simulation-Theta} (error of estimating $\TT$) in \cref{subsec:exact values}, show that the PEM estimates exhibit a faster rate of convergence as $N$ increases. In particular, the RMSE values decrease at the parametric rate ${1}/{\sqrt{N}}$, as established in \cref{thm:estimation consistency}. We believe that the lower estimation accuracy under SAEM is due to multiple approximations within the SAEM algorithm such as approximate sampling and the objective function updates. We recommend using PEM instead of SAEM to estimate DDEs when the latent dimension is small.

\paragraph{Computation time.} 
\cref{fig:time} reports the computation times for all simulations in Section 5.1 and \cref{subsec:additional figures}. The results show that computation time varies across parametric families. A common trend is that both SAEM and PEM slow down as the sample size $N$ increases. Additionally, SAEM becomes slower as the parameter dimension increases.
Comparing SAEM and PEM is somewhat subtle, as their relative computation times depend on the response type and different convergence criteria. However, preliminary simulations under the dimension $(J, K^{(1)}, K^{(2)}) = (45, 15, 5)$ indicate that PEM is slower than SAEM across all three responses types. Furthermore, PEM fails to operate under larger dimensions due to high memory requirements. These observations support the conclusion in the main paper that SAEM is desirable for scenarios involving large latent dimensions.

\begin{figure}[h!]
    \centering
    \includegraphics[width=\linewidth]{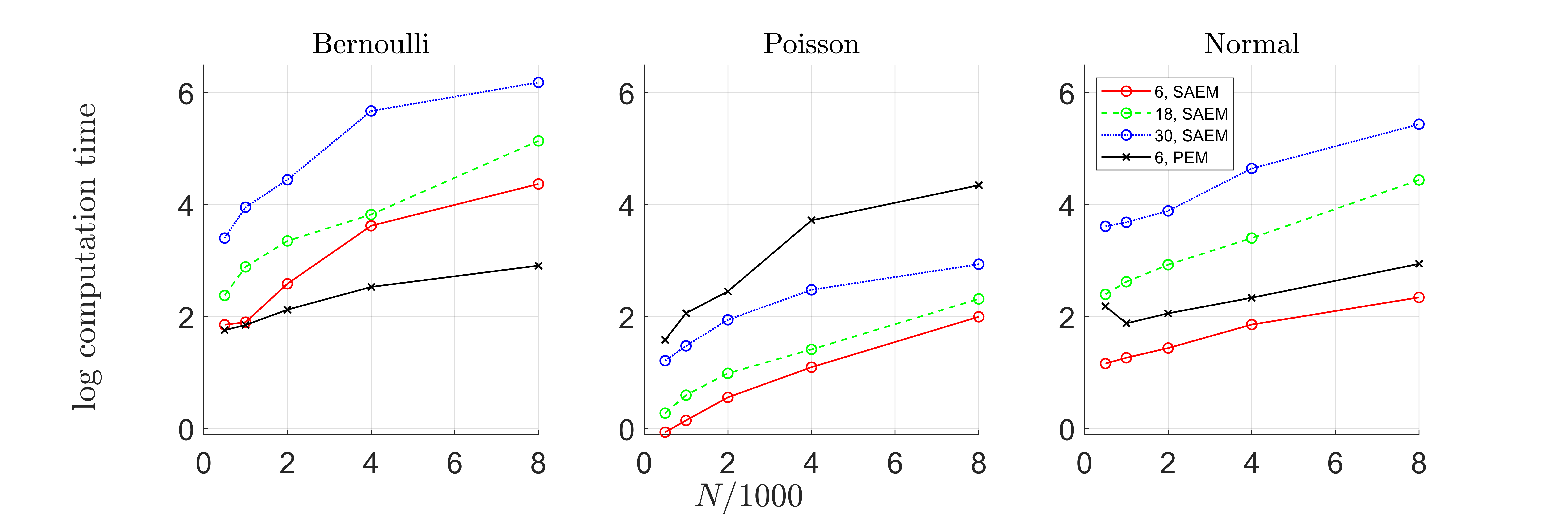} \\
    \includegraphics[width=\linewidth]{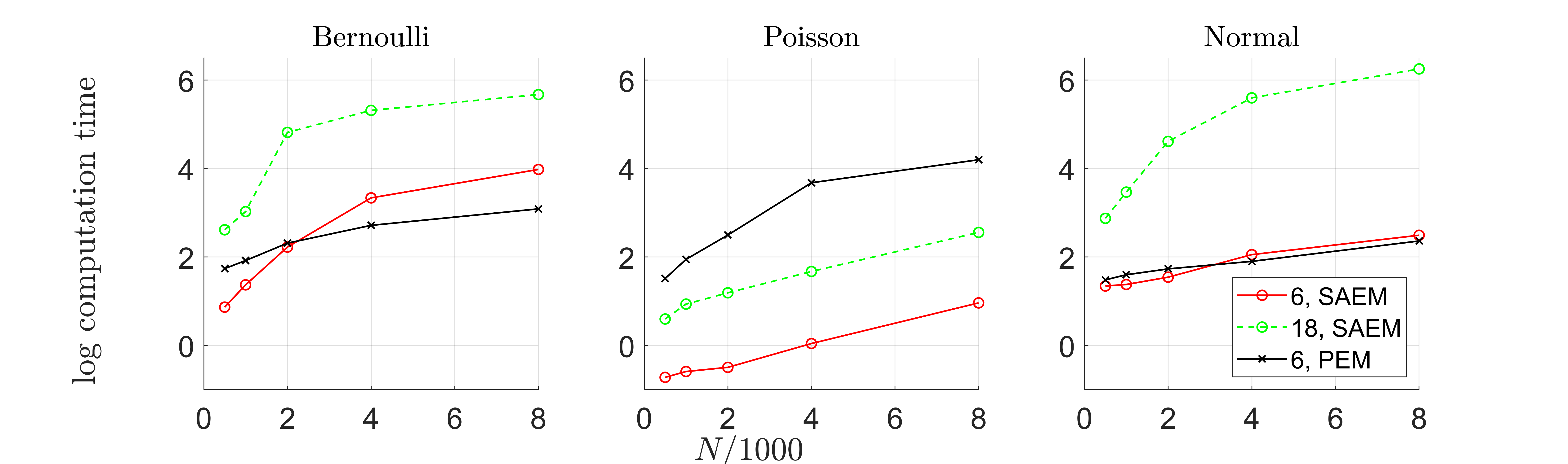}
    \caption{Log computation time (in seconds) for the simulation results in \cref{sec:simulations} and \cref{subsec:additional figures}, under true parameters $\mcb_s$ (top row) and $\mcb_g$ (bottom row). The legends indicate the value of $K^{(1)}$ and estimation algorithm.}
    \label{fig:time}
\end{figure}

\subsection{Experiments With Varying Numbers of Gibbs Samples in \cref{algo-saem}}
Here, we conduct additional experiments to assess the effect of the number of Gibbs samples, $C$, on both estimation accuracy and computation time. These experiments follow the setup in \cref{subsec:random initialization}, but we implement the SAEM algorithm (\cref{algo-saem}) varying $C = 1, 5, 25$. 

The simulation results in Table \ref{tab:Simulation different C} 
indicate that smaller values of $C$ result in faster computation across all parametric families, with minimal loss in estimation accuracy.
Here, the dependence of computation time on $C$ arises at the M-step, where the maximization objective is a sum of $O(C)$ functions. A larger $C$ complicates the objective function and makes the optimization slower. Based on these findings, we selected $C = 1$ as a baseline for the SAEM procedure.

\begin{table}[h!]
\centering
\begin{tabular}{cccccccccc}
\toprule
\multirow{2}{*}{ParFam} & & \multicolumn{2}{c}{Accuracy($\mcg$)} & \multicolumn{2}{c}{RMSE$(\TT)$} & \multicolumn{2}{c}{time (s)} & \multicolumn{2}{c}{\# iterations} \\
\cmidrule{2-10}
 & \# Gibbs $C$ \textbackslash{}$N$ & 1000 & 4000 & 1000 & 4000 & 1000 & 4000 & 1000 & 4000 \\
 \midrule
\multirow{3}{*}{Bernoulli} & 1 & 0.916 & 0.962 & 0.43 & 0.36 & 3.1 & 10.0 & 3.0 & 3.6 \\
 & 5 & 0.920 & 0.961 & 0.43 & 0.38 & 5.3 & 41.4 & 2.6 & 3.2 \\
 & 25 & 0.923 & 0.962 & 0.43 & 0.37 & 26.0 & 198.5 & 2.5 & 3.1 \\
\midrule
\multirow{3}{*}{Poisson} & 1 & 0.993 & 0.999 & 0.26 & 0.24 & 3.5 & 7.0 & 3.2 & 3.2 \\
 & 5 & 0.999 & 1 & 0.24 & 0.24 & 6.5 & 25.4 & 3.3 & 3.2 \\
 & 25 & 0.998 & 1 & 0.24 & 0.24 & 34.7 & 120.3 & 3.4 & 3.2 \\
\midrule
\multirow{3}{*}{Normal} & 1 & 0.994 & 1 & 0.16 & 0.14 & 0.5 & 1.1 & 2.0 & 2.0 \\
 & 5 & 0.995 & 1 & 0.15 & 0.13 & 1.0 & 2.2 & 2.0 & 2.0 \\
 & 25 & 0.994 & 1 & 0.16 & 0.13 & 2.2 & 8.8 & 2.0 & 2.0 \\
 \bottomrule
\end{tabular}
\caption{Accuracy measures for $\mcg$ and $\TT$, computation time and iterations for 2-layer DDE estimates under varying number of Gibbs samples $C$. For the first column, larger is better. For the other columns, smaller is better.}
\label{tab:Simulation different C}
\end{table}

\subsection{Simulations for Selecting the Number of Latent Variables}
\label{subsec:estimate K}
We first evaluate the performance of the three estimators (denoted as EBIC, LRT, Spectral) for selecting $K^{(1)}$, as introduced in \cref{subsec:unknown K estimation}, assuming the knowledge of $K^{(2)}$. 
Since the EBIC and LRT estimators require likelihood computation, we restrict the simulations to the configuration $(J, K^{(1)}, K^{(2)}) = (18, 6, 2)$. For each candidate model with $k \in \mathfrak{K}$ first-latent-layer variables, the likelihood is computed using parameter estimates from the PEM algorithm. For the LRT estimator, we have set the significance level to be $\alpha = 0.01$ and used the $\chi^2$ limiting distribution for each sequential test. 
Continuing from the simulation settings in \cref{sec:simulations}, we consider two sets of true parameters: $\mcb_s, \mcb_g$ and three sets of observed-layer parametric families: Bernoulli, Poisson, and Normal.
In addition to the sample sizes $N$ in the previous section, $N = 6000$ is also considered to better assess the large-sample accuracy.
Here, we assume that the number of the top layer latent variables $K^{(2)} = 2$ is known, and select $K^{(1)}$ from the candidate set $\mathfrak{K} = [2K^{(2)}, J/2) \cap \mathbb{N} = \{4, 5, 6, 7, 8\}$. The equality in $2K^{(2)} \le K^{(1)}$ is allowed to consider an equal number of underfitted/overfitted models.

\begin{figure}[h!]
    \centering
    \includegraphics[width=\linewidth]{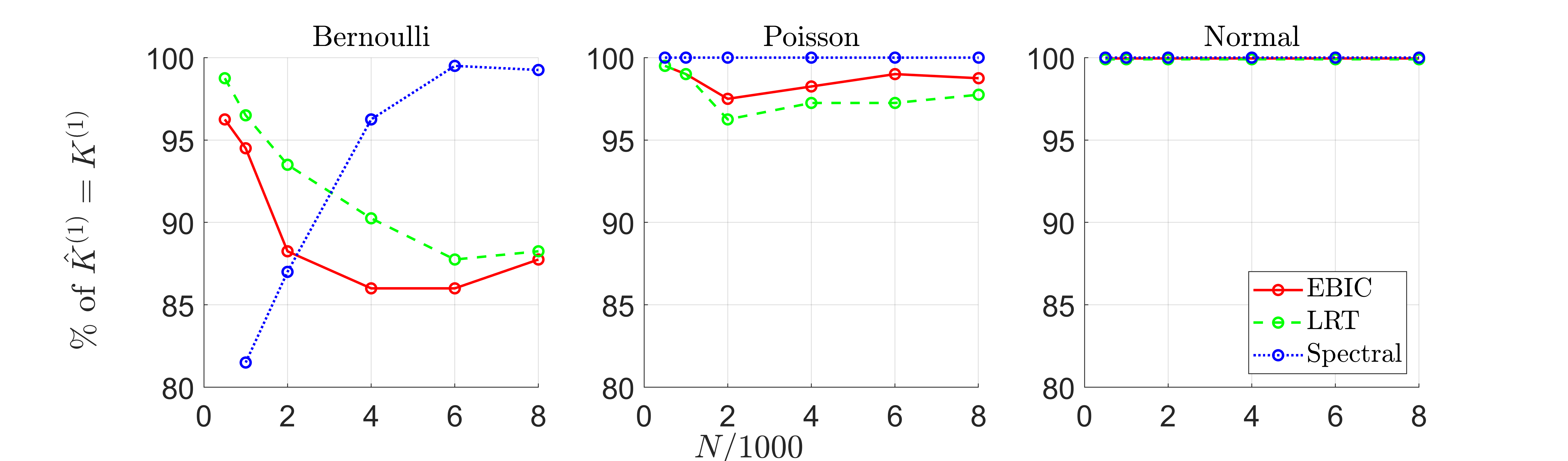}
    \caption{Selection accuracy of $K^{(1)}$ under the two-latent-layer DDE with true parameters $\mcb_s$. The spectral ratio estimator shows near-perfect accuracy for large $N$.}
    \label{fig:Select_K1}
\end{figure}

\begin{figure}[h!]
    \centering
    \includegraphics[width=\linewidth]{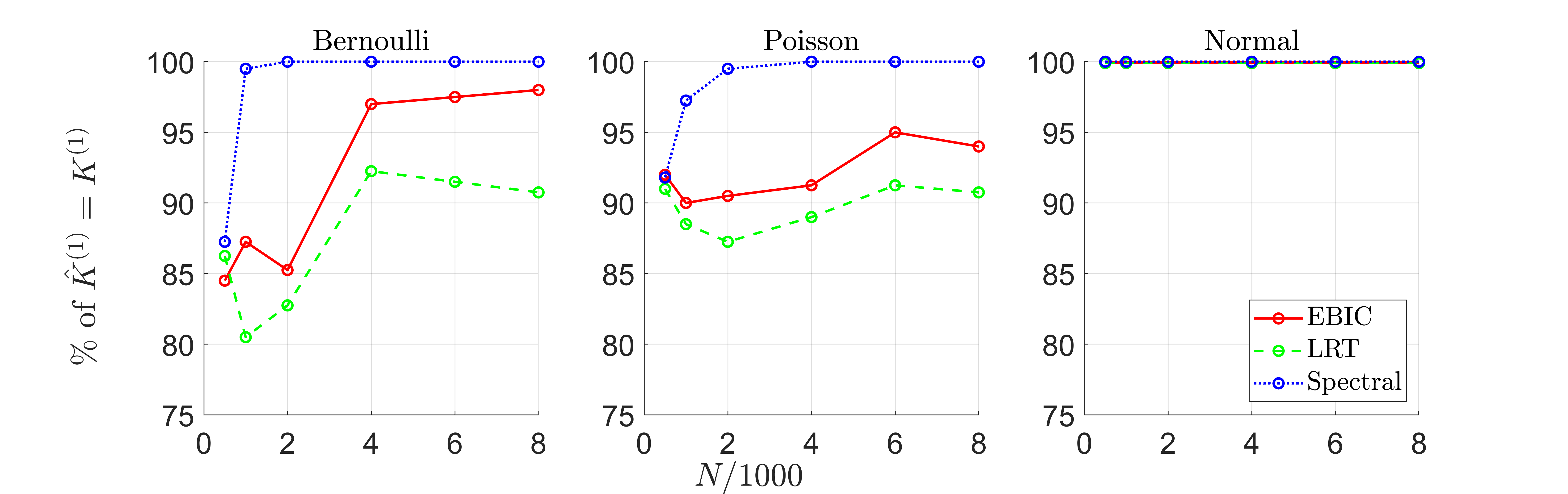}
    \caption{Simulation results for selecting $K^{(1)}$ under the 2-layer DDE with true parameters $\mcb_g$.}
    \label{fig:Select_K1 gid}
\end{figure}

We fit the three estimators $400$ times for each scenario, and display the correct selection percentage for the Bernoulli, Poisson, and Normal-based DDEs with true parameters $\mcb_s$/$\mcb_g$ in \cref{fig:Select_K1}/\cref{fig:Select_K1 gid}.
Comparing the three estimators, the spectral ratio-based estimator has near-perfect accuracy when $N$ is large enough, say $4000$. This demonstrates the empirical consistency of the spectral ratio estimator.
In contrast, the consistency of the EBIC and LRT estimators is not clear for Bernoulli and Poisson responses\footnote{To be more precise, as we are using approximate level $\alpha = 0.01$ tests, the correct selection percentage of the LRT estimator should converge to 0.99.}. Thus, we conclude that for a large enough $N$, it is desirable to select $K^{(1)}$ using the spectral ratio estimator. 
Among the three response types, the Bernoulli case with its nonlinear link function and limited response values is the most challenging. In contrast, the Normal case with a linear observed layer and continuous responses achieves near-perfect selection accuracy, which is consistent with earlier observations regarding parameter estimation accuracy.

\begin{figure}[h!]
    \centering
    \includegraphics[width=\linewidth]{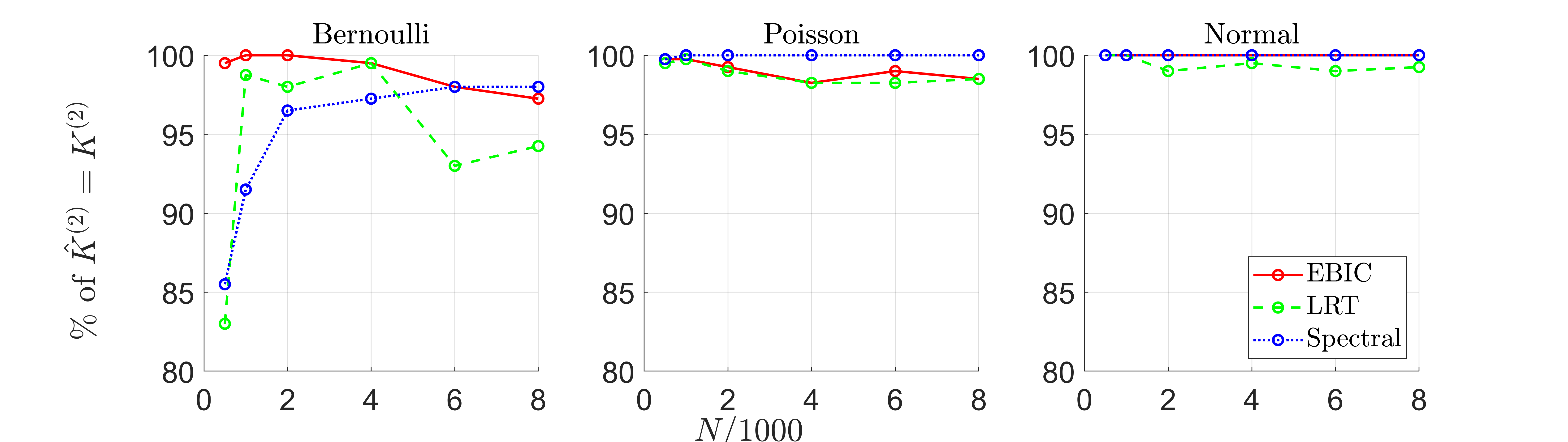}
    \caption{Selection accuracy of $K^{(2)}$ under two-latent-layer DDEs with true parameters $\mcb_s$.}
    \label{fig:select K2}
\end{figure}

\begin{figure}[h!]
    \centering
    \includegraphics[width=\linewidth]{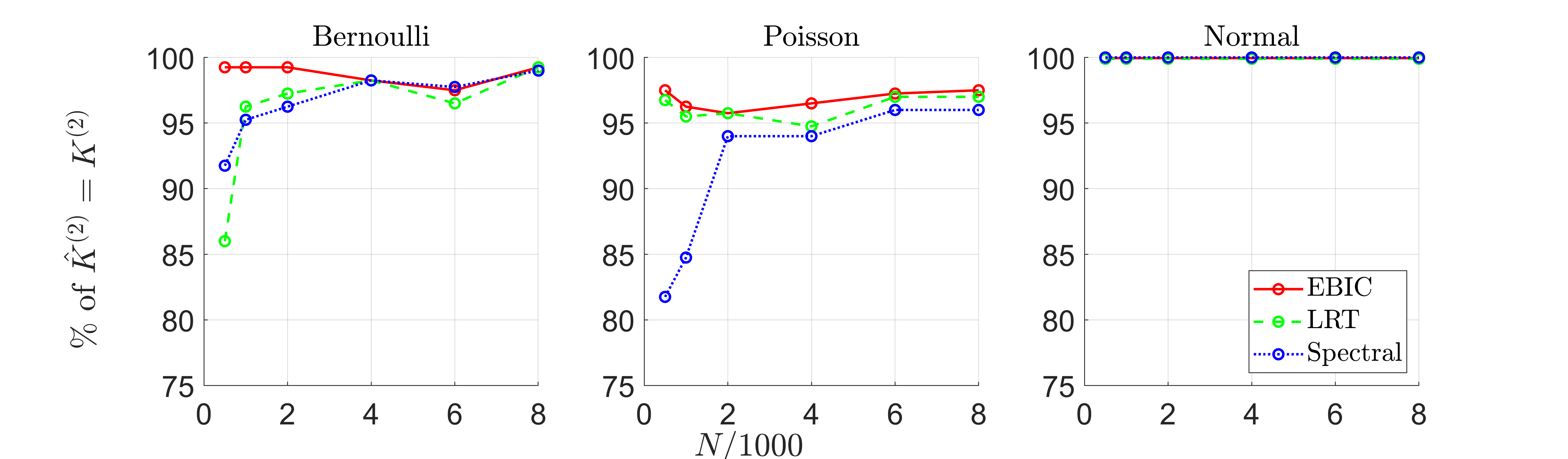}
    \caption{Simulation results for selecting $K^{(2)}$ under the 2-layer DDE with true parameters $\mcb_g$.}
    \label{fig:select K2 gid}
\end{figure}

We also apply these three estimators to select $K^{(2)}$, assuming that $K^{(1)}=6$ is correctly estimated or known. The candidate set for $K^{(2)}$ is $\mathfrak{K} := \{1,2,3\}$. \cref{fig:select K2}/\cref{fig:select K2 gid} displays the correct selection percentage under the true parameters $\mcb_s$/$\mcb_g$. Here, we implement the LRT estimator by naively assuming that Wilk's theorem holds. The overall trends are similar to those observed in the previous figures for selecting $K^{(1)}$. Under the sparse true parameter $\mcb_s$, the spectral ratio estimator performs well across all response types and the LRT estimator has the lowest, but still decent accuracy. Interestingly, the EBIC estimator outperforms the spectral ratio estimator in more challenging scenarios, such as Bernoulli responses with small $N$ or under the Poisson responses with less sparse true parameters $\mcb_g$. Unlike the case for selecting $K^{(1)}$, it is not clear that the spectral ratio estimator outperforms the EBIC estimator for selecting $K^{(2)}$.

Based on these experiments, we conclude that in the most general setting of a two-latent-layer DDE where both $K^{(1)}$ and $K^{(2)}$ are unknown, a two-step approach is effective. First, the spectral ratio estimator can be used to select $K^{(1)}$. Then, $K^{(2)}$ can be determined using either the EBIC or the spectral ratio estimator, incorporating domain knowledge if needed.

\subsection{\darkblue{Simulations for Deeper Models with $D\ge 3$ Latent Layers}}
\paragraph{Estimation accuracy of graphical structures}
We display the postponed tables from the the main text (Section 5) in Tables \ref{tab:D=3} and \ref{tab:D=4}, which correspond to experiments for deeper models with Normal responses and $D=3,4$ latent layers.
\begin{table}[h!]
\centering
\begin{tabular}{cccccccccc}
\toprule
Layer \textbackslash $N$ & 500 & 1000 & 2000 & 4000 & 8000& 16000 \\
\midrule
$\GG^{(1)}$ & 1.00 & 1.00 & 1.00 & 1.00 & 1.00 & 1.00 \\
$\GG^{(2)}$ & 0.89 & 0.89 & 0.90 & 0.93 & 0.94 & 0.98 \\
$\GG^{(3)}$ & 0.87 & 0.89 & 0.88 & 0.89 & 0.90 & 0.91 \\
\midrule
runtime (s) & 3 & 3 & 3 & 5 & 8 & 15 \\
 \bottomrule
\end{tabular}
\caption{\darkblue{Average entrywise-accuracy of estimating graphical matrices and runtime when $D=3$.}}
\label{tab:D=3}
\end{table}

\begin{table}[h!]
\centering
\begin{tabular}{ccccccccccc}
\toprule
Layer \textbackslash $N$ & 500 & 1000 & 2000 & 4000 & 8000 & 16000 \\
\midrule
$\GG^{(1)}$ & 1.00 & 1.00 & 1.00 & 1.00 & 1.00 & 1.00 \\
$\GG^{(2)}$ & 0.83 & 0.89 & 0.93 & 0.95 & 0.96 & 0.97 \\
$\GG^{(3)}$ & 0.72 & 0.74 & 0.75 & 0.77 & 0.78 & 0.80 \\
$\GG^{(4)}$ & 0.64 & 0.65 & 0.70 & 0.70 & 0.71 & 0.73 \\
\midrule
runtime (s) & 75 & 96 & 74 & 89 & 186 & 312 \\
 \bottomrule
\end{tabular}
\caption{\darkblue{Average entrywise-accuracy of estimating graphical matrices and runtime when $D=4$.}}
\label{tab:D=4}
\end{table}

\paragraph{Selecting the number of latent variables}
To assess our spectral-ratio estimator's performance, we have conducted additional experiments for deeper models with $D=3$. Table \ref{tab:D=3 select K} illustrates the satisfactory performance of Algorithm \ref{algo:select_K} for all layers, where each entry reports the accuracy across 400 replications.We have also evaluated the spectral-ratio estimator for more challenging cases with $D \ge 4$ latent layers, where the accuracy degrades due to accumulation of uncertainty and larger grid size.

\begin{table}[h!]
\centering
\begin{tabular}{cccccccccc}
\toprule
Layer \textbackslash $N$ & 500 & 1000 & 2000 & 4000 & 8000& 16000 & $|\mathfrak{K}^{(d)}|$\\
\midrule
$K^{(1)}$ & 100 & 100 & 100 & 100 & 100 & 100 & 14 \\
$K^{(2)}$ & 28 & 44 & 68 & 79 & 87 & 92 & 5 \\
$K^{(3)}$ & 65 & 71 & 82 & 89 & 92 & 95 & 2 \\
 \bottomrule
\end{tabular}
\caption{\darkblue{Correct selection percentage for $\mathcal{K}$, under a true model with Normal responses and $D=3, K^{(D)}=2$. The last column reports the typical grid size for each $K^{(d)}$.}}
\label{tab:D=3 select K}
\end{table}

\subsection{Exact Values of Estimation Errors}\label{subsec:exact values}
We display the complete results of the simulation outputs corresponding to plots in \cref{sec:simulations}, \cref{subsec:additional figures}, and \cref{subsec:estimate K}. 

\paragraph{Accuracy of Estimating $\mcg$.} Tables \ref{tab:ber-simulation}-\ref{tab:normal-simulation} display the average entrywise estimation accuracy for $\mcg$ under each parametric family. These values correspond to the upper rows of Figures \ref{fig:acc g sid} and \ref{fig:acc g gid}.

\begin{table}[h!]
\centering
\resizebox{\textwidth}{!}{
\begin{tabular}{ccccccccc}
\toprule
Parfam                     & True parameter      & $(J, K^{(1)}, K^{(2)})$               & Algorithm$\setminus N$ & 500   & 1000  & 2000  & 4000  & 8000  \\
\midrule
\multirow{7}{*}{Bernoulli} & \multirow{4}{*}{$\mcb_s$} & \multirow{2}{*}{(18,6,2)} & SAEM      & 0.899 & 0.931 & 0.956 & 0.974 & 0.985 \\
                           &                     &                           & PEM       & 0.926 & 0.966 & 0.986 & 0.992 & 0.992 \\ \cmidrule{3-9}
                           &                     & (54,18,6)                 & SAEM      & 0.910 & 0.967 & 0.990 & 0.996 & 0.998 \\ \cmidrule{3-9}
                           &                     & (90,30,10)                & SAEM      & 0.914 & 0.971 & 0.991 & 0.996 & 0.998 \\
                           \cmidrule{2-9}
                           & \multirow{3}{*}{$\mcb_g$} & \multirow{2}{*}{(18,6,2)} & SAEM      & 0.872 & 0.909 & 0.937 & 0.962 & 0.974 \\
                           &                     &                           & PEM       & 0.891 & 0.937 & 0.970 & 0.986 & 0.992 \\
                           \cmidrule{3-9}
                           &                     & (54,18,6)                 & SAEM      & 0.792 & 0.854 & 0.902 & 0.926 & 0.968 \\
                           \bottomrule
\end{tabular}
}
\caption{$\text{Acc}(\mcg)$ under the Bernoulli 2-layer DDE. } 
\label{tab:ber-simulation}
\end{table}

\begin{table}[h!]
\centering
\resizebox{\textwidth}{!}{
\begin{tabular}{ccccccccc}
\toprule
Parfam                     & True parameter      & $(J, K^{(1)}, K^{(2)})$               & Algorithm$\setminus N$ & 500   & 1000  & 2000  & 4000  & 8000  \\
\midrule
\multirow{7}{*}{Poisson} & \multirow{4}{*}{$\mcb_s$} & \multirow{2}{*}{(18,6,2)} & SAEM      & 0.985 & 0.993 & 0.999 & 0.999 & 0.999 \\
                           &                     &                           & PEM       & 0.994 & 0.999 & 1 & 1 & 1 \\ \cmidrule{3-9}
                           &                     & (54,18,6)                 & SAEM      & 0.987 & 0.996 & 0.999 & 1 & 1 \\ \cmidrule{3-9}
                           &                     & (90,30,10)                & SAEM      & 0.986 & 0.996 & 1 & 1 & 1 \\
                           \cmidrule{2-9}
                           & \multirow{3}{*}{$\mcb_g$} & \multirow{2}{*}{(18,6,2)} & SAEM      & 0.981 & 0.988 & 0.997 & 0.992 & 0.997 \\
                           &                     &                           & PEM       & 0.994 & 0.995 & 0.998 & 0.998 & 1 \\
                           \cmidrule{3-9}
                           &                     & (54,18,6)                 & SAEM      & 0.801 & 0.845 & 0.861 & 0.867 & 0.891 \\
                           \bottomrule
\end{tabular}
}
\caption{$\text{Acc}(\mcg)$ under the Poisson 2-layer DDE. } 
\label{tab:poi-simulation}
\end{table}

\begin{table}[h!]
\begin{tabular}{ccccccccc}
\toprule
Parfam                     & True parameter      & $(J, K^{(1)}, K^{(2)})$               & Algorithm & 500   & 1000  & 2000  & 4000  & 8000  \\
\midrule
\multirow{7}{*}{Normal} & \multirow{4}{*}{$\mcb_s$} & \multirow{2}{*}{(18,6,2)} & SAEM      & 0.985 & 0.993 & 0.999 & 0.999 & 0.999 \\
                           &                     &                           & PEM       & 0.992 & 0.996 & 1 & 1 & 1 \\ \cmidrule{3-9}
                           &                     & (54,18,6)                 & SAEM      & 0.996 & 0.998& 0.999& 1&1 \\ \cmidrule{3-9}
                           &                     & (90,30,10)                & SAEM      & 0.995 & 0.998 & 1 & 1 & 1 \\
                           \cmidrule{2-9}
                           & \multirow{3}{*}{$\mcb_g$} & \multirow{2}{*}{(18,6,2)} & SAEM      & 0.993 & 0.994 & 0.996 & 0.998 & 0.998 \\
                           &                     &                           & PEM       & 0.992 & 0.997 & 0.999 & 1 & 1 \\
                           \cmidrule{3-9}
                           &                     & (54,18,6)                 & SAEM      & 0.993 & 0.995 & 0.998 & 0.999 & 1 \\
                           \bottomrule
\end{tabular}
\caption{$\text{Acc}(\mcg)$ under the Normal 2-layer DDE.} 
\label{tab:normal-simulation}
\end{table}

\paragraph{Accuracy of Estimating $\TT$.}
Tables \ref{tab:ber-simulation-Theta}--\ref{tab:normal-simulation-Theta} reports the RMSE values for estimating continuous parameters $\TT$. These values correspond to the bottom rows of Figures \ref{fig:acc g sid} and \ref{fig:acc g gid}.

\begin{table}[h!]
\centering
\resizebox{\textwidth}{!}{
\begin{tabular}{ccccccccc}
\toprule
Parfam                     & True parameter      & $(J, K^{(1)}, K^{(2)})$               & Algorithm$\setminus N$ & 500   & 1000  & 2000  & 4000  & 8000  \\
\midrule
\multirow{7}{*}{Bernoulli} & \multirow{4}{*}{$\mcb_s$} & \multirow{2}{*}{(18,6,2)} & SAEM & 0.498 & 0.431 & 0.386 & 0.359 & 0.341 \\
& & & PEM & 0.404 & 0.304 & 0.231 & 0.203 & 0.184 \\ \cmidrule{3-9}
                           &                     & (54,18,6)                 & SAEM      & 0.392 & 0.289 &0.231 &0.213 &0.208 \\ \cmidrule{3-9}
                           &                     & (90,30,10)                & SAEM      & 0.356 & 0.240 & 0.204 & 0.185 & 0.177 \\
                           \cmidrule{2-9}
                           & \multirow{3}{*}{$\mcb_g$} & \multirow{2}{*}{(18,6,2)} & SAEM      & 0.553 & 0.474& 0.420& 0.400& 0.387 \\
                           &                     &                           & PEM       & 0.520 & 0.398& 0.288& 0.206& 0.165 \\
                           \cmidrule{3-9}
                           &                     & (54,18,6)                 & SAEM      & 0.740 & 0.650 & 0.596 & 0.577 & 0.412 \\
                           \bottomrule
\end{tabular}
}
\caption{$\text{RMSE}(\TT)$ under the Bernoulli 2-layer DDE. } 
\label{tab:ber-simulation-Theta}
\end{table}

\begin{table}[h!]
\centering
\resizebox{\textwidth}{!}{
\begin{tabular}{ccccccccc}
\toprule
Parfam                     & True parameter      & $(J, K^{(1)}, K^{(2)})$               & Algorithm$\setminus N$ & 500 & 1000  & 2000  & 4000  & 8000  \\
\midrule
\multirow{7}{*}{Poisson} & \multirow{4}{*}{$\mcb_s$} & \multirow{2}{*}{(18,6,2)} & SAEM & 0.281 & 0.258 & 0.245 & 0.243 & 0.242 \\
& & & PEM & 0.219 & 0.158 & 0.108 & 0.080 & 0.061 \\ \cmidrule{3-9}
                           &                     & (54,18,6)                 & SAEM      & 0.289& 0.168& 0.153& 0.147& 0.145 \\ \cmidrule{3-9}
                           &                     & (90,30,10)                & SAEM      & 0.281 & 0.195& 0.125& 0.117& 0.116 \\
                           \cmidrule{2-9}
                           & \multirow{3}{*}{$\mcb_g$} & \multirow{2}{*}{(18,6,2)} & SAEM      & 0.309 & 0.260& 0.247 & 0.231 & 0.229 \\
                           &                     &                           & PEM       & 0.206& 0.145 & 0.101& 0.072 & 0.053 \\
                           \cmidrule{3-9}
                           &                     & (54,18,6)                 & SAEM      & 1.210& 1.147& 1.100& 0.988& 0.918 \\
                           \bottomrule
\end{tabular}
}
\caption{$\text{RMSE}(\TT)$ under the Poisson 2-layer DDE. } 
\label{tab:poi-simulation-Theta}
\end{table}

\begin{table}[h!]
\centering
\resizebox{\textwidth}{!}{
\begin{tabular}{ccccccccc}
\toprule
Parfam                     & True parameter      & $(J, K^{(1)}, K^{(2)})$               & Algorithm$\setminus N$ & 500   & 1000  & 2000  & 4000  & 8000  \\
\midrule
\multirow{7}{*}{Normal} & \multirow{4}{*}{$\mcb_s$} & \multirow{2}{*}{(18,6,2)} & SAEM &0.189 &0.156& 0.147& 0.137& 0.133 \\
& & & PEM & 0.170& 0.128& 0.080& 0.063& 0.054 \\ \cmidrule{3-9}
                           &                     & (54,18,6)                 & SAEM      & 0.117& 0.090& 0.072& 0.063& 0.057 \\ \cmidrule{3-9}
                           &                     & (90,30,10)                & SAEM      & 0.108& 0.074& 0.054& 0.045& 0.041 \\
                           \cmidrule{2-9}
                           & \multirow{3}{*}{$\mcb_g$} & \multirow{2}{*}{(18,6,2)} & SAEM      & 0.177& 0.148& 0.133& 0.115& 0.094\\
                           &                     &                           & PEM       & 0.179& 0.137& 0.094& 0.069& 0.055\\
                           \cmidrule{3-9}
                           &                     & (54,18,6)                 & SAEM      & 0.174 & 0.141& 0.098& 0.090& 0.086 \\
                           \bottomrule
\end{tabular}
}
\caption{$\text{RMSE}(\TT)$ under the Normal 2-layer DDE. }
\vspace{-5mm}
\label{tab:normal-simulation-Theta}
\end{table}

\paragraph{Accuracy of selecting $\mck$.} Tables \ref{tab:Select_K1 Ber}-\ref{tab:Select_K1 Poi} reports the accuracy of selecting $\mck$ under Bernoulli and Poisson-based DDEs, which correspond to \cref{fig:Select_K1} and \cref{fig:Select_K1 gid}. Accuracy values for Normal-based DDEs are omitted, as all methods demonstrated near-perfect accuracy in this setting. Similarly, Tables \ref{tab:select K2 Ber}-\ref{tab:select K2 Poi} display the results for selecting $K^{(2)}$, which corresponds to Figures \ref{fig:select K2} and \ref{fig:select K2 gid}.

These tables provide additional insights beyond those presented in figures, which only display the correct selection probability for the events $\hat{K}^{(1)} = 6$ and $\hat{K}^{(2)} = 2$. 
The percentage of incorrect estimates in the tables reveal systematic tendencies of the estimators: the EBIC and LRT estimators frequently \textit{overselect} ($\hat{K}^{(1)} \ge K^{(1)}$) whereas the spectral estimator sometimes \textit{underselects} ($\hat{K}^{(1)} \le K^{(1)}$).

\begin{table}[h!]
\centering
\resizebox{0.75\textwidth}{!}{
\begin{tabular}{cccccccc}
\toprule
ParFam, value                     & $N$                   & Method \textbackslash{} $\hat{K}^{(1)}$    & 4                    & 5                    & 6                    & 7                    & 8                    \\
\midrule
\multirow{18}{*}{Bernoulli, $\mcb_s$} & \multirow{3}{*}{500}  & EBIC     &           0 & 0.25                 & 96.25                & 3.5                  & 0 \\
                            &                       & LRT      & 0.5                  & 0 & \textbf{98.75} & 0.75 & 0 \\
                            &                       & Spectral & 1.75 & 30.0 & 67.5                 & 0.75                 &  0                    \\
                            \cmidrule{2-8}
                            & \multirow{3}{*}{1000} & EBIC     &  0                    &       0               & 94.5                 & 5.5                  &   0                   \\
                            &                       & LRT      &  0                    &           0           & \textbf{96.5}                 & 3.5                  &    0                  \\
                            &                       & Spectral & 0.25                 & 18.25                & 81.5                 &         0             &        0              \\ \cmidrule{2-8}
                            & \multirow{3}{*}{2000} & EBIC     &   0                   &          0            & 88.25                & 11.75                &       0               \\
                            &                       & LRT      &  0                    &       0               & \textbf{93.5}                & 6.25                 & 0.25                 \\
                            &                       & Spectral &     0                 & 13.0                 & 87.0                 &            0          &    0                  \\ \cmidrule{2-8}
                            & \multirow{3}{*}{4000} & EBIC     &    0                  &         0             & 86.0                 & 12.25                & 1.75                 \\
                            &                       & LRT      &     0                 &             0         & 90.25                & 9.25                 & 0.5                  \\
                            &                       & Spectral &  0                    & 3.75                 & \textbf{96.25}                &     0                 &             0         \\ \cmidrule{2-8}
                            & \multirow{3}{*}{6000} & EBIC     &    0                  &              0        & 86.0                 & 13.5                & 1.5                 \\
                            &                       & LRT      &     0                 &             0         & 87.75                & 12.25                 &   0                \\
                            &                       & Spectral &   0                   & 0.5                 & \textbf{99.5}                &          0            &     0                 \\ \cmidrule{2-8}
                            & \multirow{3}{*}{8000} & EBIC &0 &0 & 87.75 & 11.0 & 1.25 \\
                            &                       & LRT  &0 &0 & 88.25 &0 10.75 & 1.00 \\
                            &                       & Spectral  &0 & 0.75 & \textbf{99.25} & 0& 0 \\
    \midrule
\multirow{18}{*}{Bernoulli, $\mcb_g$} 
& \multirow{3}{*}{500}  & EBIC & 0& 0& 84.5 & 13.75 & 1.75 \\
                    & & LRT & 0.75 & 1.0 & 86.25 & 12.0 & 0\\
                    & & Spectral & 8.5 & 4.25 & \textbf{87.25} & 0& 0\\
\cmidrule{2-8}
& \multirow{3}{*}{1000}  & EBIC & 0& 0& 87.25 & 10.5 & 2.5 \\
                    & & LRT & 0& 0& 80.5 & 18.5 & 1.0 \\
                    & & Spectral & 0.5 & 0& \textbf{99.5} & 0& 0\\ 
\cmidrule{2-8}
& \multirow{3}{*}{2000}  & EBIC & 0& 0& 85.25 & 11.5 & 3.25 \\
                    & & LRT & 0& 0& 82.75 & 16.25 & 1.0 \\
                    & & Spectral & 0& 0& \textbf{100.0} & 0& 0\\ 
\cmidrule{2-8}
& \multirow{3}{*}{4000} & EBIC & 0& 0& 97.0 & 3.0 & 0\\
                    & & LRT &0 &0 & 92.25 & 7.75 & 0\\
                    & & Spectral &0 &0 &\textbf{100.0} &0 &0 \\ 
\cmidrule{2-8}
& \multirow{3}{*}{6000}  & EBIC &0 &0 & 97.5 & 2.5 &0 \\
                    & & LRT &0 &0 & 91.5 & 8.5 &0 \\
                    & & Spectral &0 &0 & \textbf{100.0} &0 &0 \\ 
\cmidrule{2-8}
& \multirow{3}{*}{8000}  & EBIC &0 &0 & 98.0 & 2.0 &0 \\
                    & & LRT &0 &0 & 90.75 & 9.25 &0 \\
                    & & Spectral &0 &0 & \textbf{100.0} &0 &0 \\ 
                            \bottomrule
\end{tabular}}
\caption{Empirical distribution of the estimated $\hat{K}^{(1)}$ values under the Bernoulli DDE with true $K^{(1)} = 6$ and parameters $\mcb_s, \mcb_g$. For each sample size, we present the method with the highest accuracy in bold. ``ParFam'' is short for ``parametric family''.}
\label{tab:Select_K1 Ber}
\end{table}

\begin{table}[h!]
\centering
\resizebox{0.75\textwidth}{!}{
\begin{tabular}{cccccccc}
\toprule
ParFam, value                      & $N$                   & Method \textbackslash{} $\hat{K}^{(1)}$    & 4                    & 5                    & 6                    & 7                    & 8                    \\
\midrule
\multirow{18}{*}{Poisson, $\mcb_s$} & \multirow{3}{*}{500}  & EBIC     &  0 & 0  & 99.5  & 0.5  & 0 \\
                            &                       & LRT      &0 &0 &  99.5  & 0.5  &0  \\
                            &                       & Spectral &0 &0 &  \textbf{100.0} &0  &0  \\
                            \cmidrule{2-8}
                            & \multirow{3}{*}{1000} & EBIC       &0 &0 &   99.0  & 1.0  &0  \\
                            &                       & LRT      &0 &0 &   99.0  & 1.0  &0  \\
                            &                       & Spectral     &0 &0 &   \textbf{100.0}  &0  & 0 \\ \cmidrule{2-8}
                            & \multirow{3}{*}{2000} & EBIC       & 0& 0&   97.5  & 2.5  &0  \\
                            &                       & LRT      &0 &0 &   96.25  & 3.75  &0  \\
                            &                       & Spectral     &0 &0 &   \textbf{100.0}  &0  &0  \\ 
                            \cmidrule{2-8}
                            & \multirow{3}{*}{4000} & EBIC       &0 &0 &   98.25  & 1.5  & 0.25 \\
                            &                       & LRT      &0 &0 &   97.25  & 2.75  & 0 \\
                            &                       & Spectral     & 0&0 &   \textbf{100.0}  & 0 & 0\\ 
                            \cmidrule{2-8}
                            & \multirow{3}{*}{6000} & EBIC       &0 &0 &   99.0  & 1.0  &0 \\
                            &                       & LRT      &0 &0 &   97.25  & 2.75  &0  \\
                            &                       & Spectral     &0 &0 &   \textbf{100.0}  &0  &0 \\ \cmidrule{2-8}
                            & \multirow{3}{*}{8000} & EBIC       &0 &0 &   98.75  & 0.75  & 0.5 \\
                            &                       & LRT      &0 &0 &   97.75  & 1.75  & 0.5 \\
                            &                       & Spectral     &0 &0 &   \textbf{100.0}  &0  &0 \\
\midrule
\multirow{18}{*}{Poisson, $\mcb_g$}
& \multirow{3}{*}{500}  & EBIC &0 &0 &\textbf{92.0} & 7.75 & 0.25 \\
                    & & LRT &0 & 1.0 & 91.0 & 8.0 &0 \\
                    & & Spectral & 8.5 &0 & 91.5 &0 &0 \\
\cmidrule{2-8}
& \multirow{3}{*}{1000}  & EBIC &0 &0 & 90.0 & 9.5 & 0.5 \\
                    & & LRT &0 &0 & 88.5 & 10.75 & 0.75 \\
                    & & Spectral & 2.75 &0 & \textbf{97.25} &0 &0 \\ 
\cmidrule{2-8}
& \multirow{3}{*}{2000}  & EBIC &0 &0 & 90.5 & 8.5 & 1.0 \\
                    & & LRT &0 & 0.25 & 87.25 & 11.5 & 1.0 \\
                    & & Spectral & 0.5 &0 & \textbf{99.5} &0 &0 \\ 
\cmidrule{2-8}
& \multirow{3}{*}{4000} & EBIC &0 &0 & 91.25 & 7.25 & 1.5 \\
                    & & LRT &0 &0 & 89.0 & 9.25 & 1.75 \\
                    & & Spectral &0 &0 & \textbf{100.0} &0 &0 \\ 
\cmidrule{2-8}
& \multirow{3}{*}{6000}  & EBIC &0 &0 & 95.0 & 4.5 & 0.5 \\
                    & & LRT &0 &0 & 91.25 & 7.75 & 1.0 \\
                    & & Spectral &0 &0 & \textbf{100.0} &0 &0 \\ 
\cmidrule{2-8}
& \multirow{3}{*}{8000}  & EBIC &0 &0 & 94.0 & 5.0 & 1.0 \\
                    & & LRT &0 &0 & 90.75 & 8.25 & 1.0 \\
                    & & Spectral &0 &0 & \textbf{100.0} &0 &0 \\ 
                            \bottomrule
\end{tabular}
}
\caption{Empirical distribution of the estimated $\hat{K}^{(1)}$ values under the Poisson DDE with true $K^{(1)} = 6$ and parameters $\mcb_s,\mcb_g$. For each sample size, we present the method with the highest accuracy in bold. ``ParFam'' is short for ``parametric family''.}
\label{tab:Select_K1 Poi}
\end{table}

\begin{table}[h!]
\centering
\resizebox{0.65\textwidth}{!}{
\begin{tabular}{cccccc}
\toprule
ParFam, value                      & $N$                   & Method \textbackslash{} $\hat{K}^{(2)}$    & 1 & 2 & 3 \\
\midrule
\multirow{18}{*}{Bernoulli, $\mcb_s$} & \multirow{3}{*}{500}  & EBIC & 0.5 & \textbf{99.5}  & 0 \\
                            &                       & LRT      & 16.75 &  83.0  & 0.25  \\
                            &                       & Spectral & 12.0 & 85.5 & 2.5 \\
                            \cmidrule{2-6}
                            & \multirow{3}{*}{1000} & EBIC &0 & \textbf{100.0}  &0 \\
                            &                       & LRT  & 0.5 & 98.75  & 0.75 \\
                            &                       & Spectral & 6.5 & 91.5 & 2.0 \\ \cmidrule{2-6}
                            & \multirow{3}{*}{2000} & EBIC &0 & \textbf{100.0}  &0 \\
                            &                       & LRT  &0 & 98.0  & 2.0 \\
                            &                       & Spectral & 3.25& 96.5 & 0.25 \\
                            \cmidrule{2-6}
                            & \multirow{3}{*}{4000} & EBIC &0 & \textbf{99.5}  & 0.5 \\
                            &                       & LRT  &0 & 96.0  & 4.0 \\
                            &                       & Spectral & 2.75 & 97.25 & 0 \\
                            \cmidrule{2-6}
                            & \multirow{3}{*}{6000} & EBIC &0 & \textbf{98.0}  & 2.0  \\
                            &                       & LRT  &0 & 93.0 & 7.0 \\
                            &                       & Spectral & 2.0 & \textbf{98.0} & 0 \\ \cmidrule{2-6}
                            & \multirow{3}{*}{8000} & EBIC &0 & 97.25  & 2.75 \\
                            &                       & LRT  &0 & 94.25  & 5.75 \\
                            &                       & Spectral & 2.0 & \textbf{98.0} & 0 \\
\midrule
\multirow{18}{*}{Bernoulli, $\mcb_g$}
& \multirow{3}{*}{500} & EBIC & 0.75 & \textbf{99.25} &0 \\
                       & & LRT & 13.5 & 86.0 & 0.5  \\
                       & & Spectral & 7.0 & 91.75 & 1.25\\
                            \cmidrule{2-6}
& \multirow{3}{*}{1000} & EBIC &0 & \textbf{99.25}  & 0.75 \\
                       & & LRT & 1.5 &  96.25  & 2.25  \\
                       & & Spectral & 3.5 & 95.25 & 1.25 \\ 
                            \cmidrule{2-6}
& \multirow{3}{*}{2000} & EBIC &0 & \textbf{99.25}  & 0.75 \\
                       & & LRT & 0.5 &  94.0  & 5.5  \\
                       & & Spectral & 8.75 & 90.75 & 0.5 \\
                            \cmidrule{2-6}
& \multirow{3}{*}{4000} & EBIC &0 & \textbf{98.25}  & 1.75 \\
                       & & LRT &0 & \textbf{98.25}  & 1.75 \\
                       & & Spectral & 2.75 & 96.25 & 1.0 \\ 
                       \cmidrule{2-6}
& \multirow{3}{*}{6000} & EBIC &0 & \textbf{97.5}  & 2.5 \\
                       & & LRT &0 &  96.5  & 3.5  \\
                       & & Spectral & 1.75 & 97.75 & 0.5 \\
                       \cmidrule{2-6}
& \multirow{3}{*}{8000}  & EBIC &0 &  \textbf{99.25}  & 0.75  \\
                       & & LRT &0 &  \textbf{99.25}  & 0.75  \\
                       & & Spectral & 0.5 & 99.0 & 0.5 \\
                       \bottomrule
\end{tabular}
}
\caption{Empirical distribution of the estimated $\hat{K}^{(2)}$ values under the Bernoulli DDE with true $K^{(2)} = 2$ and parameters $\mcb_s, \mcb_g$. For each sample size, we present the method with the highest accuracy in bold. ``ParFam'' is short for ``parametric family''.}
\label{tab:select K2 Ber}
\end{table}

\begin{table}[h!]
\centering
\resizebox{0.65\textwidth}{!}{
\begin{tabular}{cccccc}
\toprule
Parfam, value                      & $N$                   & Method \textbackslash{} $\hat{K}^{(2)}$    & 1 & 2 & 3 \\
\midrule
\multirow{18}{*}{Poisson, $\mcb_s$} & \multirow{3}{*}{500}  & EBIC & 0& \textbf{99.75}  & 0.25 \\
                            &                       & LRT      & 0&  99.5  & 0.5  \\
                            &                       & Spectral & 0.25 &\textbf{99.75} & 0\\
                            \cmidrule{2-6}
                            & \multirow{3}{*}{1000} & EBIC &0 & 99.75  & 0.25 \\
                            &                       & LRT  &0 & 99.75  & 0.25 \\
                            &                       & Spectral &0 & \textbf{100.0} & 0 \\ \cmidrule{2-6}
                            & \multirow{3}{*}{2000} & EBIC &0 & 99.25  & 0.75 \\
                            &                       & LRT  &0 & 99.0  & 1.0 \\
                            &                       & Spectral &0 & \textbf{100.0} & 0 \\
                            \cmidrule{2-6}
                            & \multirow{3}{*}{4000} & EBIC &0 & 98.25  & 1.75 \\
                            &                       & LRT  &0 & 98.25  & 1.75 \\
                            &                       & Spectral &0 & \textbf{100.0} & 0 \\
                            \cmidrule{2-6}
                            & \multirow{3}{*}{6000} & EBIC &0 & 99.0  & 1.0  \\
                            &                       & LRT  &0 & 98.25 & 1.75 \\
                            &                       & Spectral &0 & \textbf{100.0} & 0 \\ \cmidrule{2-6}
                            & \multirow{3}{*}{8000} & EBIC &0 & 98.5  & 1.5  \\
                            &                       & LRT  &0 & 98.5  & 1.5 \\
                            &                       & Spectral & 0& \textbf{100.0} &  0\\
\midrule
\multirow{18}{*}{Poisson, $\mcb_g$}
& \multirow{3}{*}{500} & EBIC & 1.0 & \textbf{97.5}  & 1.5 \\
                       & & LRT & 1.75 &  96.75  & 1.5  \\
                       & & Spectral & 16.5 &81.75 & 1.75\\
                            \cmidrule{2-6}
& \multirow{3}{*}{1000} & EBIC & 0.25 & \textbf{96.25}  & 3.5 \\
                       & & LRT & 0.75 &  95.5  & 3.75  \\
                       & & Spectral & 13.75 &84.75 & 1.5 \\ 
                            \cmidrule{2-6}
& \multirow{3}{*}{2000} & EBIC &0 & \textbf{95.75}  & 4.25 \\
                       & & LRT & 0.5 &  94.0  & 5.5  \\
                       & & Spectral & 8.75 & 90.75 & 0.5 \\
                            \cmidrule{2-6}
& \multirow{3}{*}{4000} & EBIC & 0.25 & \textbf{96.5}  & 3.25 \\
                       & & LRT & 0.75 &  94.75  & 4.5  \\
                       & & Spectral & 5.5 & 94.0 & 0.5 \\
                            \cmidrule{2-6}
& \multirow{3}{*}{6000} & EBIC &0 & \textbf{97.25}  & 2.75 \\
                       & & LRT &0 &  97.0  & 3.0  \\
                       & & Spectral & 3.5 & 96.0 & 0.5 \\ 
                       \cmidrule{2-6}
& \multirow{3}{*}{8000}  & EBIC & 0.25 & \textbf{97.5}  & 2.25 \\
                       & & LRT & 0.25 &  97.0  & 2.75  \\
                       & & Spectral & 3.5 & 96.0 & 0.5 \\
                            \bottomrule
\end{tabular}
}
\caption{Empirical distribution of the estimated $\hat{K}^{(2)}$ values under the Poisson DDE with true $K^{(2)} = 2$ and parameters $\mcb_s,\mcb_g$. For each sample size, we present the method with the highest accuracy in bold. ``ParFam'' is short for ``parametric family''.}
\label{tab:select K2 Poi}
\end{table}

\clearpage

\section{Data Analysis Details and Additional Results}\label{sec:supp data analysis}
\subsection{Preprocessing}

\paragraph{MNIST Data.} 
We work with the preprocessed version of the default training set, which consists of $60,000$ images, each containing information on the $28^2$ pixel values. Initially, each pixel takes integer values between $0$ and $255$. 
As the data values are highly concentrated around values near 0 or larger than 200, we transform the data into binary responses by thresholding at a value of 128. In other words, pixels with values exceeding 128 are assigned a binary value of 1, while the rest are set to 0.
For the sake of easier presentation and computational efficiency, we work on the subset of the dataset whose true digit labels are equal to $0, 1, 2, 3$, and also discard some pixels with uniformly small values by selecting $J = 264$ pixels whose average pixel values are larger than 40. After preprocessing, the training set consists of $N = 20,679$ unlabeled images.
We identically preprocess the test set, which leads to a total of $N=4,157$ images.

\paragraph{20 Newsgroups Data.}
The dataset provides a default partition of the train and test sets based on chronological order, and we use this partition.
The preprocessing of the training data was carried out in three main steps. First, to reduce the signal-to-noise ratio and enhance interpretability, we focus on a subset of labels. To elaborate, we consider the newsgroup articles that belong to the large class of computer, recreation, and science (see the top-latent layer labels in \cref{fig:topic model true label hierarchy}). A manual inspection revealed that the newsgroup with label \texttt{sci.crypt} has a wide range of topics such as government and politics and was often cross-referenced to those newsgroups, so we did not include these documents in our dataset. 
Second, documents of extreme lengths were filtered out by removing the shortest $5\%$ and longest $1\%$ of all documents. This procedure is standard in the literature, and has been shown to increase the signal-to-noise ratio \citep{ke2024using}. 
Third, we construct our dictionary by excluding infrequent words (with less than 100 occurrences) and screening out stop-words (such as the, he, in) and topic-irrelevant words (such as like, must, since) using the \texttt{R} package \texttt{tm}.
As the original dataset consists of email texts, we performed a manual screening to remove uninformative email-related vocabulary, such as edu, com, net, and cmu.
Finally, we conducted a secondary filtering of the short documents, as the removal of stop-words led to some documents being uninformative. The resulting processed train dataset is a sparse $5883 \times 653$ count matrix. 

\begin{figure}[h!]
\resizebox{\textwidth}{!}{
\begin{tikzpicture}[
  auto,
  node distance=0.7cm,
  every node/.style={
    draw,
    rectangle,
    minimum size=1cm,
    font=\sffamily\bfseries
  },
  arrow/.style={
    ->,
    line width=#1
  },
  every text node part/.style={align=center}
  ]

  \node (T1) {recreation};
  \node (T2) [right=10.4cm of T1] {computer};
  \node (T3) [right=10.3cm of T2] {science};

  \node (M3) [below=of T1] {sports-baseball};
  \node (M2) [left=of M3] {cars};
  \node (M1) [left=of M2] {motorcycle};
  \node (M4) [right=of M3] {sports-hockey};
  \node (M5) [right=of M4] {os-windows};
  \node (M6) [right=of M5] {windows};
  \node (M7) [right=of M6] {graphics};
  \node (M8) [right=of M7] {hardware-ibm};
  \node (M9) [right=of M8] {hardware-mac};
  \node (M10) [right=of M9] {medicine};
  \node (M11) [right=of M10] {space};
  \node (M12) [right=of M11] {electronics};

  \draw[arrow=0.5mm] (T1) -- (M1);
  \draw[arrow=0.5mm] (T1) -- (M2);
  \draw[arrow=0.5mm] (T1) -- (M3);
  \draw[arrow=0.5mm] (T1) -- (M4);
  \draw[arrow=0.5mm] (T2) -- (M5);
  \draw[arrow=0.4mm] (T2) -- (M6);
  \draw[arrow=0.5mm] (T2) -- (M7);
  \draw[arrow=0.3mm] (T2) -- (M8);
  \draw[arrow=0.3mm] (T2) -- (M9);
  \draw[arrow=0.3mm] (T3) -- (M10);
  \draw[arrow=0.3mm] (T3) -- (M11);
  \draw[arrow=0.3mm] (T3) -- (M12);
\end{tikzpicture}
}
\caption{Nested structure of the true held-out labels for the 20 newsgroups dataset.}
\label{fig:topic model true label hierarchy}
\end{figure}

To process the test dataset, we go through the same first and second steps described above. We continue using the dictionary constructed from the train set, with $J = 653$ words. After filtering out the short/long documents, the resulting data matrix consists of $N = 3320$ documents.

\paragraph{TIMSS Assessment Data.} 
We use the preprocessed dataset provided in \cite{lee2024new}. Additionally, we remove all rows of the data matrix with missing entries, which resulted in a total of $N = 435$ observations. While our estimation procedure can be modified to handle missing data (missing at random) by modifying the likelihood function, we take this additional preprocessing step for the sake of consistency with other parts of the paper.

This dataset includes additional information regarding the latent structure $\GG^{(1)}$, which is the so-called $Q$-matrix in the cognitive diagnostic modeling literature \citep{von2008general, lee2024new}. The provisional $\GG^{(1)}$ matrix specifies $K^{(1)} = 7$ latent cognitive skills (Number, Algebra, Geometry, Data and Probability, Knowing, Applying, and Reasoning) as well as the skills that are required to solve each item. That is, $g^{(1)}_{j,k}=1$ if item $j$ requires latent skill $k$ to solve it. Hence, following the confirmatory latent variable modeling convention in psychometrics, we estimate the DDE parameters by fixing $\GG^{(1)}$ to this {given} structure.

\subsection{Selecting the Number of Latent Variables}
\begin{figure}
    \centering
    \includegraphics[width = 0.45\textwidth]{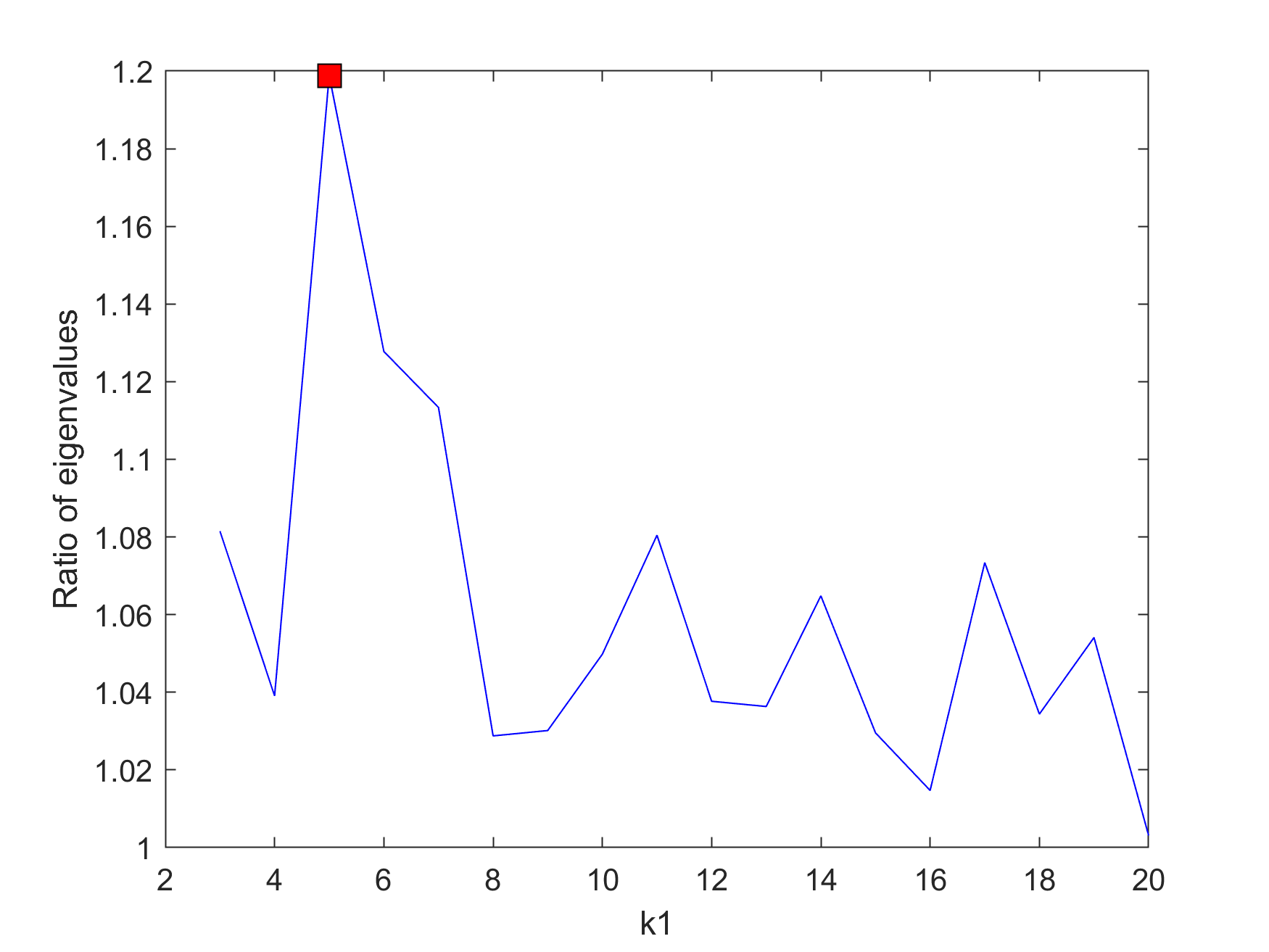}
    \quad
    \includegraphics[width = 0.45\textwidth]{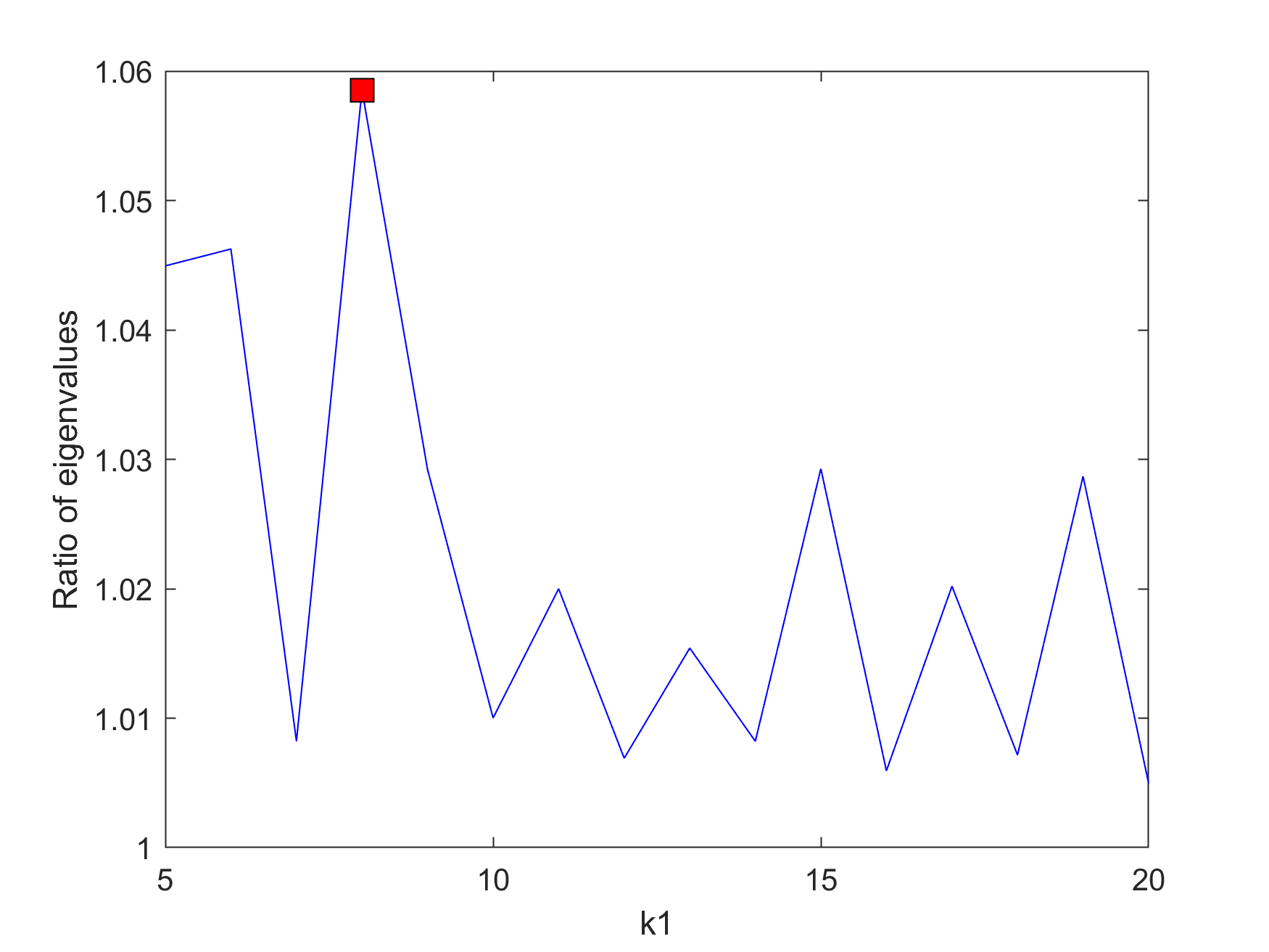}
    \caption{Based on the spectral estimator for $K^{(1)}$, (left) we select $K^{(1)}=5$ for MNIST, and (right) $K^{(1)}=8$ for 20 newsgroups. The peak, highlighted in red, correspond to the selected values. We omit displaying the first few eigenvalue ratios for better illustration.}
    \label{fig:eigenvalue ratio mnist}
    \vspace{5mm}
\end{figure}

\paragraph{MNIST Data.}
We select $K^{(1)} = 5$ based on the spectral-gap estimator, as illustrated in the left panel of \cref{fig:eigenvalue ratio mnist}. Also, we set $K^{(2)} = 2$ motivated by our identifiability requirement $2K^{(2)} < K^{(1)}$. This choice is also supported by the fact that there are four true labels. Each latent configuration $\ma^{(2)} = (A_1^{(2)}, A_2^{(2)}) \in \{0,1\}^2$ uniquely corresponds to each digit, as illustrated in the center panel of \cref{fig:mnist label estimation}.

\paragraph{20 Newsgroups Data.}
We select $K^{(1)} = 8$ based on the spectral-gap estimator, as illustrated in the right panel of \cref{fig:eigenvalue ratio mnist}.
For $K^{(2)}$, the EBIC values are quite similar for $K^{(2)} = 2, 3$ with values of $1.3273 \times 10^6$ and $1.3277 \times 10^6$, respectively. Also, the spectral-gap is similar for both values $K^{(2)} = 2, 3$. Hence, we select $K^{(2)}$ based on the interpretability of the inferred latent structure.
The estimated latent structure with $K^{(2)} = 3$ is displayed in \cref{fig:overfitted topic model}. Compared to \cref{fig:topic model} in the main text, the `technology' topic has split into two groups whose interpretation is somewhat blurry. Notably, these two `technology' latent variables do not match the labels of `computer' and `science' provided by the dataset (see \cref{fig:topic model true label hierarchy}). For instance, the `technology 2' variable has arrows to the finer topics of motorcycle, software, and space. Hence, we select $K^{(2)} = 2$ based on this lack of semantic distinction.

\begin{figure}
\resizebox{\textwidth}{!}{
\begin{tikzpicture}[
  auto,
  node distance=1cm,
  every node/.style={
    draw,
    rectangle,
    minimum size=1cm,
    font=\tiny\sffamily\bfseries
  },
  arrow/.style={
    ->,
    line width=#1
  },
  every text node part/.style={align=center}
  ]

  \node (T1) {recreation};
  \node (T2) [right=2cm of T1] {technology 1};
  \node (T3) [right=2cm of T2] {technology 2};

  \node (M3) [below=of T1] {cars};
  \node (M2) [left=of M3] {sports};
  \node (M1) [left=of M2] {motorcycle};
  \node (M4) [right=of M3] {software};
  \node (M5) [right=of M4] {location/names};
  \node (M6) [right=of M5] {hardware};
  \node (M7) [right=of M6] {graphics};
  \node (M8) [right=of M7] {space};

  \draw[arrow=1.5mm] (T3) -- (M1);
  \draw[arrow=1.65mm] (T1) -- (M2);
  \draw[red, arrow=1.08mm] (T2) -- (M2);
  \draw[arrow=0.67mm] (T1) -- (M3);
  \draw[arrow=0.86mm] (T2) -- (M3);
  \draw[arrow=1.15mm] (T1) -- (M4);
  \draw[arrow=0.73mm] (T2) -- (M4);
  \draw[arrow=0.4mm] (T3) -- (M4);
  \draw[arrow=0.51mm] (T1) -- (M5);
  \draw[red, arrow=-0.36mm] (T1) -- (M6);
  \draw[arrow=0.36mm] (T2) -- (M6);
  \draw[red, arrow=0.46mm] (T1) -- (M7);
  \draw[arrow=1.05mm] (T2) -- (M7);
  \draw[arrow=0.5mm] (T3) -- (M8);
\end{tikzpicture}
}
\caption{Graphical structure of the latent topics, where we fit the 2-layer DDE with $K^{(2)} = 3, K^{(1)} = 8$. The width of the upper layer arrows is proportional to the corresponding coefficients and the red arrow indicates negative values.}
    \label{fig:overfitted topic model}
\end{figure}

\paragraph{TIMSS Assessment Data.}
This dataset includes additional information regarding the latent structure $\GG^{(1)}$, which is the so-called $Q$-matrix in the cognitive diagnostic modeling literature \citep{von2008general, lee2024new}. The provisional $\GG^{(1)}$ matrix specifies $K^{(1)} = 7$ latent cognitive skills (Number, Algebra, Geometry, Data and Probability, Knowing, Applying, and Reasoning) as well as the skills that are required to solve each item. That is, $g^{(1)}_{j,k}=1$ if item $j$ requires latent skill $k$ to solve it.
Hence, following the confirmatory latent variable modeling convention in psychometrics, we estimate the DDE parameters by fixing $\GG^{(1)}$ to this {given} structure.

Additionally, the dataset comes with a rough classification of the seven latent skills into $K^{(2)}=2$ categories: content skills and cognitive skills. However, we observed that fitting $K^{(2)} = 2$ results in one of the proportion parameters exhibiting an unusually large value of $p_2 = 0.96$. This indicates that $A_2^{(2)} = 1$ for $96\%$ of the students, and this skill is redundant. Hence, we have adjusted the number of higher-order latent variables to $K^{(2)} = 1$, which partitions the students into two groups based on the value of $A^{(2)}_1$. The EBIC for the two cases were also comparable and justified this choice.

\subsection{Additional Visualization and Performance Evaluation for the MNIST Dataset}

\subsubsection{Additional visualization}
\begin{figure}[h!]
    \centering
\resizebox{0.5\textwidth}{!}{
\centering
    \begin{tikzpicture}[scale=2,
  auto,
  node distance=1cm,
  arrow/.style={
    ->,
    line width=#1}
  ] 

    \node (o1)[neuron] at (-0.8, -0.8) {$Y_1$};
    \node (o2)[neuron] at (0, -0.8) {$Y_2$};
    \node (o3)[neuron] at (0.8, -0.8) {$Y_3$};
    \node (o4)[neuron] at (1.6, -0.8) {$Y_4$};
    \node (o5)[neuron] at (2.4, -0.8) {$\cdots$};
    \node (o6)[neuron] at (3.2, -0.8) {$\cdots$};
    \node (o7)[neuron] at (4.0, -0.8) {$Y_J$};

    \node (v1)[hidden] at (0, 0) {$A_1^{(1)}$};
    \node (v2)[hidden] at (0.8, 0) {$A_2^{(1)}$};
    \node (v3)[hidden] at (1.6, 0) {$A_3^{(1)}$};
    \node (v4)[hidden] at (2.4, 0) {$A_4^{(1)}$};
    \node (v5)[hidden] at (3.2, 0) {$A_5^{(1)}$};
       
    \node (h1)[hidden] at (0.8, 0.8) {$A_1^{(2)}$};
    \node (h2)[hidden] at (2.4, 0.8) {$A_2^{(2)}$};
    
    \draw[arrow=2mm] (h1) -- (v1);
    \draw[arrow=2mm] (h1) -- (v2);
    \draw[arrow=0.67mm] (h1) -- (v3);
    \draw[arrow=0.22mm] (h1) -- (v4);
    \draw[arrow=1.7mm] (h1) -- (v5);
    \draw[arrow=0.26mm,red] (h2) -- (v1);
    \draw[arrow=0.63mm] (h2) -- (v2);

    \draw[arrow=0.5mm] (v5)--(o1);
    \draw[arrow=0.35mm] (v5)--(o2);
    \draw[arrow=0.46mm] (v5)--(o3);
    \draw[arrow=0.36mm,red] (v1)--(o4);
    \draw[arrow=0.35mm,red] (v2)--(o4);
    \draw[arrow=1.6mm] (v5)--(o4);
    \draw[arrow=1.9mm,red] (v2)--(o7);
    \draw[arrow=0.5mm,red] (v3)--(o7);
    \draw[arrow=0.71mm] (v5)--(o7);
\end{tikzpicture}
}
    \caption{Graphical model representation of the learned latent structure. The edge widths are proportional to coefficients' absolute values. The edge colors (black/red) imply positive/negative coefficients, respectively.}
    \label{fig:mnist graphical model}
\end{figure}

\noindent 
In \cref{fig:mnist graphical model}, we visualize the learned latent structure of the MNIST dataset. The corresponding graphical matrix $\GG^{(2)}$ was used to interpret the top layer latent variables in the main text. For example, $A_1^{(2)}$ is connected to all lower-layer variables, and we interpret this as an indicator for the \emph{pixel density} of the image. 

\begin{table}
\centering
\resizebox{\textwidth}{!}{
\begin{tabular}{c|*5{C}@{}}
\toprule
Label & True & LCM & 1-layer DDE & Spectral init &  2-layer DDE \\ 
\midrule
0 & \includegraphics[width=\linewidth]{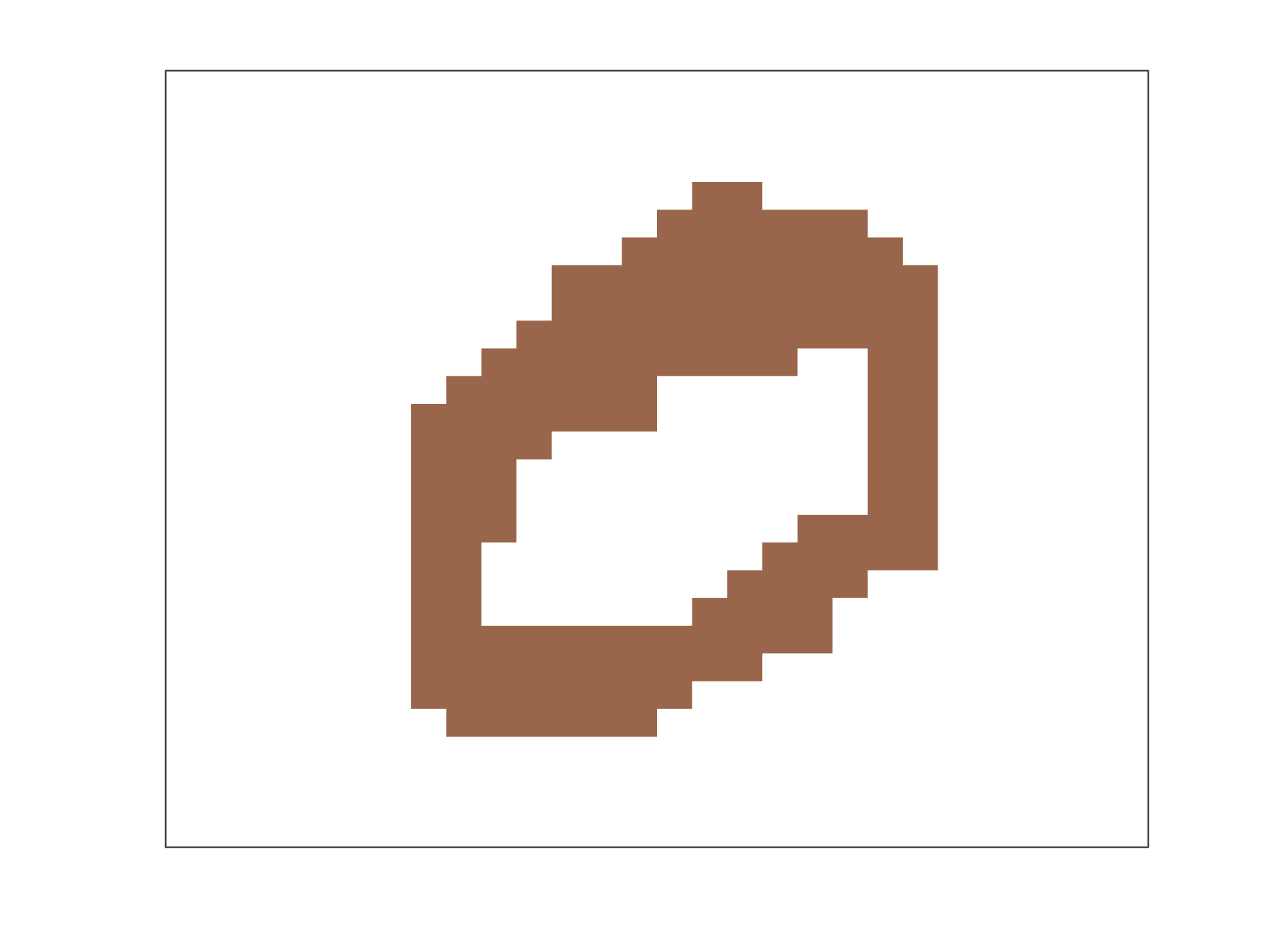} & \includegraphics[width=\linewidth]{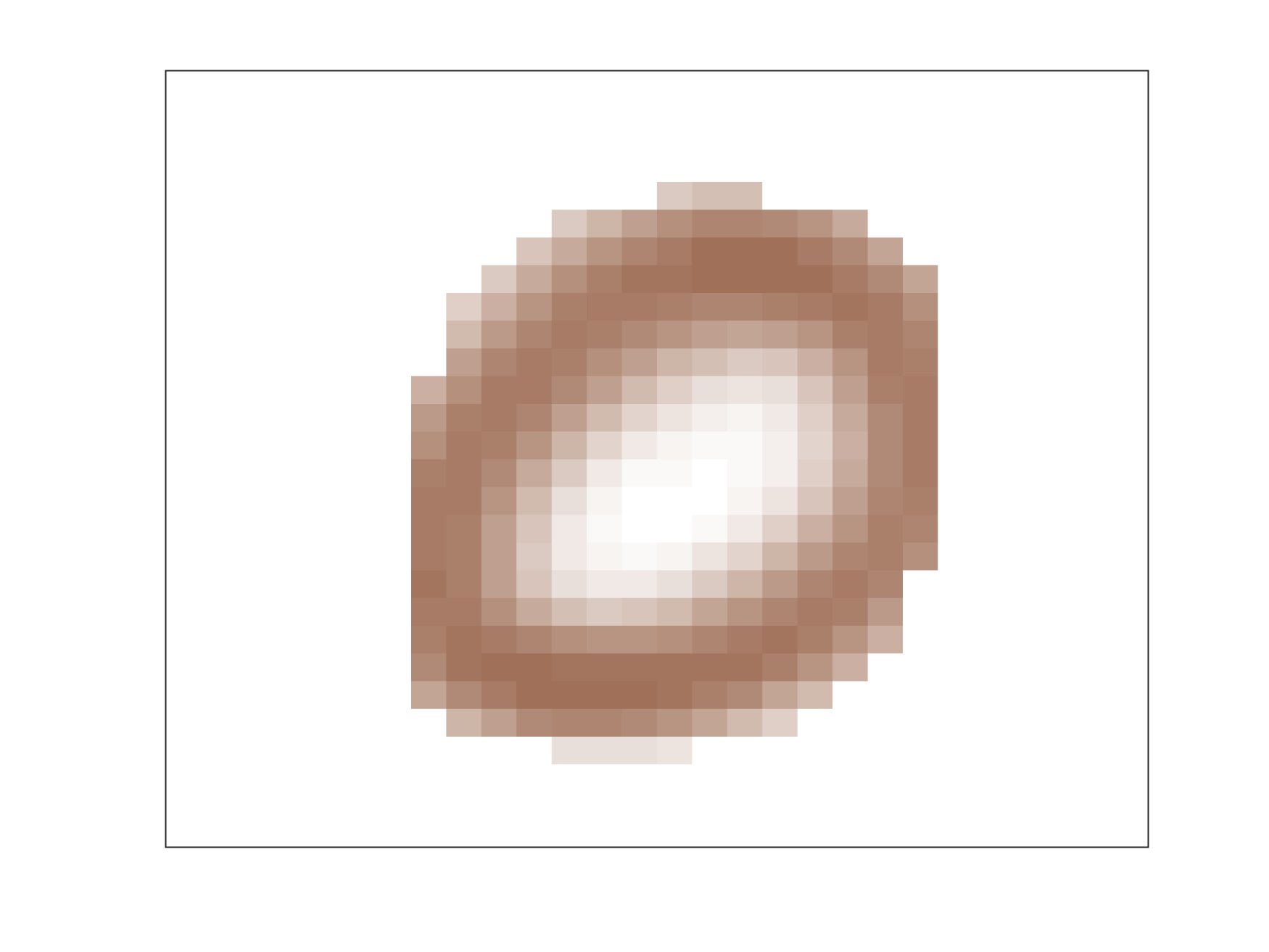}  & \includegraphics[width=\linewidth]{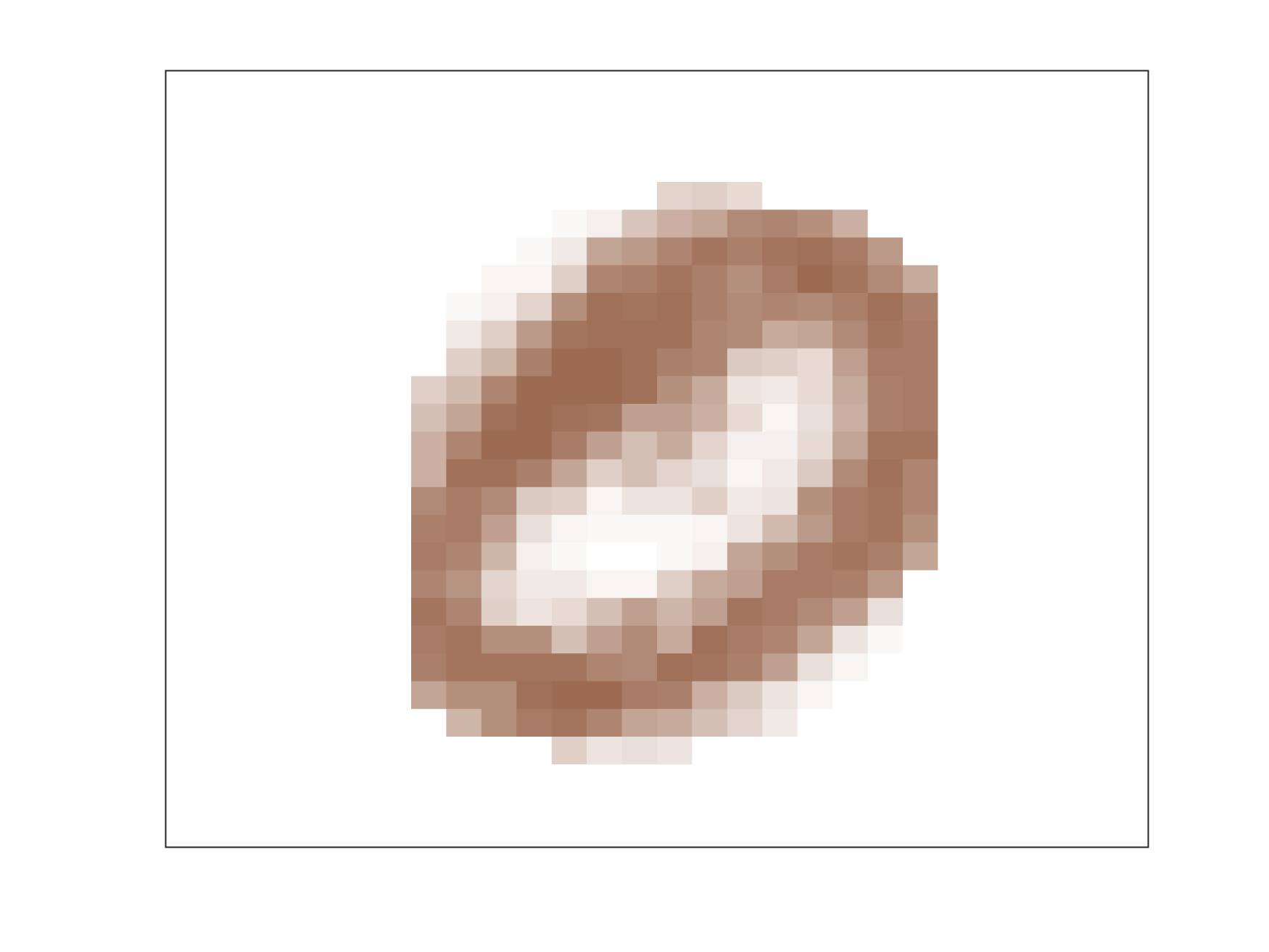} 
& \includegraphics[width=\linewidth]{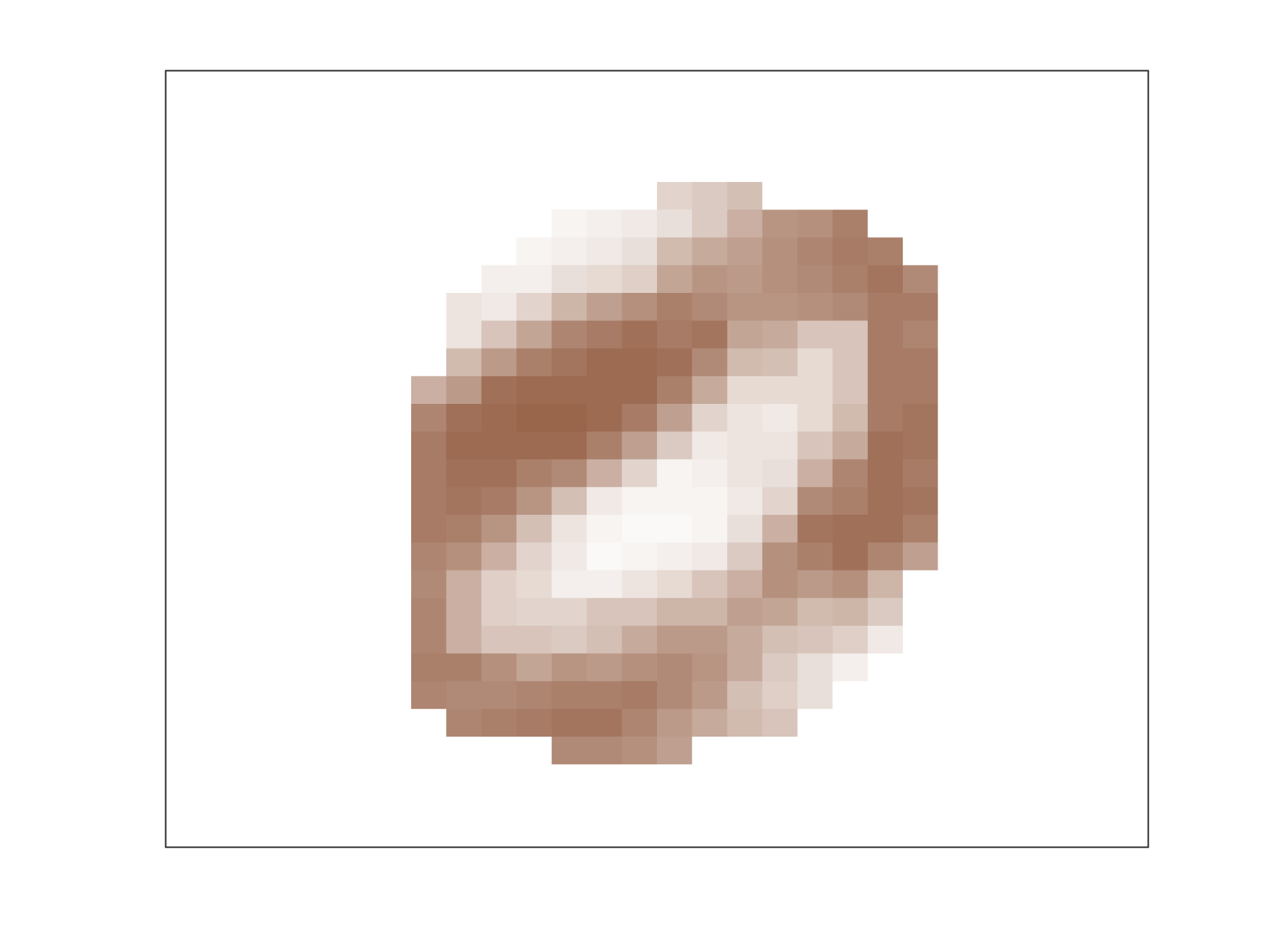} & \includegraphics[width=\linewidth]{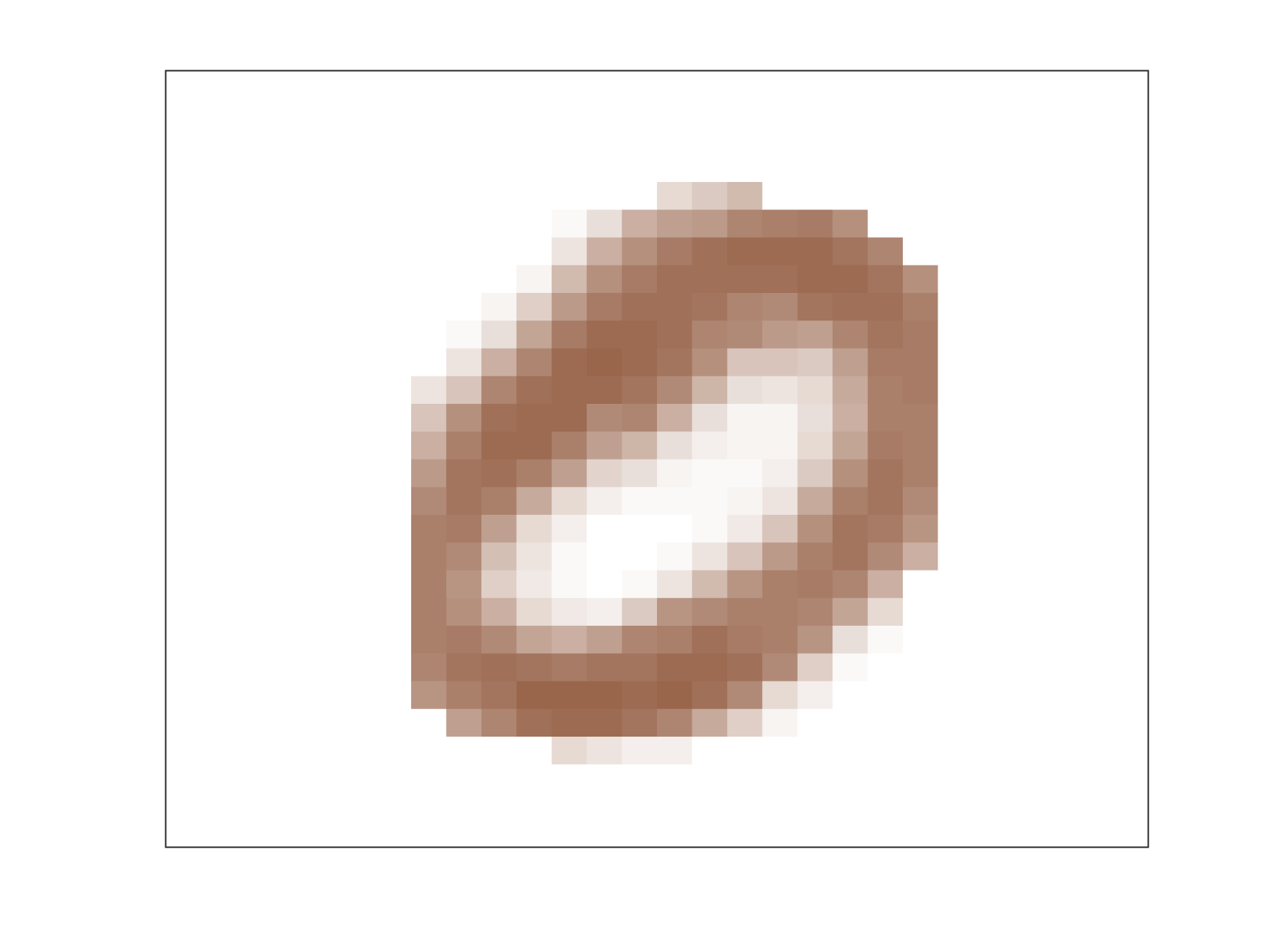} \\
1 & \includegraphics[width=\linewidth]{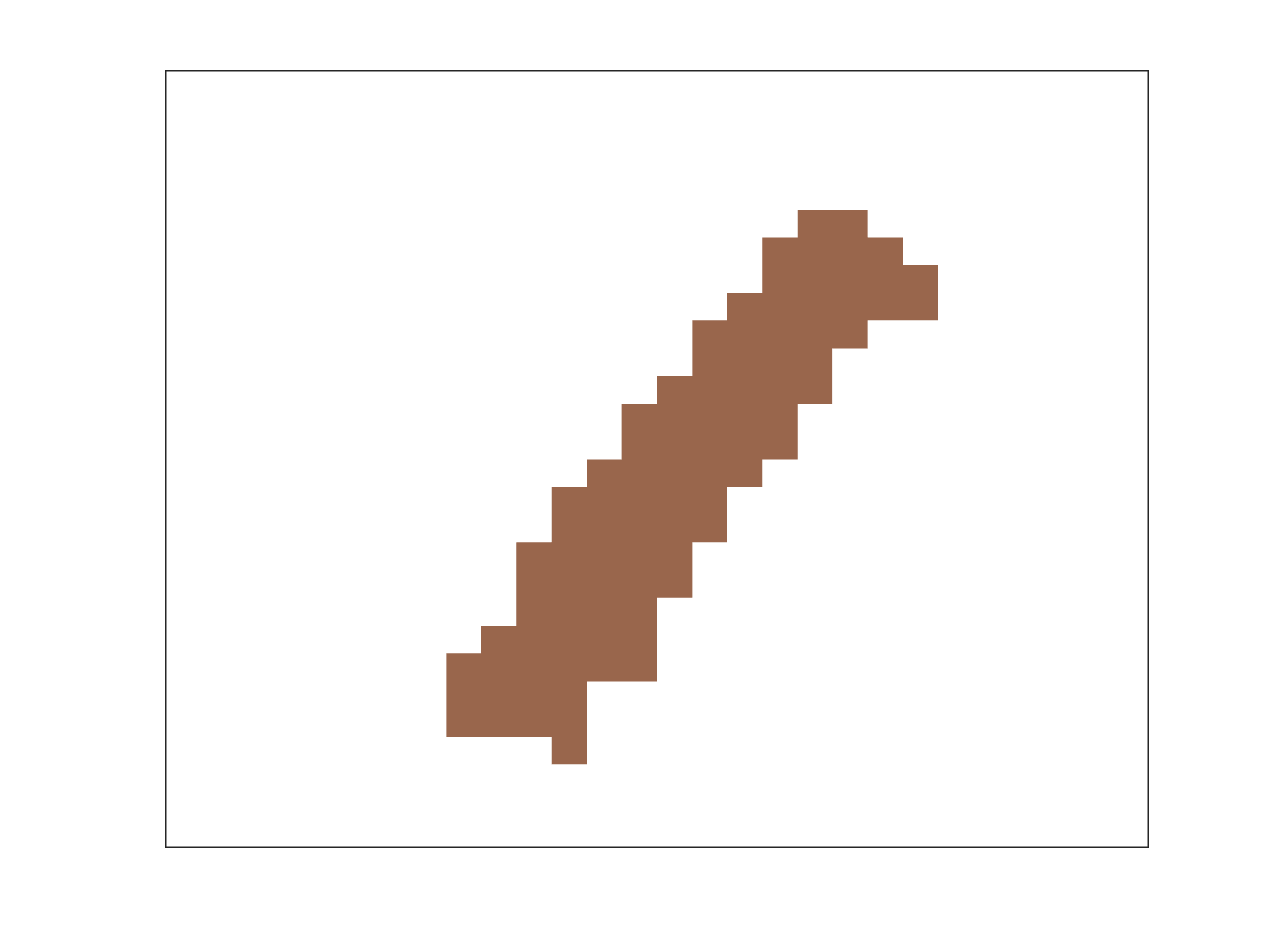}  & \includegraphics[width=\linewidth]{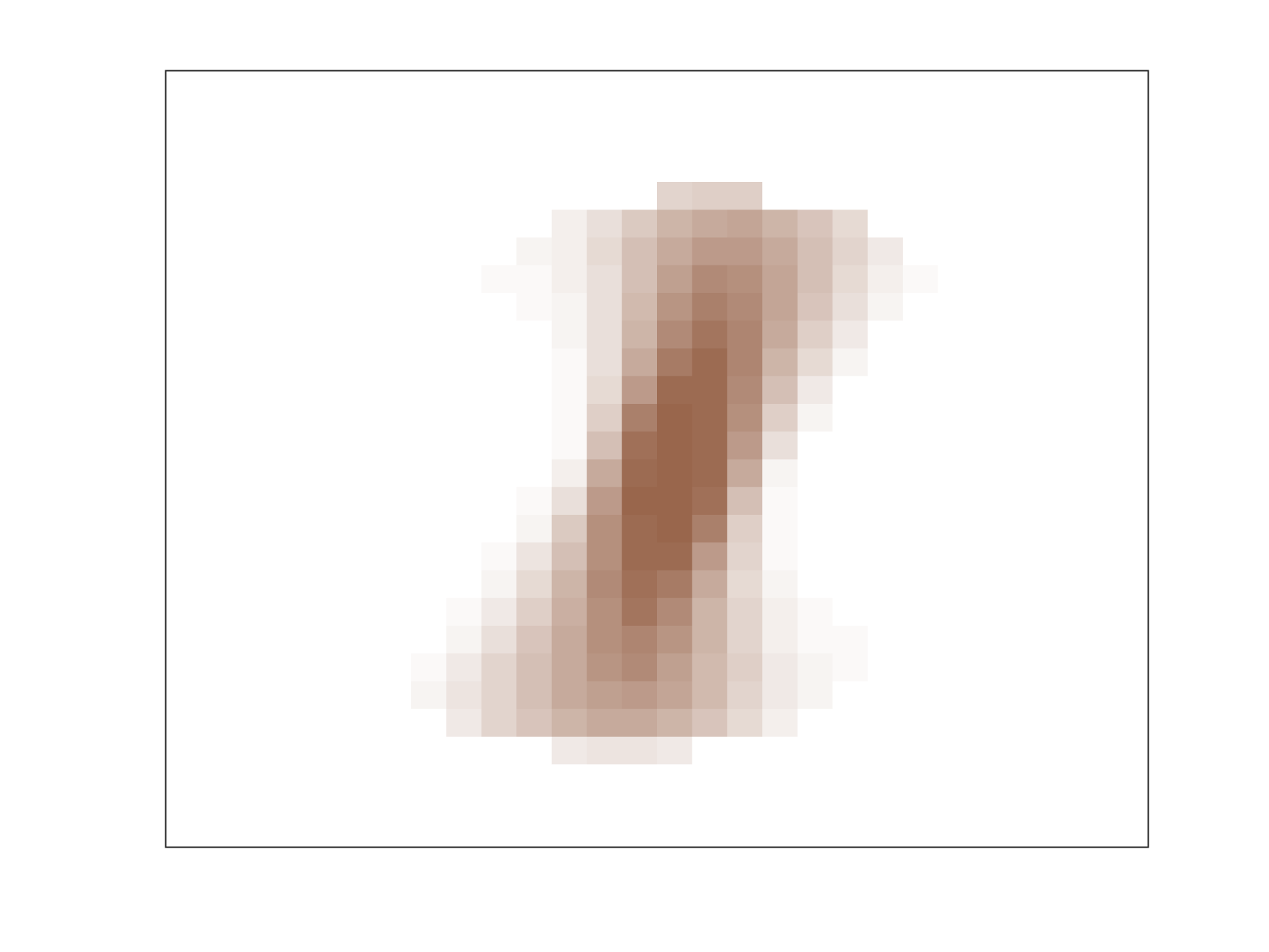} & \includegraphics[width=\linewidth]{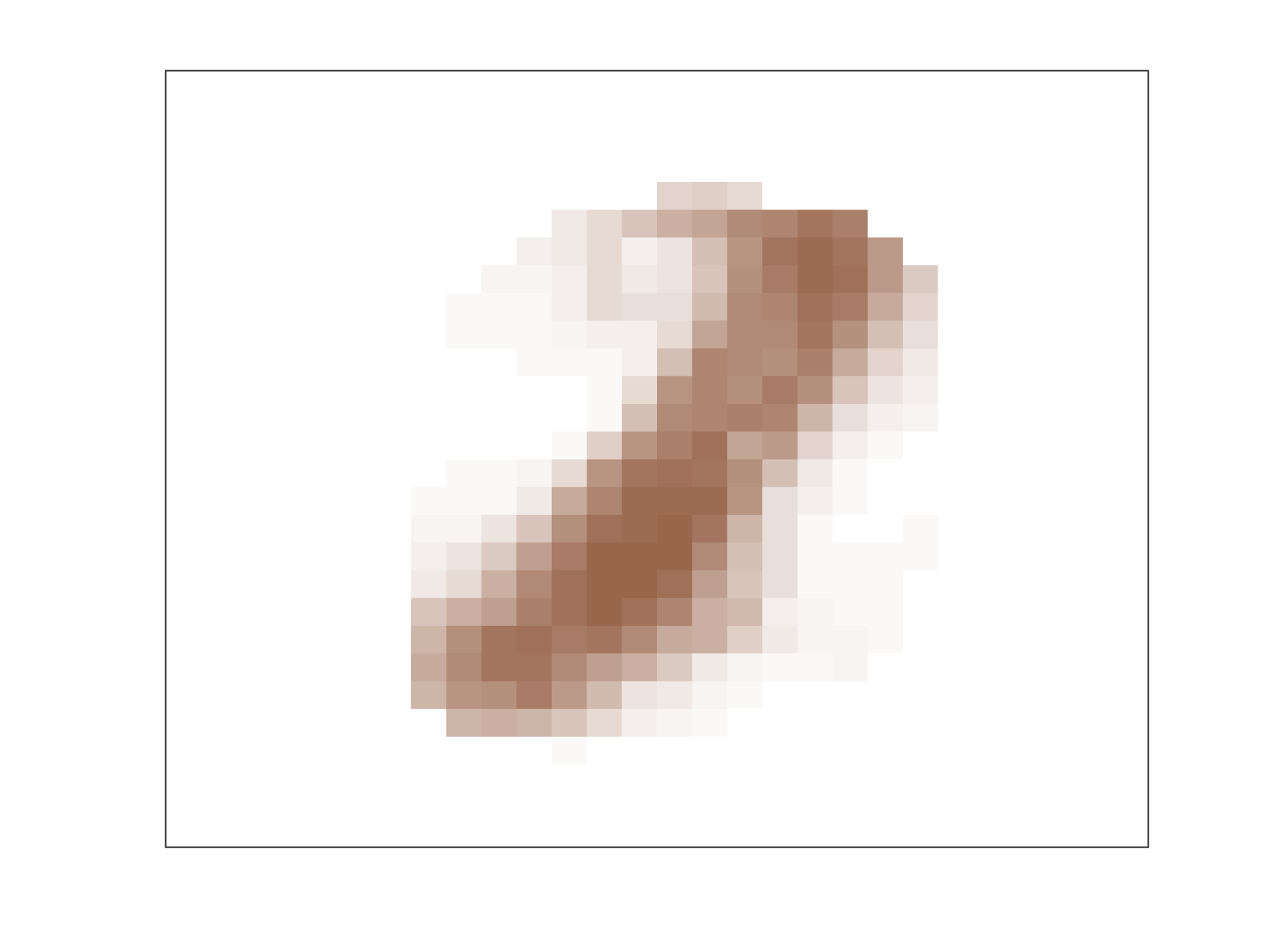} & \includegraphics[width=\linewidth]{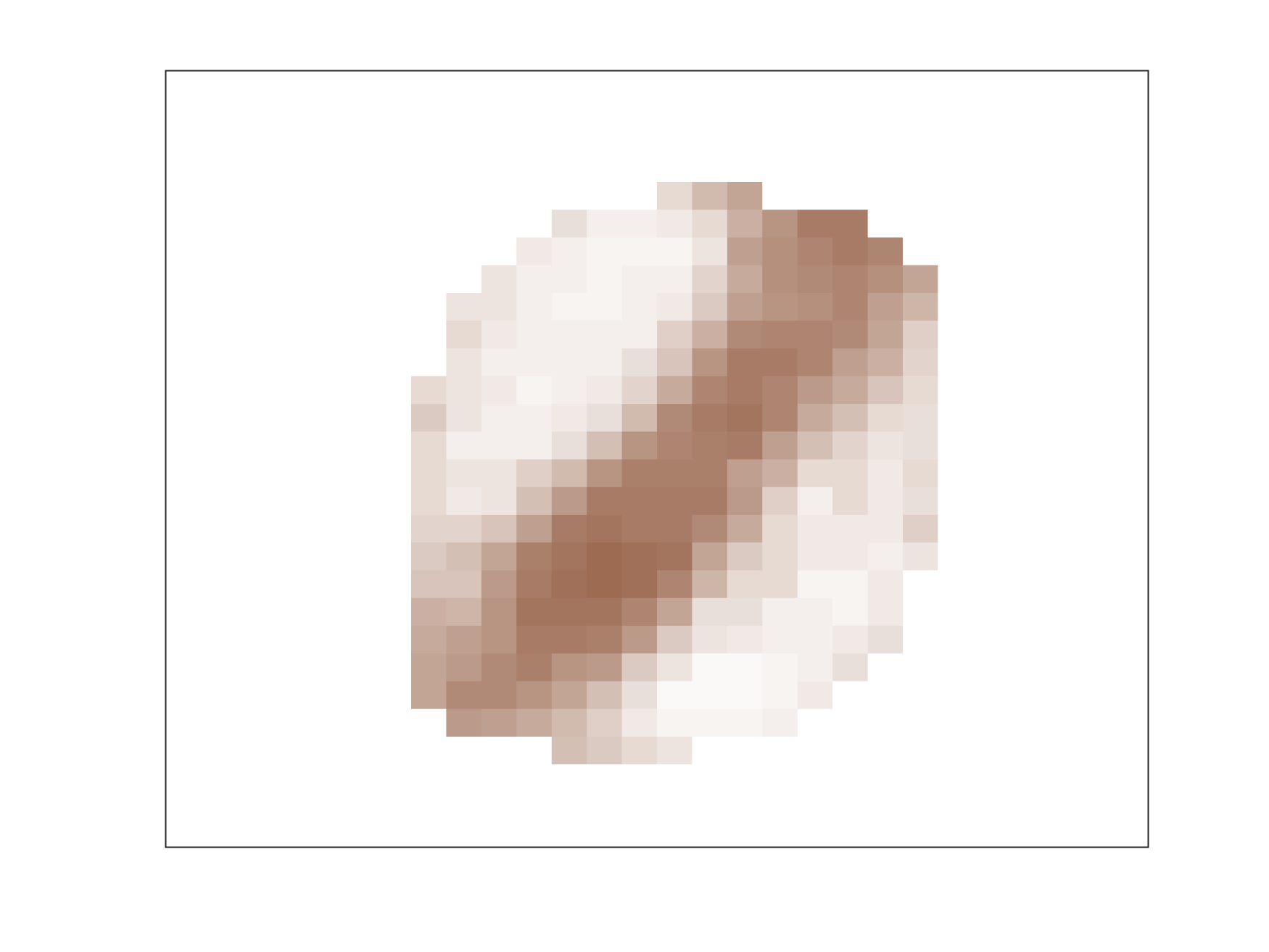} & \includegraphics[width=\linewidth]{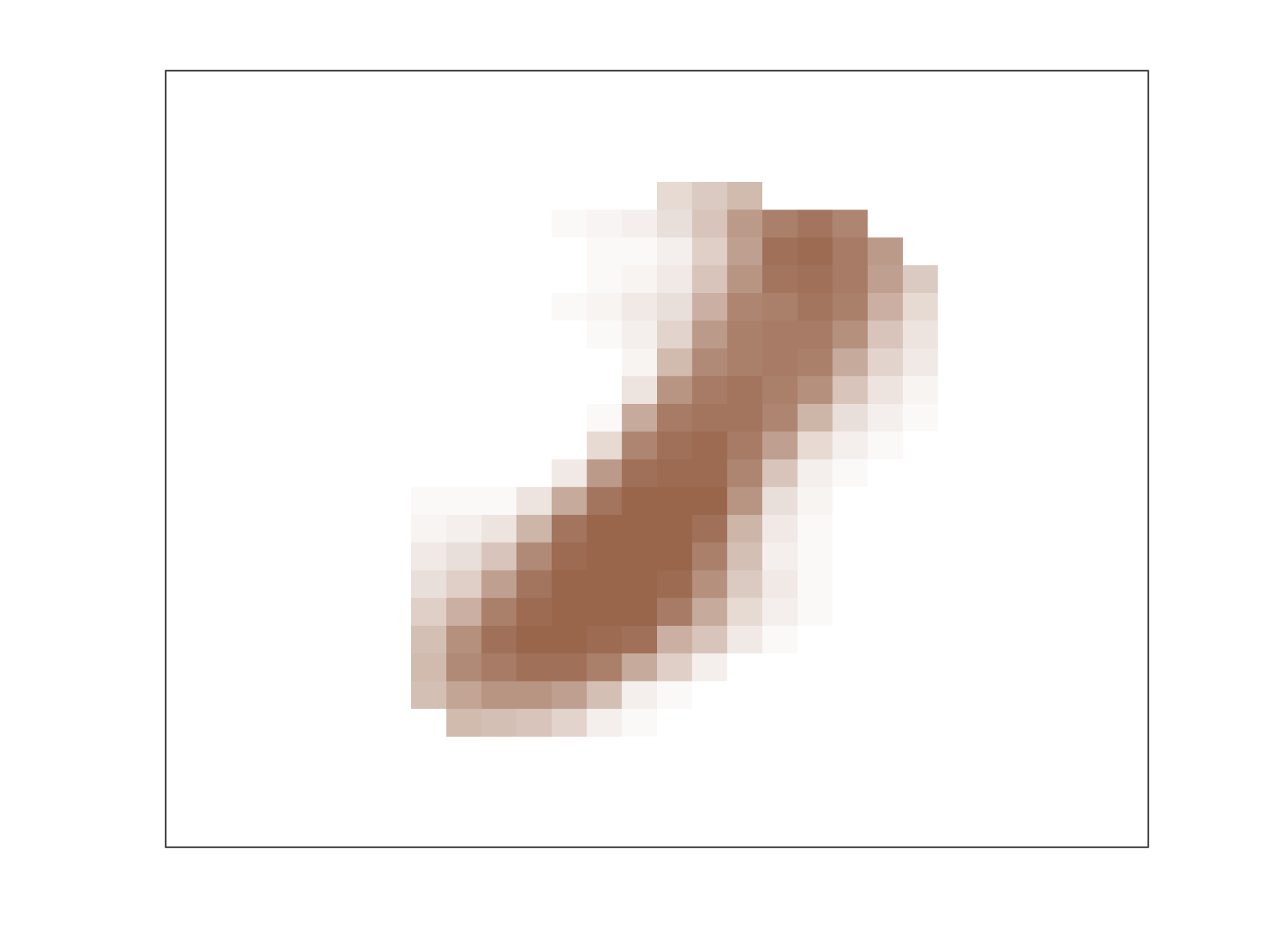} \\
2 & \includegraphics[width=\linewidth]{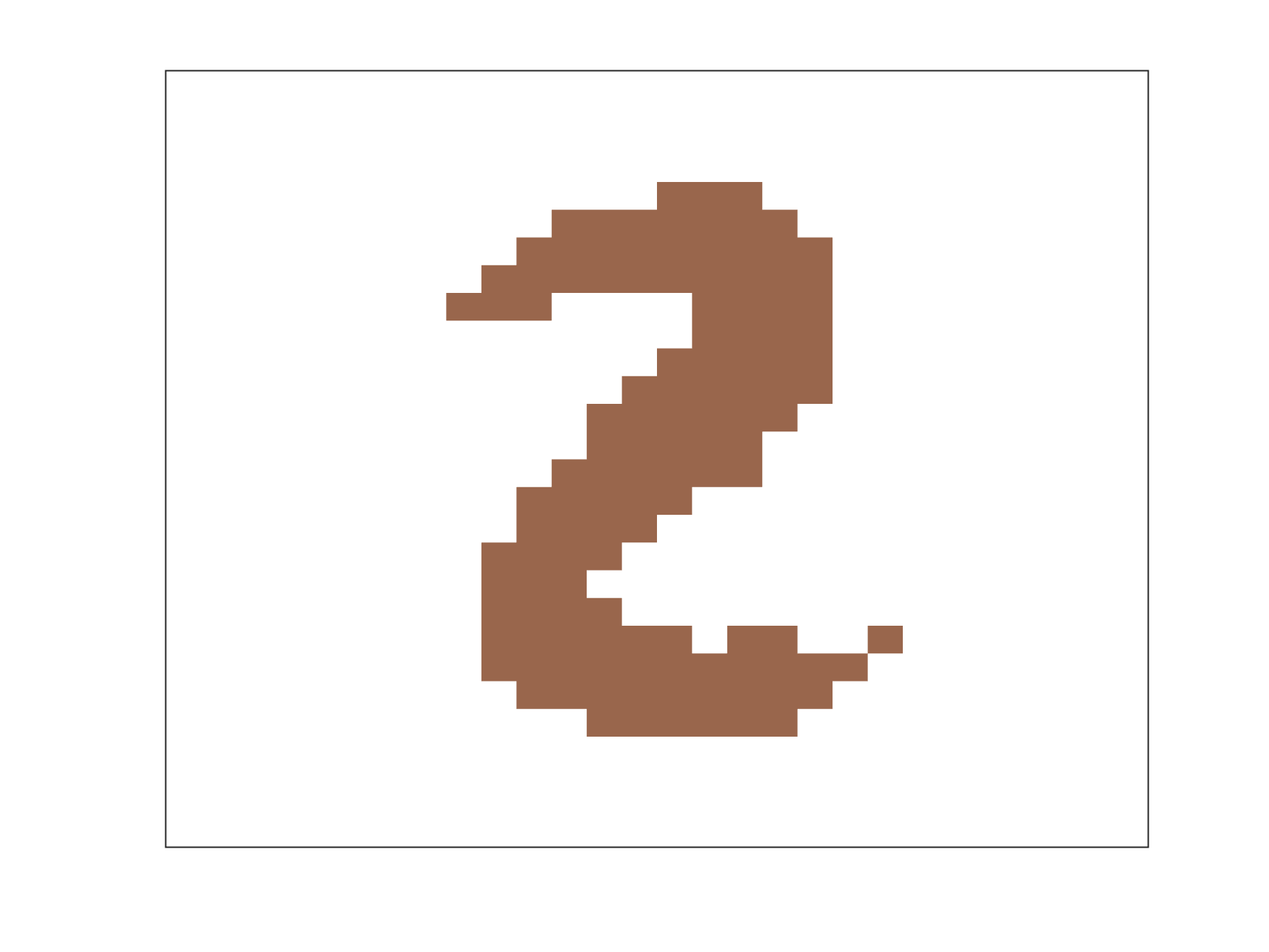} & \includegraphics[width=\linewidth]{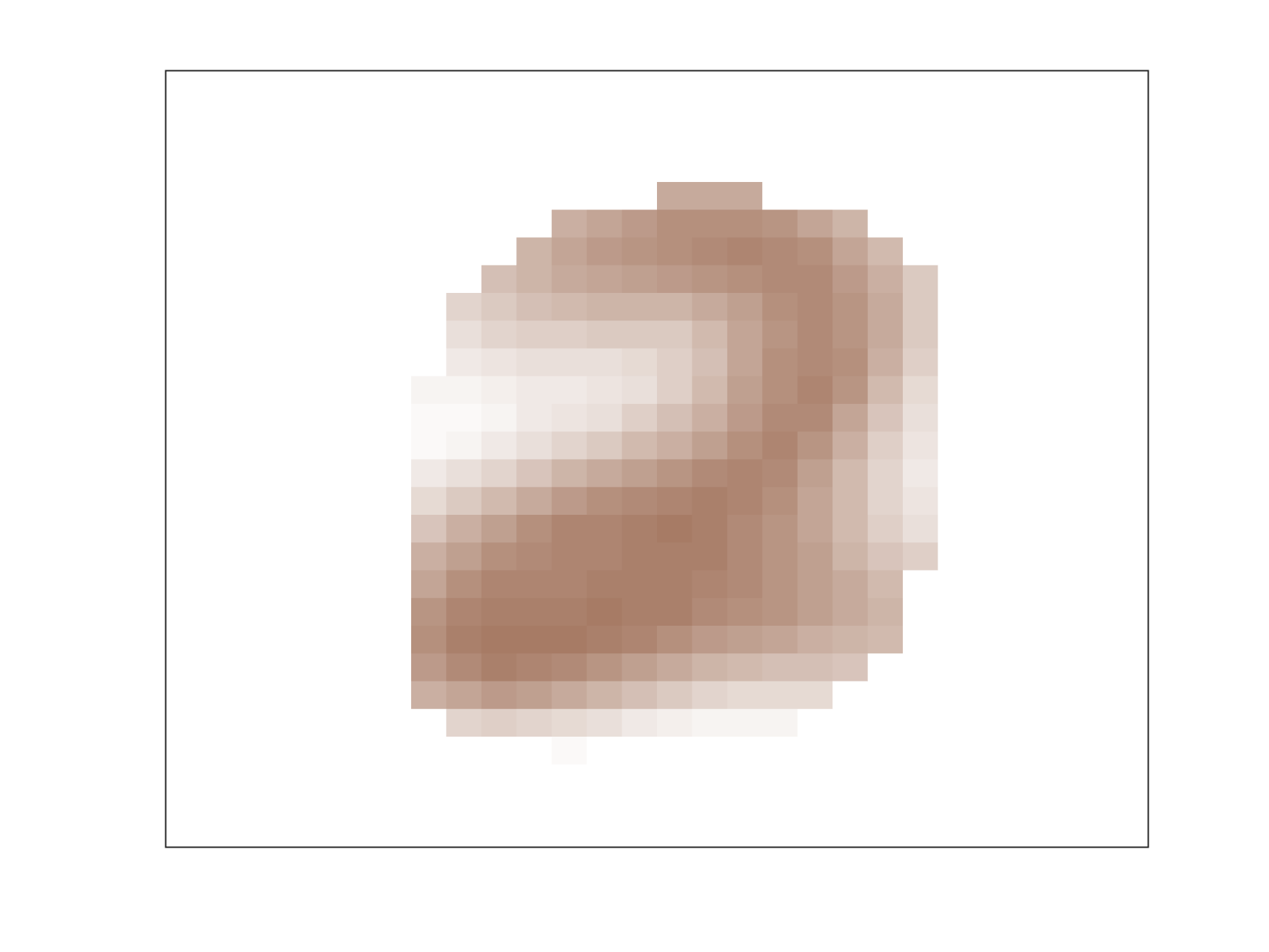} & \includegraphics[width=\linewidth]{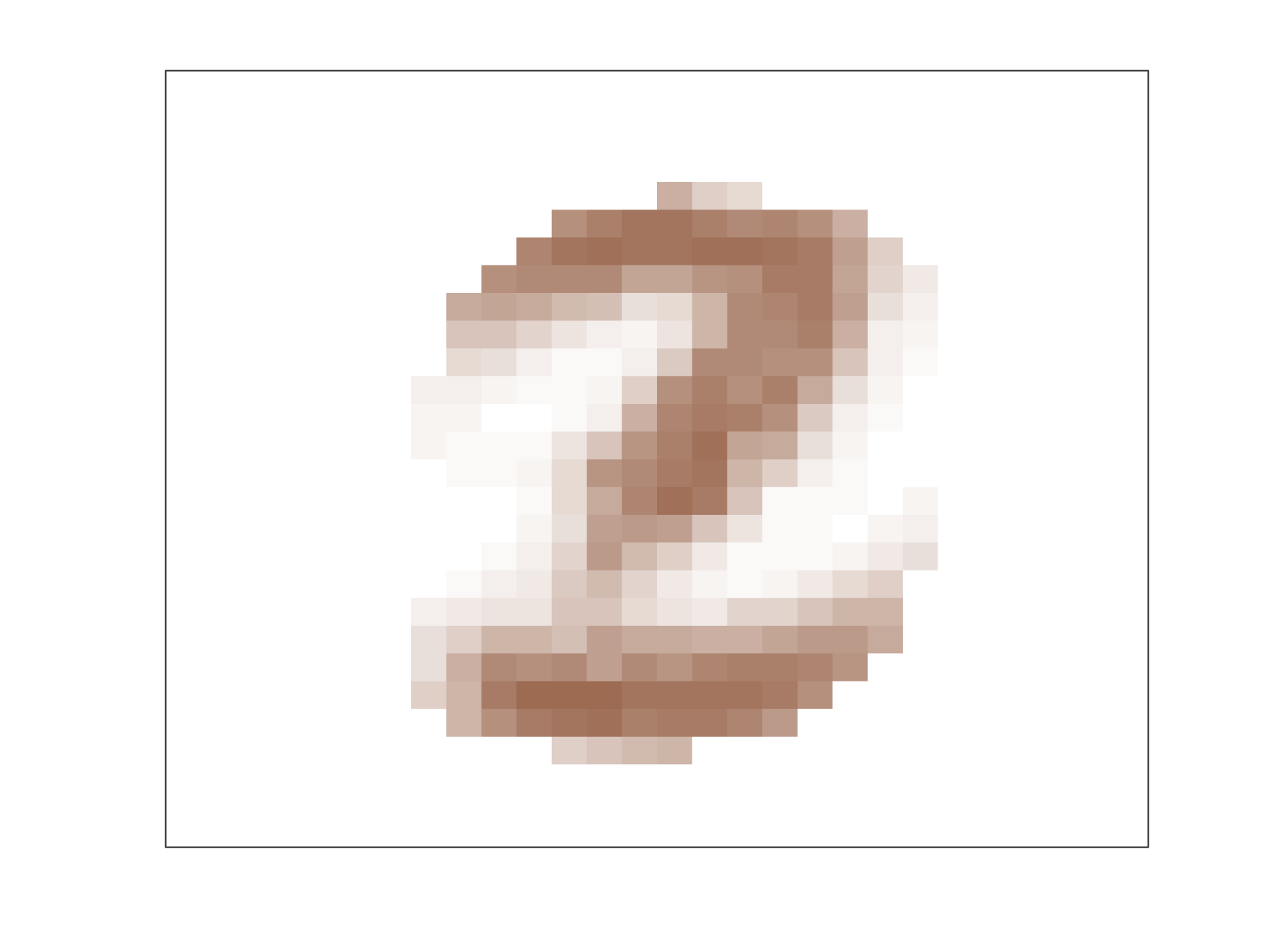} & \includegraphics[width=\linewidth]{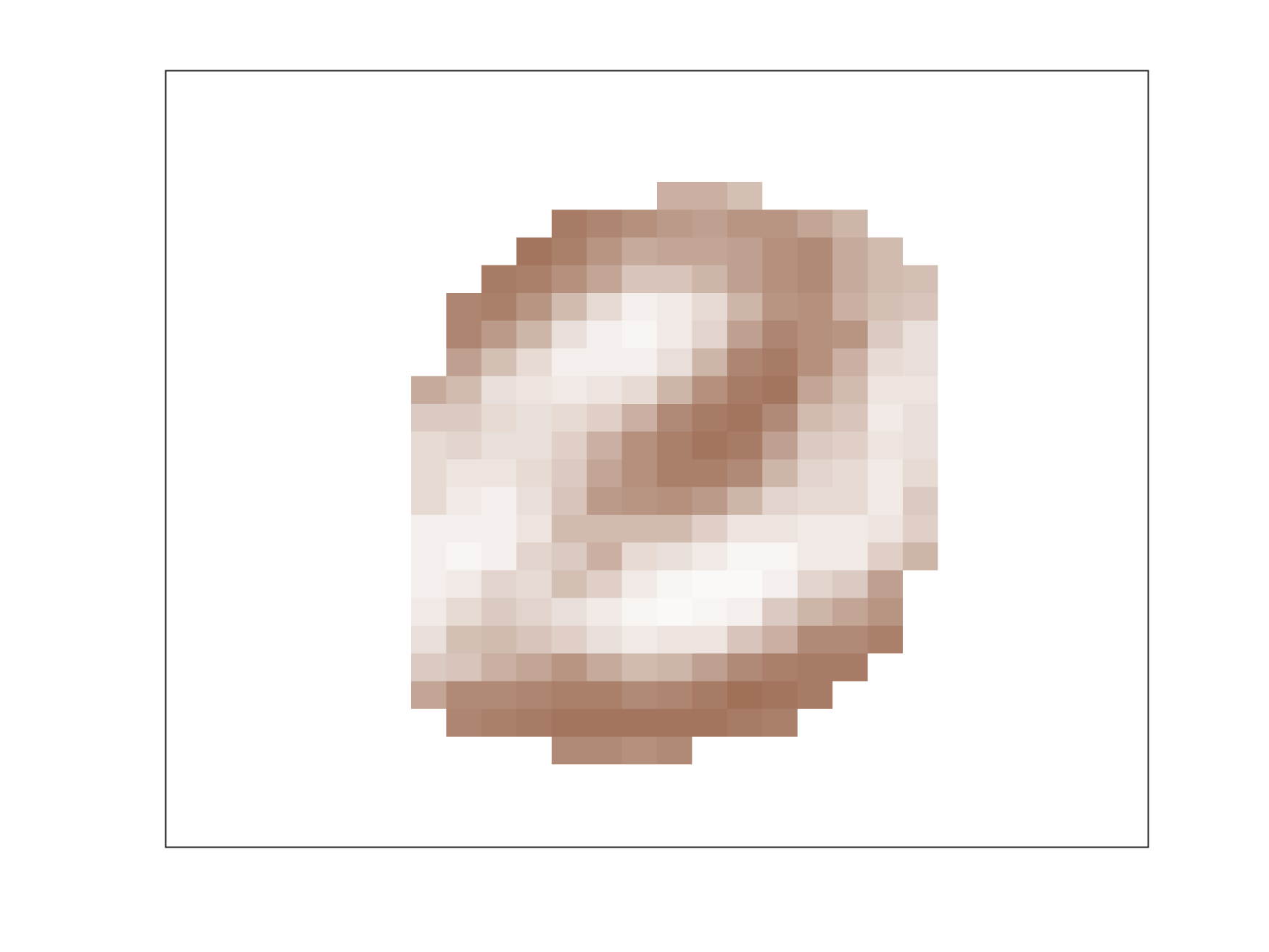} & \includegraphics[width=\linewidth]{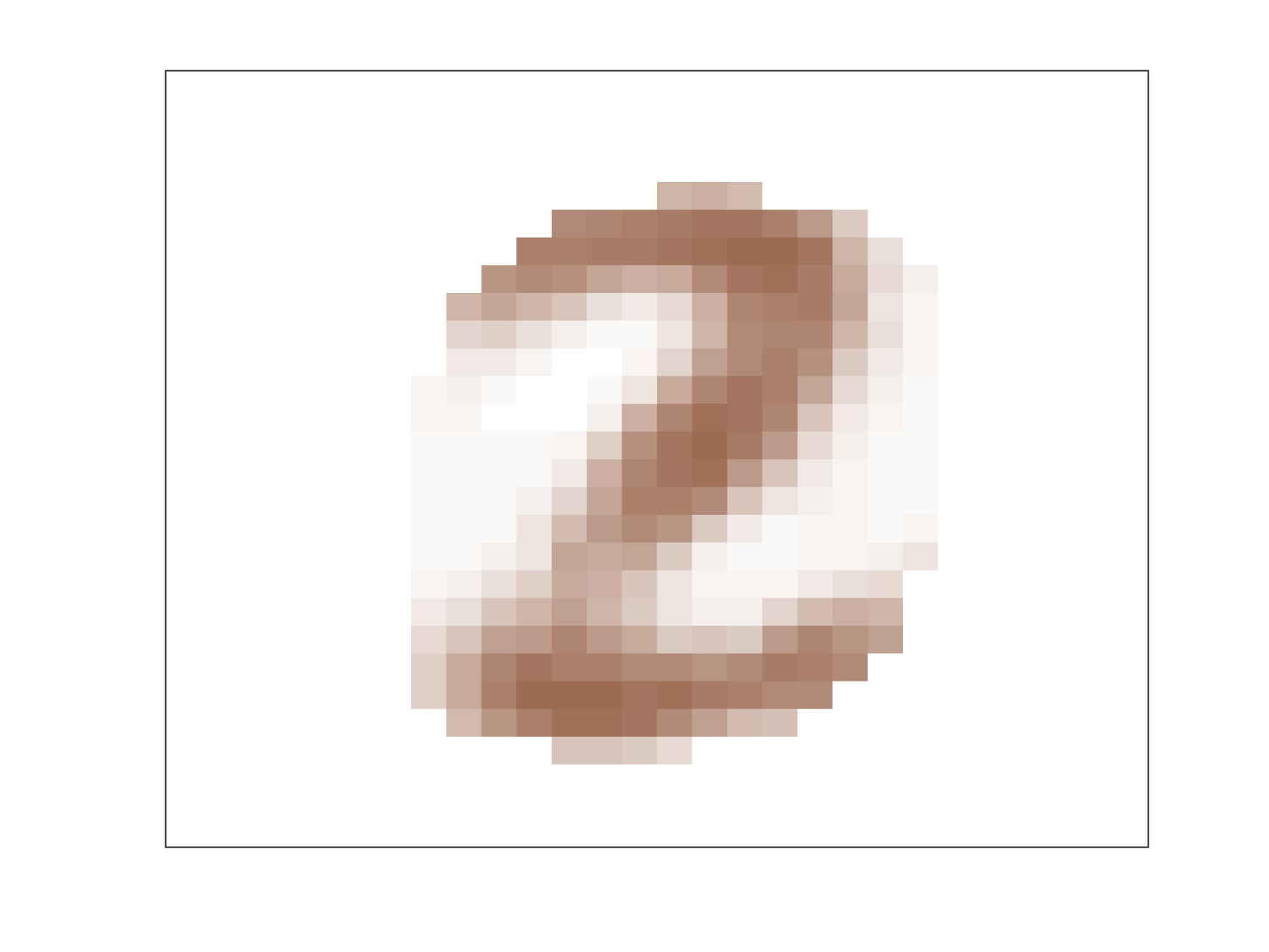}  \\
3 & \includegraphics[width=\linewidth]{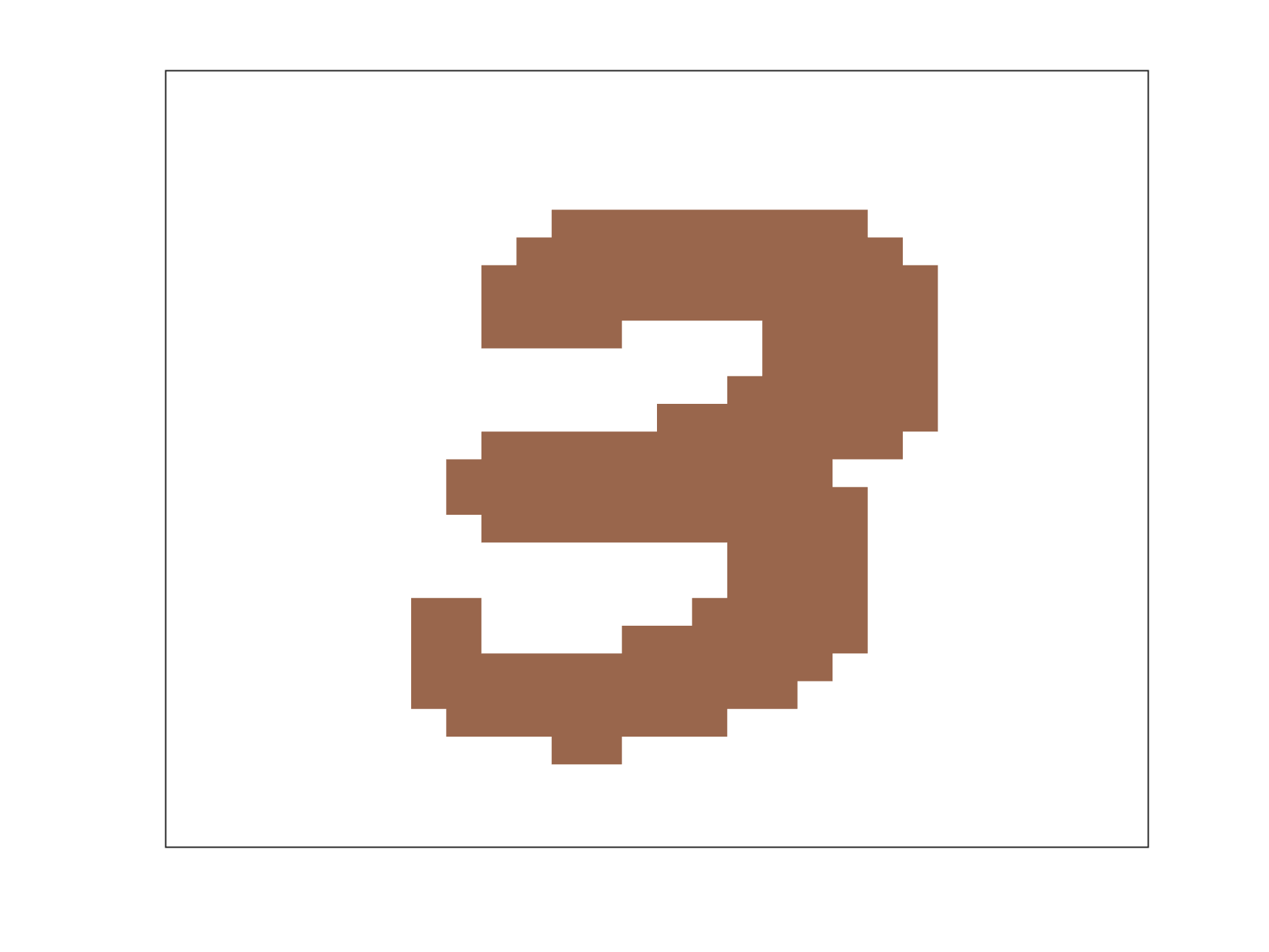} & \includegraphics[width=\linewidth]{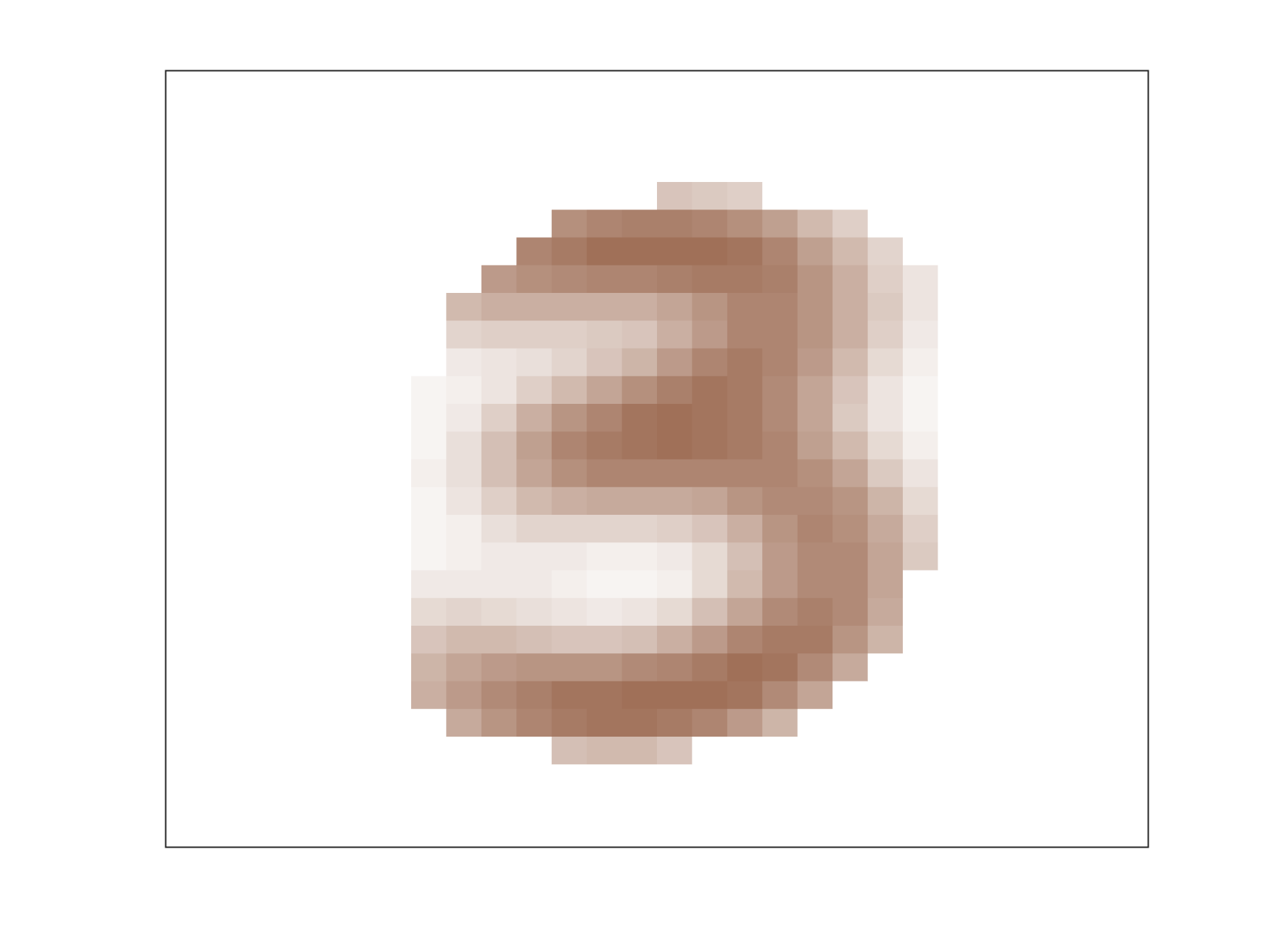} &\includegraphics[width=\linewidth]{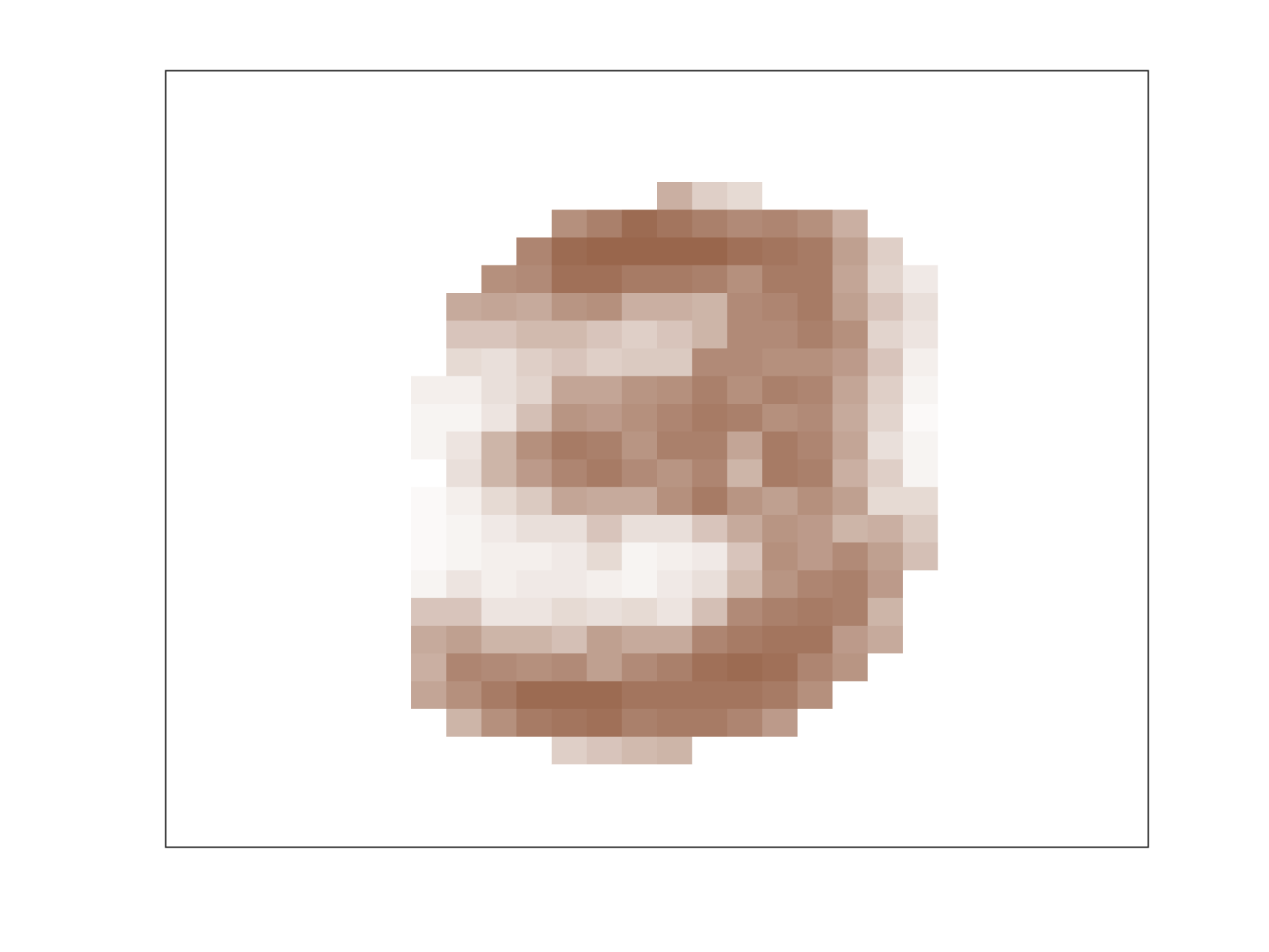} 
& \includegraphics[width=\linewidth]{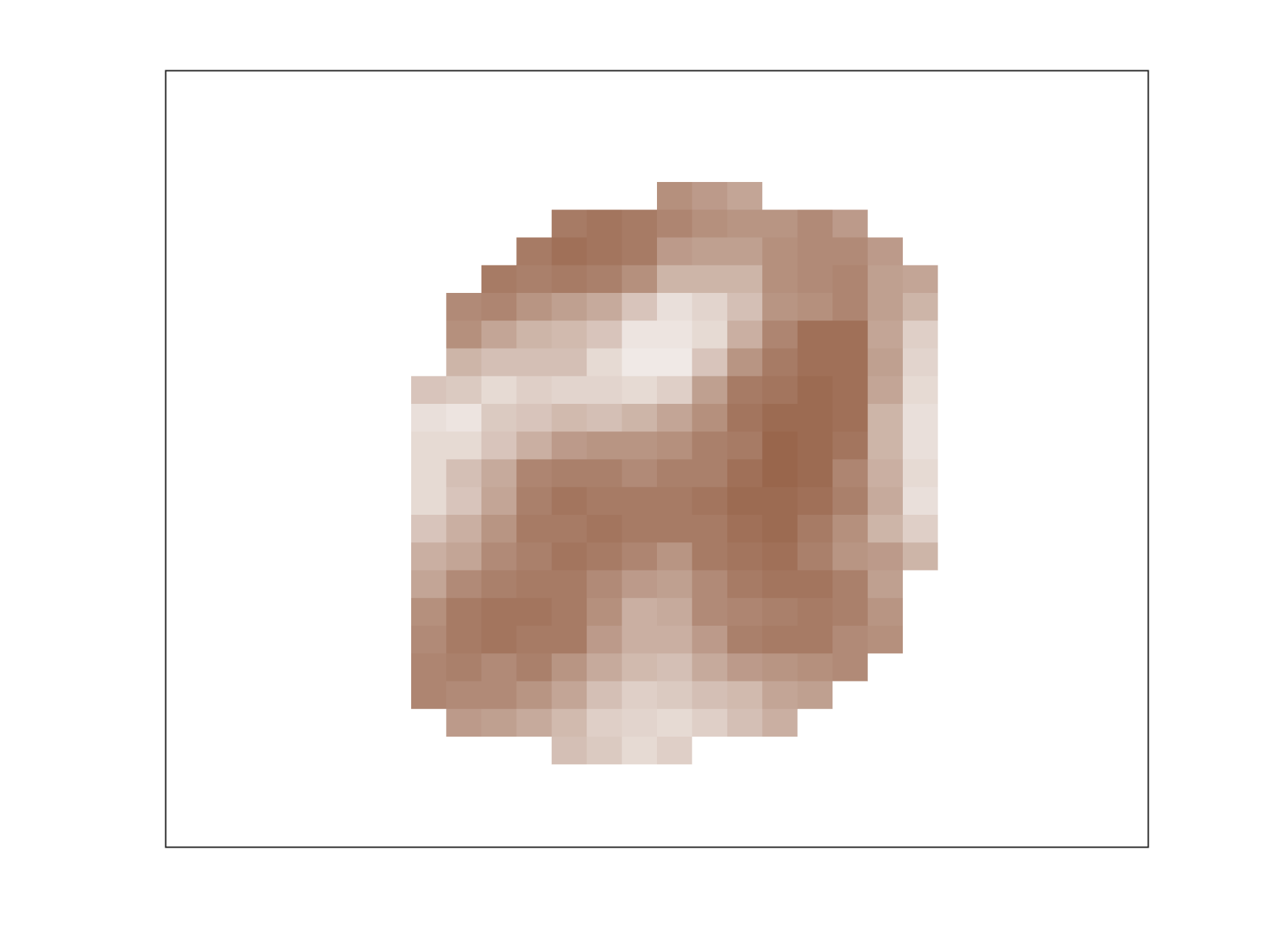} & \includegraphics[width=\linewidth]{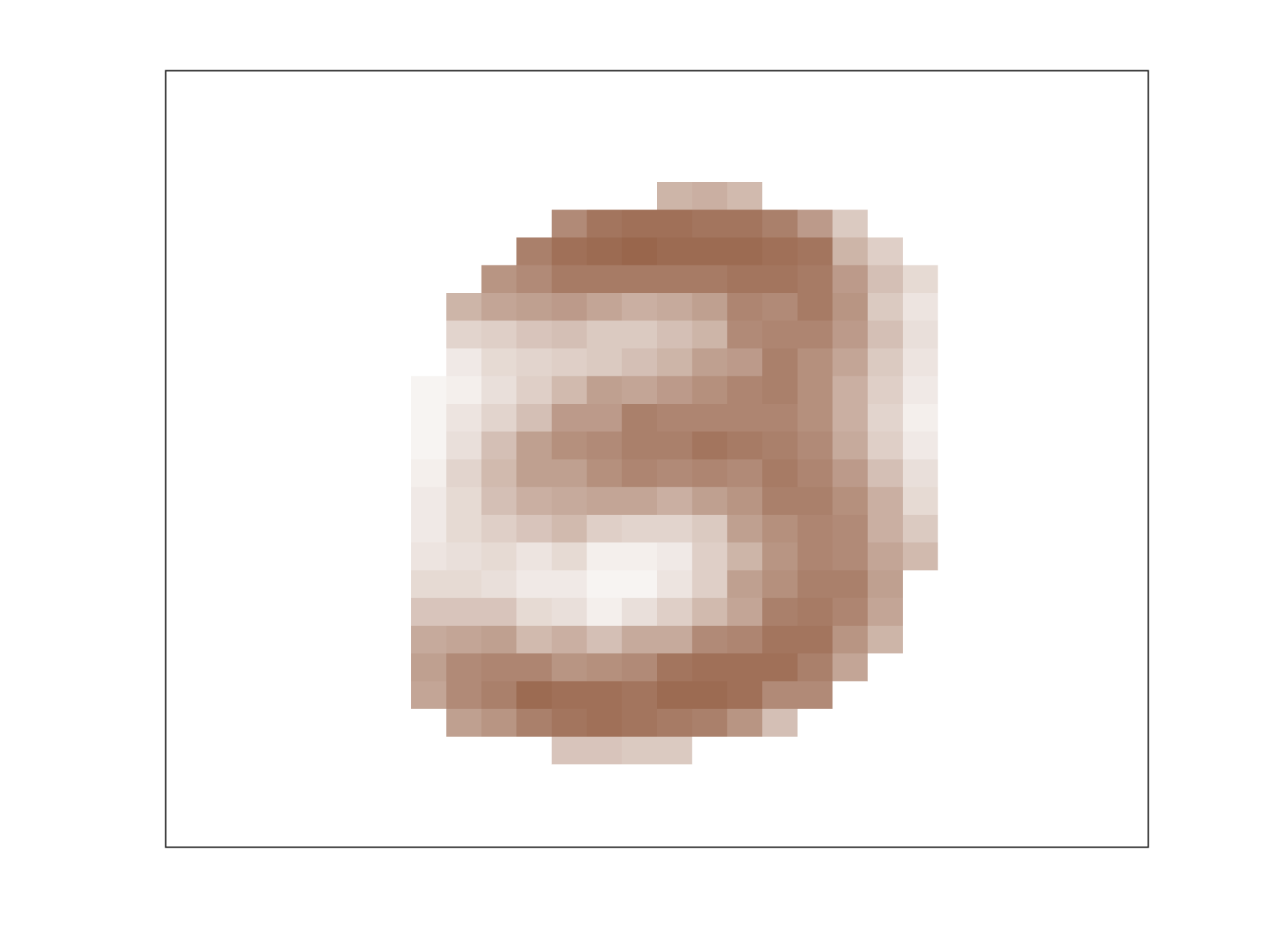} \\
\bottomrule 
\end{tabular}
}
\caption{True (first column) and reconstructed (other columns) images under various estimators. Here, each pixel takes values in $[0,1]$, where a darker shade indicates a larger value.}
\label{tab:reconstruction MNIST appendix}
\end{table}

We additionally visualize examples of reconstructed images under various models in \cref{tab:reconstruction MNIST appendix}, which have been used to compute the pixel-wise reconstruction accuracy in \cref{tab:reconstruction MNIST} in the main paper. The reconstructions are based on the corresponding conditional mean $\mathbb{E} \left(\YY \mid \ma^{(1)}\right)$ for each latent representation $\ma^{(1)}$, which is rearranged in the original $28 \times 28$ grid. The visualization illustrates that the reconstructions from the 2-layer DDE exhibit greater clarity compared to those from alternative models.

\subsubsection{\darkblue{Comparison with VAEs and iVAEs}}
We provide additional numerical comparison against VAEs and identifiable VAEs (iVAE), alongside implementation details. 
We use the standard VAE and iVAE in \cite{khemakhem2020variational} with varying latent dimensions $K = 2, 5$ (these numbers are motivated by the latent dimensions $K^{(1)}=5, K^{(2)}=2$ used for DDEs). The values presented in Table 4 in the main text correspond to the VAE with latent dimension $K=2$. While we use identical neural network architectures (two-layer perceptrons with 50 hidden variables in the middle layer) and factorized priors for both VAE and iVAE, the latter \textit{requires auxiliary information} to construct ``label priors'' while the former does not. Hence, we have provided the true digit labels as auxiliary data for iVAEs.

\begin{table}[h!]
\centering
\begin{tabular}[h!]{lccccc}
    \toprule
    Accuracy & 2-DDE & VAE ($K = 2$) & VAE ($K = 5$) & iVAE ($K = 2$) & iVAE ($K = 5$) \\ 
    \midrule
    Train classif. (\%) & 92.0 & 97.1 & 97.5 & 97.5 & 98.2 \\
    Test classif. (\%) & 92.6 & 92.0 & 92.1 & 92.5 & 93.2 \\
    Train recon.  (\%) & 79.6 & 82.5 & 83.4 & 82.5 & 86.6 \\
    Test recon. (\%)   & 79.9 & 82.7 & 83.6 & 82.8 & 86.6  \\
    \bottomrule 
\end{tabular}
\captionof{table}{Performance comparison of DDEs versus VAEs and iVAEs with varying latent dimensions $K$ on the MNIST dataset. Note that the iVAEs are trained in a supervised manner, where the true labels are incorporated as auxiliary information.}
\label{tab:reconstruction MNIST ivae}
\end{table}
We report the evaluation metrics in \cref{tab:reconstruction MNIST ivae}. \emph{Compared to VAEs}, DDEs have better test classification accuracy but lower reconstruction accuracy. We believe that the discrepancy in reconstruction accuracy results from the continuous latent variables in VAEs, which provide more detailed information. In contrast, for classification, the lack of a clear threshold in continuous latent variables seem to result in potential overfitting and lower test accuracy. 

\emph{Compared to iVAEs}, DDEs have lower performance in both metrics but this is not a fair comparison as iVAEs are weakly supervised (requiring the true label information), whereas DDEs are fully unsupervised. Focusing on the classification error, we note that iVAEs with $K = 2$ latent dimensions have similar performance compared to DDEs. This suggests that DDEs are able to compete with benchmark iVAEs as well.

\begin{table}[h!]
\centering
\resizebox{\textwidth}{!}{
\begin{tabular}{{l}*6{C}@{}}
\toprule
DDE & \includegraphics[width = \linewidth]{figs/MNIST/EM_basis_k=0_original.png} &
\includegraphics[width = \linewidth]{figs/MNIST/EM_basis_k=1_original.png} & \includegraphics[width = \linewidth]{figs/MNIST/EM_basis_k=2_original.png} & \includegraphics[width = \linewidth]{figs/MNIST/EM_basis_k=3_original.png} & \includegraphics[width = \linewidth]{figs/MNIST/EM_basis_k=4_original.png} & \includegraphics[width = \linewidth]{figs/MNIST/EM_basis_k=5_original.png} \\
\midrule
iVAE & \includegraphics[width = \linewidth]{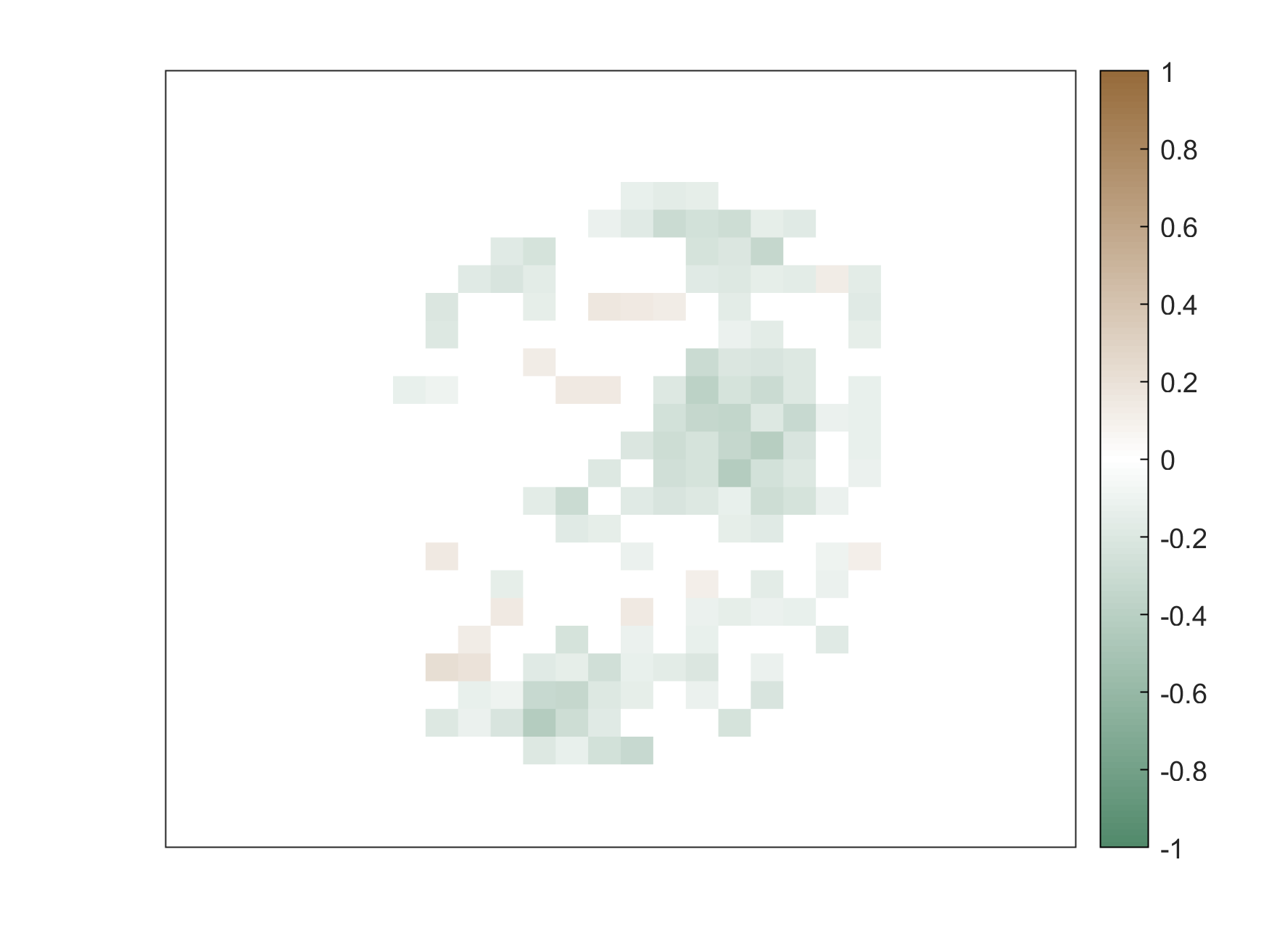} &
\includegraphics[width = \linewidth]{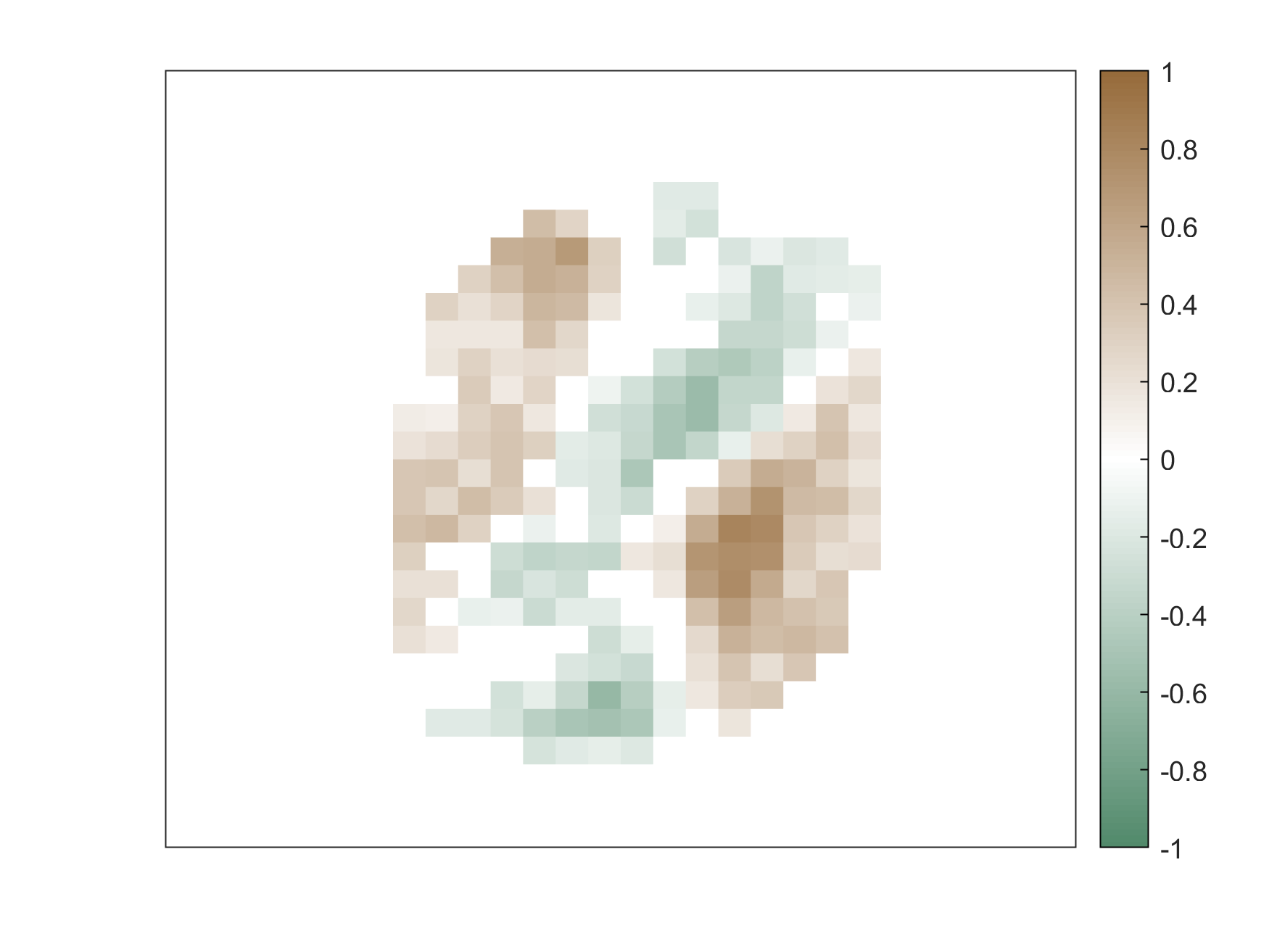} & 
\includegraphics[width = \linewidth]{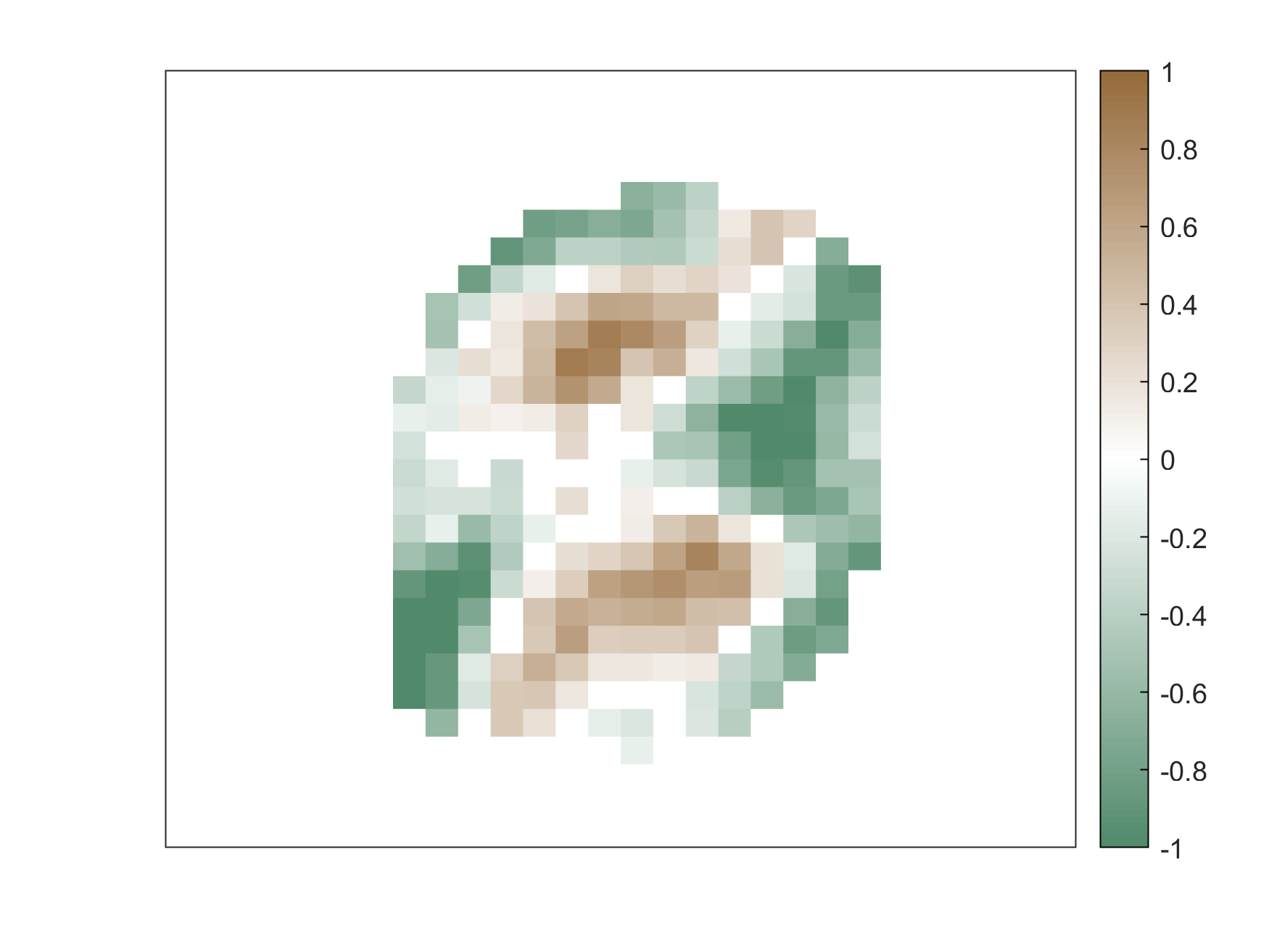} & 
\includegraphics[width = \linewidth]{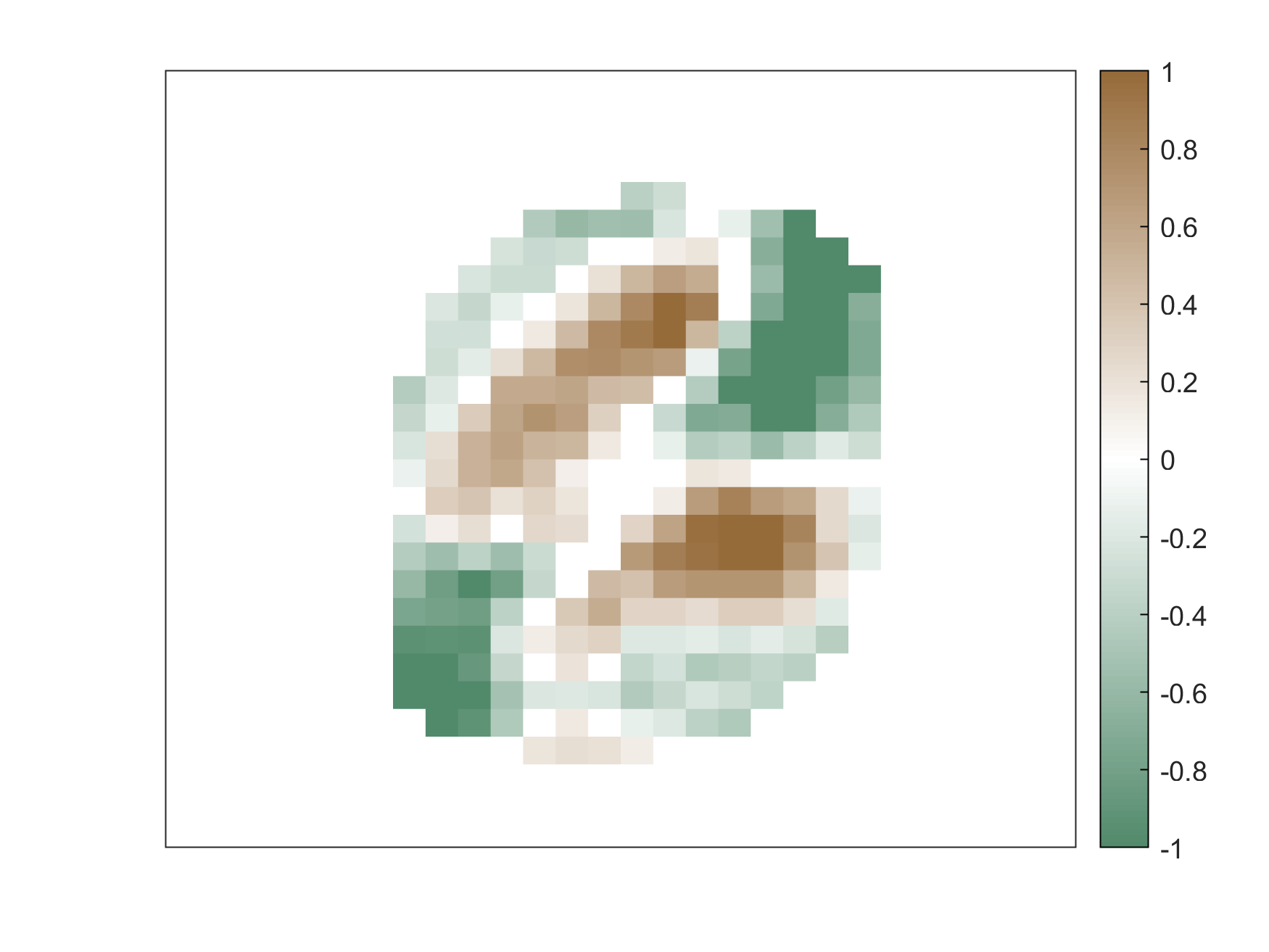} & 
\includegraphics[width = \linewidth]{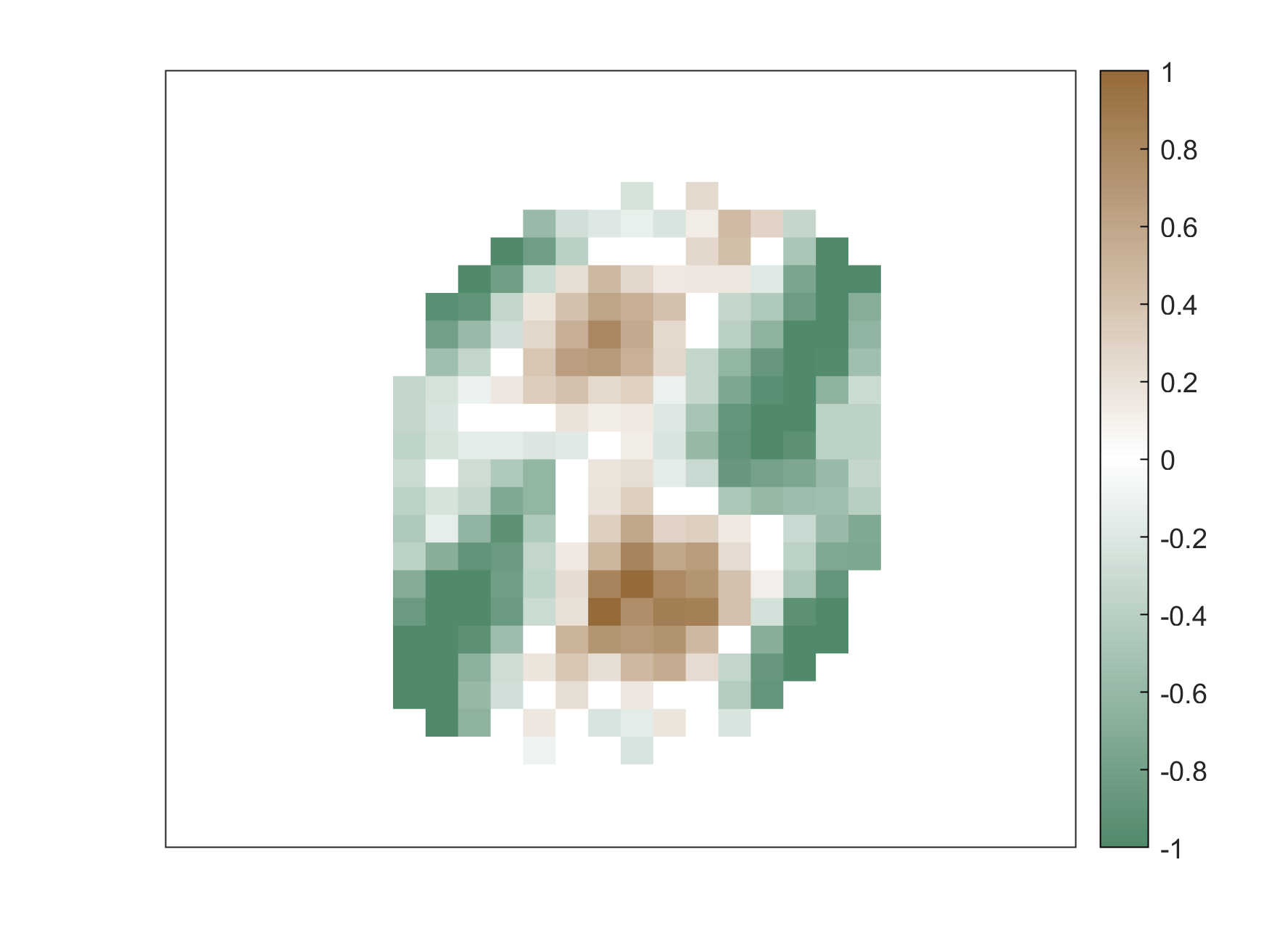} & 
\includegraphics[width = \linewidth]{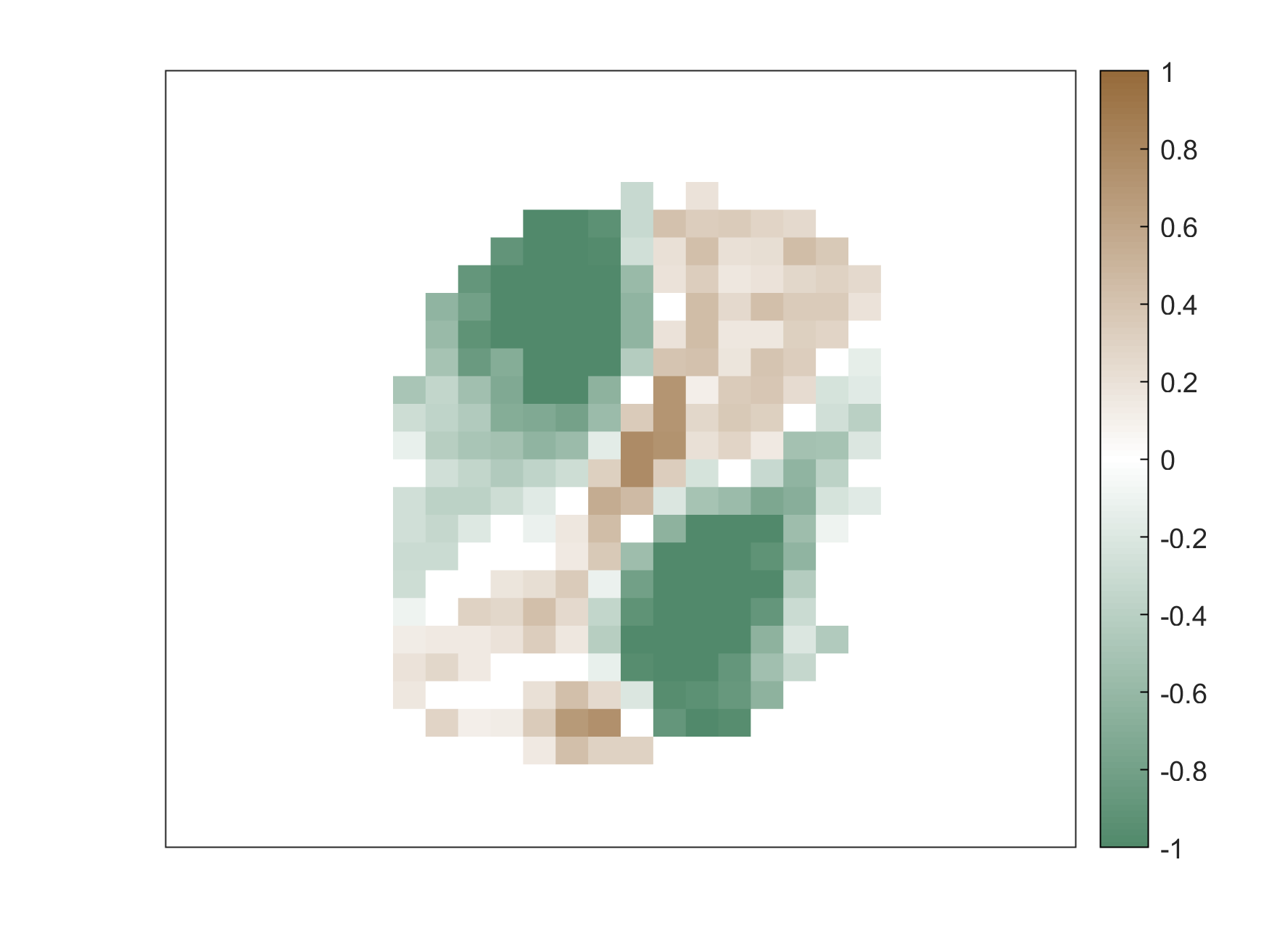} \\
\midrule
& $\bbeta_{0,0}^{(1)}$ & $\bbeta_{0,1}^{(1)}$ & $\bbeta_{0,2}^{(1)}$ & $\bbeta_{0,3}^{(1)}$ &
$\bbeta_{0,4}^{(1)}$ & $\bbeta_{0,5}^{(1)}$ \\
\end{tabular}
}
\caption{From left to right: Estimated basis images (reshaped last-layer coefficients) from the MNIST data. \textcolor{red!40!black}{Brown} indicates positive values and \textcolor{green!50!black}{green} indicates negative values. The basis images under DDEs are distinct, smoother, and more interpretable whereas that under the iVAE exhibit overlaps.}
\label{tab:basis images MNIST spp}
\end{table}

Our most interesting finding is the qualitative comparison of the basis images learned via DDEs versus iVAEs (with $K=2$) in \cref{tab:basis images MNIST}. Here, the iVAE basis images are the last-layer perceptron coefficients. We see that the third, fourth, fifth column of the iVAE panel are almost identical. This strongly indicates that the estimated perceptron (neural network) parameters suffer from overparametrization. Consequently, it is challenging to interpret the shallow layer in the perceptron. In contrast, the basis images under DDEs are distinct, smoother, as well as highly interpretable (e.g. the basis images in the first two columns directly resemble the digits 1 and 0).

\subsubsection{Implementation details}
The VAE and iVAE was implemented by the \texttt{Python} codes provided in \cite{khemakhem2020variational}, where no auxiliary information was used for VAEs and the true digit labels were used as auxiliary information for iVAEs. The DBN is implemented similar to the earlier ablation studies in Supplement 4.2. For both models, we have used default tuning parameters and architectures provided in the original codes. To compute the classification error from latent representations, we have fitted a decision tree analogous to that for DDEs as described in the main text.

\subsection{Performance Evaluation for the 20 Newsgroups Dataset}
\paragraph{Perplexity.}
Perplexity is a very popular measure to evaluate topic models and many other machine learning models. Following the original definition from \cite{blei2003latent} that considers each word as an individual sample, we define
\begin{align}\label{eq:perplexity}
    \text{perplexity}(\YY \mid \ma, \TT) := \exp \left[ - \frac{ \sum_{i,j} Y_{i,j} \log \left( \frac{\lambda_{i,j}}{\sum_{j'} \lambda{i,j'}}\right)}{\sum_{i,j} Y_{i,j}}\right].
\end{align}
Here, $\lambda_{i,j} = \exp\big(\beta_{j,0}^{(1)} + \sum_{k \in [K^{(1)}]} \beta_{j,k}^{(1)} A_{i,k}^{(1)}\big)$ is the Poisson parameter for the conditional distribution $Y_{i,j} \mid \ma_i^{(1)}$. 
The motivation for \eqref{eq:perplexity} is that under the Poisson likelihood for \eqref{eq:observed exp fam}, the joint distribution of $(Y_{i,1}, \ldots, Y_{i,J}) \mid \ma_i$ follows a Multinomial distribution $$\text{Multi}\left(\sum_{j} Y_{i,j}, \left(\frac{\lambda_{i,1}}{\sum_{j'} \lambda{i,j'}}, \ldots, \frac{\lambda_{i,J}}{\sum_{j'} \lambda{i,j'}} \right) \right).$$
This definition is consistent with that used for Poisson factor analysis \citep{zhou2012beta, gan2015scalable} as well as LDA \citep{blei2003latent}.

To evaluate \eqref{eq:perplexity}, we use the parameter estimates $\hat{\TT}$ from our proposed method. As \eqref{eq:perplexity} also requires the knowledge of each latent variable $\ma_i^{(1)}$, latent variable estimates needs to be computed. To compute train perplexity, we use the estimator \eqref{eq:latent variable estimation} from the main text. For test perplexity, we adopt the approach of \cite{gan2015scalable}, first computing the posterior distribution of $\ma_i^{(1)}$ by using $80\%$ of the words in each document, and evaluating perplexity based on the remaining $20\%$ words.

\paragraph{Implementations of existing topic modeling algorithms.}
We describe the implementation details of alternative topic modeling algorithms that were used for model comparison in \cref{tab:perplexity}. LDA is implemented by the function \texttt{LDA} in the \texttt{R} package \texttt{topicmodels} \citep{grun2011topicmodels}, using the variational EM algorithm. For DPFA, we have used the original \texttt{MATLAB} codes that were publicly available on the first author's GitHub page\footnote{\url{https://github.com/zhegan27/dpfa_icml2015/tree/master}} \citep{gan2015scalable}. 
Based on the empirical findings in \cite{gan2015scalable}, which indicate that the DPFA-SBN model optimized using the SGNHT method performs the best among their proposed methods, we chose this implementation. 
The method was run using the default tuning parameters.
Note while \cite{gan2015scalable} also analyzed the 20 Newsgroups dataset (see Table 1 in the cited paper), their reported perplexity values are not directly comparable to our results due to preprocessing. Notably, our preprocessing steps enhanced the signal-to-noise ratio and leads to a smaller perplexity.

\subsection{\darkblue{Additional Analysis of the TIMSS Assessment Dataset}}

We display the estimated coefficients for each modailty in \cref{fig:heatmap}, focusing on the first four skills (Number, Algebra, Geometry, Data\&Probability). The magnitude of the coefficients are larger for the response accuracy, indicating a larger effect of the latent variables on the responses. Additionally, we observe an interesting pattern learned from the intercept values, compared to the held-out information regarding the items (constructed-response versus multiple-choice).  To be specific, the constructed-response items (indexed by the rows 2, 4, 7, 15-17, 19-20, 24-26, 29) have smaller intercept values for the left panel (response accuracy) of \cref{fig:heatmap} and larger intercept values for the right panel (response time).  This is analogous to the previous analysis of the same dataset in \citep{lee2024new}, where each mode (response accuracy and time) were analyzed separately. 
\begin{figure}[h!]
    \centering
    \includegraphics[width=0.47\linewidth]{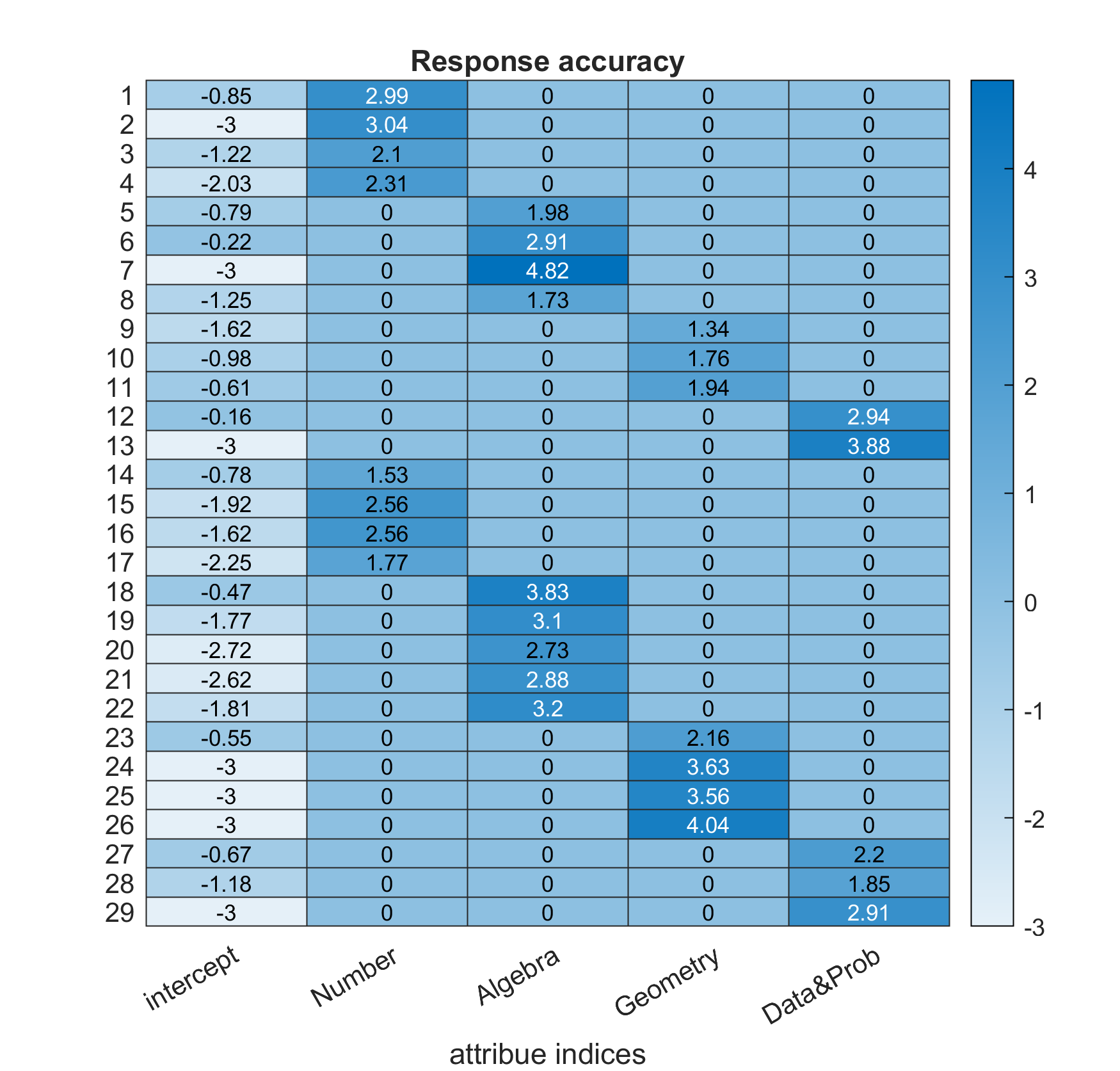}
    \includegraphics[width=0.47\linewidth]{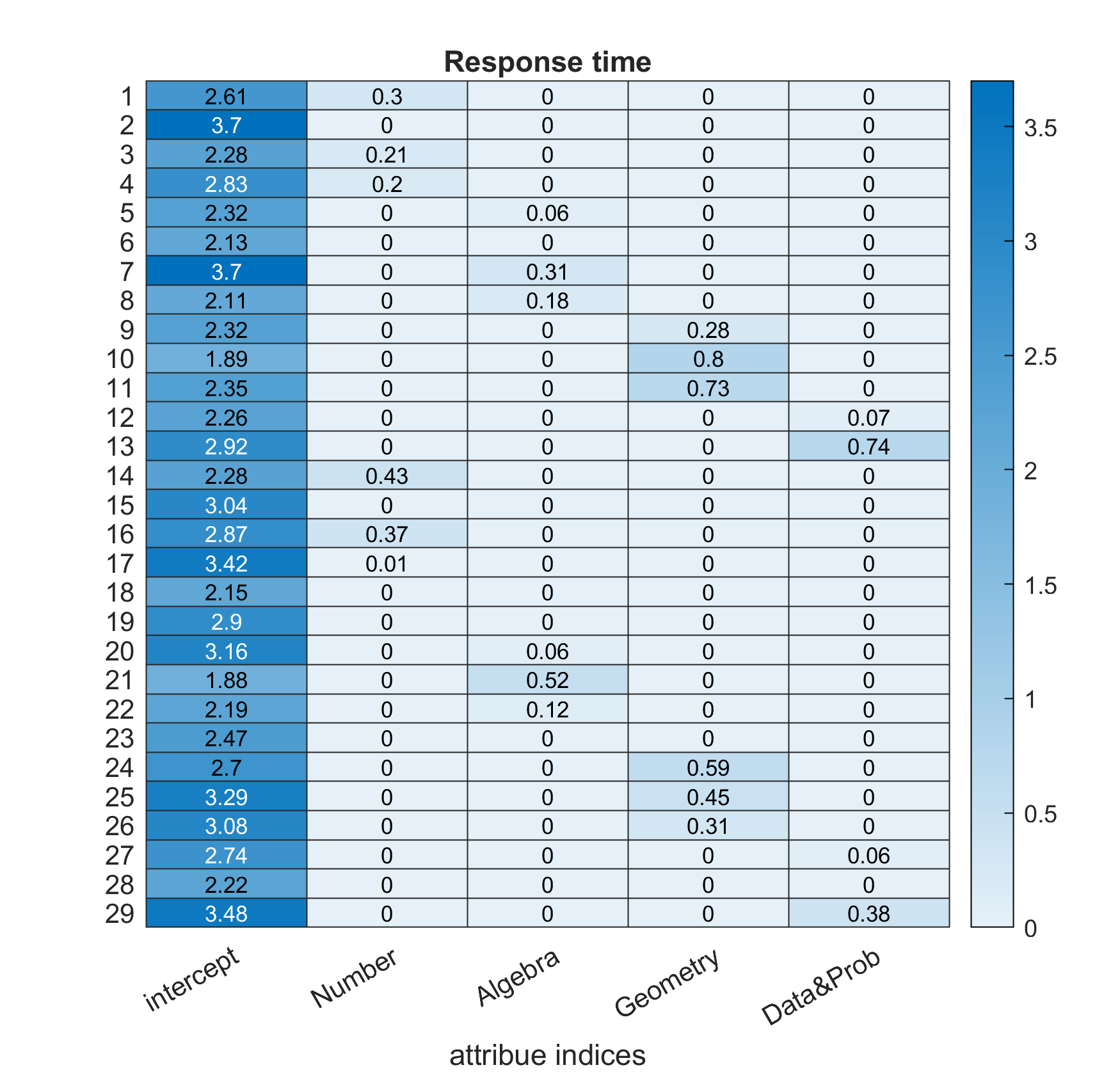}
    \caption{Estimated coefficients for (left) response accuracy, (right) response time.}
    \label{fig:heatmap}
\end{figure}

Additionally, we visualize the analog of \cref{tab:like math} solely-based on the response time in \cref{tab:like math original}, as opposed to the analysis using both modes. Here, the same lognormal-DDE was used with identical latent dimensions. While we observe a similar trend that students enjoying math are more likely to master each skill, the discrepancy between the students enjoying/not enjoying math is lower. For instance, the discrepancy between the category `agree a lot' versus `disagree a lot' in \cref{tab:like math} of the main paper is 0.35 versus the value of 0.14 in \cref{tab:like math original}. This illustrates considering both data modalities help to better learn the latent variables in a more interpretable manner.

\begin{table}[h!]
\centering
\resizebox{\textwidth}{!}{
\begin{tabular}{cccccc}
\toprule
Response \textbackslash{} Latent skill & $A_1^{(2)}$   & $A_1^{(1)}$: Number & $A_2^{(1)}$: Algebra & $A_3^{(1)}$: Geometry & $A_4^{(1)}$: Data and Prob \\ 
\midrule
Agree a lot                                & 0.83 & 0.72 & 0.88 & 0.86 & 0.87  \\
Agree a little                             & 0.80 & 0.72  & 0.85 & 0.84 & 0.85 \\
Disagree a little                          & 0.76 & 0.74  & 0.85 & 0.84 & 0.84 \\
Disagree a lot                             & 0.69 & 0.67 & 0.76 & 0.75 & 0.78 \\
\bottomrule
\end{tabular}
}
\caption{Average latent variable estimate for each response category for the question ``Mathematics is one of my favorite subjects'', solely based on response times.}\label{tab:like math original}
\vspace{-4mm}
\end{table}

\section{Additional Literature Review}\label{sec:supp literature}
\paragraph{iVAEs.}
We compare DDEs versus iVAEs and its extensions. One distinction is that iVAEs essentially have only \textit{one} latent layer of random variables transformed by deterministic deep neural networks, while DDEs allow \textit{multiple} latent layers. Thus, DDEs are able to model the hierarchical nature of uncertainty and are more suitable for tasks like hierarchical topic modeling \citep{ranganath2015deep}.

Also, iVAEs typically require \emph{continuous} latent variables, additional auxiliary variables, and its notion of identifiability still suffers from rotational ambiguity. Most iVAEs are used to model continuous observed responses \citep{khemakhem2020variational, moran2021identifiable, kivva2022identifiability}, although there exist a few variants exclusively proposed for discrete data (binary, count) as well \citep{hyttinen2022binary}. In particular, up to our knowledge, identifiability guarantees for \textit{discrete} data has been only established in \cite{hyttinen2022binary} under strong parametric restrictions on the encoder.\footnote{{The original paper \cite{khemakhem2020variational} only provided identifiability theory for continuous data, although their simulation studies suggest potential extensions to discrete data.}} In contrast, DDEs with \emph{discrete} latent variables do not require auxiliary variables, do not suffer from rotational ambiguity, and the identifiability of DDEs holds regardless of the observed data types. 

\paragraph{Psychometrics.}
Multidimensional binary latent variables are also popular for modeling students' item response data in educational measurement \citep{junker2001cognitive, von2008general, de2011generalized}. These psychometric models are known as cognitive diagnostic models (CDMs). A CDM uses a single latent layer of cognitive skills to model a student's binary responses to many questions in a test, with a sparse loading graph between the observed and latent layers. 
Here, each binary latent variable represents a student's mastery or deficiency of a cognitive skill, and the sparse graph between the observed item responses and latent skills encodes which skills each item is designed to measure in the test. 
The proposed DDEs substantially generalize this modeling idea by allowing (a) multilayer latent variables, which can model one's knowledge structure at multiple resolutions ranging from fine-grained details to general concepts \citep{gu2024deepCDM}, and (b) modeling a rich class of general responses, going beyond the typical binary correct/incorrect responses in educational tests.

\end{document}